\definecolor{azure}{rgb}{0.9, 0.95, 1.0}
\theoremstyle{plain}
\newtheorem{theorem}{Theorem}[section]
\newtheorem{proposition}[theorem]{Proposition}
\newtheorem{lemma}[theorem]{Lemma}
\newtheorem{corollary}[theorem]{Corollary}
\theoremstyle{definition}
\newtheorem{definition}[theorem]{Definition}
\newtheorem{assumption}[theorem]{Assumption}
\theoremstyle{remark}
\renewcommand{\thefootnote}{\fnsymbol{footnote}}
\DeclareMathOperator*{\argmax}{arg\,max}
\title{How Humans Help LLMs: Assessing and Incentivizing Human Preference Annotators}
\author{Shang Liu$^{*,1}$, Hanzhao Wang$^{*,2}$, Zhongyao Ma$^{3}$, Xiaocheng Li$^{1}$}
\date{\small
Imperial College Business School, Imperial College London$^{1}$ \\
University of Sydney Business School, University of Sydney$^{2}$ \ \ \ \ Meta$^{3}$}
\begin{document}
\maketitle
\onehalfspacing

\def\thefootnote{*}\relax\footnotetext{Equal contribution. Corresponding to s.liu21@imperial.ac.uk and hanzhao.wang@sydney.edu.au.}

\begin{abstract}
Human-annotated preference data play an important role in aligning large language models (LLMs). In this paper, we study two connected questions: how to monitor the quality of human preference annotators and how to incentivize them to provide high-quality annotations. In current practice, expert-based monitoring is a natural workhorse for quality control, but it performs poorly in preference annotation because annotators are heterogeneous and downstream model performance is an indirect and noisy proxy for annotation quality. We therefore propose a self-consistency monitoring scheme tailored to preference annotation, and analyze the statistical sample complexity of both methods. This practitioner-facing analysis identifies how many inspected samples are needed to reliably assess an annotator and shows when self-consistency monitoring can outperform expert-based monitoring. We then use the resulting monitoring signal as the performance measure in a principal-agent model, which lets us study a second sample-complexity question: how many monitored samples are needed before simple contracts perform close to the ideal benchmark in which annotation quality is perfectly observable. The model rationalizes practical bonus schemes, highlights the complementarity between monitoring and contracts, and extends the principal-agent literature by considering a continuous action space for the agent. Under this continuous action space, we show that this shortfall scales as $\Theta(1/\sqrt{\mathcal{I} n \log n})$ for binary contracts and $\Theta(1/(\mathcal{I}n))$ for linear contracts, where $\mathcal{I}$ is the Fisher information of the monitoring method and $n$ is the number of samples used for performance assessment; we further show that the linear contracts are rate-optimal among general contracts. This contrasts with the known result that binary contracts are optimal and of $\exp(-\Theta(n))$ when the action space is discrete \citep{frick2023monitoring}. Throughout the paper, we use real preference annotation data to accompany our discussions.
\end{abstract}

\section{Introduction}

Human-annotated preference data have been playing a critical role in aligning large language models (LLMs) and other multi-modal foundation models. Millions of preference samples annotated by human annotators are aggregated from public and private data sources, and then used in the post-training/alignment of the state-of-the-art LLMs, more specifically, in the stage of RLHF \citep{ouyang2022training} and DPO \citep{rafailov2024direct}. The literature on aligning LLMs has been focused on developing alignment algorithms and evaluating the performance of an aligned model. In alignment procedures, human-annotated data are often used as a gold standard. Sometimes people are aware of the presence of potential mistakes in human annotations, but do not make any special treatment for these mistakes; partly because these mistakes are hard to identify given the volume of the data. 

Once human-annotated data are received from an upstream source, things that one can do in the downstream training of LLMs are probably limited. In this paper, we take an upstream perspective -- we raise and investigate the question of how to evaluate the performance of human annotators and accordingly incentivize them to produce high-quality annotations. In the first place, we do not assume that human annotators have a bad intention of providing low-quality data. However, low-quality annotations can commonly happen for reasons such as exhaustion of the annotators and lack of expertise. Without a monitoring system to assess the performance of the annotators, human nature will cause them to provide random effort-saving annotations. In building such monitoring systems, one challenge is that the data to be annotated usually does not have a true answer, and this is in particular the case of annotating data for persona LLMs \citep{tseng2024two}. In our paper, we first address the question of how to build the monitoring system and then analyze effective mechanisms that can incentivize human annotators. We hope our work makes some first-step progress in answering the question of how humans help AI in the future of human-AI interaction.

From a modeling viewpoint, we develop a probability model (in Section \ref{sec:setup}) to characterize the behavior of annotators in preference data annotation and such a model complements the existing preference probability model which does not account for the human factors of the annotators. We also build utility models (in Section \ref{sec:pa_models}) for both the annotator (as an agent) and the company (as a principal) under a principal-agent model. These models not only lay the foundations for our discussions but can also be used for the future development of better annotation systems and for understanding the business models of data provider companies such as Scale AI, Appen, Outlier, etc.

In assessing the performance of human annotators, we note two new challenges arising in the context of language preference annotation compared to traditional data annotation: (1) unlike tasks such as image classification, there is typically no ``true label,'' because different annotators may legitimately hold different preferences; and (2) the annotations matter mainly because they are later used in downstream LLM post-training/alignment \citep{ouyang2022training,rafailov2024direct}, yet downstream model performance is an aggregate, delayed, and noisy proxy for any one annotator's effort. Specifically, current LLM evaluations are conducted in dozens of metrics and the training is based on the mix of annotators. These two features make quality control for LLM preference annotation fundamentally different from conventional annotation pipelines. The paper therefore proceeds in two layers. Section \ref{sec:assess} addresses the \emph{monitoring} layer: it asks how a company should assess an annotator when there is no sample-level ground truth, and it compares expert-based monitoring with self-consistency monitoring. Section \ref{sec:incentivizing} addresses the \emph{incentive} layer: it takes the resulting monitoring signal as the contractible outcome in a principal-agent model and studies how simple contracts can induce higher annotation quality. In this formulation, heterogeneity is handled by the design of the monitoring technology, while the downstream role of the data enters through the principal's reduced-form value for higher annotation quality.

Section \ref{sec:assess} is therefore practitioner-facing and data-driven. We take expert-based monitoring as the current workhorse for industry quality control, analyze why it can become sample-inefficient in preference annotation, and propose self-consistency monitoring as an improved agreement-based alternative tailored to this setting. At this operational level, Section \ref{sec:assess} answers: how should a company monitor annotators, what statistical properties do the monitoring rules have, and how many inspected samples are necessary for a reliable assessment? In this sense, Section \ref{sec:assess} is both a methodological contribution and a call for greater industry awareness that monitoring design is itself a first-order determinant of annotation quality.

Section \ref{sec:incentivizing} builds on this monitoring analysis and takes an economic view of how the monitoring signal translates into incentives, principal utility, and system-level surplus. We construct a principal-agent model in which the principal (the LLM-developing company) offers a contract for data annotation and the agent (the annotator) then chooses an annotation quality level $\eta \in [0,1]$. The model captures a moral-hazard problem: the company cannot directly observe quality, while the annotator bears the effort cost of producing it. This creates a gap between an ideal benchmark with perfect observability of annotation quality and the best outcome attainable under realistic imperfect monitoring. We prove that, under mild assumptions, this gap scales as $\Theta(1/\sqrt{\mathcal{I} n \log n})$ for binary contracts, in which payment has two levels (Theorem \ref{thm:binary}), and $\Theta(1/(\mathcal{I}n))$ for linear contracts, in which payment is a linear function of average monitored quality (Theorem \ref{thm:linear_contract}), when $\mathcal{I}$ is the Fisher information of a single assessment sample and $n$ samples are used for monitoring. Furthermore, we show that the lower bound for any type of contract must be of $\Omega(1/(\mathcal{I}n))$, which implies the rate-optimality of linear contracts. These results provide another example to justify simple mechanisms for incentivizing human annotators. Our technical result differs from the discrete-action case studied by \citet{frick2023monitoring}, where the corresponding gap for binary contracts is $\exp(-\Theta(n))$ and is optimal among general contracts. Technically, because classical large-deviation tools fail in the continuous-action case, our analysis relies on a fine-grained study of binomial tail probabilities under incentives. This answers the open question posed by \citet{frick2023monitoring} and may be of independent interest.

Put differently, the two sections study the sample complexity of achieving high-quality preference annotation from two complementary angles: Section \ref{sec:assess} studies statistical sample complexity (how many monitored samples are needed to learn or test annotation quality), while Section \ref{sec:incentivizing} studies economic sample complexity (how many monitored samples are needed before simple contracts perform close to the ideal benchmark with perfect observability).

\subsection{Related literature}

\paragraph{Annotation monitoring and management}\  

The ultimate goal of designing monitoring systems and mechanisms is to incentivize human annotators to provide high-quality annotations for reinforcement learning from human feedback (RLHF) and direct preference optimization (DPO). RLHF is a prominent framework for aligning large language models (LLMs) with human preferences. Under this framework, one first trains a reward model using preference data, then applies reinforcement learning to fine-tune the LLM to produce high-reward responses \citep{askell2021general,ouyang2022training,ziegler2019fine}.  For a detailed survey on RLHF, we refer readers to \cite{kaufmann2023survey}. 

A crucial factor underlying RLHF is the quality of the preference data. Recent studies \citep{gao2024impact,chowdhury2024provably,wang2024secrets} demonstrate that alignment performance is highly sensitive to data quality and can deteriorate significantly in the presence of noisy samples or flipped labels. In practice, preference noise has been observed at rates exceeding 20\% in many datasets \citep{zhao2023slic,munos2023nash,cui2023ultrafeedback,touvron2023llama}. To mitigate the effects of such noise during RLHF or other optimization stages, various approaches have been introduced, including filtering out noisy data \citep{gao2024impact,liang2024robust}, applying label smoothing \citep{wang2024secrets}, and designing robust loss functions \citep{gao2024impact, wang2024secrets,liang2024robust}. Different from all these works, our paper takes the perspective of better designing the human annotator system to improve the data quality. 

Beyond these post-annotation techniques, many studies address data quality management during data annotation, which involves both quality estimation and improvement. Quality estimation methods verify whether the annotated data meets the required standards. They may involve expert- or AI-based monitoring \citep{pustejovsky2012natural,qian2021annotation,northcutt2021confident,klie2024efficient,ghosal2022cicero}, injecting annotation tasks with known answers to check consistency \citep{callison2010creating}, measuring inter-annotator agreement (IAA) through various metrics \citep{krippendorff1989content,krippendorff2004reliability,artstein2008inter,monarch2021human}, and designing contracts that compare annotations from different annotators in crowdsourcing (e.g., \cite{miller2005eliciting,bacon2012predicting,cai2015optimum,dasgupta2013crowdsourced}). There is also a line of works on analyzing the value of the quality estimation/monitoring \citep[e.g.,][]{holmstrom1979moral,Jewitt,singh1985monitoring,kim1995efficiency}. If the annotated data does not meet the required standards, improvement strategies should be adopted, such as re-annotation with updated guidelines or retrained annotators \citep{bareket2021neural,klie2024analyzing,ghosal2022cicero} and data filtering \citep{bastan2020author}. For a comprehensive discussion of annotation quality management, we refer to the survey \cite{klie2024analyzing}. 

However, to the best of our knowledge, these methods suffer from two aforementioned challenges when applied to preference data annotation. First, the intrinsic heterogeneity among annotators renders traditional approaches inapplicable, as there is no ground-truth label available for each data sample. Second, the unclear relationship between annotation quality and the performance of downstream tasks complicates the development of a comprehensive evaluation metric for the entire annotated dataset. Together, these challenges hinder the adoption of existing methods in the context of preference data annotation.

\paragraph{Contract Design}\ 

Contract design investigates how to formulate contracts that incentivize agents to pursue the principal's objectives, particularly under conditions of information asymmetry. As a powerful tool, contract design has been widely applied to various problems. For instance, in operations management, \cite{de2019designing,corbett1999designing,corbett2005optimal} propose different contracts for supply chain management with diverse objectives, \cite{adida2019contracts} study contract design for referral services in healthcare, and \cite{jain2013optimal} explore the design of optimal contracts for outsourcing repair and restoration services. In the realm of machine learning, \cite{goldwasser2021interactive} investigate interactive proof systems for PAC verification, while \cite{ivanov2024principal} focus on designing contracts to align the preferences of principals and agents in reinforcement learning. Moreover, \cite{ananthakrishnanknowledge} examine the achievability of optimal outcomes that a fully informed player could secure despite inherent uncertainties in strategic interactions. Another related line of research considers data markets and the pricing of data \citep[e.g.,][]{agarwal2019marketplace,chen2022selling,ho2014adaptive,acemoglu2022too,moscarini2002law}; for a survey, see \cite{bergemann2019markets}. These works primarily address the selling of annotated data to buyers, whereas our focus is on assessing and incentivizing annotators.

Recent papers \cite{ananthakrishnan2024delegating,saig2024delegated} propose simple-form contracts to address machine learning delegation problems under various assumptions about the principal’s utilities, proving the (near-)optimality of these contracts. In their frameworks, agent effort is represented by the quantity of collected data, each data point is assumed to be correctly annotated, and the principal’s utility depends on the accuracy of the resulting machine learning model; hence, the contract is based on accuracy. In contrast, we consider settings where the agent’s effort is reflected in annotation quality and the desired label is influenced by the annotator's own preference. Here, the principal’s utility depends on the quality of annotations, and the contract is based on (tested) annotation quality.

In the context of large language models (LLMs), several studies have explored contract- and mechanism-design approaches to address incentive-related challenges. For example, \cite{saig2024incentivizing} employ algorithmic contract design to enhance the quality of generated content from LLMs. In another direction, \cite{duetting2024mechanism} design auctions that aggregate outputs from multiple LLMs for advertising in an incentive-compatible manner, \cite{harris2023algorithmic} propose a Bayesian persuasion framework with generative AI simulating receiver behavior, and \cite{sun2024mechanism} develop mechanisms for fine-tuning LLMs that aggregate reward models from multiple agents. A recent work \cite{hao2024online} introduces an online learning mechanism that addresses strategic human annotators in RLHF by formulating a novel dynamic Bayesian game. In their setting, agents (annotators) adversarially misreport their preference probabilities to sway the principal's aggregation toward their own interests, while the principal seeks to learn the (unknown) most accurate agent through repeated interactions and minimize regret. In contrast, our agents randomly misreport labels to maximize the difference between the expected utility and their effort, and our principal's goal is to incentivize agents to produce high-quality annotations.

A stream of works has also focused on the theoretical analysis of contract theory \citep[e.g.,][]{dutting2021complexity,dutting2019simple,alon2022bayesian,10.1145/3670865.3673607}. For comprehensive overviews, see \cite{dutting2024algorithmic,NBERw13480}. Among these studies, our work is most closely related to work on linear and binary contracts. For example, \cite{holmstrom1987aggregation,herweg2010binary,georgiadis2020optimal,lopomo2011knightian} study settings with binary contracts, while \cite{holmstrom1987aggregation,walton2022general,carroll2015robustness,barron2020optimal} analyze linear contracts. In contrast to these works, we analyze how quickly the principal's payoff under binary and linear contracts approaches the ideal benchmark that would be attainable if annotation quality were perfectly observable, rather than asking whether those contracts are optimal.

The most relevant paper to our study is \cite{frick2023monitoring}, which studies how quickly the principal's payoff approaches that same ideal benchmark as the amount of data increases (thereby revealing the agent's effort). Specifically, they analyze the convergence rates of binary and linear contracts, showing that binary contracts can achieve the optimal convergence rate while linear contracts yield a slower rate. They further provide a ranking method for monitoring technologies (i.e., for a given agent effort, how the monitoring data are generated) based on principal utility. There are two main differences between their work and ours: (i) they assume a discrete action (effort) space for agents, whereas we extend the analysis to a continuous space with a new proof scheme; and (ii) they rank given monitoring technologies, whereas we design a monitoring technology tailored to language preference data and compare it with classical alternatives from both the assessment and the principal-utility perspectives, using both theoretical and numerical results.

\section{Problem Setup}

\label{sec:setup}

In this section, we first introduce an idealized setting for human annotators, which is commonly assumed as true and widely used as the backbone in developing reward/preference models for aligning LLMs. And then we propose a more practical setting to model the annotation behavior. Consider the task of preference (reward) modeling based on pairwise preference data. Each data sample consists of a tuple 
$$(x, y_1, y_2, Z)$$
where $x\in \mathcal{X}$ denotes a prompt/instruction, $y_1, y_2\in \mathcal{Y}$ are two candidate responses to $x$, and $Z\in\mathcal{Z}$ is a random variable that denotes the feedback indicating the preference between $y_1$ and $y_2$. 
In the canonical setup (\cite{bai2022training, ouyang2022training} among others), the label $Z$ takes binary values, i.e., $\mathcal{Z} = \{0, 1\}$. Furthermore one assumes $Z$ is a Bernoulli random variable such that
\begin{equation}
\mathbb{P}(Z=1) = 1-\mathbb{P}(Z=0) = \mathbb{P}\left(y_{1} \succ y_{2}\ | \ x\right).
\label{eqn:Z_binary}
\end{equation}

\begin{assumption}
Assume the label $Z$ is produced by human annotators and it follows \eqref{eqn:Z_binary}.
\label{assm_basic}
\end{assumption}

This assumption has been (unconsciously) widely used as the backbone of training mainstream preference and reward models; some literature (e.g., \cite{gao2024impact,liang2024robust,wang2024secrets}) discusses the case of noisy or poisoned labels of $Z$ without \eqref{eqn:Z_binary}. Differently, our paper focuses on assessing whether human annotators meet such an assumption (and to what extent), and motivating human annotators to do so.


We denote the true preference probability as a function of $(x, y_1, y_2)$
\begin{equation}
p(x,y_1,y_2) \coloneqq \mathbb{P}(y_1 \succ y_2\ |\ x).  
\label{eqn:true_preference}
\end{equation}
Let $\mathcal{P}$ denote the probability distribution from which $(x,y_1,y_2)$ is sampled. Then, the probability value $p(x,y_1,y_2)$ can be viewed as a random variable accordingly. 

In the light of human annotation under Assumption \ref{assm_basic}, we can think of the probability $p(x, y_1, y_2)$ as the preference between $y_{i,1}$ and $y_{i,2}$ among the whole population, and $Z_i$, the label annotated by one human annotator, is a random draw from the population. We refer to \cite{sun2024rethinking, liu2024reward} for more discussion on the probability model behind the annotation procedure. Throughout the paper, we focus on this binary annotation setup and leave the more complicated annotation setups such as ranking and ordinal feedback for future studies. 

The downstream task in this paper is \emph{LLM post-training/alignment} based on pairwise preference data. Given an annotated dataset
$$\mathcal{D}_{\mathcal{Z}} \coloneqq \{(x_i, y_{i,1}, y_{i,2}, Z_i)\}_{i=1}^N,$$
there are two standard downstream uses. In a reward-modeling/RLHF pipeline \citep{ouyang2022training}, one first learns a reward model $r_{\theta}(x,y)$ from $\mathcal{D}_{\mathcal{Z}}$ and then uses that model to fine-tune the LLM. In a DPO-style pipeline \citep{rafailov2024direct}, one uses the same preference pairs directly to update the policy without a separate reward-model training stage. Thus, throughout the paper, ``downstream performance'' means the performance of the post-trained/aligned model that is ultimately produced from these annotations.

For example, for the reward-modeling component in an RLHF pipeline, the canonical object to learn is a probability model $\hat{p}_\theta$ induced by a reward function $r_{\theta}(x,y): \mathcal{X}\times \mathcal{Y}\rightarrow \mathbb{R}$ with parameter $\theta\in \Theta$, where the preference probability $\hat{p}_\theta$ between $(x, y_1)$ and $(x, y_2)$ is modeled by the relative relationship of $r_\theta(x, y_1)$ and $r_\theta(x, y_2)$. One prevalent specification is the Bradley--Terry model \citep{bradley1952rank}
\[
\hat{p}_\theta(y_1 \succ y_2 | x) \coloneqq \frac{\exp\big(r_{\theta}(x, y_1)\big)}{\exp\big(r_{\theta}(x, y_1)\big) + \exp\big(r_{\theta}(x, y_2)\big)}.
\]
The learned reward model $r_{\theta}(x,y)$ is then used as one ingredient in the later post-training/alignment of the LLM.

\subsection{Annotator probability model}

While the above discusses an idealized setting for preference annotation, we introduce a more realistic probability model to characterize the behavior of annotators. Let a binary random variable $V\in\{0,1\}$ represent whether the annotator treats one sample carefully. Specifically, we consider the following model 
\begin{equation}
\begin{split}
\mathbb{P}(Z=1|V=0, x, y_1, y_2) = \mathbb{P}(Z=0|V=0, x, y_1, y_2) &= \frac{1}{2},\\
\mathbb{P}(Z=1|V=1, x, y_1, y_2) = 1 - \mathbb{P}(Z=0|V=1, x, y_1, y_2) &= p(x, y_1, y_2).
\end{split}
\label{eqn:Z_annotation_quality}
\end{equation}
The probability model \eqref{eqn:Z_annotation_quality} reduces to \eqref{eqn:Z_binary} when $\mathbb{P}(V=0) = 0$. In the other extreme of $\mathbb{P}(V=0) = 1$, the annotations are produced by coin flips. Essentially, $V=1$ indicates that the annotator is fully committed to the sample, whereas $V=0$ indicates that the annotator assigns the label randomly.

Throughout the paper, we consider a simplified case where
\begin{equation}
\mathbb{P}(V=1|x,y_1,y_2) \equiv \eta \text{ \ for all \ } (x,y_1,y_2).
\label{eqn:annotation_eta}
\end{equation}
In other words, the random variable $V$ is a Bernoulli variable independent of $(x,y_1,y_2),$ and for all samples, the probability that the annotator fully commits ($V=1$) is always $\eta$. In the light of \eqref{eqn:annotation_eta}, $\eta$ is a key parameter that reflects the annotator's commitment level and consequently the quality of the annotation. Figure \ref{fig:gao_noise_alignment} \citep[Figure 2]{gao2024impact} illustrates how annotation quality affects downstream performance in LLM post-training. From the company’s perspective, a central goal is to assess annotators by estimating $\eta$. Conversely, annotators may choose $\eta$ strategically based on the design of the annotation system. We examine these two perspectives in the following sections.

\begin{figure}[ht!]
  \centering
  \begin{minipage}[b]{0.48\textwidth}
    \centering
    \includegraphics[width=\textwidth]{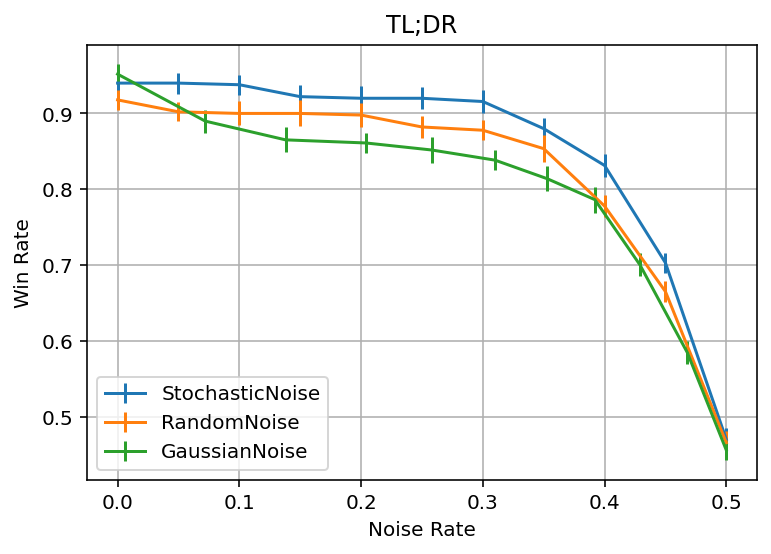}
    \vspace{2mm}
    
    {\small (a) Summarization (TL;DR)}
  \end{minipage}
  \hfill
  \begin{minipage}[b]{0.48\textwidth}
    \centering
    \includegraphics[width=\textwidth]{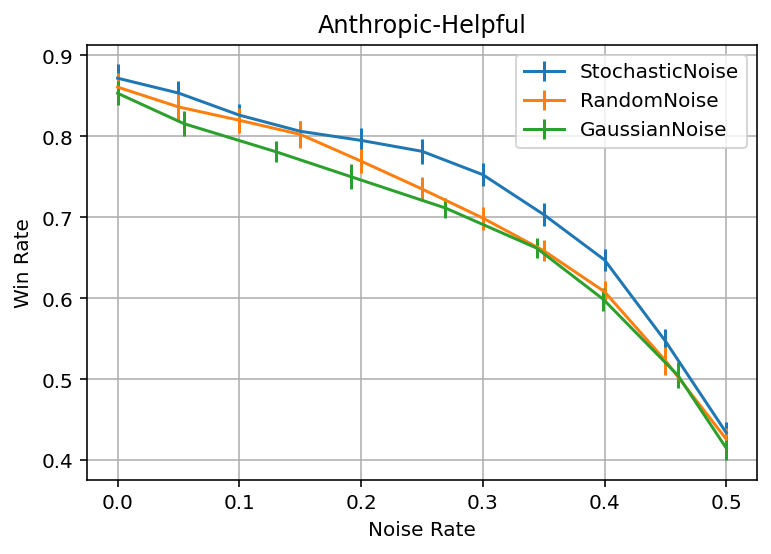}
    \vspace{2mm}
    
    {\small (b) Dialogue (Anthropic-Helpful)}
  \end{minipage}
  
\caption{Aggregate relationship between preference-noise rate ($x$-axis) and downstream alignment performance ($y$-axis; higher is better) in LLM post-training \citet[Figure~2]{gao2024impact}. The two panels correspond to summarization and dialogue tasks, respectively. The figure highlights the importance of annotation quality for system-level performance. Experimental details can be found in \citet{gao2024impact}.}
  \label{fig:gao_noise_alignment}
\end{figure}



\section{Assessing Human Annotators: Monitoring Technologies for Practice}
\label{sec:assess}

In this section, we study the problem of assessing human annotators. Before proceeding, we highlight two challenges inherent to this task. First, annotators are intrinsically heterogeneous: for a sample $(x, y_1, y_2)$ to be labeled, some annotators may judge $y_1$ to be better, while others may prefer $y_2$ (see Appendix~\ref{appx:example} for several examples). As a result, when an annotator assigns a particular label to a sample, we cannot conclude with certainty that the label is mistaken. Second, although some existing works such as \citet{gao2024impact} have studied the impact of noisy annotations on the downstream tasks, the exact relationship between annotation quality and downstream alignment performance is unclear. What makes the evaluation of an annotator more difficult is that, in practice, each annotator's labeled data are pooled with data from other annotators and are often further combined with data generated from other sources, such as AI-generated data, during the alignment phase. This creates an additional attribution problem. Taken together, these factors make it nearly impossible to reliably assess an annotator's performance solely through downstream alignment outcomes.

With these challenges in mind, we analyze two assessment methods: the canonical expert-based monitoring approach and our proposed self-consistency monitoring approach. We take a data-driven practitioner's perspective: before deciding what contract to write, a company must first determine how annotation quality will be monitored. We begin with expert-based monitoring, since it is the natural operational default, and then propose self-consistency monitoring as an improved agreement-based alternative that is better suited to preference annotation. The main output of this section is the informativeness of the monitored signal per inspected sample, which is precisely the object that Section~\ref{sec:incentivizing} later converts into incentives through contract design.

\subsection{Expert-based monitoring}

We first consider the expert-based monitoring system, commonly used in traditional production quality control. An expert can be a manager, an inspector, or a senior-level annotator who inspects (a proportion of) the annotated data carefully to evaluate the quality of the labeled data produced by one annotator. We formulate the problem as a hypothesis testing problem:
$$H_0: \eta\le \eta_0, \ \ \ \ H_1: \eta\ge \eta_1$$
where $0 \leq \eta_0 < \eta_1 \leq 1$. Here $\eta_1$ can be some target level that the annotator is expected to achieve, say, a full commitment on $\eta_1=95\%$ of the samples. Meanwhile, $\eta_0$ is some penalty threshold; if the annotator does not fully commit on $\eta_0=80\%$ of the samples, they will face some penalty.

\begin{algorithm}[ht!]
\caption{Expert-based monitoring}
\centering
\label{alg_expert}
\begin{algorithmic}
\State \textbf{Input:} A set of $N$ samples $\mathcal{D}_{\mathcal{Z}}$ produced by one annotator 
\State \textbf{Output:} Assert $\eta\le \eta_0$ or $\eta\ge \eta_1$ 
\State Randomly select a subset of $n\le N$ samples $\mathcal{D}_{\text{test}}\subseteq \mathcal{D}_{\mathcal{Z}}$ without replacement and let
\[
\mathcal{D}_{\text{test}} \coloneqq \{(x_i, y_{i, 1}, y_{i, 2}, Z_i)\}_{i=1}^n.
\]
\State The expert's monitoring decision is based on some testing function $\Psi: \mathcal{D}_{\text{test}}\rightarrow \{0,1\}$
\State \textbf{Return} $\Psi(\mathcal{D}_{\text{test}})$
\end{algorithmic}
\end{algorithm}

Algorithm \ref{alg_expert} describes the generic procedure of expert-based monitoring. The first step is to sample a subset $\mathcal{D}_{\text{test}}$ of $n$ annotations from the whole data $\mathcal{D}_{\mathcal{Z}}.$ Without loss of generality, we let the test data $\mathcal{D}_{\text{test}}$ be the first $n$ annotations in the overall $N$ annotations. The expert's testing decision can be viewed as a function of $\mathcal{D}_{\text{test}}$ indicating the acceptance ($\Psi(\mathcal{D}_{\text{test}})=0$)  or rejection ($\Psi(\mathcal{D}_{\text{test}})=1$)  of $H_0.$

\begin{proposition}
\label{prop:info_lower_bound}
The following inequality holds for any $0 \leq \eta_0 < \eta_1 \leq 1,$
\[
\inf_{\Psi}\left\{ \mathbb{P}\left(\Psi(\mathcal{D}_{\text{test}}) = 1 \middle | \eta \leq \eta_0\right) + \mathbb{P}\left(\Psi(\mathcal{D}_{\text{test}}) = 0 \middle | \eta \geq \eta_1\right) \right\} \geq \frac12 \cdot\exp(- n D_{\mathrm{kl}}(\mathcal{P}_{\eta_0} \| \mathcal{P}_{\eta_1})),
\]
where the infimum over $\Psi$ is taken with respect to any measurable function and the probability $\mathbb{P}(\cdot)$ on the left hand side is with respect to the law of \eqref{eqn:Z_annotation_quality} and \eqref{eqn:annotation_eta}. Here $\mathcal{P}_{\eta_0}$ and $\mathcal{P}_{\eta_1}$ on the right hand side refer to the joint distribution of $(x,y_1,y_2,Z)$ under the law of \eqref{eqn:Z_annotation_quality} and \eqref{eqn:annotation_eta}, with $\eta=\eta_0$ and $\eta=\eta_1$, respectively. 
\end{proposition}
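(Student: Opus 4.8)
The plan is to run a textbook two-point (Le Cam) argument. First I would reduce the composite test to a simple-versus-simple one: specializing to the boundary parameters $\eta=\eta_0$ under $H_0$ and $\eta=\eta_1$ under $H_1$ can only decrease the left-hand side, so it suffices to lower bound $\inf_{\Psi}\{\mathbb{P}_{\eta_0}(\Psi=1)+\mathbb{P}_{\eta_1}(\Psi=0)\}$, where $\mathbb{P}_{\eta}$ denotes the joint law of the $n$ i.i.d.\ test samples $\{(x_i,y_{i,1},y_{i,2},Z_i)\}_{i=1}^n$ drawn according to \eqref{eqn:Z_annotation_quality}--\eqref{eqn:annotation_eta}; that is, $\mathbb{P}_{\eta}=\mathcal{P}_{\eta}^{\otimes n}$.

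Second, I would invoke the elementary Neyman--Pearson / Le Cam identity,
\[
\inf_{\Psi}\big\{P(\Psi=1)+Q(\Psi=0)\big\}=1-\mathrm{TV}(P,Q),
\]
valid for any two probability measures $P,Q$ on the sample space of $\mathcal{D}_{\text{test}}$, with the infimum attained by the likelihood-ratio test. Applying it with $P=\mathcal{P}_{\eta_0}^{\otimes n}$ and $Q=\mathcal{P}_{\eta_1}^{\otimes n}$ rewrites the left-hand side of the proposition as $1-\mathrm{TV}(\mathcal{P}_{\eta_0}^{\otimes n},\mathcal{P}_{\eta_1}^{\otimes n})$.

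Third, I would pass from total variation to KL divergence. The Bretagnolle--Huber inequality gives $1-\mathrm{TV}(P,Q)\ge\tfrac12\exp\!\big(-D_{\mathrm{kl}}(P\|Q)\big)$, and tensorization of the KL divergence gives $D_{\mathrm{kl}}(\mathcal{P}_{\eta_0}^{\otimes n}\|\mathcal{P}_{\eta_1}^{\otimes n})=n\,D_{\mathrm{kl}}(\mathcal{P}_{\eta_0}\|\mathcal{P}_{\eta_1})$. Chaining these with the two reductions above produces exactly $\tfrac12\exp(-n\,D_{\mathrm{kl}}(\mathcal{P}_{\eta_0}\|\mathcal{P}_{\eta_1}))$. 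If one wants the single-sample divergence in closed form, note that the marginal of $(x,y_1,y_2)$ is the same under both laws, so the chain rule for KL reduces it to $\mathbb{E}_{(x,y_1,y_2)\sim\mathcal{P}}\big[D_{\mathrm{kl}}\big(\mathrm{Bern}(\tfrac12+\eta_0(p-\tfrac12))\,\big\|\,\mathrm{Bern}(\tfrac12+\eta_1(p-\tfrac12))\big)\big]$ with $p=p(x,y_1,y_2)$, the conditional success probabilities being read off from \eqref{eqn:Z_annotation_quality}.

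I do not anticipate a genuine obstacle: the proof is mostly a matter of stitching together three standard facts, and since we only need a \emph{lower} bound, no matching test has to be exhibited. The one point deserving a line of care is degeneracy — if $p(x,y_1,y_2)\in\{0,1\}$ on a positive-probability set and $\eta_1=1$, then $\mathcal{P}_{\eta_0}\not\ll\mathcal{P}_{\eta_1}$, so $D_{\mathrm{kl}}=+\infty$ and the asserted bound reduces to the trivial statement that the error sum is nonnegative; in every other case the conditional Bernoulli parameters $\tfrac12+\eta_j(p-\tfrac12)$ lie strictly inside $(0,1)$, the divergence is finite, and the inequality is informative.
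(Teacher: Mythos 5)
Your proposal is correct and follows essentially the same route as the paper: reduce the composite test to the simple pair $(\eta_0,\eta_1)$, apply Le Cam's identity $\inf_\Psi\{P(\Psi=1)+Q(\Psi=0)\}=1-\mathrm{TV}(P,Q)$, then chain the Bretagnolle--Huber inequality with tensorization of the KL divergence over the $n$ i.i.d.\ samples. If anything, your direct specialization to the boundary parameters is the cleaner form of the paper's reduction (the paper instead takes an infimum over all pairs $\eta\le\eta_0$, $\eta'\ge\eta_1$ and asserts it is attained at the boundary), and your closing remark on the degenerate case $\mathcal{P}_{\eta_0}\not\ll\mathcal{P}_{\eta_1}$ is a welcome extra care the paper omits.
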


The proposition gives a lower bound for the sum of the two types of errors, and it follows from a standard application of Le Cam's method. We note that there is basically no restriction imposed on $\Psi$, and $\Psi$ can even utilize the knowledge of $p(x,y_1,y_2)$ -- true probability \eqref{eqn:true_preference}. This means the lower bound is (probably far) more optimistic than what people can achieve in a real-world scenario. 

To provide more intuitions on the lower bound, Figure \ref{fig:main_hist_LB} plots the lower bound on four different preference datasets. Specifically, for each dataset, we either use its original preference probability $p(x,y_1,y_2)$ or calibrate its corresponding open-source preference model and use it as an estimate of the true preference probability. In some sense, we can think that the preference models have a paramount performance in comparison to human experts on the annotation task, and hence it is a legitimate proxy of $p(x,y_1,y_2)$  (more details on the datasets and calibrations are deferred to Appendix \ref{appx:fig_hist_LB}). For the three preference datasets of PKU, Helpsteer, and Ultra, it takes from 100 to 500 samples to reduce the sum of two types of errors to a reasonable level. The explanation is that for these datasets, the preference between $y_{i,1}$ and $y_{i,2}$ is vague for most of the samples (see examples in Appendix \ref{appx:example}). For the preference dataset of Skywork, the sample number is much smaller as the preference between $y_{i,1}$ and $y_{i,2}$ is mostly clear. To interpret the result, let's say each annotator produces 1000 annotations per week, and if we perform a quality assessment every week, the result implies that we need to examine a few hundred samples for each annotator, which makes this expert-based monitoring infeasible. If the company has the budget to hire experts to examine these many samples, the company can directly hire these experts to annotate the samples, and then this causes another layer of the problem -- how to assess the expert.

\begin{figure}[ht!]
  \centering
  \begin{minipage}[b]{0.23\textwidth}
    \centering
\includegraphics[width=\textwidth]
{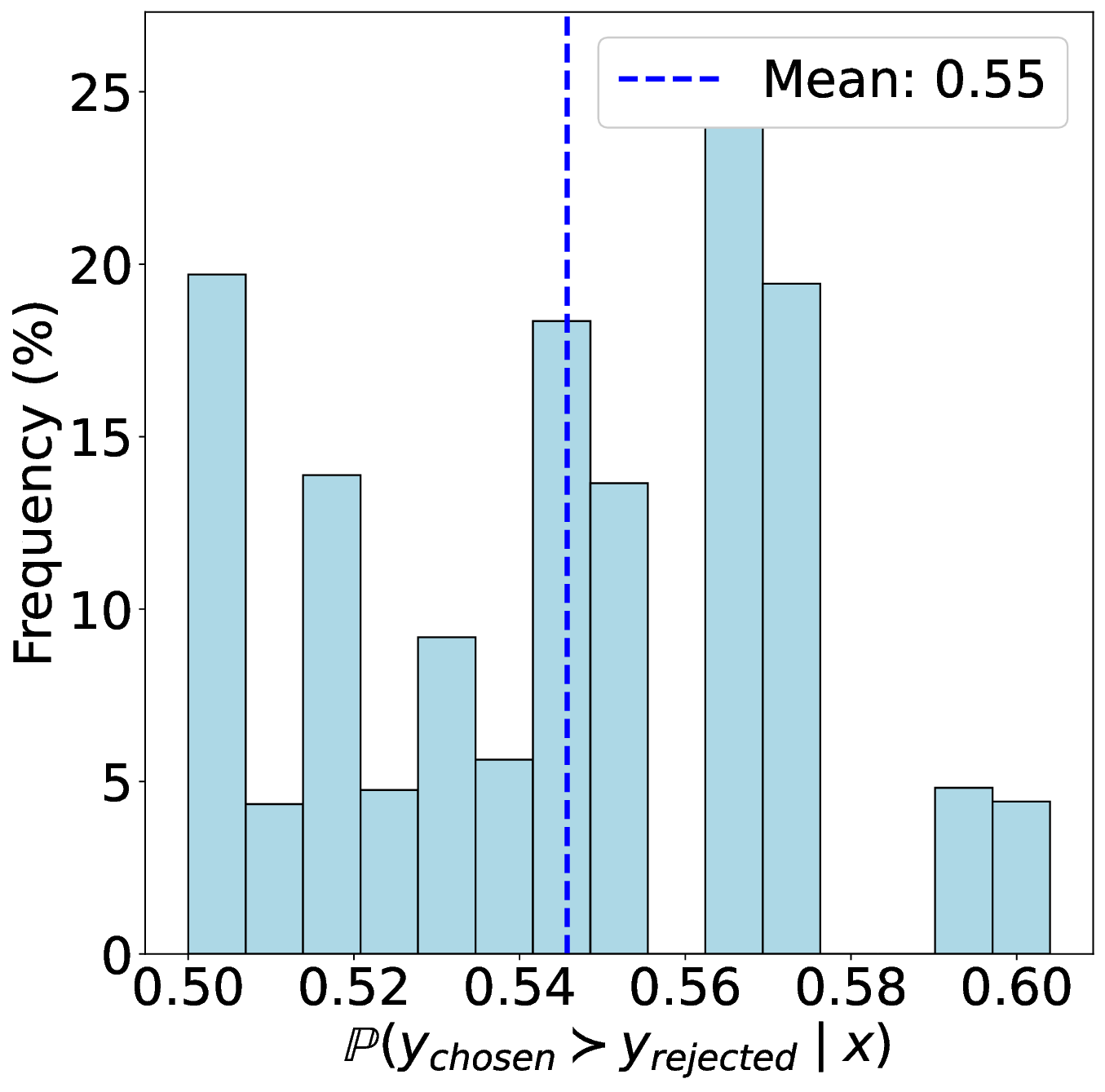}
 
  \end{minipage}
    \hfill
  \begin{minipage}[b]{0.23\textwidth}
    \centering
\includegraphics[width=\textwidth]
{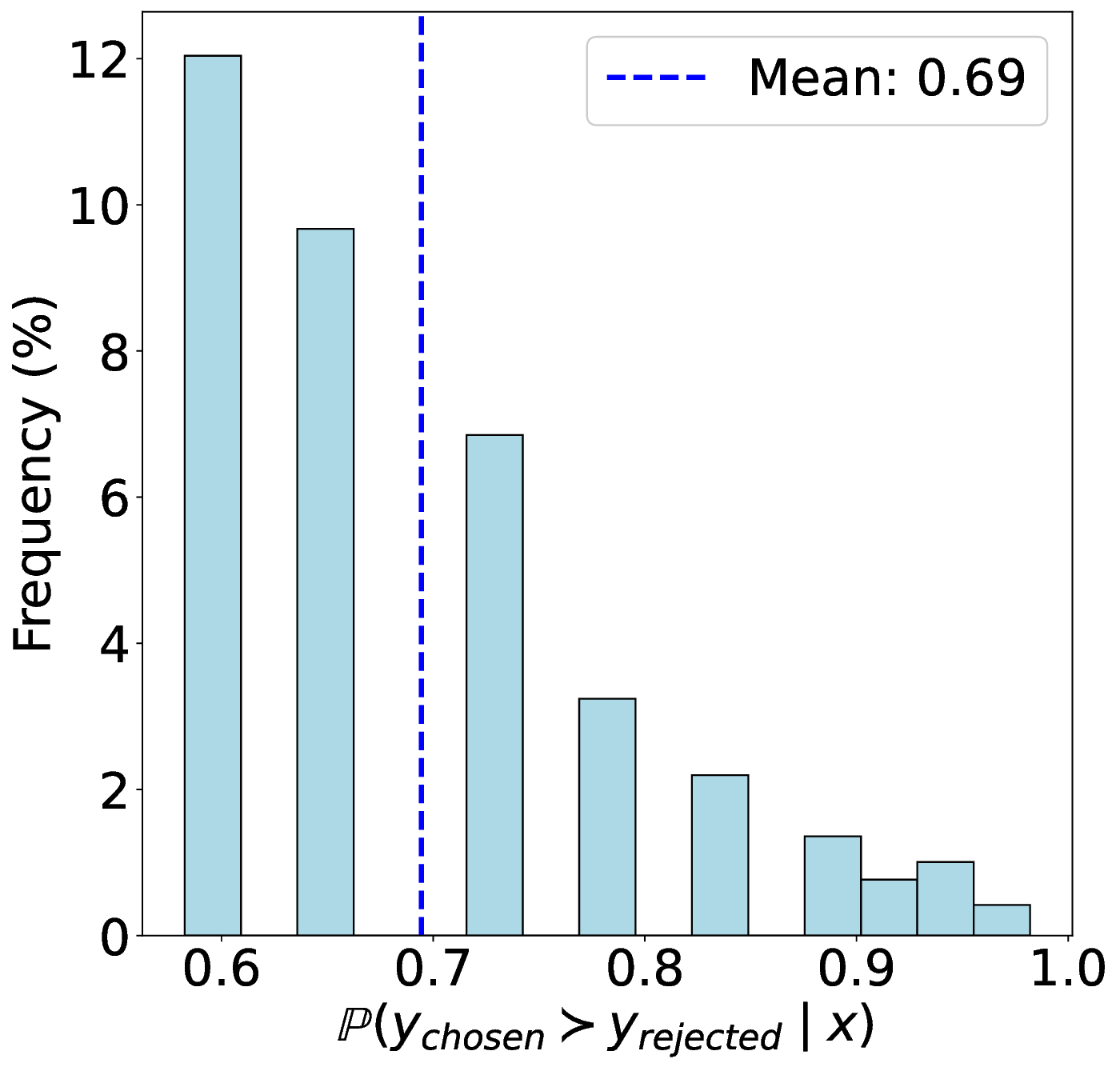}
  \end{minipage}
    \hfill
    \begin{minipage}[b]{0.23\textwidth}
    \centering
\includegraphics[width=\textwidth]
{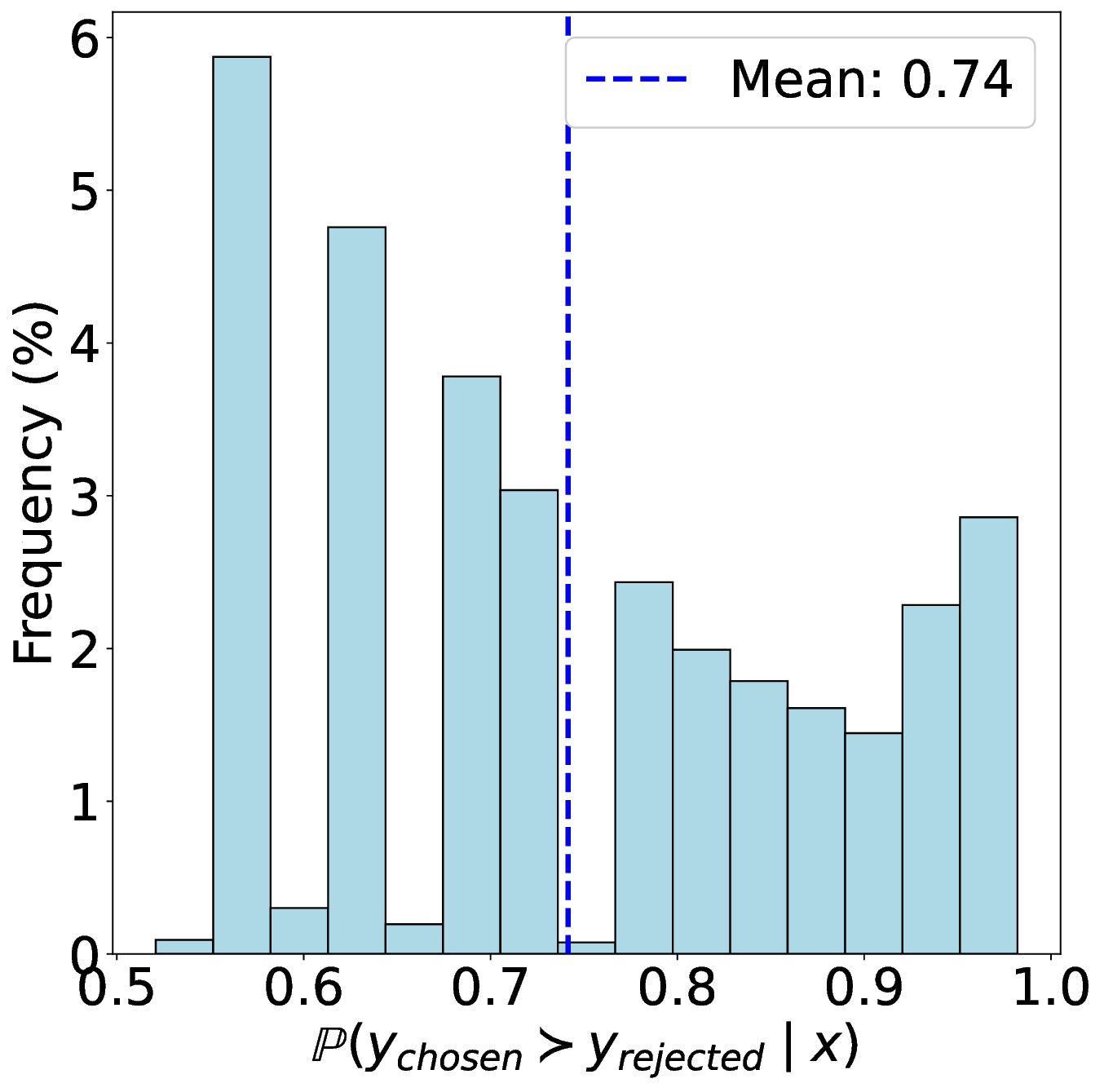}
  \end{minipage}
  \hfill  
  \begin{minipage}[b]{0.23\textwidth}
    \centering
\includegraphics[width=\textwidth]
{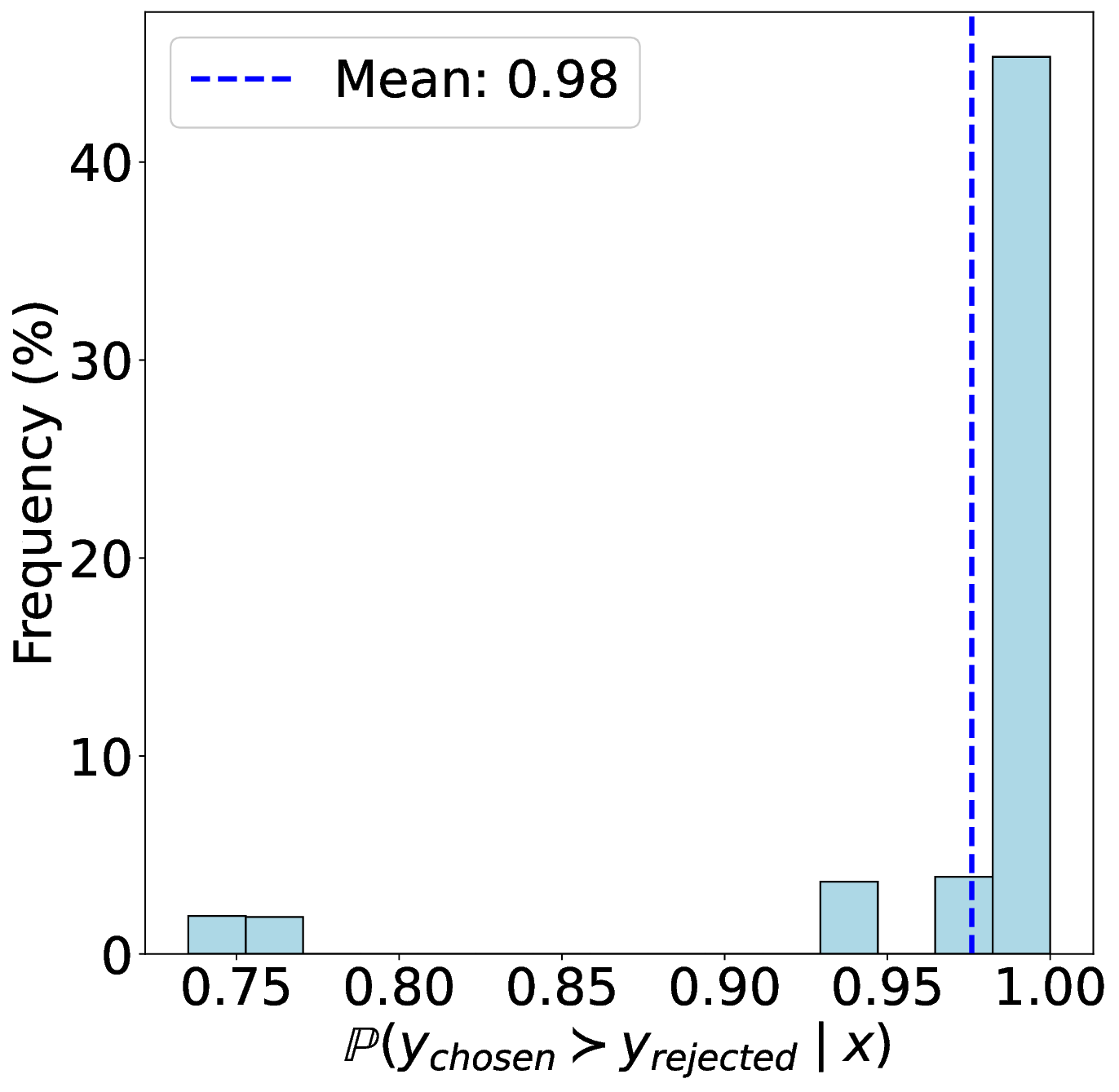}
  \end{minipage}
  \begin{minipage}[b]{0.23\textwidth}
    \centering
\includegraphics[width=\textwidth]
{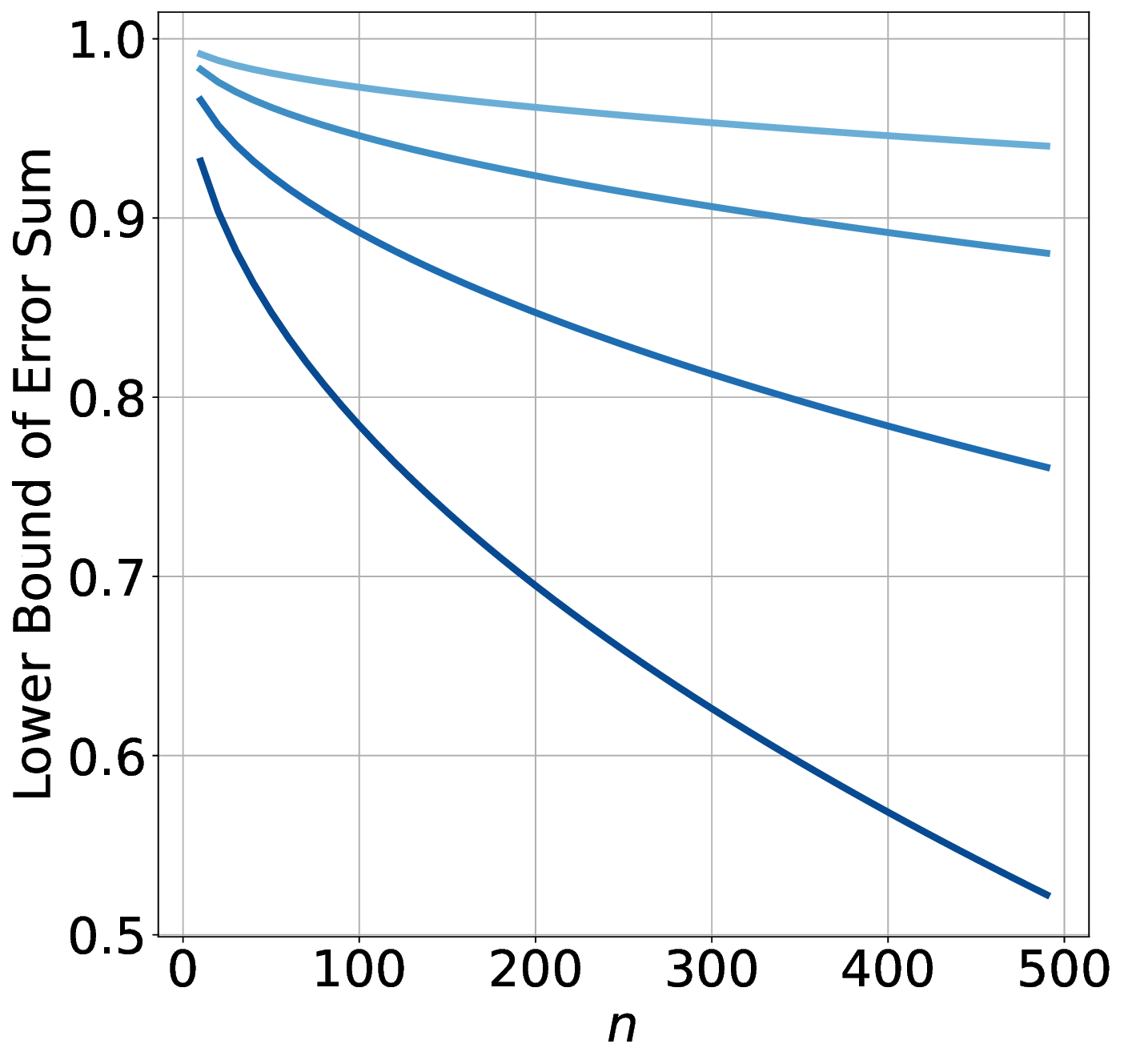}
\vspace{2mm}
    {\small (a) PKU}
  \end{minipage}
    \hfill
  \begin{minipage}[b]{0.23\textwidth}
    \centering
\includegraphics[width=\textwidth]
{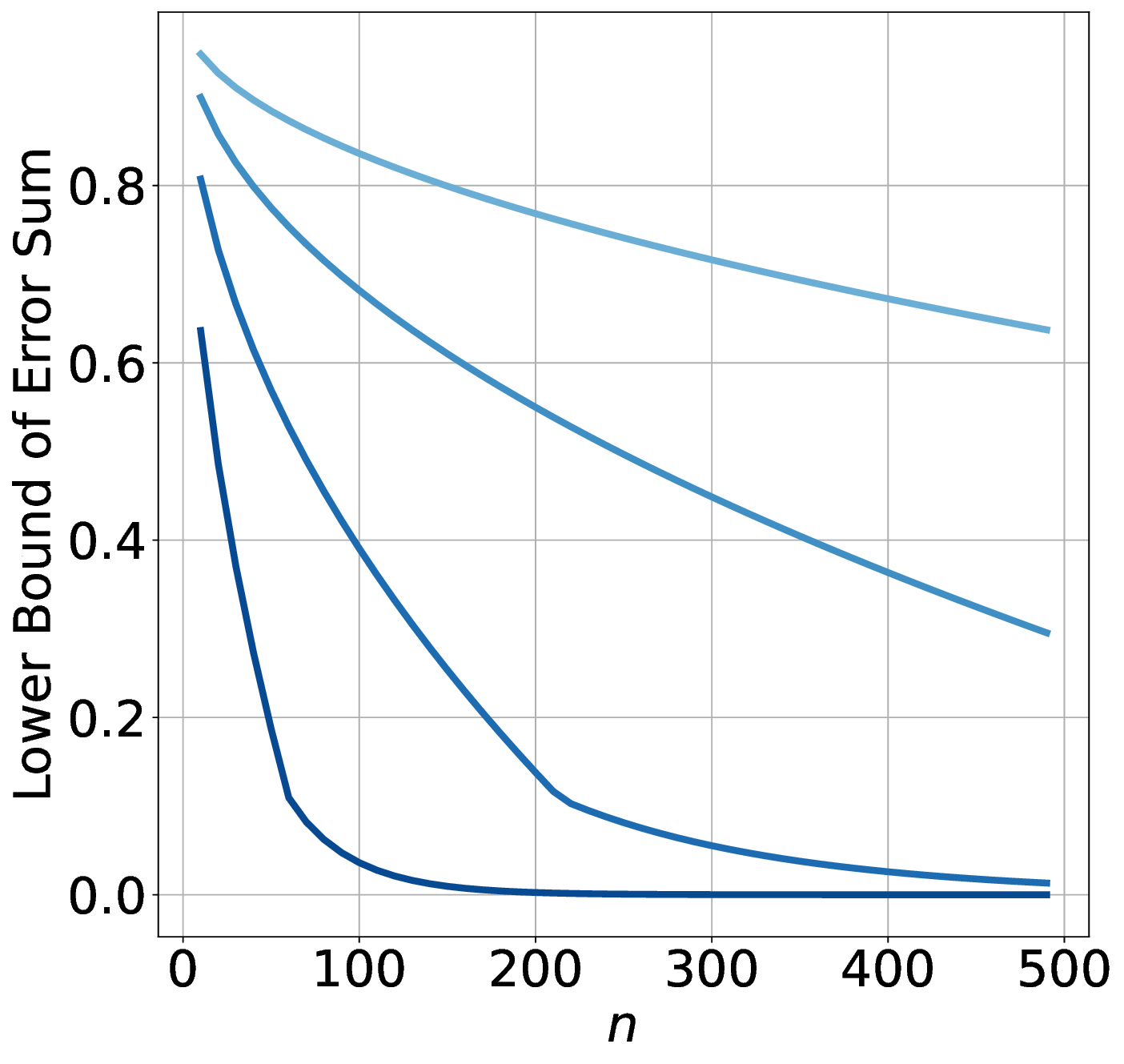}
\vspace{2mm}
    {\small (b) Helpsteer}
  \end{minipage}
    \hfill
    \begin{minipage}[b]{0.23\textwidth}
    \centering
\includegraphics[width=\textwidth]
{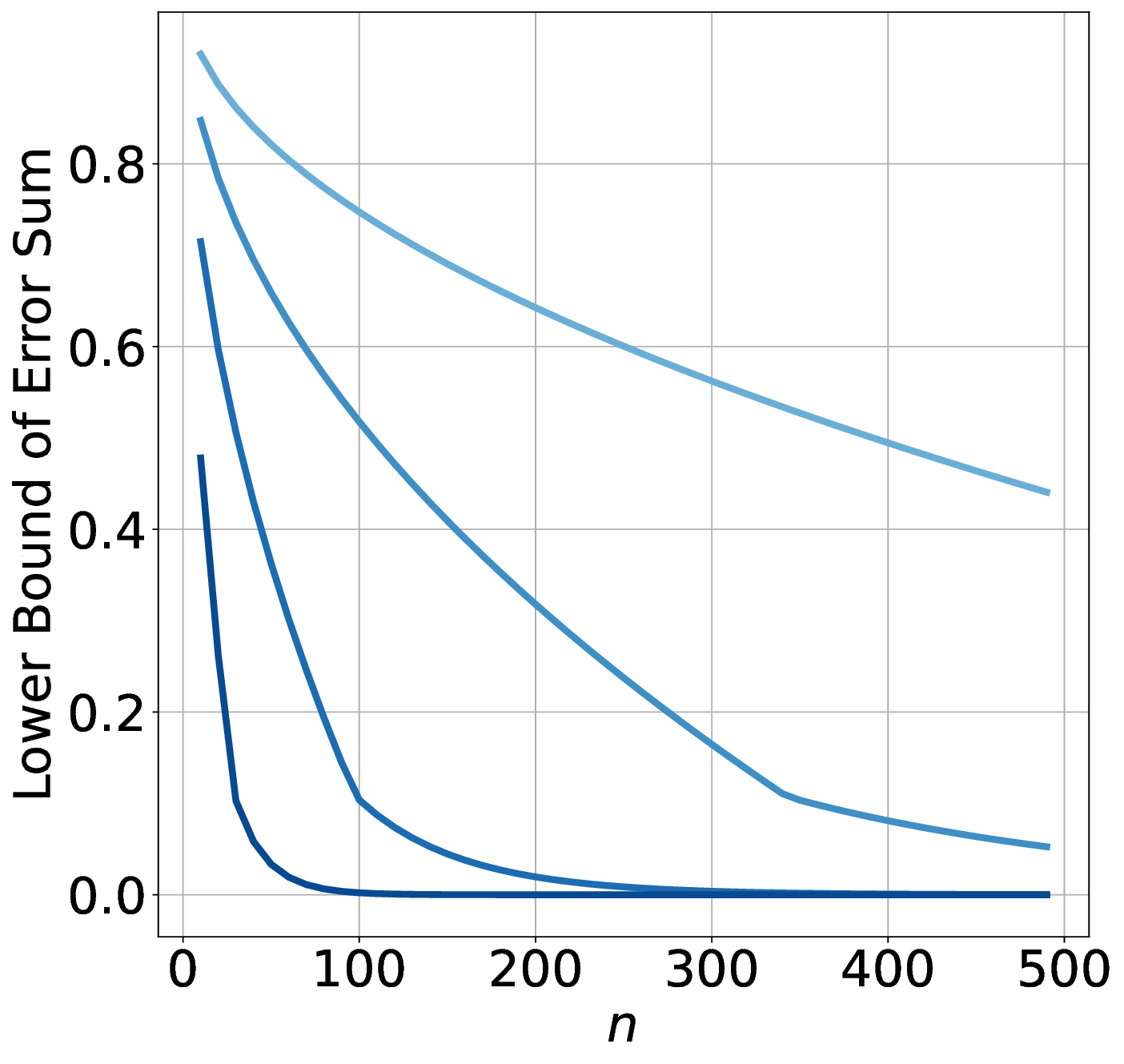}
\vspace{2mm}
    {\small (c) Ultra}
  \end{minipage}
  \hfill  
  \begin{minipage}[b]{0.23\textwidth}
    \centering
\includegraphics[width=\textwidth]
{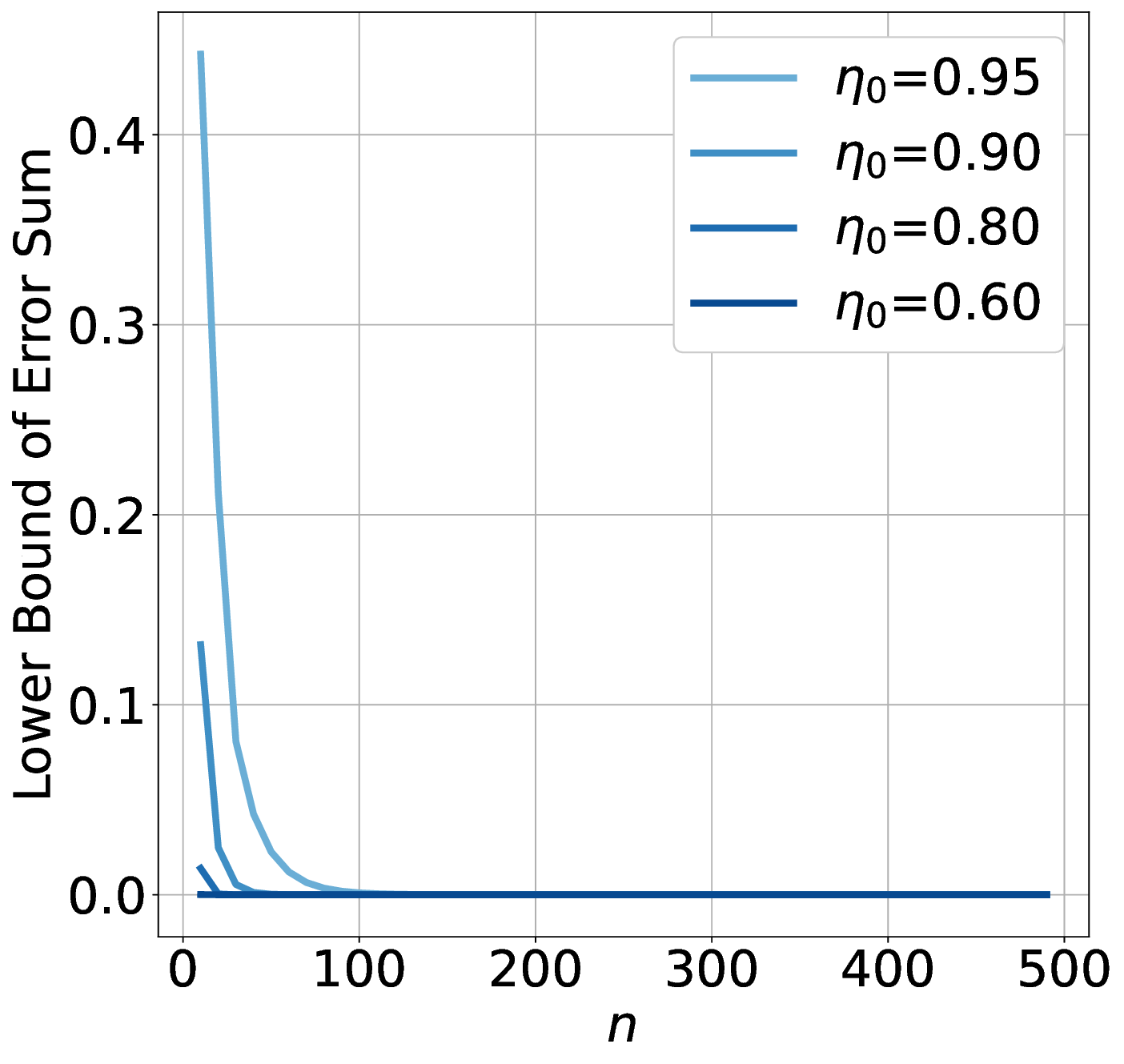}
\vspace{2mm}
    {\small (d) Skywork}
  \end{minipage}
  \caption{ How expert-based monitoring fails on real preference data. Upper four plots: histograms of $\mathbb{P}(y_{\text{chosen}} \succ y_{\text{rejected}} \mid x)$ ($y_{\text{chosen}}$ and $y_{\text{rejected}}$ represent the chosen/preferred and rejected responses, respectively). Lower four plots: the lower bound of the sum of two types of errors against the number of tested annotations $n$ at different $\eta_0$ with $\eta_1=1$ (see Proposition \ref{prop:info_lower_bound}). The observations align with Proposition \ref{prop:info_lower_bound}: the lower bound (i) decreases monotonically with $n$ and increases with $\eta_0$, and (ii) depends on the underlying distribution of preference probabilities. Note that the PKU dataset, where preference probabilities are mostly around 1/2, faces higher errors in assessing annotation quality than datasets (e.g., Skywork) where preference probabilities deviate further from 1/2. See Appendix \ref{appx:fig_hist_LB} for the setup and additional results with $\eta_1<1$.}
\label{fig:main_hist_LB}
\end{figure}

\subsubsection{Agreement-based test -- a practical monitoring algorithm}

Proposition \ref{prop:info_lower_bound} presents a lower bound and some negative numerical results on the practical feasibility (even in the best and most optimistic case) of expert-based monitoring. Now we turn to a more practical scenario by specifying the test function $\Psi$, which provides additional insights and a finer characterization of the test error. Consider an agreement-based test as follows, which is commonly used in the literature of data quality control \citep{krippendorff1989content,krippendorff2004reliability,artstein2008inter,monarch2021human}. For each sample $(x_i,y_{i,1},y_{i,2})$, let $Z_i$ denote the annotator's label and $Z_{i}^{(e)}$ denote the label of the expert. Accordingly, we define the agreement variable 
\[
A_i \coloneqq \begin{cases}
1, \quad & \text{if } Z_i = Z_i^{(e)}, \\
0, \quad & \text{otherwise}.
\end{cases}
\]


It is very natural to suppose that $Z_i$ and $Z_i^{(e)}$ are conditionally independent based on the observation of each $(x_i, y_{i,1}, y_{i,2})$. Let the preference probability of the expert
$$p_{e}(x,y_1,y_2) \coloneqq \mathbb{P}\left(Z_{i}^{(e)}=1\Big|(x,y_1,y_2)\right).$$
Then we have the probability of agreement
$$\mathbb{P}(A_i=1|(x_i,y_{i,1},y_{i,2})) = 2\eta\left(p(x,y_{i,1},y_{i,2})-1/2\right)\left(p_e(x,y_{i,1},y_{i,2})-1/2\right)+1/2.$$

Define
\[
c(x, y_1, y_2) \coloneqq 4\left(p(x,y_{i,1},y_{i,2})-1/2\right)\left(p_e(x,y_{i,1},y_{i,2})-1/2\right)
\]
to quantify the degree of (underlying) agreement between the agent's and the principal's preferences. Intuitively, the sign of $c(x, y_1, y_2)$ indicates whether the preferences between the annotator and the expert are aligned (positive sign) or misaligned (negative sign), while its absolute value reflects the strength of the alignment or misalignment.

Moreover, denote
\[
c_i \coloneqq c(x_i, y_{i,1}, y_{i,2}) \quad \text{and} \quad \bar{c} \coloneqq \mathbb{E}[c_i]
\]
where the expectation is taken with respect to the sample $(x_i, y_{i,1}, y_{i,2})$.

Consider an expert-based monitoring system as Algorithm \ref{alg_expert} that uses the collection $\bm{A} \coloneqq \{A_i\}_{i=1}^n$ for the test $\Psi$. Then the following proposition shows that a test based on the average of the agreement variables $$\bar{A}\coloneqq \frac{1}{n}\sum_{i=1}^n A_i$$ is uniformly most powerful. It also relates the distribution of $\bar{A}$ with the quantity of $\bar{c}.$
\begin{proposition} We have
$n \cdot \bar{A}$ follows a binomial distribution $\mathrm{Binomial}\left(n, \frac{1+\bar{c} \eta}{2}\right)$.
If $\bar{c}$ is known, then $\bar{A}$ is a sufficient statistic of $\eta$. In addition, one can build a Neyman-Pearson uniformly most powerful test for $\eta$ based on $\bar{A}$.
\label{prop:UMP}
\end{proposition}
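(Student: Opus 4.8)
The plan is to reduce all three claims to classical facts about a one-parameter binomial family, the only real work being to put $\bm{A}$ into i.i.d.\ Bernoulli form. First I would observe that the samples $(x_i,y_{i,1},y_{i,2})$ are i.i.d.\ draws from $\mathcal{P}$ and that, conditionally on the $i$-th sample, the annotator's label $Z_i$ (generated by \eqref{eqn:Z_annotation_quality}--\eqref{eqn:annotation_eta}) and the expert's label $Z_i^{(e)}$ are drawn independently; hence the enlarged tuples $(x_i,y_{i,1},y_{i,2},Z_i,Z_i^{(e)})$ are i.i.d.\ across $i$, and so are the agreement indicators $A_i$, which are a fixed measurable function of these tuples. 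It is worth flagging that, conditionally on the samples, the $A_i$ are independent but \emph{not} identically distributed --- each has success probability $\tfrac{1+c_i\eta}{2}$ --- and only after marginalizing over the sample do they become identically distributed. Combining the conditional probability $\mathbb{P}(A_i=1\mid x_i,y_{i,1},y_{i,2})=\tfrac{1+c_i\eta}{2}$ established above with the tower rule gives $\mathbb{P}(A_i=1)=\mathbb{E}[\tfrac{1+c_i\eta}{2}]=\tfrac{1+\bar{c}\eta}{2}$, whence $n\bar{A}=\sum_{i=1}^n A_i\sim\mathrm{Binomial}\!\left(n,\tfrac{1+\bar{c}\eta}{2}\right)$.

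For the sufficiency claim, set $\theta(\eta)\coloneqq\tfrac{1+\bar{c}\eta}{2}$ and apply the Fisher--Neyman factorization theorem: for every $\bm{a}\in\{0,1\}^n$,
\[
\mathbb{P}(\bm{A}=\bm{a})=\theta(\eta)^{\sum_i a_i}\bigl(1-\theta(\eta)\bigr)^{\,n-\sum_i a_i},
\]
which depends on $\bm{a}$ only through $\sum_i a_i=n\bar{A}$. Since $\bar{c}$ is known and (assuming $\bar{c}\neq 0$) the map $\eta\mapsto\theta(\eta)$ is a bijection onto its range, $n\bar{A}$ --- equivalently $\bar{A}$ --- is a sufficient statistic for $\eta$. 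If $\bar{c}=0$, the distribution of $\bm{A}$ is free of $\eta$, so the assessment carries no information about the commitment level; this degenerate case should be noted.

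For the UMP claim I would verify monotone likelihood ratio and invoke Karlin--Rubin. Taking $\eta<\eta'$ and, say, $\bar{c}>0$ so that $\theta(\eta)<\theta(\eta')$, the ratio of the two binomial likelihoods at the value $t=n\bar{A}$,
\[
\frac{\theta(\eta')^{t}\bigl(1-\theta(\eta')\bigr)^{n-t}}{\theta(\eta)^{t}\bigl(1-\theta(\eta)\bigr)^{n-t}}=\left(\frac{1-\theta(\eta')}{1-\theta(\eta)}\right)^{\! n}\left(\frac{\theta(\eta')\,(1-\theta(\eta))}{\theta(\eta)\,(1-\theta(\eta'))}\right)^{\! t},
\]
is strictly increasing in $t$, so $\{\mathrm{Binomial}(n,\theta(\eta))\}_\eta$ has MLR in $n\bar{A}$ with respect to $\eta$. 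By the Karlin--Rubin theorem --- the one-sided extension of the Neyman--Pearson lemma --- the test that rejects $H_0:\eta\le\eta_0$ when $\bar{A}$ exceeds a cutoff, with randomization on the boundary atom to attain exact size $\alpha$ (as is needed for a discrete statistic), is uniformly most powerful against $H_1:\eta\ge\eta_1$; when $\bar{c}<0$ the rejection region is instead $\bar{A}$ small. By sufficiency this UMP test is a function of $\bar{A}$ alone, which is precisely the asserted statement.

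No step here is genuinely difficult: once the binomial law is in place, sufficiency and the UMP property are textbook consequences. The two places that demand attention are (i) extracting the \emph{marginal} i.i.d.\ structure of $\bm{A}$ from the hierarchical ``sample, then labels'' generative model rather than settling for the weaker conditional independence, and (ii) the bookkeeping around the sign of $\bar{c}$, which flips the direction of the optimal rejection region, together with the discreteness of $\bar{A}$, which forces boundary randomization for an exact-size UMP test.
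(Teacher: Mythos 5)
Your proposal is correct and follows essentially the same route as the paper's proof: marginalize over the sample to get $\mathbb{P}(A_i=1)=\tfrac{1+\bar c\eta}{2}$ by linearity, factor the binomial likelihood to get sufficiency, and invoke monotone likelihood ratio with Karlin--Rubin for the UMP claim. Your added caveats (marginal versus conditional i.i.d.\ structure, the sign of $\bar c$, and boundary randomization for exact size) are sound refinements of details the paper glosses over, but they do not change the argument.
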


For the later contracting analysis, Proposition \ref{prop:UMP} provides the key reduced-form takeaway from expert-based monitoring: after aggregation, the monitored statistic depends on the annotator's (average) quality only through the pass probability $\frac{1+\bar{c}\eta}{2}$. Thus, expert-based monitoring furnishes a noisy but contractible signal whose informativeness is summarized by $\bar{c}$. Section \ref{sec:incentivizing} will use exactly this type of reduced-form signal when studying contracts. 

Proposition~\ref{prop:info_lower_bound} gives a lower bound for the error of a hypothesis testing problem with respect to $\eta$. The following proposition considers the estimation problem, and one can yield a similar lower bound by applying Le Cam's two-point method. 


\begin{proposition}
\label{prop:lecam_lower_bound}
Denote $\bm{A} = \{A_1, \dots, A_n\}$'s joint distribution under $\eta$ by $\mathcal{Q}_{\eta}^{\otimes n}$. Denote $\hat{\eta}$ as any estimator of $\eta$ based on $\bm{A}$, we have
\begin{equation}
\begin{split}
\inf_{\hat{\eta}} \sup_{\eta \in [0, 1]} \mathbb{E}\left[|\hat{\eta} - \eta|\right]
&\geq \sup_{\eta_0, \eta_1 \in [0, 1]} \frac12 \cdot |\eta_0 - \eta_1| \cdot \left(1-\mathrm{TV}(\mathcal{Q}_{\eta_0}^{\otimes n}, \mathcal{Q}_{\eta_1}^{\otimes n})\right)\\
&\geq \sup_{\eta_0, \eta_1 \in [0, 1]} \frac14 \cdot |\eta_0 - \eta_1| \cdot \exp\left(-nD_{\mathrm{kl}}(\mathcal{Q}_{\eta_0} \| \mathcal{Q}_{\eta_1})\right)
= \Omega\left(\frac{1}{|\bar{c}|\sqrt{n}}\right).
\end{split}
\label{eq:lecam_continuous_lower_bound}
\end{equation}
\end{proposition}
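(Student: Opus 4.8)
The plan is to prove the chain of inequalities and the final $\Omega(1/(|\bar c|\sqrt n))$ rate in \eqref{eq:lecam_continuous_lower_bound} in three stages. First I would establish the Le Cam two-point reduction: for any pair $\eta_0,\eta_1\in[0,1]$ and any estimator $\hat\eta$, a standard argument (testing $\eta_0$ versus $\eta_1$ with the decision rule ``report $\eta_0$ if $\hat\eta$ is closer to $\eta_0$'') gives $\sup_\eta \mathbb{E}[|\hat\eta-\eta|]\ge \tfrac12|\eta_0-\eta_1|\big(1-\mathrm{TV}(\mathcal{Q}_{\eta_0}^{\otimes n},\mathcal{Q}_{\eta_1}^{\otimes n})\big)$; taking the supremum over $(\eta_0,\eta_1)$ yields the first line. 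This is textbook material (e.g.\ the version of Le Cam's method in Tsybakov), so I would state it with a one-line justification rather than a full derivation.

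Second, I would pass from total variation to KL via the Bretagnolle--Huber inequality, $1-\mathrm{TV}(P,Q)\ge \tfrac12\exp(-D_{\mathrm{kl}}(P\|Q))$, combined with tensorization $D_{\mathrm{kl}}(\mathcal{Q}_{\eta_0}^{\otimes n}\|\mathcal{Q}_{\eta_1}^{\otimes n}) = n\,D_{\mathrm{kl}}(\mathcal{Q}_{\eta_0}\|\mathcal{Q}_{\eta_1})$ (the $A_i$ are i.i.d.\ once we integrate out the samples, as Proposition~\ref{prop:UMP} makes explicit: $n\bar A\sim\mathrm{Binomial}(n,(1+\bar c\eta)/2)$). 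This gives the second line, $\tfrac14|\eta_0-\eta_1|\exp(-nD_{\mathrm{kl}}(\mathcal{Q}_{\eta_0}\|\mathcal{Q}_{\eta_1}))$. Here it is cleanest to note that the marginal of each $A_i$ is $\mathrm{Bernoulli}\big((1+\bar c\eta)/2\big)$, so $D_{\mathrm{kl}}(\mathcal{Q}_{\eta_0}\|\mathcal{Q}_{\eta_1})$ is just the KL divergence between two Bernoullis with parameters $q_0=(1+\bar c\eta_0)/2$ and $q_1=(1+\bar c\eta_1)/2$.

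Third — and this is the only part requiring real work — I would choose the two points to optimize the resulting bound and extract the $\Omega(1/(|\bar c|\sqrt n))$ rate. The natural choice is $\eta_0,\eta_1$ separated by $\Delta\asymp 1/(|\bar c|\sqrt n)$, centered near a convenient interior point (say $\eta=1/2$, or more carefully a point bounded away from $0$ and $1$ so that $q_0,q_1$ stay in a compact subinterval of $(0,1)$). Then $|q_0-q_1| = |\bar c|\Delta/2 \asymp 1/\sqrt n$, and a second-order Taylor expansion of Bernoulli KL gives $D_{\mathrm{kl}}(\mathrm{Ber}(q_0)\|\mathrm{Ber}(q_1))\le \frac{(q_0-q_1)^2}{q_1(1-q_1)} = O(1/n)$, so $\exp(-n D_{\mathrm{kl}}) = \Omega(1)$, and the product $\tfrac14\Delta\cdot\Omega(1)$ is $\Omega(1/(|\bar c|\sqrt n))$. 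The main obstacle is bookkeeping at the boundary: if $|\bar c|$ is small the spacing $\Delta\asymp 1/(|\bar c|\sqrt n)$ may exceed $1$, and one must also ensure the chosen $\eta_0,\eta_1$ lie in $[0,1]$ with $q_0,q_1$ bounded away from the endpoints so the Taylor bound on Bernoulli KL is uniform; I would handle this by taking $\Delta=\min\{c_0/(|\bar c|\sqrt n),\,1/2\}$ for a small absolute constant $c_0$ and placing the interval around $\eta=1/2$, noting that the regime $|\bar c|\sqrt n\lesssim 1$ is anyway uninteresting (the bound is then $\Omega(1)$, i.e.\ trivially true since $\eta\in[0,1]$). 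Assembling these pieces gives \eqref{eq:lecam_continuous_lower_bound}.
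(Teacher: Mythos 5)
Your proposal is correct and follows essentially the same route as the paper's proof: Le Cam's two-point method for the first inequality, Bretagnolle--Huber plus KL tensorization for the second, and a two-point choice with separation $\asymp 1/(|\bar c|\sqrt n)$ so that the Bernoulli KL is $O(1/n)$ and the exponential factor is $\Omega(1)$. The only cosmetic difference is that the paper places the pair at $\eta_1=0$, $\eta_0=1/(\bar c\sqrt n)$ rather than around $\eta=1/2$, and your extra bookkeeping for the regime $|\bar c|\sqrt n\lesssim 1$ is a harmless (indeed slightly more careful) refinement of the same argument.
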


From the proposition, a larger $|\bar{c}|$ can reduce the minimax lower bound of the error for any possible estimator of $\eta$. And $|\bar{c}|$ is large if the sample $(x, y_1, y_2)$ has an apparent preference meaning in the eyes of both the annotator and the expert; in such a case, both $|p(x,y_1,y_2) - 1/2|$ and $|p_e(x,y_1,y_2) - 1/2|$ are large on expectation and share the same sign for most of the samples (or disagree with each other on most samples, which is unlikely in the real cases). From Proposition~\ref{prop:info_lower_bound} and Figure \ref{fig:main_hist_LB}, we know that the lower bound in the previous proposition will be reduced when $|p(x,y_1,y_2) - 1/2|$ is large on expectation. Now in this specialized setting, we additionally expect $|p_e(x,y_1,y_2) - 1/2|$ to be large. The quantity $|p_e(x,y_1,y_2) - 1/2|$ will be maximized when the expert has the exact knowledge of $p(x,y_1,y_2)$ which is hardly practical. 


\textbf{Discussion of Proposition~\ref{prop:info_lower_bound} and Proposition~\ref{prop:UMP}. }
Proposition~\ref{prop:info_lower_bound} and Proposition~\ref{prop:UMP} address the same question at two different levels: can we reliably tell, from a limited audit, whether an annotator is producing high-quality labels or not?

Proposition~\ref{prop:info_lower_bound} is a  best-case benchmark: it lower bounds the smallest possible total testing error over all decision rules that could use the inspected items, and it is stated under an intentionally optimistic scenario (the test can even leverage the true underlying preference probabilities $p(x,y_1,y_2)$).
For example, from Proposition~\ref{prop:info_lower_bound}, if most comparisons are inherently close calls (i.e., many comparisons have preference probability near $1/2$ under a population level), then the data generated by a ``high-effort'' annotator and a ``low-effort'' annotator look statistically similar, so any  monitoring method needs many audited samples to separate the two with small error.

Proposition~\ref{prop:UMP} does not imply that we have a test that attains (or matches) the lower bound in Proposition~\ref{prop:info_lower_bound}.
Instead, Proposition~\ref{prop:UMP} analyzes a specific, implementable monitoring approach that is common in practice: the auditor/expert provides their own label $Z_i^{(e)}$, and the principal uses only the agreement indicators $A_i=\mathbb{I}\{Z_i=Z_i^{(e)}\}$.
The average agreement $\bar A$ summarizes the relevant evidence about the annotator's quality $\eta$ (when $\bar c$ is known), then by Proposition~\ref{prop:UMP}, the optimal rule within this agreement-based experiment is a simple threshold test in $\bar A$ (Neyman--Pearson uniformly most powerful).

Therefore, we present Proposition~\ref{prop:info_lower_bound} as an optimistic lower-bound benchmark that highlights the intrinsic difficulty from the heterogeneous preference of annotators, and Proposition~\ref{prop:UMP} as a complementary characterization of the best achievable test in a practically relevant (agreement-based) monitoring implementation, rather than as a claim of matching the information-theoretic benchmark.

\subsection{Self-consistency monitoring}

Now we present the second assessment method which we call \textit{self-consistency monitoring}. It aims to address the heterogeneity among different annotators, and it also saves the additional time and money cost of expert-based monitoring. The idea is to duplicate some samples in the dataset for the annotator to label them twice and then to check if the two labels are consistent. Such an idea is very natural and often adopted in survey design. The full procedure is described in Algorithm \ref{alg_self}.

\begin{algorithm}[ht!]
\centering
\caption{Self-consistency monitoring}
\label{alg_self}
\begin{algorithmic}
\State \textbf{Input:} A set of $N+n$ samples $\mathcal{D}_{\mathcal{Z}}'$ produced by one annotator. Among the $N+n$ samples, there are $n$ duplicated samples, i.e., $n$ random samples from the original $\mathcal{D}_{\mathcal{Z}}$ where each appears twice in $\mathcal{D}_{\mathcal{Z}}'$. Let 
$Z_i$ and $Z_{i}'$ denote the annotated labels for these $n$ samples for their first and second appearance, and define the self-consistency variable as
$$A_{i} \coloneqq \begin{cases}
1, \quad & \text{if } Z_i = Z_i', \\
0, \quad & \text{otherwise}.
\end{cases} $$
\State \textbf{Output:} Assert $\eta\le \eta_0$ or $\eta\ge \eta_1$ 
\State Based on the self-consistency variables, define
$$\bm{A} = \{A_{i}\}_{i=1}^n$$
\State The self-consistency monitoring decision is based on some testing function $\Psi: \bm{A} \rightarrow \{0,1\}$
\State \textbf{Return} $\Psi(\bm{A})$
\end{algorithmic}
\end{algorithm}

In comparison with expert-based monitoring, self-consistency monitoring employs the annotators themselves as the experts to assess their annotations. We can augment the previous probability model \eqref{eqn:Z_annotation_quality} and \eqref{eqn:annotation_eta} to capture the annotator's behavior in the second annotation of the same sample. Specifically, for a sample $(x,y_1,y_2)$, let $Z$ and $Z'$ be the two annotations produced by the same annotators. Consider the following probability model 
\begin{equation}
\begin{split}
\mathbb{P}\left(Z=Z'|V=0, x, y_{1}, y_{2}\right) &=\frac{1}{2},\\
\mathbb{P}(Z=Z'|V=1, x, y_1, y_2)& = 1 -\frac{\delta}{2}.
\end{split}
\label{eqn:Z_annotation_quality_new}
\end{equation}
where $\delta\in[0,1]$ is the disagreement probability strength under full commitment. To interpret the model, when the annotator assigns the label randomly for the sample ($V=0$), then the two annotations agree with 1/2 probability. When the annotator fully commits on the sample ($V=1$), it may still happen that the two annotations are different, and we model this disagreement probability with $\delta/2$, i.e., with probability $\delta$ the annotator may randomly guess a label conditional on the full commitment.  In the ideal case, the annotator memorizes precisely their preference over all the samples and $\delta\equiv 0$; in practice, even under a full commitment, $\delta$ can still be positive but it should be reasonably small. Then under the probability model of \eqref{eqn:Z_annotation_quality}, \eqref{eqn:annotation_eta} and \eqref{eqn:Z_annotation_quality_new}, the agreement probability on the $i$-th duplicate sample
\begin{equation}
\mathbb{P}\left(A_i = 1|x_{i},y_{i,1},y_{i,2}\right) = \eta(1-\delta)/2 + 1/2.
\label{eqn:agreement_self_consist} 
\end{equation}

This reduced-form agreement probability is precisely the object needed in Section \ref{sec:incentivizing}. Once self-consistency monitoring is adopted, the principal can contract on the empirical agreement rate, and the coefficient multiplying $\eta$ summarizes how informative each monitored sample is about the annotator's quality.

\begin{figure}[ht!]
  \centering
  \begin{minipage}[b]{0.23\textwidth}
    \centering
\includegraphics[width=\textwidth]
{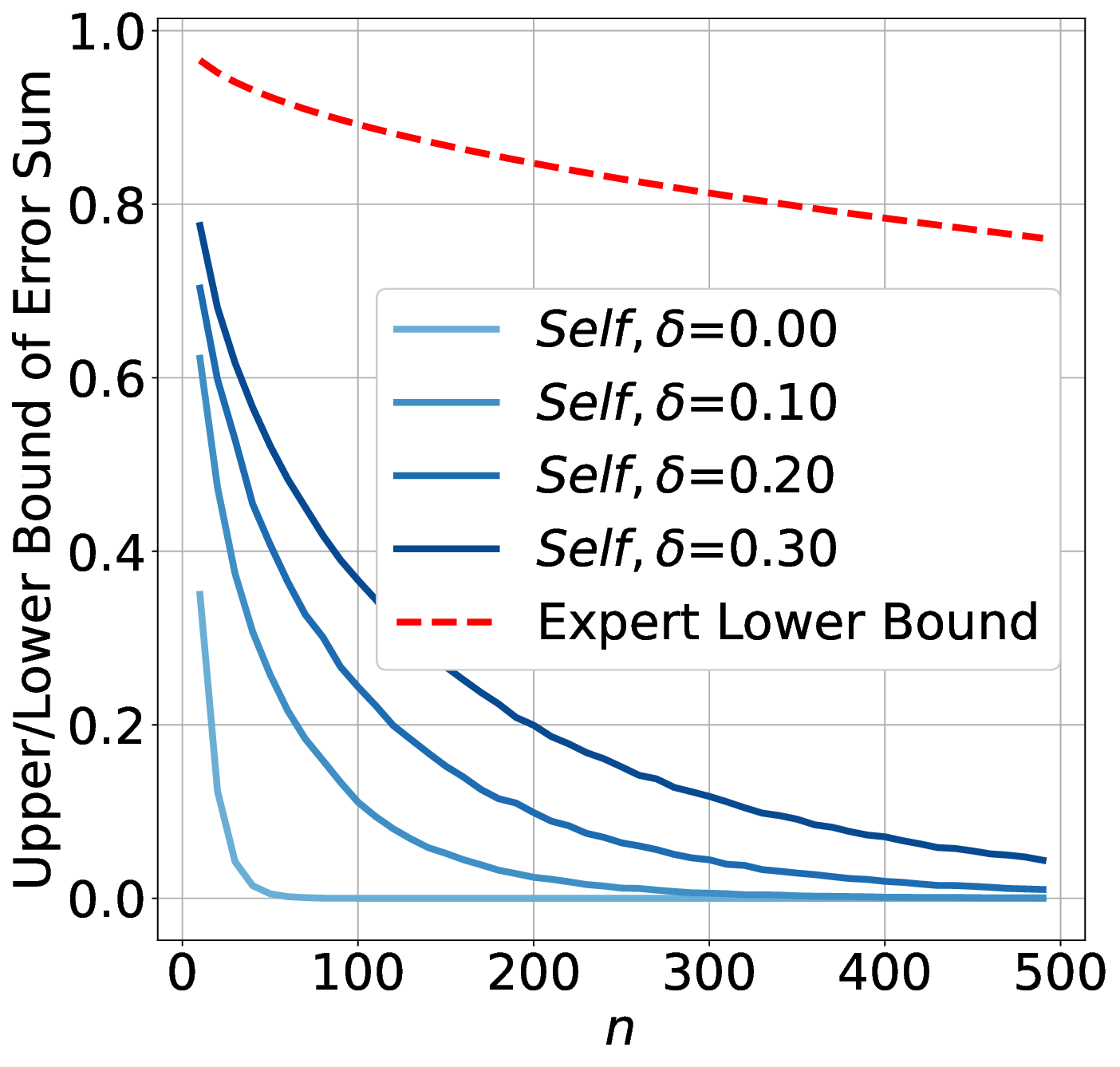}
    \vspace{2mm} {\small (a) PKU, $\eta_0=0.8$}
  \end{minipage}
    \hfill
  \begin{minipage}[b]{0.23\textwidth}
    \centering
\includegraphics[width=\textwidth]
{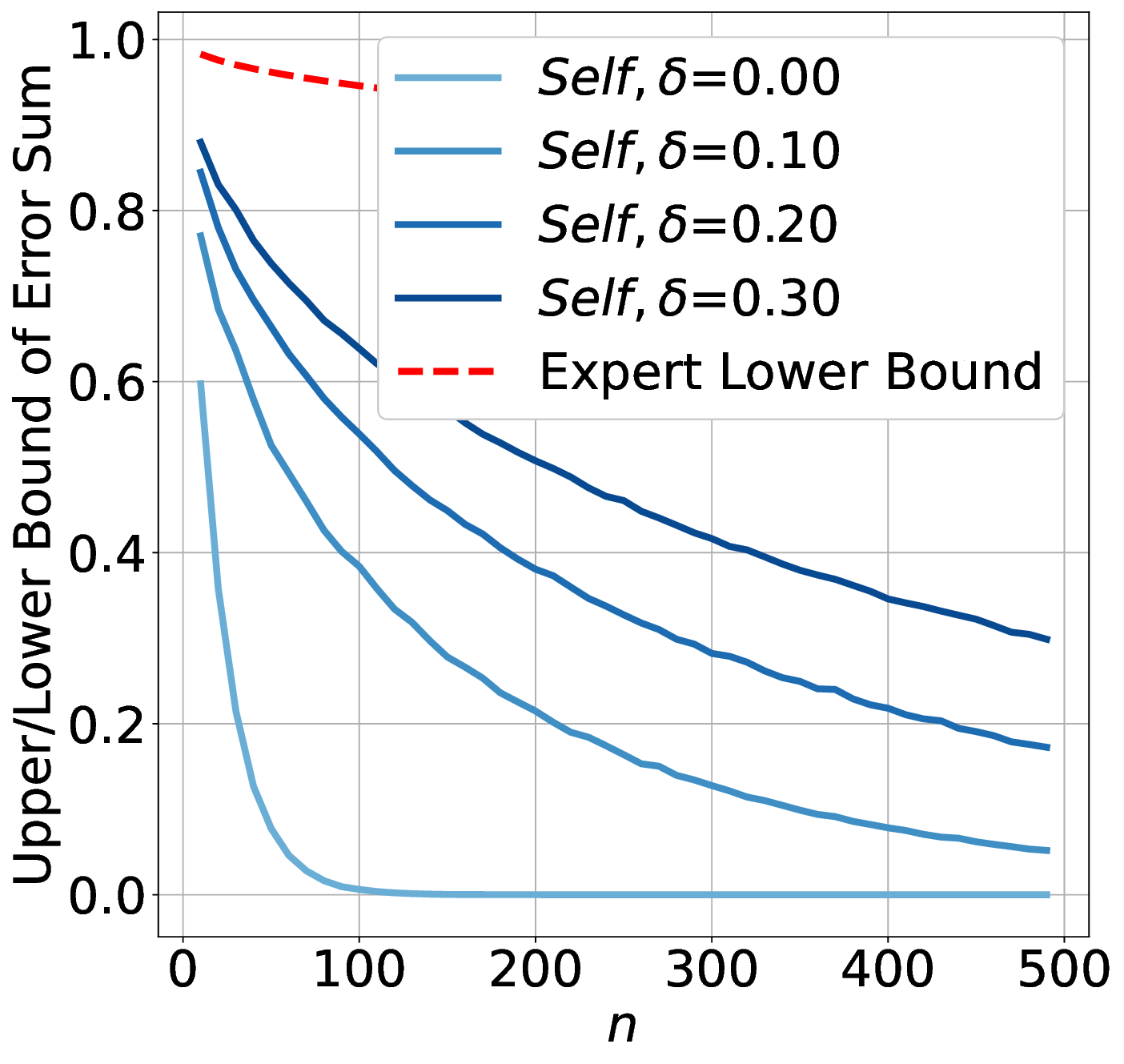}
    \vspace{2mm} {\small (b) PKU, $\eta_0=0.9$}
  \end{minipage}
    \hfill
    \begin{minipage}[b]{0.23\textwidth}
    \centering
\includegraphics[width=\textwidth]
{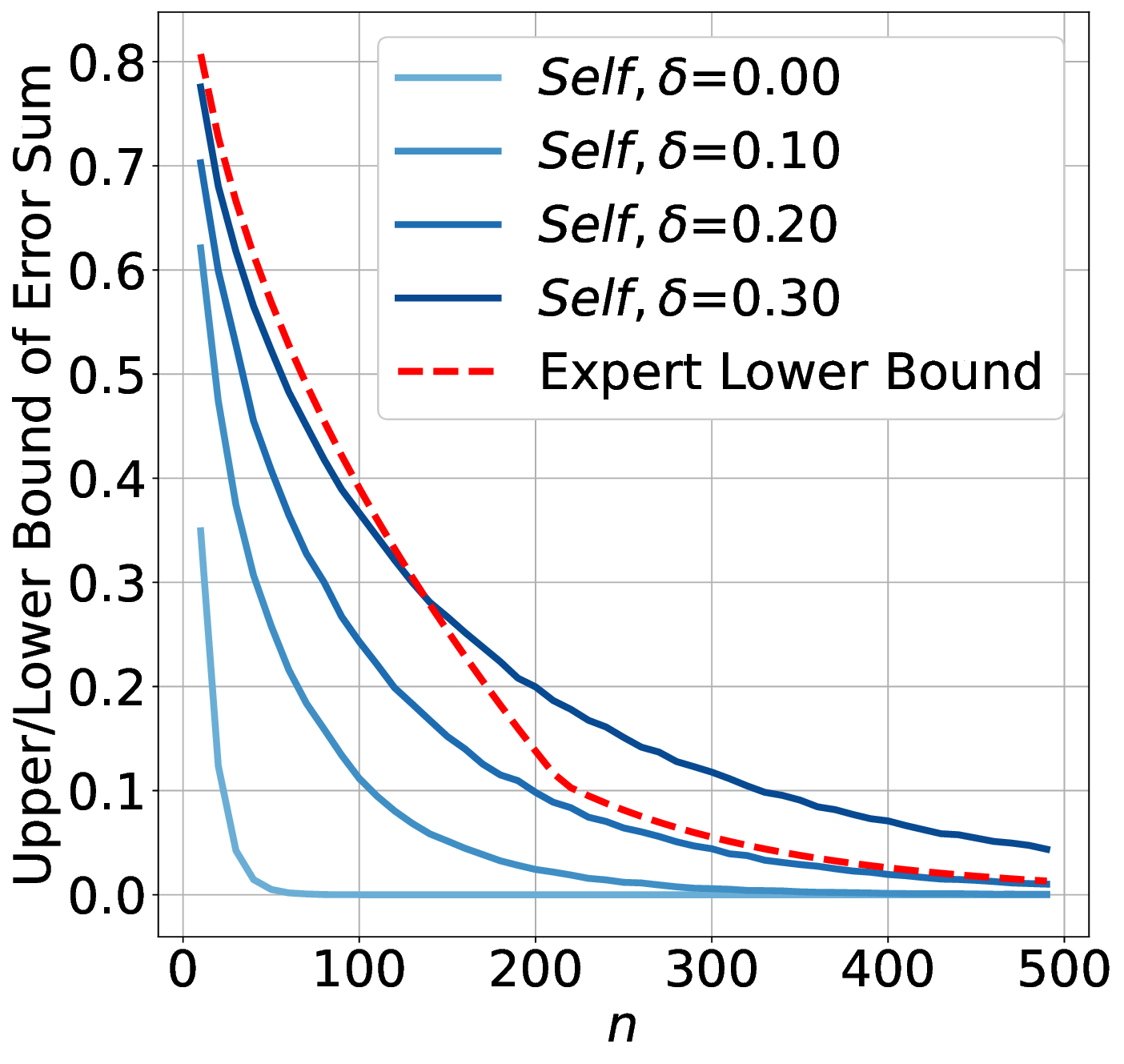}
    \vspace{2mm} {\small (c) Helpsteer, $\eta_0=0.8$}
  \end{minipage}
  \hfill  
  \begin{minipage}[b]{0.23\textwidth}
    \centering
\includegraphics[width=\textwidth]
{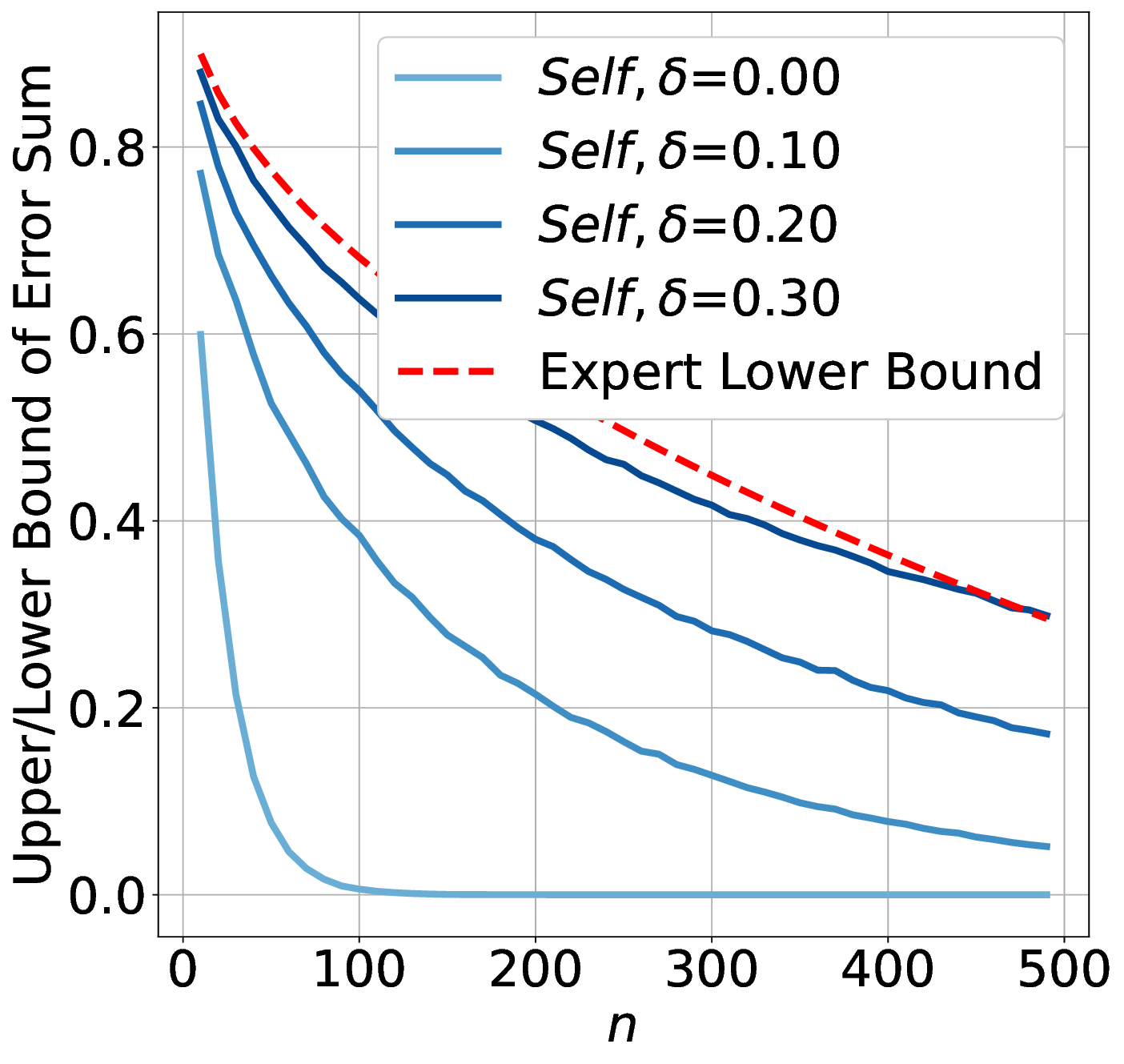}
   \vspace{2mm} {\small (d) Helpsteer, $\eta_0=0.9$}
  \end{minipage}
  \caption{ Comparison between self-consistency monitoring (upper bound) and expert-based monitoring (lower bound). For the sum of two types of errors, we plot the \textbf{upper bound} for self-consistency monitoring with various values of $\delta$ (blue, thick line) and the \textbf{lower bound} for expert-based monitoring (red, dashed line), evaluated at $\eta_0 \in \{0.8, 0.9\}$ and $\eta_1 = 1$ for two datasets. Even with a nontrivial disagreement probability $\delta$, self-consistency monitoring outperforms expert-based monitoring over a wide range of $n$, especially when the average preference probability is near $1/2$ (e.g., PKU). See Appendix~\ref{appx:fig_self_UB} for details on the experimental setup and additional results with $\eta_1 < 1$.}
\label{fig:main_self_UB}
\end{figure}

Figure \ref{fig:main_self_UB} plots the realized error of self-consistency monitoring against the lower bound of Proposition \ref{prop:info_lower_bound}. The fact that the realized bound is better than the theoretical lower bound is not a contradiction, but it is because of the introduction of duplicated samples in Algorithm \ref{alg_self}, and the additional structure of \eqref{eqn:agreement_self_consist}. Another benefit of self-consistency monitoring is that its performance is contingent on the parameter $\delta$ but not on the underlying distribution $p(x,y_1,y_2).$ Thus it resolves the challenge of annotator heterogeneity in performance assessment, and also saves the additional costs in hiring experts. To some extent, we can think of self-consistency monitoring as having the annotators themselves as the experts to assess their own annotation performance. In this light, the annotator uses their second label to the same sample to judge their full commitment to the first label, and as the judgment is from the same annotator, there is no heterogeneity among annotators involved. 

 \begin{proposition}
\label{prop:KL_compare}
Let $\hat{\eta}$ be any estimator of $\eta$ based on $\bm{A}$. The minimax lower bound of the estimation error for the self-consistency monitoring method is
\begin{equation}
\inf_{\hat{\eta}} \sup_{\eta \in [0, 1]} \mathbb{E}\left[|\hat{\eta} - \eta|\right] = \Omega \left(\frac{1}{(1-\delta) \sqrt{n}}\right)
\end{equation}
where the expectation is taken with respect to the randomness of $\bm{A}$ in Algorithm \ref{alg_self}.
\end{proposition}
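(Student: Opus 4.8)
The plan is to reuse the Le Cam two-point argument that already underlies Proposition~\ref{prop:lecam_lower_bound}, exploiting the fact that under Algorithm~\ref{alg_self} the observations have an even simpler form than in the expert-based case. First I would pin down the law of $\bm{A}$. Combining \eqref{eqn:Z_annotation_quality}, \eqref{eqn:annotation_eta}, \eqref{eqn:Z_annotation_quality_new} and in particular \eqref{eqn:agreement_self_consist}, the self-consistency indicators $A_1,\dots,A_n$ are i.i.d.\ $\mathrm{Bernoulli}(q_\eta)$ with
\[
q_\eta \coloneqq \tfrac12\big(1+(1-\delta)\eta\big),
\]
and, crucially, this parameter does \emph{not} depend on the sampled $(x_i,y_{i,1},y_{i,2})$, so there is no mixture over prompts to worry about. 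Hence $\mathcal{Q}_\eta^{\otimes n}=\mathrm{Bernoulli}(q_\eta)^{\otimes n}$, and by tensorization of the KL divergence, for any $\eta_0,\eta_1\in[0,1]$ we have $D_{\mathrm{kl}}(\mathcal{Q}_{\eta_0}^{\otimes n}\|\mathcal{Q}_{\eta_1}^{\otimes n})=n\,D_{\mathrm{kl}}(\mathrm{Bernoulli}(q_{\eta_0})\|\mathrm{Bernoulli}(q_{\eta_1}))$.

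Next I would invoke exactly the same chain used in Proposition~\ref{prop:lecam_lower_bound} (Le Cam's two-point reduction followed by the TV-to-KL inequality),
\[
\inf_{\hat{\eta}}\sup_{\eta\in[0,1]}\mathbb{E}\big[|\hat{\eta}-\eta|\big]\ \geq\ \tfrac14\,|\eta_0-\eta_1|\,\exp\!\big(-n\,D_{\mathrm{kl}}(\mathrm{Bernoulli}(q_{\eta_0})\|\mathrm{Bernoulli}(q_{\eta_1}))\big),
\]
which holds for every admissible pair. I would then choose the two points near $1/2$ and at separation of order $1/((1-\delta)\sqrt{n})$: take $\eta_0=1/2$ and $\eta_1=1/2+\Delta$ with $\Delta=1/((1-\delta)\sqrt{n})$, admissible once $n\geq 4/(1-\delta)^2$. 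This gives $q_{\eta_0}=(3-\delta)/4$ and $q_{\eta_1}=q_{\eta_0}+(1-\delta)\Delta/2=q_{\eta_0}+1/(2\sqrt{n})$, so $|q_{\eta_0}-q_{\eta_1}|=1/(2\sqrt{n})$. Using the elementary bound $D_{\mathrm{kl}}(\mathrm{Bernoulli}(a)\|\mathrm{Bernoulli}(b))\leq (a-b)^2/(b(1-b))$, together with the fact that for $n$ large both $q_{\eta_0},q_{\eta_1}\in[1/2,7/8]$ (so $b(1-b)\geq c_0$ for a universal constant $c_0>0$ not depending on $\delta$), one obtains $n\,D_{\mathrm{kl}}\leq 1/(4c_0)$, hence $\exp(-n\,D_{\mathrm{kl}})\geq e^{-1/(4c_0)}$. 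Substituting back yields
\[
\inf_{\hat{\eta}}\sup_{\eta\in[0,1]}\mathbb{E}\big[|\hat{\eta}-\eta|\big]\ \geq\ \tfrac14\,e^{-1/(4c_0)}\cdot\frac{1}{(1-\delta)\sqrt{n}}\ =\ \Omega\!\left(\frac{1}{(1-\delta)\sqrt{n}}\right),
\]
which is the claim.

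The only step requiring genuine care is keeping $q_{\eta_1}(1-q_{\eta_1})$ bounded below by a constant that does not deteriorate as $\delta\to 1$: if one instead centered the two-point family near $\eta=1$, then $q_\eta$ would sit near $1-\delta/2$, i.e.\ close to the boundary value $1$ when $\delta$ is small, which would inflate the KL bound and break the clean $(1-\delta)^{-1}$ scaling. Centering near $\eta=1/2$ confines $q_\eta$ to $[1/2,\,3/4+o(1)]$, away from both $0$ and $1$ uniformly in $\delta$, and resolves this. (For the finitely many $n$ with $n<4/(1-\delta)^2$ the bound is immediate since the minimax error there is $\Omega(1)$; as usual the $\Omega(\cdot)$ is read with $\delta<1$ fixed and $n\to\infty$, with a universal implicit constant.) I also note that this argument makes transparent why $1-\delta$ here plays exactly the role that $|\bar c|$ plays in Proposition~\ref{prop:lecam_lower_bound}: both are the slope of the Bernoulli success probability in $\eta$, so the self-consistency scheme beats the expert-based scheme precisely when $1-\delta>|\bar c|$, i.e.\ when the annotator's two-shot agreement under full commitment is more informative than the annotator–expert agreement.
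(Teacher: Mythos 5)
Your proposal is correct and follows essentially the same route as the paper: the paper's proof is stated as ``almost the same as Proposition~\ref{prop:lecam_lower_bound},'' i.e.\ Le Cam's two-point method plus the Bretagnolle--Huber inequality applied to the i.i.d.\ $\mathrm{Bernoulli}\big(\tfrac{1+(1-\delta)\eta}{2}\big)$ law of $\bm{A}$, with two parameters separated by $\Theta\big(1/((1-\delta)\sqrt{n})\big)$. The only cosmetic differences are that the paper centers the two-point family at $\eta=0$ (which already keeps the Bernoulli parameter at exactly $1/2$, so your boundary concern does not arise there) and controls the KL term by a Taylor expansion rather than your explicit $\chi^2$-type bound; both variants are valid.
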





While Figure \ref{fig:main_self_UB} provides numerical evidence on the advantage of self-consistency monitoring, Proposition \ref{prop:KL_compare} shows that this advantage also appears at the level of minimax lower bounds. When the within-annotator agreement signal is sufficiently informative: for example, when $1-\delta \geq \bar{c}= \mathbb{E}[4|p(x,y_1,y_2) - 1/2|\cdot |p_e(x,y_1,y_2)- 1/2|]$, self-consistency monitoring achieves a smaller lower bound than expert-based monitoring. This condition is natural in preference annotation: a fully committed annotator should exhibit high within-annotator consistency over time, so $\delta$ should be small. This statistical advantage is not only of independent interest for practitioner-facing quality control. It is also the bridge to Section \ref{sec:incentivizing}: a monitoring technology that extracts more information about $\eta$ per inspected sample reduces the number of monitored samples needed later for the contract to generate strong incentives and a small utility gap.

\section{Incentivizing Human Annotators under Imperfect Monitoring}
\label{sec:incentivizing}

In the previous section, we have introduced two monitoring algorithms for the human preference data, which are the expert-based monitoring (Algorithm~\ref{alg_expert}) and the self-consistency monitoring (Algorithm~\ref{alg_self}). We have also examined the information-theoretic limit for those monitoring methods to test \emph{how many samples} are needed to get a concrete estimation of the annotation quality within some \emph{error bounds}. The company can get some estimation of the annotator's commitment level $\eta$ using the algorithms and analysis in Section~\ref{sec:assess}; however, in real interaction between the company and the annotator, we may wonder:
\begin{center}
    \emph{How can the company regulate the behavior of the annotator via those monitoring methods? What is the economic outcome of the monitoring methods?}
\end{center}

In this section, we dive into the investigation of the above questions. To start with, we assume that the company is offering some \emph{contract} to the annotator, and their behaviors are modeled via the classic principal-agent model \citep{harris1979optimal, holmstrom1979moral, grossman1992analysis, laffont2009theory, frick2023monitoring}. There are two roles in this model: the company (the principal), and the annotator (the agent). In this framework, the agent only acts for their own interests, where \emph{moral hazard} takes place. The principal needs to design a contract to shape the agent's behavior, with the aim of inducing actions beyond serving only the interests of the agent. A well-designed contract mitigates the problem of moral hazard. In the context of data annotation, moral hazard arises when annotators provide low-quality annotations without bearing the full consequences of doing so. In our setting, the LLM company acts as the principal by designing a contract, and the annotator acts as the agent by choosing an annotation quality level once the contract is offered.

The underlying source of moral hazard is the information asymmetry between the two parties. In the principal-agent model, this asymmetry arises because the agent's action is not directly observed by the principal. In our setting, the hidden action is the annotation quality $\eta$ (see the probability model in \eqref{eqn:Z_annotation_quality} and \eqref{eqn:annotation_eta}). The agent knows their own annotation quality $\eta$, whereas the principal cannot directly observe it without an assessment system such as those introduced in the previous section. The statistical analysis in Section~\ref{sec:assess} determines the information structure of the contracting problem. Once the principal can observe only a noisy pass/fail signal generated by either expert-based monitoring or self-consistency monitoring, the economic problem becomes how to map that signal into payment.

At this point, the paper shifts from an operational question to an economic one. The issue is no longer simply whether a company can assess an annotator, but rather how the chosen monitoring technology shapes the incentives, the principal's utility, and the economic value created by the annotation pipeline. Put differently, Section~\ref{sec:assess} asks how many samples are needed to measure quality, whereas Section~\ref{sec:incentivizing} asks how many samples are needed to monetize that measurement through contract design.

\subsection{Principal-agent model for data annotation}

\label{sec:pa_models}

We first present the agent's utility function. As noted earlier, the action of the agent is the annotation quality $\eta \in [0, 1]$. It doesn't hurt to consider the total annotation number $N$ to be fixed. The utility function of the agent is modeled by 
\[
U_a(w_a, \eta) \coloneqq G_a(w_a) - E(\eta).
\]
Here $w_a$ is the wealth/money payment received by the agent for annotating $N$ samples, and $w_a$ is jointly determined by the contract (will be defined shortly) and the annotated data. The function $G_a: \mathbb{R}\rightarrow \mathbb{R}$ is the monetary utility of the agent and is strictly increasing as the payment $w_a$ increases. The effort function $E:[0,1]\rightarrow \mathbb{R}$ is a function of $\eta$ (given the total annotation number $N$ is fixed). 

\begin{assumption}
\label{assm:basic_assm_1}
Assume that the monetary utility $G_a$ is twice continuously differentiable, monotonically increasing, and strictly concave and that the effort $E$ is purely a function of $\eta$ (for fixed $N$ annotations) that is twice continuously differentiable, monotonically increasing, and convex, i.e.,
\[
\frac{\mathrm{d} G_a}{\mathrm{d} w_a} > 0, \quad \frac{\mathrm{d}^2 G_a}{\mathrm{d} w_a ^{\ 2}} < 0;
\]
\[\frac{\mathrm{d} E}{\mathrm{d} \eta} > 0, \quad \frac{\mathrm{d}^2 E}{\mathrm{d} \eta^2} \geq 0.
\]
\end{assumption}

Assumption \ref{assm:basic_assm_1} imposes some basic assumptions on the functions $G_a$ and $E$. The increasingness of $G_a$ is natural, and the concavity of $G_a$ is also known as the risk aversion of the agent. In the risk-neutral case of a linear $G_a$, the discussion of contract design under the principal-agent model becomes trivial (as well known in the literature \citep{harris1979optimal}). The assumption of $E$ simply reflects human nature: to maintain a high annotation quality (large $\eta$) requires a large effort, and the paid effort becomes even larger when the agent wants to achieve an even higher quality (say, $\eta = 0.96$) from an already high quality (say, $\eta = 0.95$). The major implication of Assumption \ref{assm:basic_assm_1} for our discussions is to guarantee that (one of) the optimal mixed strategy of the agent must be a pure strategy, so we can focus on the pure strategy only when considering the agent's action.

Next, we model the principal's utility function by
\[
U_p(w_a, \eta) \coloneqq -w_a + \mu(\eta).
\]
Here $w_a$ is the payment made by the principal (company) to the agent (annotator) and $\eta$ is the annotation quality -- a decision made by the annotator. The function $\mu: [0,1] \rightarrow \mathbb{R}$ models the utility function of the company with respect to the data quality $\eta$, for which we make the following assumption.

\begin{assumption}
Assume that $\mu$ is a twice continuously differentiable, increasing, and concave function of $\eta$, i.e.,
\begin{equation*}
 \frac{\mathrm{d} \mu}{\mathrm{d} \eta} \geq 0, \quad \frac{\mathrm{d}^2 \mu}{\mathrm{d} \eta^2} \leq 0.
\end{equation*}    
\label{assm:basic_assm_2}
\end{assumption}
The increasingness of $\mu$ is intuitive in that better data quality leads to large utility. The concavity of $\mu$ is widely acknowledged and verified by the downstream performance of an ML/LLM model under noisy data \citep{gao2024impact,chowdhury2024provably,liang2024robust}.

We note that an important quality appearing in both the agent and the principal utility functions is the payment $w_a.$ Under the principal-agent model, $w_a$ is determined through a (sequence of) contract(s) $F_n \in \mathcal{F}_n$. Each possible $F_n$ is a function that maps the assessment dataset $\mathcal{D}_n$ to the payment to the annotator,
$$w_a \coloneqq F_n(\mathcal{D}_n).$$
The assessment dataset $\mathcal{D}_n$ is a dataset of $n$ samples used to assess the performance of the annotator. For example, $\mathcal{D}_n$ can be the dataset $\mathcal{D}_{\text{test}}$ in expert-based Algorithm \ref{alg_expert} or the dataset $\bm{A}$ in self-consistency Algorithm \ref{alg_self}. The function $F_n$ prescribes how much money $w_a$ is paid to the annotator based on the $\mathcal{D}_n$. It is a \textit{sequence} of contracts in that it is indexed by $n$; for a different sample size $n$, the function can be different. The contract class $\mathcal{F}_n$ is determined by the principal, and then the agent's decision of $\eta$ is made in the knowledge and under the constraint of $\mathcal{F}_n$. In this light, $\mathcal{F}_n$ provides a handle for the principal to regulate the agent's behavior. In the following two sections, we further specify two classes of possible contracts, binary contract and linear contract. 



With the definition of utility functions and contracts, the dynamics between the principal and the agent can be formulated as a constrained optimization problem. The idealized scenario is called the \textit{first-best} solution which corresponds to the following program
\begin{equation}
\begin{split}
\mathcal{C} \coloneqq \max_{F_n \in \mathcal{F}_n, \eta \in [0, 1]} \, & \mathbb{E}\left[U_p(F_n(\mathcal{D}_n), \eta)\right], \\
\text{s.t. } & \mathbb{E}\left[U_a(F_n(\mathcal{D}_n), \eta)\right] \geq U_0
\end{split}
\label{eq:first_best}
\end{equation}
where the expectation is taken with respect to the randomness of the annotation data $\mathcal{D}_n$. Here the right-hand side of the constraint $U_0$ is called the leisure utility which denotes the utility the agent gains by rejecting the contract (say, not to work for the company). The constraint requires the agent's expected utility under the contract to be above this threshold $U_0$. Such a constraint is often referred to as the \textit{individual rationality} (IR). The objective function is to maximize the expected utility of the principal. We say it is an idealized scenario in that the problem treats both the contract $F_n$ and the annotation quality $\eta$ as the decision variables. Equivalently, under the first-best formulation, we can think of the principal has a perfect monitoring system and thus an exact knowledge of $\eta$; and consequently, the principal can design the contract in a way to control $\eta$. However, in reality, the principal can only optimize the contract $F_n$ but cannot even observe $\eta$, let alone optimize it. In this sense, the program \eqref{eq:first_best} characterizes the best-achievable utility for the principal for the case that the principal can even control $\eta$.

\begin{assumption}
\label{assm:basic_assm_3}
The wage is bounded, i.e., there exist constants $\underline{w}, \overline{w}\in \mathbb{R}$ such that
\[
F_n(\mathcal{D}_n) \in [\underline{w}, \overline{w}]
\]
almost surely for all $F_n \in \mathcal{F}_n$. 
Besides, the wage range is rich enough to cover all the efforts (plus the leisure utility), i.e.
\[
E([0, 1]) + U_0 \subset \mathrm{interior}\left(G_a([\underline{w}, \overline{w}])\right),
\]
\end{assumption}
The first part of the assumption simply requires the wage $w_a$ paid to the agent to be bounded. This is without loss of generality in that if the wage $w_a$ is too low (say, lower than $U_0 + E(0)$), then the agent would quit the contract; if $w_a$ is too high (say, higher than $\mu(1) - \mu(0)$), then the principal would choose not to provide the contract. The second part of the assumption is to consider a rich enough wage range to ensure that (under perfect monitoring/the first-best setting) the principal could offer a contract to make the agent achieve any effort option $\eta \in [0, 1]$.

\begin{proposition}
\label{prop:first_best_sol}
Under Assumptions \ref{assm:basic_assm_1}, \ref{assm:basic_assm_2}, and \ref{assm:basic_assm_3}, the first-best problem \eqref{eq:first_best} has a unique optimal solution. In addition, the optimal annotation quality
\[
\eta^* \coloneqq \argmax_{\eta \in [0, 1]} \, -G^{-1}_a(E(\eta) +U_0 ) + \mu(\eta),
\]
and the principal pays the agent $w_a^* \coloneqq G^{-1}_a(E(\eta^*) + U_0 )$ units of wealth. Here $G^{-1}_a$ is the inverse function of $G_a$, the existence of which is guaranteed by Assumption \ref{assm:basic_assm_1}.
\end{proposition}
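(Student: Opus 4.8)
The plan is to reduce the joint optimization over $(F_n,\eta)$ in \eqref{eq:first_best} to a one-dimensional optimization over $\eta$ alone, using the observation that under perfect monitoring the contract only matters through the (deterministic) payment it induces. First I would argue that at the first-best optimum the IR constraint binds: if $\mathbb{E}[U_a(F_n(\mathcal{D}_n),\eta)] > U_0$, then by Assumption \ref{assm:basic_assm_1} ($G_a$ continuous and strictly increasing) and Assumption \ref{assm:basic_assm_3} (the wage range strictly contains $E([0,1])+U_0$), one can strictly lower the payment on a positive-probability event, keeping IR satisfied while strictly increasing $\mathbb{E}[U_p] = \mathbb{E}[-w_a] + \mu(\eta)$; so any optimum must satisfy $\mathbb{E}[G_a(w_a)] = E(\eta) + U_0$.

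Next I would show that, for a fixed target $\eta$, among all contracts satisfying the binding IR constraint the one minimizing $\mathbb{E}[w_a]$ is the constant wage $w_a \equiv G_a^{-1}(E(\eta)+U_0)$. This is exactly Jensen's inequality applied to the strictly concave $G_a$ (Assumption \ref{assm:basic_assm_1}): $G_a(\mathbb{E}[w_a]) \ge \mathbb{E}[G_a(w_a)] = E(\eta)+U_0$, hence $\mathbb{E}[w_a] \ge G_a^{-1}(E(\eta)+U_0)$, with equality iff $w_a$ is almost surely constant. The constant contract is feasible by the richness part of Assumption \ref{assm:basic_assm_3}, which guarantees $G_a^{-1}(E(\eta)+U_0) \in (\underline{w},\overline{w})$. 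Substituting this optimal payment reduces \eqref{eq:first_best} to
\[
\mathcal{C} = \max_{\eta \in [0,1]} \; -G_a^{-1}(E(\eta)+U_0) + \mu(\eta),
\]
which identifies $\eta^*$ and $w_a^*$ as claimed.

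Finally I would establish existence and uniqueness of the maximizer. Existence is immediate: the objective $\phi(\eta) := -G_a^{-1}(E(\eta)+U_0) + \mu(\eta)$ is continuous on the compact set $[0,1]$. For uniqueness I would show $\phi$ is strictly concave. The term $\mu$ is concave by Assumption \ref{assm:basic_assm_2}. For the first term, $G_a^{-1}$ is increasing and convex (inverse of an increasing strictly concave function), and $E$ is increasing and convex (Assumption \ref{assm:basic_assm_1}); the composition of a convex increasing function with a convex function is convex, so $G_a^{-1}(E(\cdot)+U_0)$ is convex, and $-G_a^{-1}(E(\cdot)+U_0)$ is concave. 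To get \emph{strict} concavity I would use that $G_a^{-1}$ is strictly convex (since $G_a'' < 0$ strictly): even if $E$ is merely weakly convex, the strict convexity of $G_a^{-1}$ composed with the strictly increasing $E$ makes $G_a^{-1}(E(\cdot)+U_0)$ strictly convex, so $\phi$ is strictly concave and the maximizer is unique. The main obstacle is this last strict-concavity bookkeeping — making sure that the merely weak convexity assumptions on $E$ and $\mu$ do not undercut strictness, which is why the argument must lean on the strict concavity of $G_a$ (and hence strict convexity of $G_a^{-1}$) as the source of strictness; a clean way to handle the composition is to differentiate twice and check the second derivative is strictly negative wherever the functions are $C^2$, using $\tfrac{d^2}{d\eta^2}G_a^{-1}(E(\eta)+U_0) = (G_a^{-1})''(E+U_0)(E')^2 + (G_a^{-1})'(E+U_0)E'' > 0$ since both summands are nonnegative and the first is strictly positive as $E' > 0$.
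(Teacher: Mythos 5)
Your proposal is correct and follows essentially the same route as the paper: collapse the contract to a constant wage via Jensen's inequality on the strictly concave $G_a$, argue the IR constraint binds, substitute $w_a = G_a^{-1}(E(\eta)+U_0)$ to reduce \eqref{eq:first_best} to a one-dimensional problem in $\eta$, and conclude by concavity. The only substantive difference is in the uniqueness step, where your argument is actually more complete than the paper's: the paper establishes only that $-G_a^{-1}(E(\eta)+U_0)+\mu(\eta)$ is concave (via the composition rules of convex analysis) and then asserts uniqueness, whereas weak concavity alone does not rule out a flat segment of maximizers. Your second-derivative computation, $\tfrac{d^2}{d\eta^2}G_a^{-1}(E(\eta)+U_0) = (G_a^{-1})''(E+U_0)(E')^2 + (G_a^{-1})'(E+U_0)E'' > 0$, correctly isolates the source of strictness in $G_a''<0$ (hence $(G_a^{-1})''>0$) together with $E'>0$ from Assumption \ref{assm:basic_assm_1}, giving strict concavity of the reduced objective and hence a genuinely unique $\eta^*$ (and then $w_a^*$ by strict monotonicity of $G_a^{-1}$). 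The minor reordering --- binding IR first, then Jensen to minimize $\mathbb{E}[w_a]$ at fixed expected utility, versus the paper's constant-wage-first argument --- is immaterial.
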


The proposition characterizes the optimal quality $\eta^*$ and the optimal payment amount $w_a^*$ under the first-best program.


A more practical setting is that the principal decides the contract $F_n$ and the agent decides the annotation quality $\eta$. This corresponds to the so-called second-best solution which corresponds to the following bi-level optimization problem
\begin{equation}
\begin{split}
\mathcal{C}_n \coloneqq \max_{F_n \in \mathcal{F}_n} \, & \mathbb{E}\left[U_p(F_n(\mathcal{D}_n), \eta_a(F_n))\right], \\
\text{s.t. } & \mathbb{E}\left[U_a(F_n(\mathcal{D}_n), \eta_a(F_n))\right] \geq U_0, \\
& \eta_a(F_n) \in \argmax_{\eta \in [0, 1]} \mathbb{E}\left[U_a(F_n(\mathcal{D}_n), \eta)\right].
\end{split}
\label{eq:second_best}
\end{equation}
Here, as before, the expectation is taken with respect to the randomness of the annotation data $\mathcal{D}_n$. Compared to \eqref{eq:first_best}, there is an additional last constraint in \eqref{eq:second_best}. The constraint says that the annotation quality is determined by the annotator maximizing their expected utility, known as \textit{incentive compatibility} (IC). As a result, the outer program \eqref{eq:second_best} contains only one decision variable $F_n$ for the principal to optimize over. 

It is important to note that the annotator's chosen quality $\eta_a(F_n)$ depends on the contract $F_n$. This dependence follows directly from the sequential structure of the game: the principal first offers a contract, and the agent then chooses an action after observing it. In this formulation, although the principal can solve for the equilibrium quality $\eta_a(F_n)$ within the model (i.e., the principal knows enough about the agent’s utility and the monitoring technology to solve for the best-response mapping), that quality is determined through the contract and is not itself a directly contractible object. Thus, contract design still depends only on the monitored dataset $\mathcal{D}_n$. Hence, $\eta$ is neither directly observed nor directly contracted upon; the power of the contract remains limited by the informativeness of the monitoring technology characterized in Section~\ref{sec:assess}. We also note that the individual rationality constraint is kept in the formulation in \eqref{eq:second_best} as a threshold for the minimum utility offered to the agent such that the agent accepts the contract. Formulating the principal's contract design problem as maximizing the expected utility subject to the IR and the IC constraints is the standard practice in the literature of the principal-agent model \citep{holmstrom1979moral, grossman1992analysis, holmstrom1987aggregation}.

Another more restricted definition of second-best solutions examined in some literature \citep{frick2023monitoring} requires that $\eta_a(F_n) \equiv \eta^*$ (See Proposition \ref{prop:first_best_sol} for the definition of $\eta^*$).
\begin{equation}
\begin{split}
\tilde{\mathcal{C}}_n \coloneqq \max_{F_n \in \mathcal{F}_n} \, & \mathbb{E}\left[U_p(F_n(\mathcal{D}_n), \eta^*)\right], \\
\text{s.t. } & \mathbb{E}\left[U_a(F_n(\mathcal{D}_n), \eta^*)\right] \geq U_0, \\
& \eta^* \in \argmax_{\eta \in [0, 1]} \mathbb{E}\left[U_a(F_n(\mathcal{D}_n), \eta)\right].
\end{split}
\label{eq:restricted_second_best}
\end{equation}

For the above three programs \eqref{eq:first_best}, \eqref{eq:second_best}, and \eqref{eq:restricted_second_best}, we know 
\begin{equation*}
\mathcal{C} \geq \mathcal{C}_n \geq \tilde{\mathcal{C}}_n.
\end{equation*}

While ${\mathcal{C}}_n$ and $\tilde{\mathcal{C}}_n$ represent a more practical setting, the question is how large the gap is between these two and the ideal case of $\mathcal{C}.$ The analysis of the gap sheds light on how we should design the contract class $\mathcal{F}_n$ -- the domain of $F_n$. A proper $\mathcal{F}_n$ leads to a smaller gap which means an effective regulation of the agent's behavior in the choice of $\eta$, and the regulation is achieved implicitly by the design of $\mathcal{F}_n$. 

A recurring puzzle in contract theory is that optimal (second-best) contracts can be highly complex and difficult to implement, while the contract forms that are most attractive in practice, like binary bonus schemes or linear pay-for-performance rules, are typically not fully optimal. This tension raises a natural quantitative question: 
\begin{center}
    \emph{Are those simple contracts optimal? If not, how costly is simplicity?}
\end{center}

Rather than evaluating simple contracts only qualitatively, we study their performance through convergence rates of the principal's payoff toward the first-best benchmark as monitoring becomes more informative (e.g., as the assessment sample size $n$ increases). Convergence rates provide a transparent measure for comparing simple contract classes and for understanding how quickly improved monitoring mitigates the inefficiency gap created by restricting attention to implementable payment rules. More specifically, we study two important types of contracts: the \emph{binary contracts} and the \emph{linear contracts}. The reason for considering the binary contracts is that the existing work of \citet{frick2023monitoring} has shown that if the action space is discrete, then the binary contracts are optimal among all types of contracts. However, we shall see later that the claim is no longer true if the action space is continuous as in our model; the binary contracts are sub-optimal. Instead, the linear contracts achieve optimal rate when comparing with the lower bound based on Fisher information.


\subsection{Binary contracts}
Binary contracts refer to those contracts $F_n$'s that map the assessment dataset $\mathcal{D}_n$ onto only two values. It is widely adopted due to its simplicity. We describe the structure of binary contracts in Algorithm \ref{alg_binary_contr}. A binary contract $F_n$ is specified by three components, a test $\Psi$, base salary $w$, and bonus $w_b$. The test $\Psi$ takes the data $\mathcal{D}_n$ as the input and gives a binary output indicating whether the annotator passes the quality assessment. If so, the company pays the annotator the sum of the base salary and the bonus $w+w_b$; if not, the company only pays the base salary. 

\begin{algorithm}[ht!]
\centering
\caption{Binary contract}
\label{alg_binary_contr}
\begin{algorithmic}
\State \textbf{Input:} A dataset $\mathcal{D}_n$ used to assess the annotator performance and a binary contract $F_n = (\Psi, w, w_b)$
\State \textcolor{blue}{\%\% $\Psi(\mathcal{D}_n)\in \{0,1\}$ represents some test as the ones in Algorithm \ref{alg_expert} and Algorithm \ref{alg_self}}
\State \textcolor{blue}{\%\% $w$ is the base salary and $w_b$ is the bonus for passing the performance assessment}
\State Perform the test $\Psi(\mathcal{D}_n)$
\If{$\Psi(\mathcal{D}_n)=0$, i.e., the annotator fails the performance test}
\State The company pays the annotator $w_a = w$
\Else
\State  The company pays the annotator $w_a = w+w_b$
\EndIf
\end{algorithmic}
\end{algorithm}

In the finite-action case where the agent can only choose from a finite set of possible actions, \citet{frick2023monitoring} shows that the gap between $\mathcal{C}_n$ and $\mathcal{C}$ is of an optimal order $\exp(-\Theta(n))$. However, their analysis's large deviation theory tools no longer apply to the continuous case, and the authors leave the continuous case as an open question therein. For the problem of human annotation, the annotation quality $\eta$ is naturally a continuous variable that the annotator chooses as a decision, and there are infinite possibilities for the choice of $\eta$.

For our analysis, we consider a simple setting in quality monitoring/assessment that is also general enough to cover many application contexts including ours. Each inspected sample $D_i\in \mathcal{D}_n$ ($i=1,...,n$) passes the assessment with a certain probability, and the probability is a function of the effort: the more effort the agent spends on the sample, the higher the chance that the sample passes the assessment. For the preference data annotation, if the LLM company adopts our proposed self-consistency monitoring, then each sample $A_i$ is the agreement variable defined in Algorithm \ref{alg_self}. Then when the two labels agree, $A_i=1$ and the sample passes the assessment; the pass probability $P(A_i=1)=\frac{1+c\eta}{2}$ for $c=1-\delta$ (see \eqref{eqn:agreement_self_consist}). We show that in contrast to the discrete case, binary contracts of the (restricted) second-best problem no longer bear an exponential convergence rate to the first-best. Instead, the convergence rate is far worse in the continuous case: if we inspect $n$ samples, then the gap between the first-best and the second-best is $\Theta(1/\sqrt{n \log n})$.

\begin{assumption}
\label{assm:basic_assm_4}
The unique first-best solution $\eta^*$ is bounded away from 0 and 1:
\[
\eta^* \in (0, 1).
\]
\end{assumption}
The assumption is to avoid trivial solutions such as $\eta^* = 0$ or those solutions that have no randomness in the outcome (say, $\eta^* = 1$ and $c = 1$ so the agreement/consistency happens with probability $\frac{1+c\eta^*}{2} = 1$).

\begin{assumption}
\label{assm:basic_assm_5}
    Assume that the assessment dataset $\mathcal{D}_n$ consists of $n$ i.i.d. samples $D_i$'s. Further, denote the joint probability of the outcome $\mathcal{D}_n$ under the agent's action $\eta$ as $p_\eta^n(\mathcal{D}_n)$, and assume that $\frac{\partial }{\partial \eta} p_\eta^n(\mathcal{D}_n)$ exists and is continuous with respect to $\eta$.
\end{assumption}

The assumption is non-restrictive and applies to general distributions such as exponential family distributions, including algorithms in Section~\ref{sec:assess} which induce Bernoulli distributions as a special case.

For now, we focus on the Bernoulli distributions introduced by Algorithms~\ref{alg_expert} and~\ref{alg_self}. Let $m$ denote a monitoring technology (for example, either Algorithm~\ref{alg_expert} or~\ref{alg_self}) and suppose its assessment variable $D_i = A_i^{(m)}\in\{0,1\}$ satisfies
\begin{equation}
\mathbb{P}\!\left(A_i^{(m)}=1\, \middle|\, \eta\right)=\frac{1+c_m \eta}{2},
\qquad c_m\in(0,1].
\label{eq:generic_monitoring_strength}
\end{equation}
For the expert-based agreement test in Section \ref{sec:assess}, $c_{\mathrm{exp}}=\bar c$. For the self-consistency monitoring in Section \ref{sec:assess}, let $c_{\mathrm{self}}=(1- \delta)$ be the coefficient multiplying $\eta$ in \eqref{eqn:agreement_self_consist}.

Also write
\[
p_{m,\eta} \coloneqq \frac{1+c_m\eta}{2}.
\]
We denote by
$\mathcal{C}_{n,m}^{\text{bin}}$ and $\tilde{\mathcal{C}}_{n,m}^{\text{bin}}$
the corresponding second-best values under monitoring technology $m$.

\begin{theorem}
\label{thm:binary}
By restricting the contract class $\mathcal{F}_n$ to binary contracts $\mathcal{F}_n^{\text{bin}}$ as described in Algorithm~\ref{alg_binary_contr} and inspecting the monitoring method $m$, we denote the induced second-best values by $\mathcal{C}_{n,m}^{\text{bin}}$ and $\tilde{\mathcal{C}}_{n,m}^{\text{bin}}$, accordingly. Under Assumptions \ref{assm:basic_assm_1}, \ref{assm:basic_assm_2}, \ref{assm:basic_assm_3}, \ref{assm:basic_assm_4}, and \ref{assm:basic_assm_5}, we have
\[
\mathcal{C} - \tilde{\mathcal{C}}_{n,m}^{\text{bin}} = \Theta\left(1\middle/\sqrt{\mathcal{I}(m, \eta^*)\,  n \log n}\right);
\quad
\mathcal{C} - \mathcal{C}_{n,m}^{\text{bin}} = \Theta\left(1\middle/\sqrt{\mathcal{I}(m, \eta^*) \, n \log n}\right)
\]
where $n$ is the number of samples in the assessment dataset $\mathcal{D}_n$. Here, $\mathcal{I}(m, \eta)$ is the Fisher information of each $A_i$ at $\eta_a = \eta$ using monitoring method $m$:
\[
\mathcal{I}(m,\eta) \coloneqq \mathbb{E}\Big[\Big(\frac{\partial}{\partial \eta} \log p_{m,\eta}(A_i)\Big)^2\Big].
\]
For Bernoulli distributions in Algorithms~\ref{alg_expert} and~\ref{alg_self} with $\eta^*$ being the first-best effort in Proposition \ref{prop:first_best_sol},
\[
\mathcal{I}(m,\eta^*) = \frac{c_m^2}{1-c_m^2 (\eta^*)^2}.
\]
\end{theorem}

Theorem \ref{thm:binary} shows the gap between the first-best and the second-best solutions for binary contracts. To our knowledge, it is the first such result for the principal-agent model where the agent's decision is continuous. The key to our analysis is (i) to use the first-order condition of the agent's utility optimization problem and (ii) a careful analysis of the tail probability of binomial distribution. It studies how effective it is for the simple and easy-to-implement binary contract, which is also known as the bonus scheme in practice. We will see later that the rate of the binary contract is sub-optimal when comparing to the lower bound (Theorem~\ref{thm:lower_bound}).

Another side note is that the dependence on the Fisher information $\mathcal{I}(m ,\eta)$ here is unavoidable; in fact, that amount plays a central role in this section. A core intuition brought about by this theorem is that the more informative the monitoring method is, the larger the constant $c_m$ is, and henceforth the smaller the gap. This result partly justifies the choice of a more informative monitoring method in Section~\ref{sec:assess} from another perspective of economics. We will go back to the discussion of such an intuition later when we see $\mathcal{I}(m ,\eta)$ in other results.

In the following, we outline the proof sketch and we note that the upper bound proof is constructive, which characterizes that the optimal contract is a \emph{threshold} contract.

\subsubsection*{Proof Sketch of Theorem \ref{thm:binary}}
The proof of the theorem does not fully follow the standard method in the literature, so we provide a proof sketch here and defer the full details to Appendix \ref{subsec:proof_of_binary}.

The proof consists of three parts: the upper bound, the lower bound, and the constant of Fisher information. The dependence on $m$ is abbreviated in the first two parts for notation simplicity.

\textbf{Upper bound.} We prove the upper bound by considering a subclass of binary contracts: \textit{threshold contracts}. Suppose the principal pays the agent a bonus $w_b$ if $\sum_{i=1}^n A_i \geq k$ for some $k = \Theta(n)$ as the decision threshold. To interpret the contract under self-consistency monitoring, it means if for $k$ out of the $n$ duplicated samples, the annotator gives consistent labels, then the annotator can win a bonus $w_b.$ The analysis of the threshold contract consists of four steps as follows.

\textbf{Step 1. Reduce the problem to tail probability estimation.} Show the gap between the first-best and the second-best is approximately the payoff's variance due to the agent's risk-averse nature. This step is standard in the literature \citep{laffont2009theory, frick2023monitoring}.

\textbf{Step 2. Examine the curve of the marginal incentive.} Show that the marginal incentive $w_b \cdot \frac{\partial \mathbb{P}(\sum_{i=1}^n A_i \geq k)}{\partial \eta}$ for the agent to increase the quality $\eta$ is approximately a bell-shaped curve centered near the decision threshold $k/n$. The term ``approximately a bell-shaped curve'' means that the marginal incentive behaves similarly to a normal random variable with mean around $k/n$ and variance $\Theta(1/n)$.

\textbf{Step 3. Estimate the agent's decision via the first-order condition.} From the first-order condition of the agent's utility maximization, we know the agent's choice of $\eta_a$ happens at the intersection between the bell-shaped marginal incentive curve and the marginal effort curve $\frac{\mathrm{d} E(\eta)}{\mathrm{d} \eta}$. The two curves intersect twice at some $\eta_1$ and $\eta_2$ such that $p(\eta_1) < \frac{k-1}{n-1} < p(\eta_2)$, and the agent's choice to maximize their utility must be the one on the right. Then we prove that the intersection point $\eta_2$ happens at a distance of $\Theta(\sqrt{\log n/n})$ from the contract's threshold $k/n$ due to the bell-curve approximation.

\textbf{Step 4. Estimate the tail probability that the agent does not get the bonus.} By a careful examination of the properties of binomial distributions and the first-order condition, the probability that the agent exactly reaches the decision threshold $\mathbb{P}(\sum_{i=1}^n A_i = k | \eta_2)$ is of $\Theta(1/n)$. Then we estimate the ratio $\frac{\mathbb{P}(\sum_{i=1}^n A_i = j | \eta_2)}{\mathbb{P}(\sum_{i=1}^n A_i = j-1 | \eta_2)}$. Finally, we use the sum of geometric sequences to derive bounds for the tail probability (and henceforth the variance), which is of $\Theta(1/\sqrt{n \log n})$.

\textbf{Lower bound.} As noted in Step 1 of the upper bound proof, the gap is approximately the payoff's variance. The lower bound proof reduces to showing that the contract that minimizes the payoff's variance must be a threshold scheme.

\textbf{Step 1. Extend the combinatorial rejection region of the monitoring test to continuous regions.} The original test that decides whether the agent receives the bonus can be expressed by the rule that if $n\cdot \bar{A} \notin \mathcal{K}$, then the agent receives the bonus otherwise not. The set $\mathcal{K} \subset \{0,1,...,n\}$ is the rejection region. We extend this monitoring test by a rule that if $n\cdot \bar{A} = k$, then with probability $w_k \in [0,1]$ the agent gets the bonus for $k=1,...,n$. Then the rejection rule is parameterized by $w_k$'s and we can optimize $w_k$'s as a handle to minimize the variance.

\textbf{Step 2. Formulate the optimization problem of $w_k$'s to minimize the variance.} The objective is to minimize $\text{Bonus}^2 \cdot \mathrm{Var}(\mathbb{P}(\text{Bonus}))$. The first-order condition requires that the bonus times the marginal incentive at $\eta = \eta^*$ must be a constant, implying that the $\text{Bonus}$ term can be substituted by the inversion of $\sum_{k} w_k \cdot \frac{\partial \mathbb{P}(n \cdot \bar{A} = k)}{\partial \eta}\vert_{\eta = \eta^*}$. The term $\mathbb{P}(\text{Bonus})$ is $\sum_{k} w_k \cdot \mathbb{P}(n \cdot \bar{A} = k|\eta^*)$. The problem is now formulated as an optimization problem of different $w_k$'s.

\textbf{Step 3. Inspecting the local minima to show that any optimal binary contract must be thresholding.} By calculating the derivatives, show that all local minima must be $w_k = 0$ for $k < k_0$ and $w_k = 1$ for $k > k_0$. Then the minimizer must be some threshold scheme (up to one single term $k = k_0$). For large enough $n$, any single term $\mathbb{P}(n \cdot \bar{A} = k_0)$ does not change the estimation. From the discussions on the threshold scheme for the upper bound analysis, we know that the variance is of $\Theta(1/\sqrt{n \log n})$.

\textbf{Constant of Fisher information.} This part is straightforward once we replace the constants of $c^2/(1-c^2 \eta^2)$ in the proof by Fisher information.

\subsubsection*{Relation to Discrete Action Space}
The closest result to our Theorem~\ref{thm:binary} is the finite-action result of \citet{frick2023monitoring}, where their result and our continuous-action result concern different benchmarks and uses different math tools. If the action set $[0,1]$ is replaced by a finite grid $\mathcal{S}_r\subset[0,1]$ with mesh size $r$, let $\mathcal{C}^{\mathrm{disc}}(r)$ denote the corresponding first-best value. Since $\mathcal{S}_r\subset[0,1]$, we have $\mathcal{C}^{\mathrm{disc}}(r)\leq \mathcal{C}$.

To be more specific, let's take the discretized action space over $[0, 1]$ as an example, where there are $K$ possible actions in $\mathcal{S}_r\subset[0,1]$: $0\leq \eta_1 < \dots < \eta_K \leq 1$. Assume the minimal $\eta_{k+1} - \eta_k$ is lower bounded by some small $r$ (mesh size). Then, there is a gap between the first-best problem~\eqref{eq:first_best} and the discretized first-best:
\begin{equation}
\begin{split}
\mathcal{C}^{\mathrm{disc}}(r) \coloneqq \max_{F_n \in \mathcal{F}_n, \eta = \eta_k, 1\leq k\leq K} \, & \mathbb{E}\left[U_p(F_n(\mathcal{D}_n), \eta)\right], \\
\text{s.t. } & \mathbb{E}\left[U_a(F_n(\mathcal{D}_n), \eta)\right] \geq U_0.
\end{split}
\label{eq:first_best_discrete}
\end{equation}
We can see that the IR constraint for the continuous first-best is binding; then, the optimal value for each $\eta$ forms a convex function with unique minimum taken at $\eta = \eta^*$ (more specifically, $\mathrm{F}(\eta)$ in the proof of Theorem~\ref{thm:binary}; see Appendix~\ref{subsec:proof_of_binary} for details). The discretized benchmark, however, restricts the possible values of $\eta$ on $\mathcal{S}_r$, implying that the distance between $\eta^*$ and $\mathcal{S}_r$, $\min_{k} |\eta_k - \eta^*|$, is no greater than $r/2$. As a consequence, the gap $\mathcal{C} - \mathcal{C}^{\mathrm{disc}}(r)$ can be upper bounded by $O(r^2)$ by taking a Taylor expansion.

On the other hand, for each fixed finite grid, the discrete-action analysis of \citet{frick2023monitoring} gives exponential convergence in $n$ for binary contracts to the discretized benchmark, which is faster compared to the sub-exponential rates in Theorem~\ref{thm:binary}. The exponential rate has a dependence on the rate distortion function, which is of the same order as the minimum KL divergence between the assessment distributions induced by two different $\eta_k$'s in the previous example. Again, the KL divergence is a convex function, so we can expect the minimal gap to be of order $O(r^2)$. As a consequence, the convergence rate to $\mathcal{C}^{\mathrm{disc}}(r)$ is of order $\exp(-O(r^2) n)$. Hence the finite-action model can converge faster in $n$, but toward a smaller benchmark. Combining the above analysis, we have an overall gap of $O(r^2) + \exp(-O(r^2) n)$ compared to $\mathcal{C}$, where the optimal choice of $r^2$ is $r^2 = \Theta(\log n / n)$, inducing a result of $\Theta(\log n / n)$ in general.

At first glance, the rate of the discretized action space's binary contract is superior to the continuous action space's binary contract. However, in the discretized action space model, we have forced the agent to only take finitely many possible actions (here, annotation qualities), which is quite unnatural in practice. In our application, annotation quality is therefore more naturally modeled as a continuous choice variable, while a finite grid is best viewed as a conceptual benchmark or a numerical approximation.

Another interesting comparison is via the information theory: In the discrete case, the principal only needs to separate between finitely many actions with a clear gap, and tools from large deviation theory can give exponential rates. In the continuous case, the principal has to distinguish $\eta$ from local alternatives, and local distinguishability is governed by Fisher information.

\subsection{Linear contracts}

Another simple type of contracts is the linear contract. The biggest feature of linear contracts is that the payoff to the agent relies on the \emph{average} outcome of the assessment. More formally stating, under linear contracts, each contract $F_n$ is specified by a function $f_n$ that maps from each assessment sample $D_i$ to a positive value and pays the agent the average of $f_n(D_i).$ The procedure is described by Algorithm \ref{alg_lin_contr}. For example, under self-consistent monitoring, each sample $D_i$ corresponds to the agreement variable $A_i$ in Algorithm \ref{alg_self}. Then the payment $w_a$ is equal to the proportion of consistent labels annotated by the annotator. 

\begin{algorithm}[ht!]
\centering
\caption{Linear contract}
\label{alg_lin_contr}
\begin{algorithmic}
\State \textbf{Input:} A dataset $\mathcal{D}_n=\{D_1,...,D_n\}$ used to assess the annotator performance and a linear contract $F_n = (f_n)$
\State \textcolor{blue}{\%\% $f_n(D_i)\in \mathbb{R}$ maps each assessment data sample $D_i$ to a payment amount (say, based on its quality)}
\State  The company pays the annotator $$w_a = \frac1n \sum_{i=1}^n f_n(D_i)$$
\end{algorithmic}
\end{algorithm}

We present the result of the linear contracts, which is of better rate compared to the binary contracts. The analysis is largely the same as that in \citet{frick2023monitoring}, while we make the dependence on the Fisher information explicit.

\begin{theorem}
\label{thm:linear_contract}
By restricting the contract class $\mathcal{F}_n$ to linear contracts $\mathcal{F}_n^{\text{lin}}$ as described in Algorithm~\ref{alg_lin_contr} and inspecting the monitoring method $m$, we denote the induced second-best values by $\mathcal{C}_{n,m}^{\text{lin}}$ and $\tilde{\mathcal{C}}_{n,m}^{\text{lin}}$, accordingly. Under Assumptions \ref{assm:basic_assm_1}, \ref{assm:basic_assm_2}, \ref{assm:basic_assm_3}, \ref{assm:basic_assm_4}, and \ref{assm:basic_assm_5}, if \eqref{eq:restricted_second_best} is feasible, then we have
\[
\mathcal{C} - \tilde{\mathcal{C}}_{n,m}^{\text{lin}} = \Theta\left(1\middle/ (\mathcal{I}(m, \eta^*)\, n)\right); \quad 
\mathcal{C} - \mathcal{C}_{n,m}^{\text{lin}} = \Theta\left(1\middle/ (\mathcal{I}(m, \eta^*)\, n)\right)
\]
where $n$ is the number of samples in the assessment dataset $\mathcal{D}_n$. Here, $\mathcal{I}(m, \eta^*)$ is the same Fisher information defined in Theorem~\ref{thm:binary}.
\end{theorem}

Theorem \ref{thm:linear_contract} gives the gap between the first-best and second-best solutions under linear contracts. Compared to the case of binary contracts, the rate of linear contracts is better, which we shall see later is indeed optimal (where the lower bound is also of $\Omega\left(1\middle/ \mathcal{I}(m, \eta^*)\, n\right)$). Here, we can see that the quantity $\mathcal{I}(m, \eta^*)$ again plays a vital role in the final result. The intuition is the same as Theorem~\ref{thm:binary}: the better the monitoring, the better the economic outcome.

The sketch proof of this theorem is not so complicated as in Theorem~\ref{thm:binary}. As in the previous analysis, the gap is approximately the payoff's variance; then, for linear contracts, the variance decays as the order of $1/n$. The constants of Fisher information can also be derived from detailed calculations. This basically describes the crux of the proof; we defer the full proof to Appendix \ref{apd:proof_of_linear}.

\begin{figure}[ht!]
  \centering
  \includegraphics[width=\textwidth]{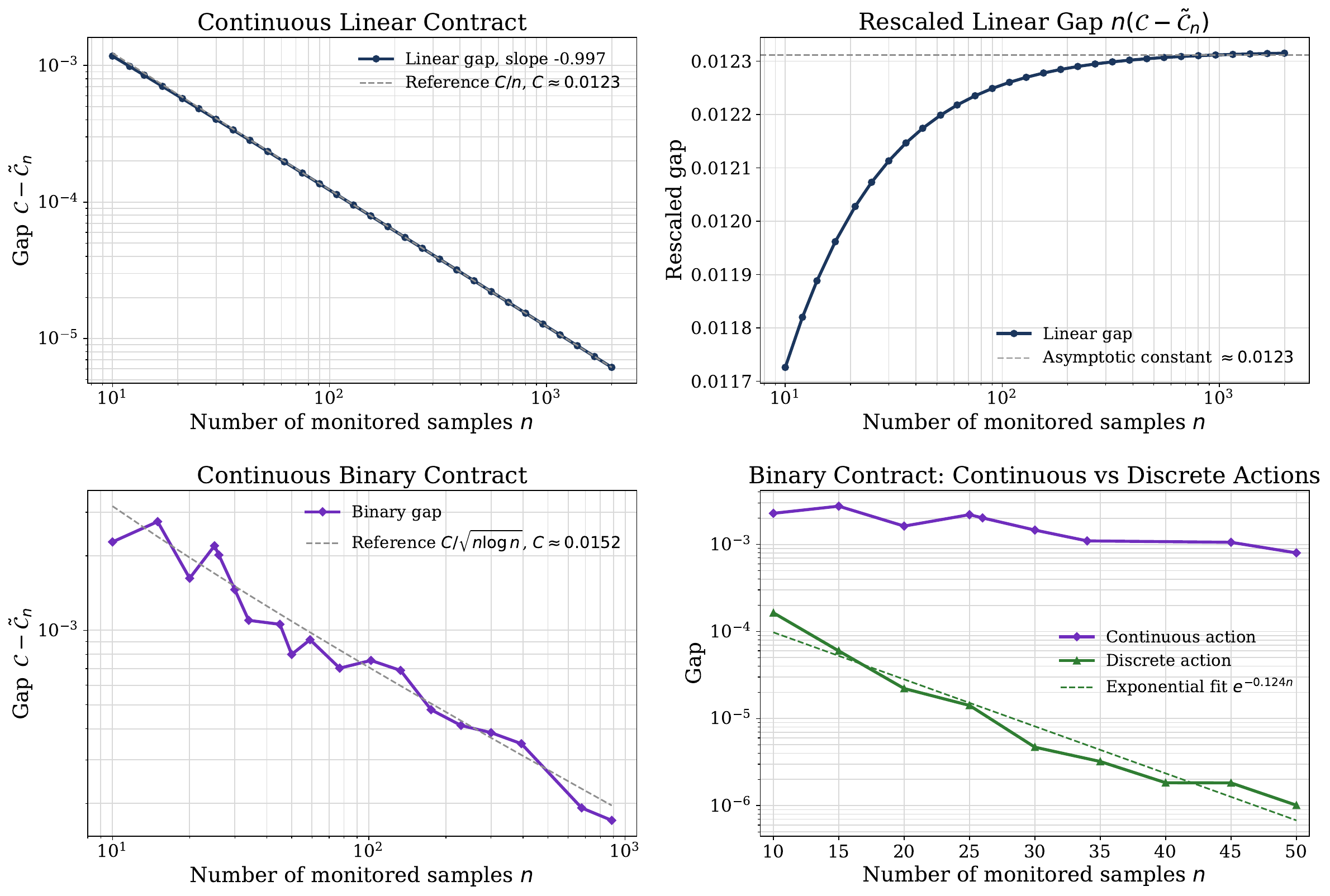}
  \caption{\small Synthetic illustration of the gap scaling. Top-left: for linear contracts under the continuous action space, the restricted gap $\mathcal{C}-\tilde{\mathcal{C}}_n$ is nearly parallel to a $1/n$ reference line on the log-log scale. Top-right: the rescaled gap $n(\mathcal{C}-\tilde{\mathcal{C}}_n)$ stabilizes to a positive constant, which numerically confirms $\Theta(1/n)$. Bottom-left: for continuous binary contracts, the restricted gap follows the slower $1/\sqrt{n\log n}$ benchmark. Bottom-right: under binary contracts, the continuous-action gap is much slower than the discrete-action benchmark, whose semi-log behavior is consistent with the exponential convergence established by \citet{frick2023monitoring}.}
\label{fig:reviewer_gap_scaling}
\end{figure}

Figure~\ref{fig:reviewer_gap_scaling} provides a synthetic illustration of the gap rates in the continuous and discrete settings. In the continuous-action model, the restricted linear-contract gap $\mathcal{C}-\tilde{\mathcal{C}}_n$ is nearly parallel to a $1/n$ reference line, and the rescaled quantity $n(\mathcal{C}-\tilde{\mathcal{C}}_n)$ approaches a positive constant, which is the numerical signature of $\Theta(1/n)$. The same synthetic setup also shows that the restricted continuous binary-contract gap follows the slower $1/\sqrt{n\log n}$ benchmark. The last panel then compares binary contracts under continuous and discrete action spaces: once the action space is discretized, the gap becomes approximately linear on the semi-log scale, which is consistent with the exponential convergence proved by \citet{frick2023monitoring}. Thus, the figure makes the qualitative difference between the continuous and discrete cases visually explicit.
\subsection{Lower bound}
\label{subsec:lower_bound}

In the previous subsections, we analyze the principal-agent model and present the utility gap between the first-best \eqref{eq:first_best} and the (restricted) second-best \eqref{eq:second_best} (\eqref{eq:restricted_second_best}) with respect to the monitoring number $n$ for the binary contracts and the linear contracts. One may wonder what the optimal rate should be for general contracts. In this subsection, we will prove that the lower bound of the gap $\mathcal{C} - \mathcal{C}_n$ is of $\Omega\left(1\middle/ \mathcal{I}(m, \eta^*)\, n\right)$ in mild conditions, which indicates that the linear contracts are of the optimal rate with respect to $n$ and Fisher information among all types of contracts.

\begin{theorem}
\label{thm:lower_bound}
     Under Assumptions \ref{assm:basic_assm_1}, \ref{assm:basic_assm_2}, \ref{assm:basic_assm_3}, \ref{assm:basic_assm_4}, and \ref{assm:basic_assm_5}, we have
    \[
    \mathcal{C} - \mathcal{C}_n = \Omega\left(1\middle/ (\mathcal{I}(m, \eta^*)\, n)\right); \quad \mathcal{C} - \tilde{\mathcal{C}}_n = \Omega\left(1\middle/ (\mathcal{I}(m, \eta^*)\, n)\right).
    \]
\end{theorem}

The full proof is deferred to Appendix \ref{apd:proof_lower_bound}. On a high level, the proof is based on the Cram\'{e}r-Rao lower bound based on the Fisher information. A similar idea is adopted in Proposition 1 of \citet{kirkegaard2017fisher}.

We write some short notes on the lower bound. First, this lower bound explains why continuous and discrete action spaces behave so differently. In the discrete case, the incentive problem is about separating finitely many well-separated actions, so large deviations can give exponential rates. In the continuous case, the binding IC constraint is local: the contract must distinguish $\eta$ from local alternatives, and local distinguishability is governed by Fisher information, which only grows linearly in $n$. That is why the fundamental rate drops to $1/n$. Second, consider the case of Algorithms~\ref{alg_expert} and \ref{alg_self}, where each sample in $\mathcal{D}_n$ is denoted as $A_i$ and is of distribution $p_\eta(A_i)$. As in the bounds of binary and linear contracts (Theorem~\ref{thm:binary} and \ref{thm:linear_contract}), by taking our Bernoulli distribution model where $\mathbb{P}(A_i = 1 | \eta) = \frac{1+c_m\eta}{2}$, the lower bound also depends on the Fisher information
\[
\mathcal{I}(m,\eta) = \frac{c_m^2}{1-c_m^2(\eta^*)^2},
\]
as $\mathcal{C} - \mathcal{C}_n = \Omega(1/(\mathcal{I}(m, \eta^*) n))$. The core intuition remains the same: the larger the Fisher information is, the lower the bound would be. 
So, at a higher level, a larger $c_m$ will not only benefit the upper bound of specific contract types but also the lower bound, which justifies our self-consistency monitoring for a larger $c_m$ again. Last but not least, we want to note that this result does not imply that the linear contracts are constant-optimal; in fact, we only prove that they are rate-optimal with respect to $n$ and $\mathcal{I}(m, \eta^*)$.

\subsection{Implications on Algorithms in Section~\ref{sec:assess}}
\label{subsec:link_between_assess_and_incentivize}

In the previous subsections, we have derived some theories on how good the company (the principal) can expect when different monitoring methods are adopted with different types of contracts. We set up the idealized case (the first-best) as a benchmark for comparison, and the closer to the first-best, the better the performance is. A core economic intuition is running throughout the entire text: a better monitoring method means a smaller randomness in the outcome and a better utility overall. This justifies why people should adopt better monitoring methods. We formalize this intuition in the following corollary.

\begin{corollary}[Which monitoring technology is better?]
\label{cor:monitor_compare}
Suppose two monitoring technologies $m_1$ and $m_2$ satisfy \eqref{eq:generic_monitoring_strength} with
$c_{m_1}>c_{m_2}>0$. Then $m_1$ yields a strictly smaller asymptotic principal-utility gap than $m_2$ for (1) binary contracts; (2) linear contracts; and (3) lower bounds. In particular, self-consistency monitoring is asymptotically better than expert-based monitoring whenever
\[
c_{\mathrm{self}} > c_{\mathrm{exp}}.
\]
\end{corollary}

Corollary \ref{cor:monitor_compare} is especially relevant when the preference task is subjective or ambiguous. In that case, the expert-based parameter $c_{\mathrm{exp}}=\bar c$ can be small because either $|p(x,y_1,y_2)-1/2|$ or $|p_e(x,y_1,y_2)-1/2|$ is small on average, or because the expert and the annotator are not perfectly aligned. By contrast, if the same annotator is stable over time, then $c_{\mathrm{self}}$ is large, so self-consistency monitoring has the better convergence constant. This is exactly the regime where the statistical advantage in Section \ref{sec:assess} carries over to the contract problem. Expert-based monitoring can be preferable only when the expert is highly confident and well aligned with the annotator, making $\bar c$ larger than the corresponding self-consistency parameter. Moreover, the dependence on the monitoring technology is stronger for linear contracts than for binary contracts: the dependence on $\mathcal{I}(m, \eta^*)$ scales as $\mathcal{I}(m, \eta^*)^{-1}$ for linear contracts but only as $\mathcal{I}(m, \eta^*)^{-\frac12}$ for binary contracts. Equivalently, to attain the same target utility gap, a weaker monitoring technology requires on the order of $\mathcal{I}(m, \eta^*)^{-1}$ more monitored samples (up to the extra logarithmic term in the binary-contract case).

\begin{figure}[ht!]
  \centering
  \begin{minipage}[b]{0.23\textwidth}
    \centering
\includegraphics[width=\textwidth]
{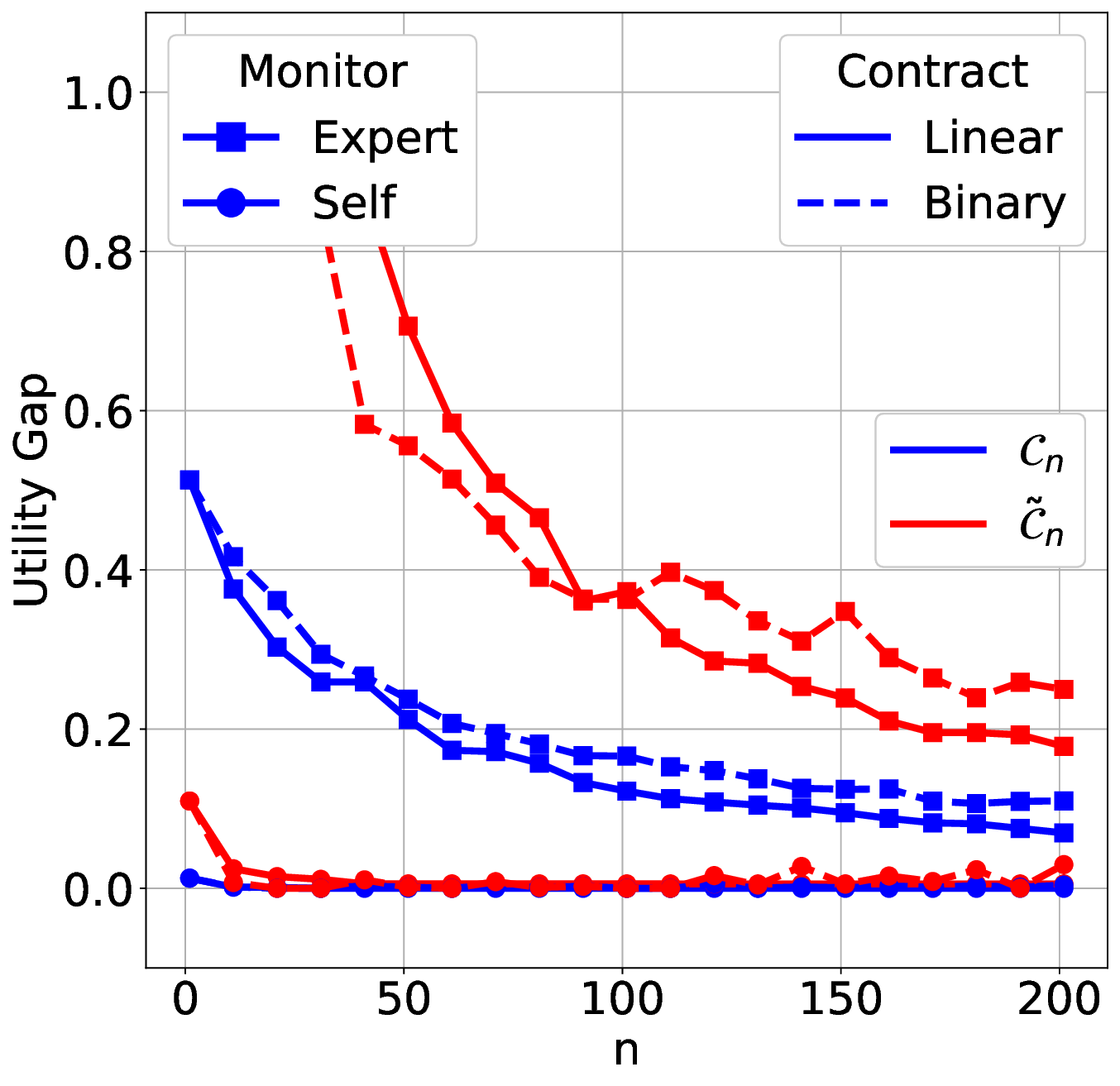}
    \vspace{2mm} {\small (a) PKU}
  \end{minipage}
    \hfill
  \begin{minipage}[b]{0.23\textwidth}
    \centering
\includegraphics[width=\textwidth]
{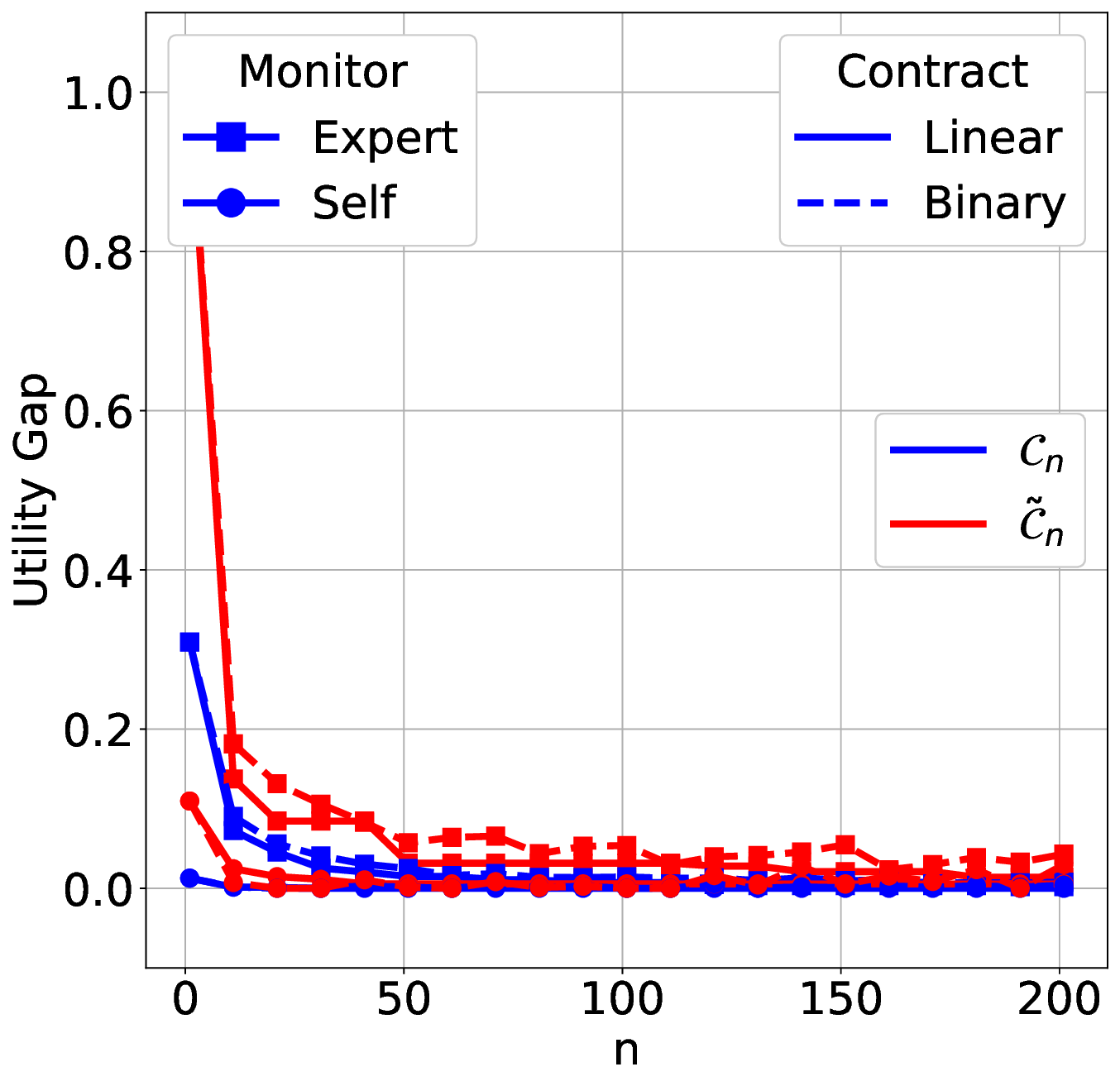}
    \vspace{2mm} {\small (b) Helpsteer}
  \end{minipage}
    \hfill
    \begin{minipage}[b]{0.23\textwidth}
    \centering
\includegraphics[width=\textwidth]
{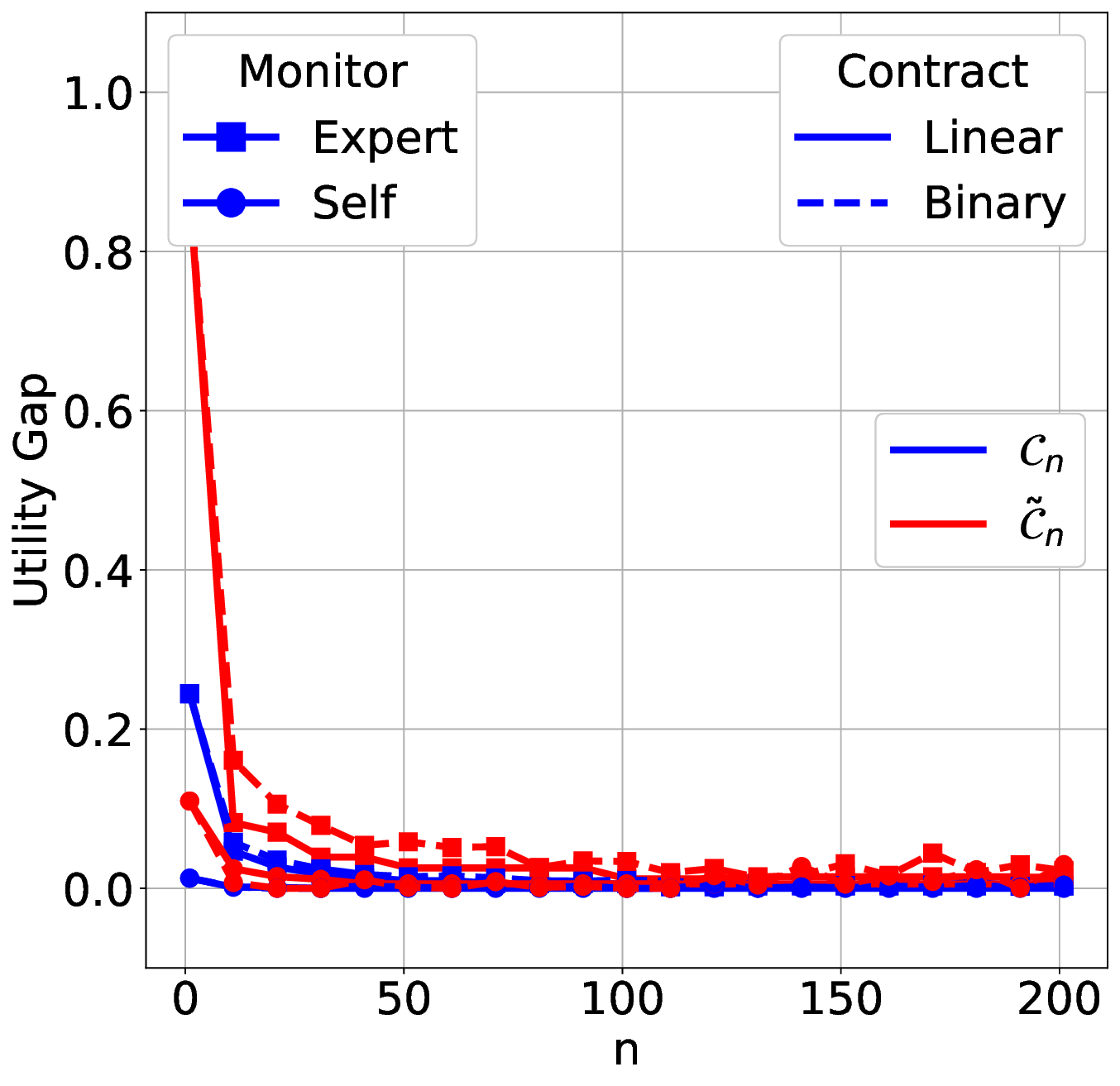}
    \vspace{2mm} {\small (c) Ultra}
  \end{minipage}
  \hfill  
  \begin{minipage}[b]{0.23\textwidth}
    \centering
\includegraphics[width=\textwidth]
{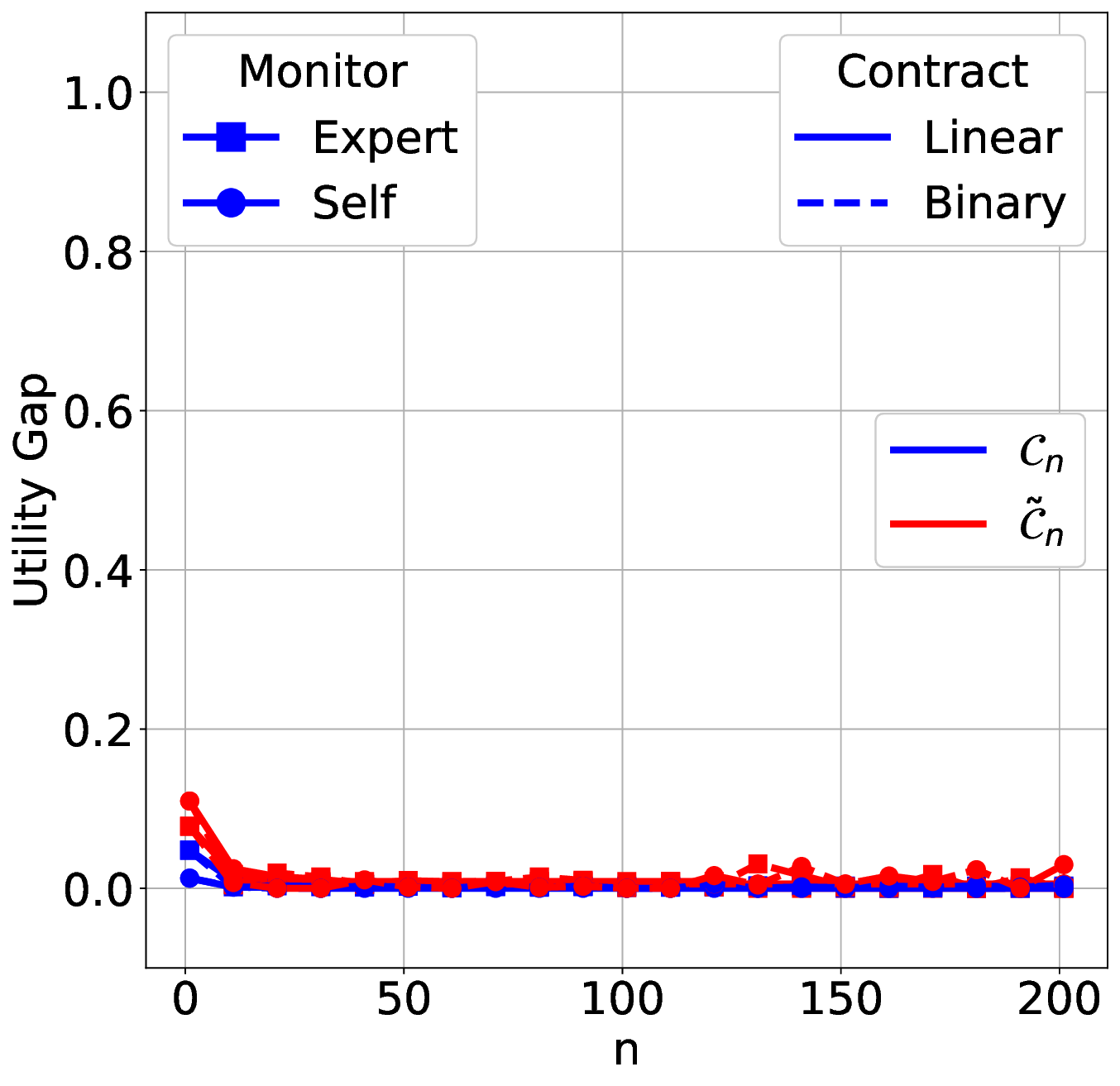}
    \vspace{2mm} {\small (d) Skywork}
  \end{minipage}
  \caption{Normalized principal utility gap ($\mathcal{C}-\mathcal{C}_n$ and $\mathcal{C}-\tilde{\mathcal{C}}_n$) under different monitoring and contract settings. In these experiments, we set $U_0=0$, $\delta=0.02$, $\mu(\eta)=1/2\eta^{4/5}$, $G_a(w_a)=1-\exp(-w_a)$, and $E(\eta)=0.18\eta^2$ (see Appendix~\ref{appx:fig_contract_rank} and Appendix~\ref{appx:delta_utility_sensitivity} for the setup,  sensitivity to $\delta$ and additional utility profiles). In these calibrations, self-consistency monitoring generally yields better utility gaps than expert-based monitoring under the same contract class, with the largest advantage on ambiguous datasets such as PKU and smaller differences on clearer datasets such as Helpsteer, Ultra, and Skywork. The figure also validates Theorems~\ref{thm:binary} and \ref{thm:linear_contract}: for a fixed monitoring technology, the linear contract closes the gap faster in $n$ than the binary contract.}
\label{fig:main_contract_rank}
\end{figure}

Figure \ref{fig:main_contract_rank} closes the loop between Sections \ref{sec:assess} and \ref{sec:incentivizing}. The statistical advantage of self-consistency monitoring in Section \ref{sec:assess} becomes an economic advantage here: once the same simple contracts are written on the monitored signal, self-consistency monitoring generally delivers a smaller principal-utility gap than expert-based monitoring. It as well echoes Corollary~\ref{cor:monitor_compare}: the gain from self-consistency is largest on datasets such as PKU, where preferences concentrate near $1/2$, and smaller on clearer datasets such as Skywork. The figure also shows the second layer of the paper's design question. For a fixed monitoring technology, both contract classes improve as the number of monitored samples $n$ grows, but linear contracts extract value from the monitored signal faster than binary contracts. Thus, the paper's message is not only that better contracts matter, but also that better monitoring technologies and better contract forms are complementary levers for improving the annotation system. Appendix~\ref{appx:delta_utility_sensitivity} further shows that the qualitative pattern is robust to broader values of $\delta$ and to alternative utility specifications.

\subsection{Extensions}
\subsubsection*{Discussion on Risk-Neutral Benchmark ($G_a(w)=w$)}
From previous discussions, we observe that if the annotator is risk-averse, then any randomness in the assessment outcome will affect the final utility that the principal can hope for. In this subsection, we extend the discussions a bit to the risk-neutral cases. Suppose the annotator is risk-neutral, i.e., $G_a(w)=w$.
Then only the \emph{expected} payment matters to the annotator, so the randomness of the
monitoring signal is irrelevant.
In the monitoring setting above, let $X_i\in\{0,1\}$ denote the pass indicator for inspected
sample $D_i$ (e.g., $X_i=A_i$ under self-consistency), so that $\mathbb{P}(X_i=1\mid \eta)=p(\eta)$,
and let $\bar{X}\coloneqq \frac{1}{n}\sum_{i=1}^n X_i$.
Consider the linear contract
\begin{equation}
F_n(\mathcal{D}_n)\;=\;\alpha+\beta\,\bar{X}.
\label{eq:risk_neutral_affine}
\end{equation}
Under risk-neutrality, the agent's expected utility is
$\mathbb{E}[F_n(\mathcal{D}_n)\mid \eta]-E(\eta)=\alpha+\beta\,p(\eta)-E(\eta)$,
so the first-order condition is $\beta\,p'(\eta)=E'(\eta)$.
Hence choosing
\[
\beta=\frac{E'(\eta^*)}{p'(\eta^*)}
\qquad
\Bigl(\text{in particular, for }p(\eta)=\tfrac{1+c\eta}{2}\text{ we have }\beta=\tfrac{2E'(\eta^*)}{c}\Bigr)
\]
makes $\eta^*$ incentive compatible, and setting
$\alpha=U_0+E(\eta^*)-\beta\,p(\eta^*)$ binds the IR constraint.
The principal's expected payoff is then
\[
\mathbb{E}[U_p(F_n(\mathcal{D}_n),\eta^*)]
=\mu(\eta^*)-\bigl(\alpha+\beta\,p(\eta^*)\bigr)
=\mu(\eta^*)-U_0-E(\eta^*)=\mathcal{C},
\]
so $\mathcal{C}_n=\tilde{\mathcal{C}}_n=\mathcal{C}$ for all $n$.
This benchmark echoes classic results: when the agent is risk-neutral and transfers are
unconstrained, first-best can be achieved by a high-powered linear/residual-claim contract and
monitoring has no additional value \citep{harris1979optimal}.
In contrast, with strictly concave $G_a$, the variance of $\bar{X}$ affects the agent through
Jensen's inequality, generating the finite-sample risk premium that drives the convergence-rate
gaps studied in Theorems~\ref{thm:binary}, \ref{thm:linear_contract}, and \ref{thm:lower_bound}.

\subsubsection*{Generalizing beyond Bernoulli Distribution}

In the previous discussions, we have assumed that the assessment dataset $\mathcal{D}_n$ consists of $n$ Bernoulli random variables following the model used in Section~\ref{sec:assess}, especially for the expert monitoring Algorithm~\ref{alg_expert} and our self-consistency monitoring Algorithm~\ref{alg_self}. One may wonder if our main results in Section~\ref{sec:incentivizing} can be generalized to other models where the assessment dataset may consist of data from other distributions than Bernoulli distributions. The answer is yes, as long as the variance for the assessment exists and the assumptions are satisfied. For Theorem~\ref{thm:binary}, a follow-up work \citet{liu2025incentivizing} generalizes our result to normal distributions and other distributions that are asymptotically normal with central limit theorem under mild conditions. For Theorem~\ref{thm:linear_contract}, the result holds for general distributions with finite variance. For Theorem~\ref{thm:lower_bound}, the Cram\'{e}r-Rao arguments also hold for general distributions as long as the assumptions are satisfied (for example, the exponential family distributions).

\section{Conclusion}

In this paper, we study the problem of assessing and incentivizing human annotators in the context of human preference annotation, which is an emerging problem in the era of large language models and generative AI. A central message of the paper is that monitoring design and contract design are two complementary levers for improving LLM annotation systems. Section \ref{sec:assess} is the practitioner-facing, micro/operational part of the paper: expert-based monitoring is the natural workhorse for quality control, we show why it can be sample-inefficient in preference annotation, and we propose self-consistency monitoring as a more suitable agreement-based alternative. Section \ref{sec:incentivizing} is the macro/economic part: taking the monitoring technology as given, we show how the same monitored samples support incentives and determine the principal's utility gap. In this sense, the two sections study a common sample-complexity question from two angles---statistical and economic. Better monitoring reduces the number of samples needed to reliably assess an annotator, and it also reduces the practical sample burden for simple contracts to approach first-best performance.

In the face of two new challenges (1) the lack of a ``true label'' due to human annotators' heterogeneity and (2) the hardness in evaluating downstream performance and tracing back the data quality, we propose a self-consistency monitoring approach to assess human annotators, which requires the human annotators to re-label a random subset of the samples. This approach addresses both concerns and enjoys a smaller error as well compared to classic expert-based monitoring (in terms of minimax lower bounds). To solve the problem of designing better contracts to incentivize human annotators, we give a new analysis under the framework of the principal-agent model. The analysis of the continuous action space shows that the binary contract's gap between the first-best and the second-best is $\Theta(1/\sqrt{n \log n})$, inferior to the linear contract's $\Theta(1/n)$. This result partly answers an open question proposed by \citet{frick2023monitoring}, where the discrete case is analyzed such that the binary contract is of $\exp(-\Theta(n))$ gap. Our result also underlines the importance of reducing the monitoring costs (to increase test number $n$), which further supports the superiority of self-consistency monitoring. The analysis is of independent interest since traditional tools such as large deviation theory fail in the continuous case. Our results are not restrictive: while our proof in Theorem \ref{thm:binary} assumes the binomial distribution, the framework can also be extended to other distributions such as normal distributions shown in the follow-up work \citet{liu2025incentivizing}.

In addition, our formal analysis focuses on binary pairwise preference labels and Bernoulli agreement variables. This matches the dominant reward-model/RLHF data format in LLM training \citep{dong2024rlhf,amini2024direct} and lets us isolate the interaction between monitoring design and contract design in the cleanest possible way. The theory should therefore be interpreted as a benchmark for pairwise preference annotation, rather than as a universal model for all human-labeling tasks. For ordinal ratings, multi-class judgments, rankings, or other structured outputs, the same self-consistency idea can be implemented by replacing the binary agreement indicator with an appropriate task-specific consistency score. Extending the minimax and contract-theoretic results to those richer monitoring channels requires new analysis and is left for future work.
\bibliographystyle{informs2014}
\bibliography{main}

\newpage
\appendix

\section{Proofs and Additional Discussions}

\subsection{Proofs and Discussions for Section \ref{sec:assess}}
\subsubsection{More Discussions on Proposition \ref{prop:info_lower_bound}}

For the Kullback--Leibler divergence appearing on the right-hand side of the lower bound, we can decompose it into two terms:
\[
D_{\mathrm{KL}}(\mathcal{P}_{\eta_0} \,\|\, \mathcal{P}_{\eta_1}) = \mathrm{CrossEntropy}(\mathcal{P}_{\eta_0}, \mathcal{P}_{\eta_1}) - \mathrm{Entropy}(\mathcal{P}_{\eta_0}).
\]
Recall that a smaller \(D_{\mathrm{KL}}(\mathcal{P}_{\eta_0} \,\|\, \mathcal{P}_{\eta_1})\) induces a larger/better lower bound on the sum of the two types of errors. This observation has two implications:
\begin{itemize}
    \item A larger \(\mathrm{CrossEntropy}(\mathcal{P}_{\eta_0}, \mathcal{P}_{\eta_1})\) is favorable. Denote $c = 2(p - 1/2)$, so that we have
    \[
    \mathrm{CrossEntropy}(\mathcal{P}_{\eta_0}, \mathcal{P}_{\eta_1}) = -\frac{1+\eta_0 c}{2}\log\left(\frac{1+\eta_1 c}{2}\right)
    -\frac{1-\eta_0 c}{2}\log\left(\frac{1-\eta_1 c}{2}\right).
    \]
    One can verify that this expression is increasing in \(\eta_1\) for \(\eta_1 > \eta_0\). Intuitively, a larger gap between the target level \(\eta_1\) and the penalty threshold \(\eta_0\) makes the hypothesis testing problem less challenging.
    
    \item A smaller \(\mathrm{Entropy}(\mathcal{P}_{\eta_0})\) is favorable. Intuitively, this corresponds to a distribution \(\mathcal{P}_{\eta_0}\) with less uncertainty, meaning that the annotator's low-annotation behavior adheres to a more predictable, ``fixed pattern'', thereby facilitating the detection.
\end{itemize}

\subsubsection{Proof of Proposition \ref{prop:info_lower_bound}}
We first define the total variation distance with two lemmas.
\begin{definition}
For any two distributions $\mathcal{Q}$ and $\mathcal{P}$ over the measurable space $(\Omega, \mathcal{F})$, the total variation distance is defined as
\[
\mathrm{TV} \coloneqq \sup_{A \in \mathcal{F}} \{|\mathcal{Q}(A) - \mathcal{P}(A)|\}.
\]
\end{definition}

\begin{lemma}[Le Cam's Lemma \citep{le2012asymptotic}]
\label{lemma:le_cam}
For any two distributions $\mathcal{Q}$ and $\mathcal{P}$ over the space $(\Omega, \mathcal{F})$, and denote $\Psi$ as a measurable function from $\Omega$ to $\{0,1\}$. Then
\[
\inf_{\Psi} \big\{\mathcal{Q}(\Psi(\omega) = 0) + \mathcal{P}(\Psi(\omega) = 1) \big\} = 1 - \mathrm{TV}(\mathcal{Q}, \mathcal{P}).
\]
Furthermore, such an infimum is met with the following function
\[
\Psi^*(s) := \mathbbm{1} \Big\{\frac{\mathrm{d}\mathcal{Q}}{\mathrm{d}\mathcal{P}}(s) \geq 1\Big\}.
\]
\end{lemma}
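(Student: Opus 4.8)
The plan is to turn the infimum over binary tests into an optimization over measurable sets and then recognize the optimal value as the total-variation distance. First I would fix a common dominating measure $\mu \coloneqq \mathcal{Q} + \mathcal{P}$ and write $q \coloneqq \mathrm{d}\mathcal{Q}/\mathrm{d}\mu$ and $p \coloneqq \mathrm{d}\mathcal{P}/\mathrm{d}\mu$ for the corresponding densities. A measurable $\Psi : \Omega \to \{0,1\}$ is in bijection with the set $A \coloneqq \{\omega : \Psi(\omega) = 1\} \in \mathcal{F}$, and since $\int_\Omega q\,\mathrm{d}\mu = 1$,
\[
\mathcal{Q}(\Psi(\omega)=0) + \mathcal{P}(\Psi(\omega)=1) = \mathcal{Q}(A^{c}) + \mathcal{P}(A) = 1 - \int_A (q - p)\,\mathrm{d}\mu .
\]
Hence $\inf_\Psi\{\mathcal{Q}(\Psi=0)+\mathcal{P}(\Psi=1)\} = 1 - \sup_{A \in \mathcal{F}} \int_A (q-p)\,\mathrm{d}\mu$, and it only remains to evaluate this supremum.

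Next I would show the supremum equals $\mathrm{TV}(\mathcal{Q},\mathcal{P})$ and is attained at $A^{*} \coloneqq \{q \ge p\}$. For any $A$, since $q-p \le 0$ on $A \setminus A^{*}$ and $q-p \ge 0$ on $A^{*}$, we get $\int_A (q-p)\,\mathrm{d}\mu \le \int_{A \cap A^{*}}(q-p)\,\mathrm{d}\mu \le \int_{A^{*}}(q-p)\,\mathrm{d}\mu = \int (q-p)_+\,\mathrm{d}\mu$, so the supremum is attained at $A^{*}$ with value $\int (q-p)_+\,\mathrm{d}\mu$ (replacing $A^{*}$ by $\{q>p\}$ does not change the value, as the integrand vanishes on $\{q=p\}$). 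To identify this with $\mathrm{TV}$: the map $A \mapsto \mathcal{Q}(A)-\mathcal{P}(A) = \int_A(q-p)\,\mathrm{d}\mu$ has maximum $\int(q-p)_+\,\mathrm{d}\mu$ and minimum $-\int(q-p)_-\,\mathrm{d}\mu$, and $\int(q-p)\,\mathrm{d}\mu = 0$ forces $\int(q-p)_+\,\mathrm{d}\mu = \int(q-p)_-\,\mathrm{d}\mu$; hence $\mathrm{TV}(\mathcal{Q},\mathcal{P}) = \sup_A|\mathcal{Q}(A)-\mathcal{P}(A)| = \int(q-p)_+\,\mathrm{d}\mu$. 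Combining the two displays gives $\inf_\Psi\{\cdots\} = 1 - \mathrm{TV}(\mathcal{Q},\mathcal{P})$, attained by $\Psi = \mathbbm{1}_{A^{*}} = \mathbbm{1}\{q \ge p\}$, which on $\{p>0\}$ is exactly $\mathbbm{1}\{\mathrm{d}\mathcal{Q}/\mathrm{d}\mathcal{P} \ge 1\}$.

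The only point requiring care — and the ``main obstacle'', such as it is, in an otherwise routine argument — is the measure-theoretic bookkeeping when $\mathcal{Q} \not\ll \mathcal{P}$: there the symbol $\mathrm{d}\mathcal{Q}/\mathrm{d}\mathcal{P}$ must be read through the Lebesgue decomposition (equivalently through the dominating measure $\mu$), with the convention $\mathrm{d}\mathcal{Q}/\mathrm{d}\mathcal{P} = +\infty \ (\ge 1)$ on $\{p=0\}$, so that the optimal region $A^{*}$ still absorbs the singular part of $\mathcal{Q}$ and the stated $\Psi^{*}$ coincides with $\mathbbm{1}_{A^{*}}$. No concentration or large-deviation input enters. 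Downstream, this lemma — combined with tensorization $D_{\mathrm{kl}}(\mathcal{P}_{\eta_1}^{\otimes n}\|\mathcal{P}_{\eta_0}^{\otimes n}) = n\,D_{\mathrm{kl}}(\mathcal{P}_{\eta_1}\|\mathcal{P}_{\eta_0})$, a Le Cam/Bretagnolle--Huber bound $1 - \mathrm{TV} \ge \tfrac12\exp(-D_{\mathrm{kl}})$, and the reduction of the two composite hypotheses to their least-favorable points $\eta_0,\eta_1$ (via the monotone likelihood ratio of the binomial structure) — yields Proposition~\ref{prop:info_lower_bound}.
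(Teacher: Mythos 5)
Your proof is correct and complete: the reduction of the infimum over tests to a supremum over measurable sets, the identification of the optimizer $A^{*}=\{q\ge p\}$, and the identity $\sup_A\int_A(q-p)\,\mathrm{d}\mu=\int(q-p)_+\,\mathrm{d}\mu=\mathrm{TV}(\mathcal{Q},\mathcal{P})$ are all handled properly, including the convention for the singular part when $\mathcal{Q}\not\ll\mathcal{P}$. The paper itself states this lemma as a cited classical result and gives no proof, so there is nothing to compare against; your argument is the standard one from the referenced literature and nothing further is needed.
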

\begin{lemma}[Bretagnolle-Huber's Inequality \citep{bretagnolle1978estimation}]
\label{lemma:BH_ineq}
For any two distributions $\mathcal{Q}$ and $\mathcal{P}$, we have
\[
\mathrm{TV}(\mathcal{Q}, \mathcal{P}) \leq \sqrt{1 - \exp(-D_{\mathrm{kl}}(\mathcal{Q}\|\mathcal{P}))} \leq 1 - \frac12 \cdot \exp(-D_{\mathrm{kl}}(\mathcal{Q}\|\mathcal{P})).
\]
\end{lemma}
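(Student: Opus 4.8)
The plan is to prove both inequalities through the Bhattacharyya affinity, which interpolates cleanly between the total variation distance on the left and the KL divergence on the right. First I would fix a common dominating measure $\mu$ (for instance $\mu = \mathcal{Q} + \mathcal{P}$) and write the densities $q = \mathrm{d}\mathcal{Q}/\mathrm{d}\mu$ and $p = \mathrm{d}\mathcal{P}/\mathrm{d}\mu$, so that $\mathrm{TV}(\mathcal{Q},\mathcal{P}) = 1 - \int \min(p,q)\,\mathrm{d}\mu$, and introduce the affinity $\rho \coloneqq \int \sqrt{pq}\,\mathrm{d}\mu$. If $\mathcal{Q}$ is not absolutely continuous with respect to $\mathcal{P}$, then $D_{\mathrm{kl}}(\mathcal{Q}\|\mathcal{P}) = +\infty$ and the right-hand side equals $1$, so the bound holds trivially; hence I assume $D_{\mathrm{kl}}(\mathcal{Q}\|\mathcal{P}) < \infty$ throughout.

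Step 1 relates the affinity to the total variation distance. Using $pq = \min(p,q)\cdot\max(p,q)$, I would split $\sqrt{pq} = \sqrt{\min(p,q)}\cdot\sqrt{\max(p,q)}$ and apply Cauchy--Schwarz to obtain $\rho^2 \le \left(\int \min(p,q)\,\mathrm{d}\mu\right)\left(\int \max(p,q)\,\mathrm{d}\mu\right)$. Since $\max(p,q) = p + q - \min(p,q)$, the second factor equals $2 - \int \min(p,q)\,\mathrm{d}\mu$. Writing $m \coloneqq \int \min(p,q)\,\mathrm{d}\mu = 1 - \mathrm{TV}(\mathcal{Q},\mathcal{P})$, this reads $\rho^2 \le m(2-m) = 1 - \mathrm{TV}(\mathcal{Q},\mathcal{P})^2$, i.e. $\mathrm{TV}(\mathcal{Q},\mathcal{P})^2 \le 1 - \rho^2$.

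Step 2 relates the affinity to the KL divergence by convexity. I would write $\rho = \int q\sqrt{p/q}\,\mathrm{d}\mu = \mathbb{E}_{\mathcal{Q}}\!\left[\exp\!\left(\tfrac12 \log(p/q)\right)\right]$ and invoke Jensen's inequality for the convex exponential, which gives $\rho \ge \exp\!\left(\tfrac12 \mathbb{E}_{\mathcal{Q}}[\log(p/q)]\right) = \exp\!\left(-\tfrac12 D_{\mathrm{kl}}(\mathcal{Q}\|\mathcal{P})\right)$, and hence $\rho^2 \ge \exp(-D_{\mathrm{kl}}(\mathcal{Q}\|\mathcal{P}))$. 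Combining this with Step 1 yields $\mathrm{TV}(\mathcal{Q},\mathcal{P})^2 \le 1 - \rho^2 \le 1 - \exp(-D_{\mathrm{kl}}(\mathcal{Q}\|\mathcal{P}))$, which is the first inequality after taking square roots. The second inequality is then purely elementary: for $x \in [0,1]$ one has $\sqrt{1-x} \le 1 - x/2$, since squaring both nonnegative sides reduces the claim to $0 \le x^2/4$; setting $x = \exp(-D_{\mathrm{kl}}(\mathcal{Q}\|\mathcal{P})) \in [0,1]$ and chaining onto the first inequality completes the proof.

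I do not expect a genuine obstacle here, as the argument is short and self-contained. The only points requiring care are the Cauchy--Schwarz splitting in Step 1 (choosing the two factors as $\sqrt{\min(p,q)}$ and $\sqrt{\max(p,q)}$ so that the $\min$/$\max$ identity closes the bound exactly to $1 - \mathrm{TV}^2$ rather than a looser quantity) and the measure-theoretic bookkeeping when $\mathcal{Q} \not\ll \mathcal{P}$, which I dispatch at the outset. The Jensen step in Step 2 is the conceptual heart, converting the multiplicative affinity $\rho$ into the additive expected log-likelihood ratio that defines $D_{\mathrm{kl}}(\mathcal{Q}\|\mathcal{P})$.
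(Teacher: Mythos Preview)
Your proof is correct and follows the standard route via the Bhattacharyya affinity $\rho=\int\sqrt{pq}\,\mathrm{d}\mu$: the Cauchy--Schwarz splitting $\sqrt{pq}=\sqrt{\min(p,q)}\sqrt{\max(p,q)}$ gives $\rho^2\le(1-\mathrm{TV})(1+\mathrm{TV})$, the Jensen step gives $\rho^2\ge\exp(-D_{\mathrm{kl}})$, and the elementary $\sqrt{1-x}\le 1-x/2$ finishes it. The paper itself does not prove this lemma; it is stated with a citation to Bretagnolle and Huber (1978) and used as a black box in the proofs of Propositions~\ref{prop:info_lower_bound} and~\ref{prop:lecam_lower_bound}, so there is no in-paper argument to compare against and your self-contained derivation is a welcome addition.
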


For any two distributions $\mathcal{Q}$ and $\mathcal{P}$ over the measurable space $(\Omega, \mathcal{F})$, the total variation distance is defined as
\[
\mathrm{TV} \coloneqq \sup_{A \in \mathcal{F}} \{|\mathcal{Q}(A) - \mathcal{P}(A)|\}.
\]
\textbf{Proof of Proposition \ref{prop:info_lower_bound}.}
\begin{proof}{Proof}
 Since each of $n$ samples from $\mathcal{D}_{\text{test}}$  is i.i.d. sampled under the law of \eqref{eqn:Z_annotation_quality} and \eqref{eqn:annotation_eta}, we denote their distributions as $\mathcal{P}_{\eta_0}^n,\mathcal{P}_{\eta_1}^n$ under  $\eta=\eta_0$ and $\eta=\eta_1$, respectively. By the definition of KL-divergence, we have (e.g., see Theorem 2.16 in \cite{Polyanskiy_Wu_2025}):
$$D_{\mathrm{kl}}(\mathcal{P}_{\eta_0}^n\|\mathcal{P}_{\eta_1}^n)=n\cdot D_{\mathrm{kl}}(\mathcal{P}_{\eta_0}\|\mathcal{P}_{\eta_1}).$$

Then by directly using Lemma \ref{lemma:le_cam} and \ref{lemma:BH_ineq}, we have 
\begin{align*}
&\phantom{=}\ \inf_{\Psi}\left\{\mathbb{P}\left(\Psi(\mathcal{D}_{\text{test}}) = 1 \middle | \eta \leq \eta_0\right) + \mathbb{P}\left(\Psi(\mathcal{D}_{\text{test}}) = 0 \middle | \eta \geq \eta_1\right)\right\} \\
&\geq \inf_{\Psi}\left\{ \inf_{\eta\leq \eta_0, \eta^\prime \geq \eta_1} \Big\{\mathbb{P}\left(\Psi(\mathcal{D}_{\text{test}}) = 1 \middle | \eta\right) + \mathbb{P}\left(\Psi(\mathcal{D}_{\text{test}}) = 0 \middle | \eta^\prime\right) \Big\} \right\}\\
&= \inf_{\eta\leq \eta_0, \eta^\prime \geq \eta_1} \inf_{\Psi}\left\{ \mathbb{P}\left(\Psi(\mathcal{D}_{\text{test}}) = 1 \middle | \eta\right) + \mathbb{P}\left(\Psi(\mathcal{D}_{\text{test}}) = 0 \middle | \eta^\prime\right) \right\}\\
&\geq \inf_{\eta\leq \eta_0, \eta^\prime \geq \eta_1} \frac12\cdot \exp(- n D_{\mathrm{kl}}(\mathcal{P}_{\eta} \| \mathcal{P}_{\eta^\prime}))\\
    &= \frac12 \cdot \exp(- n D_{\mathrm{kl}}(\mathcal{P}_{\eta_0} \| \mathcal{P}_{\eta_1})),
\end{align*}
where the first equality uses the exchangeability among taking infimum.
 \end{proof}

\subsubsection{Proof of Proposition \ref{prop:UMP}}
\begin{proof}{Proof}
By definition, given $\delta$, the distribution of each $A_i$ is Bernoulli with success probability
\[
p(\eta) = \frac{\eta \bar{c}+1}{2}.
\]
To see this, we note that for any $\eta$, $\frac{1+c\eta}{2}$ is a linear function of $c$, such that
\begin{align*}
    \mathbb{P}(A_i=1) & = \mathbb{E}\left[\frac{1+c(x, y_1, y_2)\eta}{2}\right] \\
    & = \frac{1 + \mathbb{E}[c(x, y_1, y_2)] \eta}{2}\\
    & = \frac{1 + \bar{c} \eta}{2}.
\end{align*}

Then the (joint) likelihood for the observations $\bm{A} = \{A_i\}_{i=1}^n$ is:
\[
L(\eta; \bm{A}) = \prod_{i=1}^n \left( \frac{1+\eta\bar{c}}{2} \right)^{A_i} \left( \frac{1-\eta\bar{c}}{2} \right)^{1-A_i} = \left( \frac{1+\eta\bar{c}}{2} \right)^{\sum_{i=1}^n A_i} \left( \frac{1-\eta\bar{c}}{2} \right)^{n-\sum_{i=1}^n A_i}.
\]
The likelihood purely depends on $\bar{A} = \frac1n \sum_{i=1}^n A_i$, which implies that $\bar{A}$ is a sufficient statistic. The likelihood ratio for two different parameter values $\eta_0$ and $\eta_1$ is given by
\[
\Lambda(\bm{A}) = \frac{L(\eta_1; \bm{A})}{L(\eta_0; \bm{A})} = \exp\Bigl\{ n\bar{A} \cdot \log\frac{1+\eta_1\bar{c}}{1+\eta_0\bar{c}}  + n(1-\bar{A})  \cdot \log\frac{1-\eta_1\bar{c}}{1-\eta_0\bar{c}}  \Bigr\}.
\]

Note that $\Lambda(\bm{A})$ is an increasing function of $\bar{A}$ for any given $\eta_1 \geq \eta_0$ because
\[
\log\frac{1+\eta_1\bar{c}}{1+\eta_0\bar{c}} \geq 0 \quad \text{and} \quad \log\frac{1-\eta_1\bar{c}}{1-\eta_0\bar{c}} \leq  0.
\]
This implies that the test based on $\bar{A}$ is uniformly most powerful by the Karlin--Rubin theorem \citep{karlin1956theory}.

 \end{proof}

\subsubsection{Proof of Proposition \ref{prop:lecam_lower_bound}}
\begin{proof}{Proof}
The first two inequalities are the direct consequence of the following Le Cam's two-point method and Bretagnolle-Huber inequality (Lemma \ref{lemma:BH_ineq}) by taking $\ell$ to be the $L_1$ distance. The proof of Le Cam's two-point method is standard and can be found in textbooks.
\begin{lemma}[Le Cam's two-point method, e.g. Theorem 31.1 in \citet{Polyanskiy_Wu_2025}]
Suppose the loss function $\ell: \Theta \times \Theta \rightarrow \mathbb{R}_+$ satisfies $\ell(\theta, \theta) = 0$ for all $\theta \in \Theta$ and the following $\alpha$-triangle inequality for some $\alpha > 0$: For all $\theta_0, \theta_1, \theta \in \Theta$,
\[
\ell(\theta_0, \theta_1) \leq \alpha\left(\ell(\theta_0, \theta) + \ell(\theta_1, \theta)\right).
\]
Then
\[
\inf_{\hat{\theta}} \sup_{\theta \in \Theta} \mathbb{E}[\ell(\theta, \hat{\theta})] \geq \sup_{\theta_0, \theta_1 \in \Theta} \frac{\ell(\theta_0, \theta_1)}{2\alpha} (1- \mathrm{TV}(\mathcal{Q}_{\theta_0}, \mathcal{Q}_{\theta_1})).
\]
\end{lemma}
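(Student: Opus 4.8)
The plan is to prove the bound directly at the level of expected losses, via a density (change-of-measure) argument that bypasses any hard hypothesis test and thereby recovers the sharp constant $\frac{1}{2\alpha}$. Fix an arbitrary estimator $\hat{\theta}$ and two points $\theta_0, \theta_1 \in \Theta$. First I would reduce the worst-case risk to the two-point average,
\[
\sup_{\theta \in \Theta} \mathbb{E}_\theta\!\left[\ell(\theta, \hat{\theta})\right] \;\geq\; \frac{1}{2}\left(\mathbb{E}_{\theta_0}\!\left[\ell(\theta_0, \hat{\theta})\right] + \mathbb{E}_{\theta_1}\!\left[\ell(\theta_1, \hat{\theta})\right]\right),
\]
since the supremum over $\Theta$ dominates the average over the two-element subset $\{\theta_0,\theta_1\}$. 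This is the only place the factor $\tfrac12$ enters, and it is genuinely unavoidable because a single estimate cannot simultaneously be far from both $\theta_0$ and $\theta_1$.

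Next I would invoke the $\alpha$-triangle inequality with the third argument taken to be the (data-dependent) value $\hat{\theta}$, which yields the pointwise lower bound
\[
\ell(\theta_0, \hat{\theta}) + \ell(\theta_1, \hat{\theta}) \;\geq\; \frac{1}{\alpha}\,\ell(\theta_0, \theta_1),
\]
valid for every realization of the data. The key step is then to write the two-point average as a single integral against a common dominating measure $\mu$, with $p_0, p_1$ denoting the densities of $\mathcal{Q}_{\theta_0}, \mathcal{Q}_{\theta_1}$:
\[
\mathbb{E}_{\theta_0}\!\left[\ell(\theta_0, \hat{\theta})\right] + \mathbb{E}_{\theta_1}\!\left[\ell(\theta_1, \hat{\theta})\right] = \int \left(\ell(\theta_0, \hat{\theta})\,p_0 + \ell(\theta_1, \hat{\theta})\,p_1\right) \mathrm{d}\mu.
\]
Because the losses are nonnegative, the integrand is at least $\min(p_0, p_1)\bigl(\ell(\theta_0, \hat{\theta}) + \ell(\theta_1, \hat{\theta})\bigr)$, and combining this with the triangle-inequality bound makes it at least $\frac{\ell(\theta_0,\theta_1)}{\alpha}\min(p_0, p_1)$.

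Finally I would apply the identity $\int \min(p_0, p_1)\,\mathrm{d}\mu = 1 - \mathrm{TV}(\mathcal{Q}_{\theta_0}, \mathcal{Q}_{\theta_1})$, which follows immediately from the definition of total variation given in the excerpt, to obtain
\[
\mathbb{E}_{\theta_0}\!\left[\ell(\theta_0, \hat{\theta})\right] + \mathbb{E}_{\theta_1}\!\left[\ell(\theta_1, \hat{\theta})\right] \;\geq\; \frac{\ell(\theta_0,\theta_1)}{\alpha}\bigl(1 - \mathrm{TV}(\mathcal{Q}_{\theta_0}, \mathcal{Q}_{\theta_1})\bigr).
\]
Dividing by two and chaining with the first display gives the bound for the fixed pair $(\theta_0,\theta_1)$; since the right-hand side does not depend on $\hat{\theta}$, taking the infimum over $\hat{\theta}$ on the left and then the supremum over $\theta_0,\theta_1 \in \Theta$ completes the argument. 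The main subtlety, and my reason for not routing through the testing reduction in Lemma \ref{lemma:le_cam}, is that converting $\hat{\theta}$ into a hard test and bounding the loss only on the error event splits the probability mass into two separate cases, each contributing only $\frac{\ell(\theta_0,\theta_1)}{2\alpha}$, and thus loses a factor of two (yielding $\frac{1}{4\alpha}$); the $\min(p_0,p_1)$ argument keeps the constant tight at $\frac{1}{2\alpha}$. I would also note in passing that the bound remains valid, though vacuous, when $\mathrm{TV}=1$, so no extra existence or attainment hypotheses are required beyond measurability of $\hat{\theta}$.
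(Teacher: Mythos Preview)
Your proof is correct. The paper does not actually prove this lemma: it merely states that ``the proof of Le Cam's two-point method is standard and can be found in textbooks,'' citing Polyanskiy--Wu, and then moves on to apply it. Your density/$\min(p_0,p_1)$ argument is indeed one of the standard textbook proofs and recovers the stated constant $\tfrac{1}{2\alpha}$; your side remark that routing through a hard test via Lemma~\ref{lemma:le_cam} would cost an extra factor of two is also accurate, so there is nothing to compare here beyond noting that you have supplied what the paper omits.
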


Now we start to approximate the amount $\frac14 \cdot \sup_{\eta_0, \eta_1} |\eta_0 - \eta_1| \exp(-n D_{\mathrm{kl}}(\mathcal{Q}_{\eta_0} \| \mathcal{Q}_{\eta_1}))$. Note that $\mathcal{P}_{\eta} = \mathrm{Bernoulli}\left(\frac{1+\bar{c} \eta}{2}\right)$, we set $\eta_1 = 0$, $\eta_0 = 1/(\bar{c}\sqrt{n})$. Using the Taylor's expansion at $\eta = 0$ for the first argument $\eta_0$, we have
\[
D_{\mathrm{kl}}(\mathcal{Q}_{\eta_0} \| \mathcal{Q}_{\eta_1}) = \frac{1}{2n} + o(1/n).
\]
This gives our desired bound.
 \end{proof}

\subsubsection{Proof of Proposition \ref{prop:KL_compare}}
\begin{proof}{Proof}
The proof is almost the same as that of Proposition~\ref{prop:lecam_lower_bound}, which we omit for simplicity.
 \end{proof}

\subsection{Proofs and Discussions for Section \ref{sec:incentivizing}}
\subsubsection{Proof of Proposition \ref{prop:first_best_sol}}
\begin{proof}{Proof}
\medskip
\noindent\textbf{The Optimal Payment is Constant.} \\
Recall that the agent's utility is given by
\[
U_a(w_a,\eta) = G_a(w_a) - E(\eta),
\]
where \(G_a\) is strictly increasing and strictly concave (reflecting risk aversion), and \(E(\eta)\) is an increasing effort function. Suppose the principal offers a contract in which the wage depends on the realization of the test annotations, that is, a random variable \(F_n(\mathcal{D}_n)\) with
\[
\mathbb{E}\Bigl[U_a(F_n(\mathcal{D}_n), \eta)\Bigr] = \mathbb{E}\bigl[G_a(F_n(\mathcal{D}_n))\bigr] - E(\eta).
\]
By Jensen's inequality and the strict concavity of \(G_a\),
\[
\mathbb{E}\bigl[G_a(F_n(\mathcal{D}_n))\bigr] < G_a\Bigl(\mathbb{E}\bigl[F_n(\mathcal{D}_n)\bigr]\Bigr)
\]
whenever \(F_n(\mathcal{D}_n)\) is not almost surely constant. Therefore, any randomization in the wage reduces the agent's expected utility. As a result, if a random wage contract satisfies the individual rationality (IR) constraint, then the principal can instead offer a constant wage
\[
w_a = \mathbb{E}\bigl[F_n(\mathcal{D}_n)\bigr]
\]
and achieve a strictly higher utility for the agent without increasing the expected cost of the principal (note the cost $G_p(w_p)\equiv w_p$ is risk-neutral). Hence, the optimal contract has a constant payment.

\medskip

\noindent\textbf{The IR Constraint is Binding.}\\
When the wage is constant, we can remove the expectation operator and the agent's utility becomes
\[
U_a(w_a,\eta) = G_a(w_a) - E(\eta),
\]
and the IR constraint is now
\[
G_a(w_a) - E(\eta) \ge U_0.
\]

To show the IR constraint must be binding under the optimal solution, suppose there is an optimal solution $(w'_a,\eta')$ where $G_a(w'_a) - E(\eta') > U_0$, then by the monotonicity of $G_a$, we can always find another feasible solution $(w''_a,\eta')$ such that $w''_a<w'_a$ and $G_a(w''_a) - E(\eta') = U_0$. However, we have the objective function now satisfying 
$$-w''_a + \mu(\eta')> -w'_a + \mu(\eta'),$$
which contradicts that $(w'_a,\eta')$ is the optimal solution.  Thus we can conclude the IR constraint must bind.

\medskip

\noindent\textbf{Optimal Solution.}\\
Therefore, we have at the optimum,
\[
G_a(w_a) - E(\eta) = U_0.
\]
Then for any given $\eta$, by the monotonicity (and invertibility) of \(G_a\), we can solve for the corresponding \(w_a\) which satisfies the binding constraint:
\[
w_a = G_a^{-1}\Bigl(E(\eta) + U_0\Bigr).
\]

Substituting this expression into the principal's utility, we obtain
\[
U_p = -G_a^{-1}\Bigl(E(\eta) + U_0\Bigr) + \mu(\eta).
\]
Thus, the principal's problem reduces to choosing the annotation quality \(\eta \in [0, 1]\) to maximize
\[
\max_{\eta\in [0,1]} \Bigl\{-G_a^{-1}\Bigl(E(\eta) + U_0\Bigr) + \mu(\eta)\Bigr\}.
\]
Denote the maximizer by \(\eta^*\). Then, the corresponding optimal wage is
\[
w_a^* = G_a^{-1}\Bigl(E(\eta^*) +U_0\Bigr).
\]

\medskip
\noindent\textbf{Uniqueness.} \\
To show the uniqueness, by noting the strict monotonicity of $G^{-1}_a$, $w^*_a$ is unique if $\eta^*$ is unique. Thus it is sufficient to prove the uniqueness of $\eta^*$.

We first show $G^{-1}_a$ is convex. To see this, let \(y_1, y_2\) be in the range of \(G_a\) and define \(x_1=G_a^{-1}(y_1)\) and \(x_2=G_a^{-1}(y_2)\). For any \(\lambda\in[0,1]\), by the concavity of \(G_a\),
\[
G_a\bigl(\lambda x_1+(1-\lambda)x_2\bigr) \ge \lambda G_a(x_1) + (1-\lambda) G_a(x_2) = \lambda y_1 + (1-\lambda)y_2.
\]
Since \(G_a\) is strictly increasing, applying \(G_a^{-1}\) (which preserves order) yields
\[
\lambda x_1+(1-\lambda)x_2 \ge G_a^{-1}\Bigl(\lambda y_1 + (1-\lambda)y_2\Bigr),
\]
which shows the convexity for \(G_a^{-1}\).

Thus, by the preservation of convexity in the composition and affine function (Section 3.2 in \cite{boyd2004convex}), since $E(\eta)$ is convex and non-decreasing, and $\mu$ is concave, we have 
$$-G_a^{-1}\Bigl(E(\eta) + U_0\Bigr) + \mu(\eta)$$
is a concave function.

Hence, if \eqref{eq:first_best} is feasible, then the first-best solution is unique to this concave optimization problem:
\[
\eta^* = \arg\max_{\eta \in [0, 1]} \Bigl\{-G_a^{-1}\Bigl(E(\eta) + U_0\Bigr) + \mu(\eta)\Bigr\},
\]
with the agent receiving the wage
\[
w_a^* = G_a^{-1}\Bigl(E(\eta^*) + U_0\Bigr).
\]
 \end{proof}

\subsubsection{Discussions on Feasibility of Second-Best Problems}
For the unrestricted second-best problem \eqref{eq:second_best}, it is straightforward to see that at least one feasible solution exists by paying the agent enough salary. Consider those 
$F_n$'s defined on the space of measurable functions of $\mathcal{D}_n$ and using the $L_1$ norm, for example. Then, due to the continuity of $E$ and $\mu$, the feasible set must be closed. Furthermore, it is without loss of generality to assume that we are considering only a uniformly bounded subset of $F_n$; otherwise, the agent would quit (due to low salary) or the payment is unnecessarily high for the principal to afford. Then, the feasible set is compact. Since the target function is continuous, it must have a maximum on the compact feasible set.

The core argument is that any non-empty compact set must have a maximum for a continuous function. Thus, for the restricted second-best problem \eqref{eq:restricted_second_best}, the conclusion could also be derived in the same way. It is easy to check that all the other requirements (closed feasible set, continuous target function, and bounded feasible set) are met in \eqref{eq:restricted_second_best}. However, the key point is: is the feasible set of \eqref{eq:restricted_second_best} non-empty? We will prove the claim for the consistency/agreement-based tests based on $\bm{A} = (A_1, \dots, A_n)$ and the binary contract case $\mathcal{F}_n = \mathcal{F}_n^{\text{bin}}$. Such a case covers all the discussions in Theorem \ref{thm:binary}.

\begin{proposition}
\label{prop:feasible_binary}
Assume $n$ is large enough. For $\mathcal{F}_n = \mathcal{F}_n^{\text{bin}}$ and $\mathcal{F}_n^{\text{bin}}(\mathcal{D}_n) = \mathcal{F}_n^{\text{bin}}(\bm{A})$, the problem \eqref{eq:restricted_second_best} is feasible.
\end{proposition}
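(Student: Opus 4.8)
The plan is to explicitly construct a binary contract $F_n = (\Psi, w, w_b)$ that satisfies all three constraints of \eqref{eq:restricted_second_best}: the incentive-compatibility (IC) constraint that makes $\eta^*$ the agent's best response, the individual-rationality (IR) constraint, and the wage-boundedness constraint from Assumption \ref{assm:basic_assm_3}. Under the consistency/agreement-based test, we know from \eqref{eqn:agreement_self_consist} that each $A_i$ is Bernoulli with success probability $p(\eta) = \frac{1 + c\eta}{2}$ where $c = 1 - \delta$, so $\sum_{i=1}^n A_i \sim \mathrm{Binomial}(n, p(\eta))$. I would take $\Psi$ to be a \emph{threshold test}: the agent receives the bonus $w_b$ precisely when $\sum_{i=1}^n A_i \geq k_n$ for a suitable integer threshold $k_n$ to be chosen. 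The agent's expected utility is then $G_a(w) \cdot \mathbb{P}_\eta(S_n < k_n) + G_a(w + w_b)\cdot \mathbb{P}_\eta(S_n \geq k_n) - E(\eta)$, where $S_n = \sum_i A_i$; since $G_a$ is concave this is not linear in the wealth, but the relevant object for the agent's first-order condition is $\big(G_a(w+w_b) - G_a(w)\big)\cdot \frac{\partial}{\partial\eta}\mathbb{P}_\eta(S_n \geq k_n) = E'(\eta)$.

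The key step is choosing $k_n$ so that this first-order condition is satisfied at $\eta = \eta^*$ and so that $\eta^*$ is in fact the \emph{global} maximizer of the agent's objective, not merely a critical point. The derivative $\frac{\partial}{\partial\eta}\mathbb{P}_\eta(S_n \geq k_n)$ equals $\frac{c\, n!}{(k_n-1)!(n-k_n)!} p(\eta)^{k_n - 1}(1-p(\eta))^{n-k_n}$, which as a function of $\eta$ is a scaled Beta-type density: it is unimodal, vanishing at $\eta = 0$ and $\eta = 1$, and peaked near the point where $p(\eta) = \frac{k_n-1}{n-1}$. By choosing $k_n$ so that this peak location is slightly to the \emph{right} of $\eta^*$ — concretely $k_n = \lceil (n-1)p(\eta^*)\rceil + \Theta(\sqrt{n\log n})$ in the manner of the upper-bound construction sketched after Theorem \ref{thm:binary} — the marginal-incentive curve crosses the (bounded, positive) marginal-effort curve $E'(\eta)$ at two points $\eta_1 < \eta_2$, and one tunes the bonus magnitude $w_b$ so that $\eta_2 = \eta^*$. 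Since the agent's objective is increasing on $(\eta_1,\eta_2)$ and the region to the left contributes negligible probability of a bonus (and hence negligible utility relative to the cost $E(\eta)$ saved by choosing $\eta_1$ versus $0$), one checks that $\eta^* = \eta_2$ dominates $\eta_1$ and the boundary points; this is where one must be careful, because verifying global optimality requires comparing the agent's utility at $\eta^*$ against $\eta = 0$ and against the local maximum at $\eta_1$, using that $\mathbb{P}_{\eta}(S_n \geq k_n)$ is exponentially small for $\eta$ bounded below $\eta^*$ (a Chernoff/large-deviation estimate) so the bonus term cannot compensate for the extra effort.

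Once $\eta^*$ is pinned down as the agent's best response, the IR constraint is handled by a \emph{level shift}: replace $(w, w+w_b)$ by $(w + \Delta, w + w_b + \Delta)$, which leaves the IC first-order condition unchanged (it depends only on the difference $G_a(w+w_b)-G_a(w)$ — wait, actually a level shift \emph{does} change that difference since $G_a$ is nonlinear, so more precisely one adjusts $w$ and $w_b$ jointly, keeping $G_a(w+w_b)-G_a(w)$ fixed at the value required by the IC condition while raising $w$ until $\mathbb{E}[U_a] = U_0$ holds exactly). Assumption \ref{assm:basic_assm_3}, specifically $E([0,1]) + U_0 \subset \mathrm{interior}(G_a([\underline w, \overline w]))$, guarantees there is enough room in the wage interval to realize both the required utility difference and the required expected level, since as $n\to\infty$ the bonus magnitude $w_b$ needed for the first-order condition stays bounded (it scales like the inverse of the peak height of the marginal-incentive curve, which is $\Theta(\sqrt n)$ times a $\Theta(1/\sqrt n)$ spread, hence $\Theta(1)$) — here "$n$ large enough" enters to ensure $k_n$ is a valid threshold in $\{1,\dots,n\}$ and the bell-curve approximations hold. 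Assumption \ref{assm:basic_assm_4} ($\eta^* \in (0,1)$) ensures the peak of the marginal-incentive curve is in the interior and the construction is nondegenerate. The main obstacle, as flagged, is Step 2: ruling out that the agent prefers the low-effort local optimum $\eta_1$ (or the boundary $\eta = 0$) over $\eta^* = \eta_2$; I would resolve this by bounding the bonus contribution to the agent's utility at $\eta_1$ using a Chernoff tail bound on $\mathbb{P}_{\eta_1}(S_n \geq k_n)$ and showing it is dominated by the effort savings $E(\eta^*) - E(\eta_1)$, which is $\Theta(\sqrt{\log n / n}) \cdot E'(\eta^*)$, versus a tail probability that is $\exp(-\Theta(\log n)) = n^{-\Theta(1)}$ — choosing the $\Theta(\sqrt{n\log n})$ offset with a large enough constant makes the former win.
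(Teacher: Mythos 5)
Your construction is essentially the paper's: a threshold contract on $\sum_i A_i$, a bell-curve analysis of the marginal incentive $\bigl(G_a(w+w_b)-G_a(w)\bigr)\cdot\frac{\partial}{\partial\eta}\mathbb{P}_\eta(S_n\geq k_n)$ via Lemma \ref{lemma:binomial_properties}, tuning the bonus magnitude so that the agent's best response lands exactly on $\eta^*$ (the paper makes this precise with an implicit-function-theorem argument in a multiplicative factor $(1+\gamma)$ on the bonus), and a joint adjustment of $(w,w_b)$ that keeps the monetary-utility gap fixed while binding IR within the wage range guaranteed by Assumption \ref{assm:basic_assm_3}. Your explicit global-optimality check against low-effort deviations is a welcome addition that the paper leaves implicit behind the requirement that $S_1-S_0$ exceed $E(1)$.

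However, the sign of your threshold offset is wrong, and taken literally it breaks the construction. You place the peak of the marginal-incentive curve to the right of $p(\eta^*)$ (via $k_n=\lceil (n-1)p(\eta^*)\rceil+\Theta(\sqrt{n\log n})$) while simultaneously asking for $\eta_2=\eta^*$, where $\eta_2$ is the right intersection. These are incompatible: $\eta_2$ lies to the right of the peak, so if the peak is already to the right of $\eta^*$, then $\eta^*$ can only be the left intersection $\eta_1$. But $\eta_1$ is where the marginal incentive crosses $E'(\eta)$ from below, so the agent's utility switches there from decreasing to increasing; $\eta_1$ is a local \emph{minimum} of the agent's objective, and with the $+$ sign the IC constraint $\eta^*\in\argmax_\eta \mathbb{E}[U_a]$ fails. (This is also why your later reference to ``the low-effort local optimum $\eta_1$'' is off: the competing local maximum is the boundary point $\eta=0$, not $\eta_1$, and the relevant tail bound at $\eta=0$ is in fact $\exp(-\Theta(n))$ rather than $n^{-\Theta(1)}$, which only helps you.) The fix is the paper's choice: $k_n\approx (n-1)p(\eta^*)-\Theta(\sqrt{n\log n})$, so the peak sits to the \emph{left} of $p(\eta^*)$ and $\eta^*$ is the right intersection and hence the interior local maximum, per Lemma \ref{lemma:binomial_properties}(f). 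With that correction, the rest of your argument — the $\Theta(1)$ bonus implied by the first-order condition, the comparison against $\eta=0$, and the IR level shift — goes through as you describe.
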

\begin{proof}{Proof}
Recall that each consistency happens with probability $p(\eta) \coloneqq \frac{1+c\eta}{2}$. We consider a particular subclass of contracts based on
\[
\bar{A} \coloneqq \frac1n \sum_{i=1}^n A_i
\]
or equivalently,
\[
\hat{\eta} \coloneqq \frac{2\bar{A} - 1}{c}.
\]
The agent gets a higher wealth if $\bar{A}$ is no less than a threshold $\frac{k}{n}$ with monetary utility $S_1$ with probability $\mathbb{P}(\bar{A} \geq \frac{k}{n})$. 
Denotes the probability by $P(\eta, n, k)$. 
Otherwise, the agent gets a monetary utility of $S_0$.

From Lemma \ref{lemma:binomial_properties} (f) and (g), we see that if we set $k = \frac{1+c \eta^*}{2} \cdot n - \Theta(1/\sqrt{n \log n})$ and the difference between two wages $S_1 - S_0$ moderately large (for example, larger than $E(1)$), then the induced $\eta_a$ must be the right intersection between $(S_1 - S_0) \cdot \frac{\partial }{\partial \eta} P(\eta, n, k)$ and $E(\eta)$; the intersection happens very close to $\eta^*$ by a distance of at most $\Theta(1/\sqrt{n \log n})$. Suppose we adjust $S_1 - S_0$ by multiplying a factor of $(1+\gamma)$. By the first-order condition, we have
\[
(1+\gamma) \cdot (S_1 - S_0) \cdot \frac{\partial }{\partial \eta} P(\eta, n, k)\bigg\vert_{\eta = \eta_a} - \frac{\mathrm{d} E}{\mathrm{d} \eta}\bigg \vert_{\eta = \eta_a} = 0.
\]
By the implicit function theorem, we can write $\eta_a$ as a function of $\gamma$ with
\[
(S_1 - S_0) \cdot \frac{\partial }{\partial \eta} P(\eta, n, k)\bigg\vert_{\eta = \eta_a} + (1+\gamma) \cdot (S_1 - S_0) \cdot \frac{\partial^2 }{\partial \eta^2} P(\eta, n, k)\bigg\vert_{\eta = \eta^*} \cdot \frac{\partial \eta_a}{\partial \gamma} - \frac{\mathrm{d}^2 E}{\mathrm{d} \eta^2}\bigg \vert_{\eta = \eta_a} \cdot \frac{\partial \eta_a}{\partial \gamma} = 0.
\]
By the fact that $P(\eta, n, k)$ is concave and monotonically increasing at $\eta_a$ (Lemma \ref{lemma:binomial_properties} (e) and (b)) and the assumption that $E$ is convex, we have that
\[
\frac{\partial \eta_a}{\partial \gamma} = \Omega\left(\frac{1}{1+\gamma}\right) > 0,
\]
where $\Omega(\cdot)$ hides the dependence on $n, P,$ and $E$. We can always find an $\gamma = \exp\left(-\mathcal{O}\left(\sqrt{n\log n}\right)\right) = o(1)$ such that the $\eta_a$ is adjusted to $\eta_a = \eta^*$.
 \end{proof}

\subsubsection{Proof of Theorem \ref{thm:binary}}
\label{subsec:proof_of_binary}
We first note that, except the discussions on the constants of Fisher information, the dependence on $m$ is abbreviated in our proofs for notation simplicity. We follow the convention in the principal-agent model that the gap between the first-best and the second-best solutions is due to the concavity of the agent's monetary utility. Such a concave function induces a Jensen's gap, which can be approximated by the variance when $n$ is large (see \citet{frick2023monitoring} for an example). Then, all the efforts could be reduced to estimating the payoff variance for large enough $n$'s. We summarize them into the following:\\
\textbf{Step 1. First-best/second-best gap is Jensen's gap; which approximately is the variance when $n$ is large enough.}\\
\textbf{Step 2. Estimating binary contracts' variance. The variance is approximately $\Theta(1/\sqrt{n \log n})$ when $n$ is large enough.}

During our discussions on the variance, the only required property of the yielding outputs $\eta_a(F_n)$'s is that
\begin{equation}
\exists \varepsilon, N > 0, \text{ s.t. } \forall n \geq N, \eta_a(F_n) \in [\varepsilon, 1-\varepsilon].
\label{eq:uniformly_bounded_away}
\end{equation}
In other words, the yielding outputs $\eta^*$'s are uniformly bounded away from 0 and 1 for large enough $n \geq N$. This requirement is to make sure that the agent's output at least induces some randomness in the outcome (a counter-example is that if $\eta_a = 1$ and the agreement probability $p(\eta) = \frac{1+c \eta}{2} = 1$ for some $c=1$, then the outcome is deterministic, and the variance of the outcome is zero). This property is satisfied by the restricted program \eqref{eq:restricted_second_best} by the constraint that $\eta_a = \eta^*$ and $\eta^* \in (0, 1)$. For the unrestricted second-best program \eqref{eq:second_best}, the property is also satisfied in the sense that the optimal solution will converge to $\eta^*$ (later proved in Proposition \ref{prop:second_converge_to_first}).

Now, we start to derive Theorem \ref{thm:binary} for the restricted second-best \eqref{eq:restricted_second_best}'s gap from the first-best. The proof also holds for the unrestricted program \eqref{eq:second_best} and will be omitted for simplicity.
\begin{proof}{Proof}
\textbf{Part I: Upper bound.}
According to our discussions in Proposition \ref{prop:first_best_sol}, there is only one constraint (IR) in the first-best problem \eqref{eq:first_best}, and the constraint is binding. The first-best solution is to pay the agent a fixed amount of $G_a^{-1}(E(\eta^*) + U_0)$.

For the restricted second-best problem \eqref{eq:restricted_second_best}, the paid amount must obey the IR constraint as well, induces
\[
\mathbb{E}[G_a(F_n) - E(\eta^*)] \geq U_0.
\]
For the binary contract, suppose the agent's monetary utility is one of the following two levels: a higher level $S_1$ and a lower level $S_0$. We can always adjust two levels of payment $G_a^{-1}(S_0)$ and $G_a^{-1}(S_1)$ in the following way: we keep the agent's monetary utility gap $\Delta S = S_1 - S_0$ between those two levels and only adjust the lower level $S_0$ of the agent's monetary utility. Such adjustments do not change the utility-maximizing choice of the agent, which separates the IC constraint from the IR. Recall that we want to prove that the IR constraint is binding. By adjusting $S_0$, the expected monetary utility of the agent changes according to
\[
\frac{\partial \mathbb{E}[G_a(F_n)]}{\partial S_0} = -\mathbb{P}(\text{agent gets }G_a^{-1}(S_0)) - \mathbb{P}(\text{agent gets }G_a^{-1}(S_0 + \Delta S)) = -1.
\]
The principal's expected paid amount changes accordingly:
\[
\frac{\partial \mathbb{E}[F_n]}{\partial S_0} = -\mathbb{P}(\text{agent gets }G_a^{-1}(S_0)) \cdot \frac{\partial G_a^{-1}}{\partial S} \bigg\vert_{S=S_0} - \mathbb{P}(\text{agent gets }G_a^{-1}(S_0 + \Delta S)) \cdot \frac{\partial G_a^{-1}}{\partial S} \bigg\vert_{S=S_0+\Delta S} < 0.
\]
Thus, the principal's optimal choice must be to adjust the payment such that the IR constraint is binding: the principal pays the minimal amount such that the agent accepts the contract. Due to the continuity, the adjustment is always achievable. Hence, we can now restrict our attention to the binding IR constraint cases:
\[
\mathbb{E}[G_a(F_n) - E(\eta^*)] = U_0.
\]

The expected paid amount in the restricted second-best problem is now
\[
\mathbb{E}[F_n] = \mathbb{E}[G_a^{-1}(U_0 + E(\eta^*))],
\]
which is larger than the first-best problem's optimal payment
\[
G_a^{-1}(E(\eta^*) + U_0) = G_a^{-1}\left(\mathbb{E}[E(\eta^*) + U_0]\right).
\]
By a sharpened version of Jensen's inequality \citep{liao2019sharpening}, we have
\[
\inf_{s} \frac{\partial^2 G_a^{-1}(s)}{\partial s^2} \cdot \mathrm{Var}(F_n(\mathcal{D}_n)) \leq \mathbb{E}[G_a^{-1}(U_0 + E(\eta^*))] - G_a^{-1}(E(\eta^*) + U_0) \leq \sup_{s} \frac{\partial^2 G_a^{-1}(s)}{\partial s^2} \cdot \mathrm{Var}(F_n(\mathcal{D}_n)).
\]
Since the agent decision $\eta_a \in [0, 1]$, without loss of generality assume that the amount paid to the agent must be almost surely bounded:
\[
F_n(\mathcal{D}_n) \in [G_a^{-1}(U_0 + E(0)), G_a^{-1}(U_0 + E(1))] \eqqcolon \mathcal{S},\quad  \text{a.s.}
\]
Then $\inf_{s \in \mathcal{S}} \frac{\partial^2 G_a^{-1}(s)}{\partial s^2}$ and $\sup_{s \in \mathcal{S}} \frac{\partial^2 G_a^{-1}(s)}{\partial s^2}$ must be bounded from below and above by Assumption \ref{assm:basic_assm_1} and Assumption \ref{assm:basic_assm_3}. We now conclude that the Jensen gap between the first-best and the second-best solutions is of the order of $\mathrm{Var}(F_n(\mathcal{D}_n))$:
\begin{equation}
\mathbb{E}[G_a^{-1}(U_0 + E(\eta^*))] - G_a^{-1}(E(\eta^*) + U_0) = \Theta(\mathrm{Var}(F_n(\mathcal{D}_n))).
\label{eq:Jensen_gap_variance}
\end{equation}

The task now reduces to estimating the variance of the payment $\mathrm{Var}(F_n(\mathcal{D}_n))$. Again, for the binary contracts, we assume that the agent's monetary utility is of two levels: a higher level $S_1$ and a lower level $S_0$. If we set two levels of payments to be $G_a^{-1}(S_1) > G_a^{-1}(S_0)$, the variance is
\[
\mathrm{Var}(F_n(\mathcal{D}_n)) = [G_a^{-1}(S_1) - G_a^{-1}(S_0)]^2 \cdot \mathbb{P}(\text{agent gets } G_a^{-1}(S_1)) \cdot \mathbb{P}(\text{agent gets} G_a^{-1}(S_0)).
\]
From the discussions in Proposition \ref{prop:feasible_binary}, we can see that the $\eta^*$ can be achieved by setting the gap $G_a^{-1}(S_1) - G_a^{-1}(S_0) = \mathcal{O}(E(1))$, setting a test based on average agreements $\bar{A}$
\[
\bar{A} = \frac{1}{n} \sum_{i=1}^n A_i,
\]
and paying the agent a higher amount if $\bar{A} \geq \frac{1+c\eta^*}{2}  - \Theta(1/\sqrt{n \log n})$. Under this setting, we shall see from Lemma \ref{lemma:binomial_properties} (f) that
\[
\mathbb{P}(\text{agent gets }G_a^{-1}(S_1)) = 1 - \Theta\left(1/\sqrt{n \log n}\right), \quad \mathbb{P}(\text{agent gets } G_a^{-1}(S_0)) = \Theta\left(1/\sqrt{n \log n}\right).
\]
We reach the conclusion that there is a feasible solution to \eqref{eq:restricted_second_best} such that
\[
\mathrm{Var}(F_n(\mathcal{D}_n)) = \Theta\left(1/\sqrt{n \log n}\right),
\]
which proves the upper bound.

\textbf{Part II: Lower bound.}
The consistencies happen according to $n$ independent Bernoulli distribution; or equivalently, the total consistency follows a binomial distribution. Fix the number 
$n$. Since the likelihood of Bernoulli/binomial distributions purely depends on the consistency probability $p(\eta) = \frac{1+c\eta}{2}$ for some $c > 0$, we, without loss of generality, assume that we consider only the contracts based on the consistencies/agreements $\bm{A}$ (such a consideration can be justified by the Proposition \ref{prop:UMP}).

From the proof of the upper bound, we observe that if $F_n^{\text{bin}}(\bm{A})$ follows a threshold scheme such that the agent gets a higher payoff when $\bar{A}$ exceeds a certain threshold and gets a lower payoff otherwise, then the gap between the first-best and the second-best solutions is of $\Theta(1/\sqrt{n \log n})$. It suffices to show that such a threshold scheme is indeed optimal in the sense of minimizing the Jensen's gap/variance. We will prove the claim for (a slightly extended version of) binary contracts $\mathcal{F}_n^{\text{bin}}$.

To prove the claim, we first extend $\mathcal{F}_n^{\text{bin}}$ to be measurable functions of not only $\bm{A}$ but also an additional uniform random variable $U$ on $[0, 1]$, which introduces some extra randomness for fixed $\bm{A}$. More specifically, the original binary contract $F_n^{\text{bin}}$ pays the agent a higher wealth $G_a^{-1}(S_1)$ (so that the agent's monetary utility is $S_1$) if the total agreements $n \cdot \bar{A} \in \mathcal{K} \subset [n]$ and pays a lower amount $G_a^{-1}(S_0)$ (so that the agent's monetary utility is $S_0$) otherwise. We can write down as
\[
\mathbb{E}\left[F_n^{\text{bin}}(\bm{A})\right] = G_a^{-1}(S_0) + \left(G_a^{-1}(S_1) - G_a^{-1}(S_0)\right) \cdot \sum_{k \in \mathcal{K}} \mathbb{P}\left(n \cdot \bar{A} = k\right).
\]
We can view the payment process as: if $n \cdot \bar{A} \in \text{or }\notin \mathcal{K}$, pay a bonus with probability one or zero. If we "soften" the probability from one/zero to a real number $w_k \in [0, 1]$ (by using the additional uniform random variable $U$ and setting the pay probability to be $\mathbb{P}(U \leq w_k)$), we get a richer class of contracts, including
\[
\mathbb{E}\left[\tilde{F}_n^{\text{bin}}(\bm{A})\right] \coloneqq G_a^{-1}(S_0) + \left(G_a^{-1}(S_1) - G_a^{-1}(S_0)\right) \cdot \sum_{k \in [n]} \mathbb{P}\left(n \cdot \bar{A} = k\right) \cdot w_k.
\]
We denote the extended class by $\tilde{\mathcal{F}}_n^{\text{bin}}$. We denote $\mathbb{P}\left(n \cdot \bar{A} = k\middle| \eta^*\right)$ by $p_k$ and the amount $\sum_{k \in [n]} \mathbb{P}\left(n \cdot \bar{A} = k\middle| \eta^*\right) \cdot w_k$ by $P = \sum_{k \in [n]} w_k p_k$. Since more information is included, the principal's second-best solution could not be worse (because at least the principal can ignore the additional $U$ to keep the original contract). Thus, it suffices to show the optimality of the threshold scheme in the extended class to prove the lower bound.

Before we dive into the derivations, we present a property of the binomial distribution: for any $p \in (0,1 )$ and $X_n(p) \sim \mathrm{Binomial}(n,p)$,
\[
\frac{\partial }{\partial p}\mathbb{P}\left(X_n(p) = k\right) = \frac{k-np}{p(1-p)} \cdot \mathbb{P}\left(X_n(p) = k\right).
\]
Denote $(\frac{k}{n} - p(\eta^*))$ by $u_k$. Then $u_k$ is strictly increasing with respect to $k$, and
\begin{equation}
\begin{split}
\frac{\partial }{\partial \eta}\mathbb{P}\left(n \cdot \bar{A} = k\middle| \eta\right)\bigg\vert_{\eta = \eta^*}
&=\frac{\partial }{\partial \eta}\mathbb{P}(X_n(p(\eta)) = k)\bigg\vert_{\eta = \eta^*}\\
&= \frac{c}{2}\cdot \frac{k-np(\eta^*)}{p(\eta^*)(1-p(\eta^*))} \cdot \mathbb{P}(X_n(p(\eta^*)) = k)\\
&= \frac{cn}{2p(\eta^*)(1-p(\eta^*))} \cdot u_k \cdot \mathbb{P}\left(n \cdot \bar{A} = k\middle| \eta^*\right),
\end{split}
\label{eq:lower_bound_eq_1}
\end{equation}
which induces
\begin{equation}
\frac{\partial P}{\partial \eta}\bigg\vert_{\eta = \eta^*} = \frac{cn}{2p(\eta^*)(1-p(\eta^*))} \cdot \sum_{k \in [n]} w_k u_k p_k.
\label{eq:lower_bound_eq_2}
\end{equation}

From the discussions in the upper bound, we have the observation that: (1) Jensen's gap between the first-best and the second-best solutions is of the order of $\Theta(\mathrm{Var}(F_n(\mathcal{D}_n)))$; and (2) at least one feasible solution has the property that $\mathrm{Var}(F_n(\mathcal{D}_n)) \rightarrow 0$ as $n$ grows to infinity. Hence, for the optimal solution to \eqref{eq:restricted_second_best}, it must also have $\mathrm{Var}(F_n(\mathcal{D}_n)) \rightarrow 0$.
By the delta's method, we have that
\[
\mathrm{Var}(\tilde{F}_n) = (1+o(1)) \cdot \frac{\partial ^2 G_a^{-1}}{\partial s^2}\bigg\vert_{s=(\mathrm{E[U_0 + E(\eta^*)]})}\cdot \mathrm{Var}(G_a(\tilde{F}_n)),
\]
meaning that the gap can also be approximated by $\mathrm{Var}(G_a(\tilde{F}_n))$ or equivalently, $\mathrm{Var}(U_a)$. We have
\begin{equation}
\mathrm{Var}(U_a) = (S_1 - S_0)^2 P (1-P).
\label{eq:lower_bound_eq_3}
\end{equation}
By the first-order condition of the agent's IC constraint, the agent's marginal expected utility must be zero at $\eta = \eta^*$, which means that the agent's monetary utility gap between two wages $S_1 - S_0$ must satisfy
\begin{equation}
(S_1 - S_0) \cdot \frac{\partial P}{\partial \eta}\bigg\vert_{\eta = \eta^*} - \frac{\mathrm{d} E}{\mathrm{d} \eta}\bigg\vert_{\eta = \eta^*} = 0.
\label{eq:FOC_lower_bound}
\end{equation}
Substituting \eqref{eq:FOC_lower_bound} into \eqref{eq:lower_bound_eq_3}, we have
\begin{equation}
\mathrm{Var}(U_a) = \frac{P (1-P) \Big(\frac{\mathrm{d} E}{\mathrm{d} \eta}\big\vert_{\eta = \eta^*}\Big)^2}{\Big(\frac{\partial P}{\partial \eta}\big\vert_{\eta = \eta^*}\Big)^2}.
\label{eq:lower_bound_eq_4}
\end{equation}
If our target is to minimize the variance, it is equivalent to maximizing $\frac{\big(\frac{\partial P}{\partial \eta}\big\vert_{\eta = \eta^*}\big)^2}{P (1-P)}$, which means that our target is also equivalent to:
\begin{equation}
\max_{w_1,\dots,w_n} \ \frac{\left(\sum_{k \in [n]} w_k u_k p_k\right)^2}{\left(\sum_{k\in[n]} w_k p_k\right)\left(1 - \sum_{k\in[n]} w_k p_k\right)} \eqqcolon g(w_1,\dots,w_n).
\label{eq:lower_bound_target}
\end{equation}
Taking the derivatives with respect to each $w_k$, we have
\begin{equation}
\frac{\partial g}{\partial w_k} = \frac{\left(\sum_{j \in [n]} w_j u_j p_j\right) p_k \cdot\left[2u_k P(1-P) - \left(\sum_{j \in [n]} w_j u_j p_j\right)\cdot (1-2P)\right]}{P^2(1-P)^2}.
\label{eq:lower_bound_derivative}
\end{equation}
We can see that for any value of $\left(\sum_{k \in [n]} w_k u_k p_k\right)$ and $\left(\sum_{k \in [n]} w_k p_k\right)$, the sign of the derivative purely depends on the value of $u_k = \frac{k}{n} - p(\eta^*)$. Thus, there must exist some $k_0 \in [n]$, such that
\begin{equation}
\mathrm{sgn}\left(\frac{\partial g}{\partial w_k}\right) = \begin{cases}
-1, \quad \text{if }k < k_0,\\
+1, \quad \text{if } k > k_0,
\end{cases}
\end{equation}
which implies that any maximum must be taken at
\[
w_k = 
\begin{cases}
0, \quad \text{if }k < k_0,\\
1, \quad \text{if }k > k_0.
\end{cases}
\]
We have proved that any optimal contract in the extended case must be of the threshold form (except for one single $k_0$). Since $n$ is large enough and $p(\eta^*)$ is uniformly bounded away from 0 and 1, any single term $p_{k_0}$ does not have much influence over the entire contract; adding one term or removing one term does not change the structure of the problem. Therefore, it does not hurt to consider only those threshold contracts by letting $w_{k_0} = 0$ or $w_{k_0} = 1$. From Lemma \ref{lemma:binomial_properties}, we can see that the first-order condition is met at two points: one $p_1$ on the left side of $\frac{k_0}{n}$, and one $p_2$ on the right side of $\frac{k_0}{n}$. But the left one $p_1$ is not a local maximum of the expected utility with respect to the agent's IC constraint. In conclusion, the only possible way is to set $k_0$ such that $p_2 = p^*$. In that case,
\[
\mathrm{Var}(U_a) = \Theta\left(1/\sqrt{n \log n}\right).
\]

\textbf{Part III: Constants of Fisher information.} Fix a monitoring technology $m$ and write $p_m(\eta)=(1+c_m\eta)/2$ and $p_m^*=p_m(\eta^*)$. By the lower-bound part of the proof of Theorem \ref{thm:binary}, any optimal binary contract is thresholding (up to one boundary mass point). Thus we may consider a threshold contract with utility levels $S_0<S_1$ and utility gap
\[
\Delta S_m \coloneqq S_1-S_0.
\]
Let
\[
P_m(\eta,n,k)\coloneqq \mathbb{P}(X_n(p_m(\eta))\ge k),
\qquad X_n(p)\sim\mathrm{Binomial}(n,p).
\]
Under such a contract, the agent's expected utility is
\[
S_0+\Delta S_m\, P_m(\eta,n,k)-E(\eta).
\]

We first show that $\Delta S_m=\Theta(1)$. Since $\eta^*$ is incentive compatible in the restricted problem,
\[
S_0+\Delta S_m\, P_m(\eta^*,n,k)-E(\eta^*)
\ge
S_0+\Delta S_m\, P_m(0,n,k)-E(0),
\]
so
\[
\Delta S_m\bigl(P_m(\eta^*,n,k)-P_m(0,n,k)\bigr)\ge E(\eta^*)-E(0).
\]
Because $0\le P_m(0,n,k)\le P_m(\eta^*,n,k)\le 1$, we obtain
\[
\Delta S_m \ge E(\eta^*)-E(0)>0.
\]
On the other hand, Assumption \ref{assm:basic_assm_3} implies
\[
\Delta S_m \le G_a(\overline w)-G_a(\underline w)<\infty.
\]
Hence $\Delta S_m=\Theta(1)$.

The first-order condition at $\eta=\eta^*$ now yields
\[
\Delta S_m\,
\frac{\partial}{\partial \eta}P_m(\eta,n,k)\bigg|_{\eta=\eta^*}
=
E'(\eta^*),
\]
and therefore
\[
\frac{\partial}{\partial \eta}P_m(\eta,n,k)\bigg|_{\eta=\eta^*}
=\Theta(1).
\]
Since
\[
\frac{\partial}{\partial \eta}P_m(\eta,n,k)
=
\frac{c_m}{2}\,
\frac{\partial}{\partial p}\mathbb{P}(X_n(p)\ge k)\bigg|_{p=p_m(\eta)},
\]
we get
\[
\frac{\partial}{\partial p}\mathbb{P}(X_n(p)\ge k)\bigg|_{p=p_m^*}
=
\Theta(c_m^{-1}).
\]
By Lemma \ref{lemma:binomial_properties}(d)--(f), this is possible only when the threshold is placed at
\[
\left|\frac{k}{n}-p_m^*\right|
=
\Theta\!\left(
\sqrt{\frac{p_m^*(1-p_m^*)\log n}{n}}
\right).
\]

At such a threshold,
\[
\mathbb{P}(X_n(p_m^*)=k)
=
\frac{p_m^*}{k}\,
\frac{\partial}{\partial p}\mathbb{P}(X_n(p)\ge k)\bigg|_{p=p_m^*}
=
\Theta\!\left(\frac{1}{c_m n}\right).
\]
Moreover, for $j$ in the relevant tail window,
\[
1-\frac{\mathbb{P}(X_n(p_m^*)=j+1)}{\mathbb{P}(X_n(p_m^*)=j)}
=
\Theta\!\left(
\sqrt{\frac{\log n}{n\,p_m^*(1-p_m^*)}}
\right).
\]
Hence, by the same geometric-series argument as in Lemma \ref{lemma:binomial_properties}(g),
\[
1-P_m(\eta^*,n,k)
=
\Theta\!\left(
\frac{\sqrt{p_m^*(1-p_m^*)}}{c_m\sqrt{n\log n}}
\right),
\]
and thus
\[
P_m(\eta^*,n,k)\bigl(1-P_m(\eta^*,n,k)\bigr)
=
\Theta\!\left(
\frac{\sqrt{p_m^*(1-p_m^*)}}{c_m\sqrt{n\log n}}
\right).
\]

Finally, because $G_a^{-1}$ is continuously differentiable on the compact interval
$[G_a(\underline w),G_a(\overline w)]$ and $\Delta S_m=\Theta(1)$, the wealth gap
\[
\Delta w_m \coloneqq G_a^{-1}(S_1)-G_a^{-1}(S_0)
\]
also satisfies $\Delta w_m=\Theta(1)$. Therefore
\[
\mathrm{Var}(F_n)
=
\Delta w_m^2\,
P_m(\eta^*,n,k)\bigl(1-P_m(\eta^*,n,k)\bigr)
=
\Theta\!\left(
\frac{\sqrt{p_m^*(1-p_m^*)}}{c_m\sqrt{n\log n}}
\right).
\]
Combining this with \eqref{eq:Jensen_gap_variance} proves the stated bound for the restricted second-best problem. The unrestricted second-best case follows from the asymptotic convergence of the unrestricted second-best agent's action to the restricted second-best agent's action (or equivalently, the first-best) as $n$ approaches infinity, which is the same argument as Proposition \ref{prop:second_converge_to_first}.
 \end{proof}

We now give a proof to show that the optimal solution to \eqref{eq:second_best} must be close to that of \eqref{eq:restricted_second_best} when $n$ is large enough.
\begin{proposition}
\label{prop:second_converge_to_first}
Denote the optimal solution to \eqref{eq:second_best} within the binary contract class by $\eta_n^{\text{bin}}$. Then, under the condition of Theorem \ref{thm:binary}, we have that $\eta_n^{\text{bin}}$ converges to $\eta^*$ with
\[
\left(\eta_n^{\text{bin}} - \eta^*\right)^2 = \mathcal{O}\left(1/\sqrt{n \log n}\right).
\]
\end{proposition}
\begin{proof}{Proof}
\textbf{Proof.} For any $\eta \in [0, 1]$, define two optimization problems in analogy to the first-best/second-best problems:
\begin{equation}
\begin{split}
\mathrm{F}(\eta) \coloneqq \max_{F_n} \ &\mathbb{E}[U_p(F_n(\mathcal{D}_n), \eta)]\\
\text{s.t. }& \mathbb{E}[U_a(F_n(\mathcal{D}_n), \eta)]\geq U_0. \quad \text{(IR)}
\end{split}
\label{eq:first_best_auxilliary}
\end{equation}
\begin{equation}
\begin{split}
\mathrm{S}_n^{\text{bin}}(\eta) \coloneqq \max_{F_n^{\text{bin}}} \ &\mathbb{E}[U_p(F_n^{\text{bin}}(\mathcal{D}_n), \eta)],\\
\text{s.t. }& \mathbb{E}[U_a(F_n^{\text{bin}}(\mathcal{D}_n), \eta)]\geq U_0, \quad \text{(IR)}\\
& \mathbb{E}[U_a(F_n^{\text{bin}}(\mathcal{D}_n), \eta)]\geq \sup_{\eta^\prime} \mathbb{E}[U_a(F_n^{\text{bin}}(\mathcal{D}_n), \eta^\prime)]. \quad \text{(IC)}
\end{split}
\label{eq:second_best_auxilliary}
\end{equation}
Note that the maximum to the first-best problem \eqref{eq:first_best} equals $\mathrm{S}_n^{\text{bin}}(\eta^*)$. Also, the maximum to the second-best problem \eqref{eq:second_best} equals $\mathrm{F}(\eta_n^{\text{bin}})$ and the maximum to the restricted second-best problem \eqref{eq:restricted_second_best} equals $\mathrm{F}(\eta^*)$. We have
\begin{align*}
\mathrm{F}(\eta_n^{\text{bin}}) & \geq \mathrm{S}_n^{\text{bin}}(\eta_n^{\text{bin}})\\
& \geq \mathrm{S}_n^{\text{bin}}(\eta^*)\\
& \geq \mathrm{F}(\eta^*) - \Theta\left(1/\sqrt{n \log n}\right).
\end{align*}
Here, the first inequality is due to \eqref{eq:first_best_auxilliary}'s removing the IC constraint and enlarging the contract class could only induce an optimum no smaller than \eqref{eq:second_best_auxilliary}, the second inequality is due to the optimality of $\eta_n^{\text{bin}}$ subject to the original second-best problem \eqref{eq:second_best}, and the last from the proof of Theorem \ref{thm:binary} for the restricted second-best problem.

Now, as we have already proved the optimal contract form for the first-best problem must be a constant wage that exactly compensates the efforts made by the agent $E(\eta)$ plus the leisure utility $U_0$ (see that in the proof of Proposition \ref{prop:first_best_sol}), we have
\[
\mu(\eta_n^{\text{bin}}) - G_a^{-1}(U_0 + E(\eta_n^{\text{bin}})) \geq \mu(\eta^*) - G_a^{-1}(U_0 + E(\eta^*)) - \Theta\left(1/\sqrt{n \log n}\right).
\]
From the assumption made in Theorem \ref{thm:binary}, the function $G_a^{-1}(U_0 + E(\eta))$ must be strictly convex for $\eta$. Also, $\mu(\eta)$ is concave. Thus, the function $\mu(\eta) - G_a^{-1}(U_0 + E(\eta))$ must be strictly concave. From Taylor's expansion at $\eta=\eta^*$, we have
\[
\left(\eta_n^{\text{bin}} - \eta^*\right)^2 = \mathcal{O}\left(1/\sqrt{n \log n}\right).
\]
 \end{proof}

\subsubsection{Proof of Theorem \ref{thm:linear_contract}}
\label{apd:proof_of_linear}
Similar to the binary contract case, the only difference between the proof for the restricted second-best \eqref{eq:restricted_second_best} and the unrestricted \eqref{eq:second_best} in our proof is that for the restricted case, we directly assume $\eta^* \in (0, 1)$ such that \eqref{eq:uniformly_bounded_away} is satisfied (so there is at least some randomness in the outcome), while for the unrestricted case, we need to prove that the action taken by the agent under linear contracts, $\eta_n^{\text{lin}}$, converges to $\eta^*$ such that for large enough $n$'s, the decisions are also uniformly bounded away from 0 and 1 (proved later in Proposition \ref{prop:second_converge_to_first_linear}). Thus, we only prove Theorem \ref{thm:linear_contract} for the restricted case and do not repeat for the unrestricted case. 
\begin{proof}{Proof}
Suppose we are considering some contract $F_n^{\text{lin}}(\bm{A}) = \frac{1}{n}\sum_{i=1}^n f_n(A_i)$, where the agent gets a lower payment $f_n(0)$ per $A_i = 0$ and a higher payment $f_n(1)$ per $A_i = 1$. From previous discussions in the proof of Theorem \ref{thm:binary}, we know that the gap between the first-best and the second-best is the Jensen's gap due to the risk-averse nature of the agent. This gap is of the same order as $\mathrm{Var}(F_n(\mathcal{D}_n))$, where
\begin{align*}
\mathrm{Var}(F_n(\mathcal{D}_n)) & = \mathrm{Var}\left(\frac1n \sum_{i=1}^n f_n(A_i)\right) \\
& = \frac{1}{n} \mathrm{Var}(f_n(A_1)).
\end{align*}
It suffices to show that $\mathrm{Var}(f_n(A_1)) = \Theta(1)$.

\textbf{Upper bound.}
By Assumption \ref{assm:basic_assm_3}, the wage is uniformly bounded, thus
\[
\mathrm{Var}(f_n(A_1)) = \mathcal{O}(1).
\]

\textbf{Lower bound.}
If $\mathrm{Var}(f_n(A_1)) = \Omega(1)$ does not hold, then we must have $\mathrm{Var}(f_n(A_1)) \rightarrow 0$ as $n \rightarrow \infty$. Since $p(\eta^*) \in (0, 1)$ (due to $\eta^* \in (0, 1)$), we have $\mathrm{Var}(A_1) = \Theta(1)$. It implies that
\[
f_n(1) - f_n(0) \rightarrow 0, \quad \text{as }n \rightarrow \infty.
\]
Then the agent's utility at $\eta = \eta^*$ satisfies
\begin{align*}
\mathbb{E}[U_a(F_n^{\text{lin}}(\mathcal{D}_n), \eta^*)] & = \mathbb{E}\left[G_a\left(\frac1n \sum_{i=1}^n f_n(A_i)\right)\middle| \eta = \eta^*\right] - E(\eta^*) \\
& < G_a\left(\mathbb{E}[f_n(A_1)|\eta = \eta^*]\right) - E(\eta^*) \quad \rightarrow G_a(f_n(0)) - E(\eta^*).
\end{align*}
Similarly, the agent's utility at $\eta = 0$ also has the property that
\begin{align*}
\mathbb{E}[U_a(F_n^{\text{lin}}(\mathcal{D}_n), 0)] & = \mathbb{E}\left[G_a\left(\frac1n \sum_{i=1}^n f_n(A_i)\right)\middle| \eta = 0\right] - E(0) \\
& < G_a\left(\mathbb{E}[f_n(A_1)|\eta = 0]\right) - E(0) \quad \rightarrow G_a(f_n(0)) - E(0).
\end{align*}
However, the fact that $\eta^* > 0$ and $E$ is strictly monotonically increasing due to Assumption \ref{assm:basic_assm_4} and Assumption \ref{assm:basic_assm_1} implies that
\[
\mathbb{E}[U_a(F_n^{\text{lin}}(\mathcal{D}_n), \eta^*)] < \mathbb{E}[U_a(F_n^{\text{lin}}(\mathcal{D}_n), 0)], \quad \text{for large enough }n.
\]
This contradicts the IC constraint that $\eta^*$ maximizes the agent's utility.

\textbf{Constants depending on Fisher information.} Fix a monitoring technology $m$. Write the linear contract as
\[
F_n=a_n+b_n \bar A,
\qquad
a_n\coloneqq f_n(0),\quad b_n\coloneqq f_n(1)-f_n(0),
\]
where $n\bar A \sim \mathrm{Binomial}(n,p_m(\eta))$ and $p_m(\eta)=(1+c_m\eta)/2$. Then
\[
\mathrm{Var}(F_n\mid \eta^*)
=
\frac{b_n^2 p_m^*(1-p_m^*)}{n}.
\]
It remains to identify the order of $b_n$.

Let $X_n\sim \mathrm{Binomial}(n,p)$ and define $h_n(x)\coloneqq G_a(a_n+b_n x/n)$. Since wages are uniformly bounded, there exist constants $0<\underline g \le \overline g <\infty$ such that
\[
\underline g \le G_a'(w)\le \overline g
\qquad
\text{for all }w\in[\underline w,\overline w].
\]
Hence, for all $x\in[0,n]$,
\[
\underline g\,\frac{b_n}{n}\le h_n'(x)\le \overline g\,\frac{b_n}{n}.
\]

Using
\[
\frac{\partial}{\partial p}\mathbb{E}[h_n(X_n)]
=
\frac{\mathrm{Cov}(h_n(X_n),X_n)}{p(1-p)},
\]
together with the symmetrization identity
\[
2\,\mathrm{Cov}(h_n(X_n),X_n)
=
\mathbb{E}\!\left[(h_n(X_n)-h_n(X_n'))(X_n-X_n')\right]
\]
for an i.i.d.\ copy $X_n'$, we obtain
\[
\underline g\, b_n
\le
\frac{\partial}{\partial p}\mathbb{E}[G_a(F_n)]
\le
\overline g\, b_n.
\]
Since $\partial p_m(\eta)/\partial \eta = c_m/2$, the first-order condition of the IC constraint at $\eta^*$ implies
\[
\underline g\, b_n \frac{c_m}{2}
\le
E'(\eta^*)
\le
\overline g\, b_n \frac{c_m}{2},
\]
so
\[
b_n=\Theta(c_m^{-1}).
\]
Substituting this into the variance formula yields
\[
\mathrm{Var}(F_n\mid \eta^*)
=
\Theta\!\left(
\frac{p_m^*(1-p_m^*)}{c_m^2 n}
\right).
\]
Combining this with \eqref{eq:Jensen_gap_variance} proves the stated bound for the restricted second-best problem. The unrestricted second-best case follows exactly as in Proposition \ref{prop:second_converge_to_first_linear}.
 \end{proof}

The convergence of $\eta_n^{\text{lin}}$ could also be proved once we have the following auxiliary problem
\begin{equation}
\begin{split}
\mathrm{S}_n^{\text{lin}}(\eta) \coloneqq \max_{F_n^{\text{lin}}} \ &\mathbb{E}[U_p(F_n^{\text{lin}}(\mathcal{D}_n), \eta)],\\
\text{s.t. }& \mathbb{E}[U_a(F_n^{\text{lin}}(\mathcal{D}_n), \eta)]\geq U_0, \quad \text{(IR)}\\
& \mathbb{E}[U_a(F_n^{\text{lin}}(\mathcal{D}_n), \eta)]\geq \sup_{\eta^\prime} \mathbb{E}[U_a(F_n^{\text{lin}}(\mathcal{D}_n), \eta^\prime)]. \quad \text{(IC)}
\end{split}
\label{eq:second_best_auxilliary_linear}
\end{equation}
Equipped with that problem, we have similarly to Proposition~\ref{prop:second_converge_to_first} that
\begin{align*}
\mathrm{F}(\eta_n^{\text{lin}}) & \geq \mathrm{S}_n^{\text{lin}}(\eta_n^{\text{lin}})\\
& \geq \mathrm{S}_n^{\text{lin}}(\eta^*)\\
& \geq \mathrm{F}(\eta^*) - \Theta\left(1/n\right).
\end{align*}
We present the result here without for simplicity. The remaining proof is almost the same as that in Proposition \ref{prop:second_converge_to_first} except for the convergence rate.
\begin{proposition}
\label{prop:second_converge_to_first_linear}
Denote the optimal solution to \eqref{eq:second_best} under the linear contract class by $\eta_n^{\text{lin}}$. Then, under the condition of Theorem \ref{thm:linear_contract}, we have that $\eta_n^{\text{lin}}$ converges to $\eta^*$ with
\[
\left(\eta_n^{\text{lin}} - \eta^*\right)^2 = \mathcal{O}\left(1/n\right).
\]
\end{proposition}

\subsubsection{Proof of Theorem \ref{thm:lower_bound}}
\label{apd:proof_lower_bound}
According to previous discussions, the gap between the first-best and the second-best solutions is due to the randomness in the assessment result and is of the same order as $\mathrm{Var}(F_n(\mathcal{D}_n))$. In the following, we will see that the variance is lower bounded by the Cram\'{e}r-Rao arguments based on the Fisher information.
\begin{proof}{Proof}
   We first prove the claim for the restricted second-best value $\tilde{\mathcal{C}}_n$ with $\eta_a = \eta^*$. To start with, recall that the joint probability of the outcome $\mathcal{D}_n$ under the agent's action $\eta$ as $p_\eta^n(\mathcal{D}_n)$.
   Then the expected pay-off is $\mathbb{E}[F_n(\mathcal{D}_n)]$, which is
   \[
   \mathbb{E}[F_n(\mathcal{D}_n)] = \int F_n(\mathcal{D}_n)\, p_\eta^n(\mathcal{D}_n) \, \mathrm{d} \mathcal{D}_n.
   \]
   By defining the score as
   \[
   S_n(\mathcal{D}_n) \coloneqq \frac{\partial}{\partial \eta} \log p_\eta^n(\mathcal{D}_n),
   \]
   we have
   \allowdisplaybreaks
   \begin{align*}
   \frac{\partial}{\partial \eta}\mathbb{E}[F_n(\mathcal{D}_n)] & = \frac{\partial}{\partial \eta}\int F_n(\mathcal{D}_n)\, p_\eta^n(\mathcal{D}_n) \, \mathrm{d} \mathcal{D}_n\\
   & = \frac{\partial}{\partial \eta}\int \Big(F_n(\mathcal{D}_n) - \mathbb{E}[F_n(\mathcal{D}_n)]\Big)\, p_\eta^n(\mathcal{D}_n) \, \mathrm{d} \mathcal{D}_n\\
   & = \int \Big(F_n(\mathcal{D}_n) - \mathbb{E}[F_n(\mathcal{D}_n)]\Big)\, \frac{\partial}{\partial \eta}p_\eta^n(\mathcal{D}_n) \, \mathrm{d} \mathcal{D}_n\\
   & = \int \Big(F_n(\mathcal{D}_n) - \mathbb{E}[F_n(\mathcal{D}_n)]\Big)\, \Big(\frac{\partial}{\partial \eta} \log p_\eta^n(\mathcal{D}_n) \Big)\, p_\eta^n(\mathcal{D}_n)\, \mathrm{d} \mathcal{D}_n\\
   & = \int \Big(F_n(\mathcal{D}_n) - \mathbb{E}[F_n(\mathcal{D}_n)]\Big)\, S_n(\mathcal{D}_n)\, p_\eta^n(\mathcal{D}_n)\, \mathrm{d} \mathcal{D}_n\\
   & \leq \Bigg(\int \Big(F_n(\mathcal{D}_n) - \mathbb{E}[F_n(\mathcal{D}_n)]\Big)^2\, p_\eta^n(\mathcal{D}_n)\, \mathrm{d} \mathcal{D}_n \int S_n(\mathcal{D}_n)^2\, p_\eta^n(\mathcal{D}_n)\, \mathrm{d} \mathcal{D}_n \Bigg)^{\frac12}\\
   & = \Big(\mathrm{Var}(F_n(\mathcal{D}_n)) \cdot \mathcal{I}_n\Big)^{\frac12}
   \end{align*}
   Here, the first equality is by definition of the expectation, the second is due to the fact that $\int p_\eta^n(\mathcal{D}_n) \, \mathrm{d} \mathcal{D}_n \equiv 1$ and henceforth $\frac{\partial }{\partial \eta} \int p_\eta^n(\mathcal{D}_n) \, \mathrm{d} \mathcal{D}_n \equiv 0$, the third is due to the continuity, the fourth is the derivative of the logarithm, the fifth is by definition, the inequality is Cauchy-Schwarz inequality with respect to distribution $p_\eta^n(\mathcal{D}_n)$, and the last equality by defining
   \[
   \mathcal{I}_n \coloneqq \mathbb{E}[S_n^2(\mathcal{D}_n)].
   \]

   Under Assumption~\ref{assm:basic_assm_4}, the optimal $\eta^*$ is taken within $(0, 1)$, which implies that the first-order-condition is satisfied:
   \[
   \frac{\partial }{\partial \eta}G_a^{-1}(E(\eta) + U_0) \bigg|_{\eta = \eta^*} = \frac{\partial}{\partial \eta}\mathbb{E}[F_n(\mathcal{D}_n)] \leq \Big(\mathrm{Var}(F_n(\mathcal{D}_n)) \cdot \mathcal{I}_n\Big)^{\frac12}.
   \]
   Then, the variance of the payoff is lower bounded by:
   \[
   \mathrm{Var}(F_n(\mathcal{D}_n)) \geq \frac{\Big(\frac{\partial }{\partial \eta}G_a^{-1}(E(\eta) + U_0) \big|_{\eta = \eta^*}\Big)^2}{\mathcal{I}_n}.
   \]
   
   Since we have assumed that the outcome $\mathcal{D}_n$ is of $n$ i.i.d. samples, we have the following relationship for the Fisher information $\mathcal{I}_n$:
   \[
   \mathcal{I}_n = n \mathcal{I}_1.
   \]
   As a direct consequence, we have
   \[
   \mathrm{Var}(F_n(\mathcal{D}_n)) \geq \frac{1}{n} \cdot \frac{\Big(\frac{\partial }{\partial \eta}G_a^{-1}(E(\eta) + U_0) \big|_{\eta = \eta^*}\Big)^2}{\mathcal{I}_1}.
   \]

   For the unrestricted second-best problem~\eqref{eq:second_best}, as long as we prove the fact that the action taken by the agent in the second-best problem, $\eta_n^*$, remains interior of $(0, 1)$, the other parts of the proof remain exactly the same except for just one replacement (from $\Big(\frac{\partial }{\partial \eta}G_a^{-1}(E(\eta) + U_0) \big|_{\eta = \eta^*}\Big)^2$ to $\Big(\frac{\partial }{\partial \eta}G_a^{-1}(E(\eta) + U_0) \big|_{\eta = \eta_n^*}\Big)^2$). For the claim that $\eta_n^*$ remains interior of $(0, 1)$, we derive it from the following arguments: From the proof of Proposition~\ref{prop:second_converge_to_first} and Proposition~\ref{prop:second_converge_to_first_linear}, we know that $\mathrm{F}(\eta)$ is a convex function with minimum attained at $\eta = \eta^*$. Similarly, we can define
   \begin{equation}
    \begin{split}
    \mathrm{S}_n(\eta) \coloneqq \max_{F_n} \ &\mathbb{E}[U_p(F_n^{\text{bin}}(\mathcal{D}_n), \eta)],\\
    \text{s.t. }& \mathbb{E}[U_a(F_n(\mathcal{D}_n), \eta)]\geq U_0, \quad \text{(IR)}\\
    & \mathbb{E}[U_a(F_n(\mathcal{D}_n), \eta)]\geq \sup_{\eta^\prime} \mathbb{E}[U_a(F_n(\mathcal{D}_n), \eta^\prime)]. \quad \text{(IC)}
    \end{split}
    \label{eq:second_best_auxilliary_any}
    \end{equation}
    Then we can check that
    \begin{align*}
        \mathrm{F}(\eta_n^*) &\geq \mathrm{S}_n(\eta_n^*)\\
        &\geq \mathrm{S}_n(\eta_n^{\text{lin}})\\
        &\geq \mathrm{S}_n^{\text{lin}}(\eta_n^{\text{lin}})\\
        &\geq \mathrm{F}(\eta^*) - \Theta(1/n).
    \end{align*}
    As a consequence, a similar conclusion as Proposition~\ref{prop:second_converge_to_first_linear} holds for $\eta_n^*$, such that
    \[
    \big(\eta_n^* - \eta^* \big)^2 = O(1/n),
    \]
    which verifies the claim that $\eta_n^*$ remains interior of $(0, 1)$ for large enough $n$.
 \end{proof}



\subsubsection{Proof of Corollary \ref{cor:monitor_compare}}

By Theorems \ref{thm:binary} and \ref{thm:linear_contract}, the relevant asymptotic factors are
\[
\Gamma_{\mathrm{bin}}(c)=\frac{\sqrt{1-c^2(\eta^*)^2}}{c},
\qquad
\Gamma_{\mathrm{lin}}(c)=\frac{1-c^2(\eta^*)^2}{c^2}.
\]
Their derivatives are
\[
\Gamma_{\mathrm{bin}}'(c)
=
-\frac{1}{c^2\sqrt{1-c^2(\eta^*)^2}}
<0,
\qquad
\Gamma_{\mathrm{lin}}'(c)
=
-\frac{2}{c^3}
<0.
\]
Thus both asymptotic constants are strictly decreasing in $c$. Therefore, among any two monitoring technologies satisfying \eqref{eq:generic_monitoring_strength}, the one with the larger monitoring parameter $c$ yields the smaller asymptotic principal-utility gap. Applying this monotonicity to $c_{\mathrm{self}}$ and $c_{\mathrm{exp}}=\bar c$ proves the result.

\subsubsection{Technical Lemmas}

\begin{figure}[ht!]
  \centering
\includegraphics[width=0.7\textwidth]
{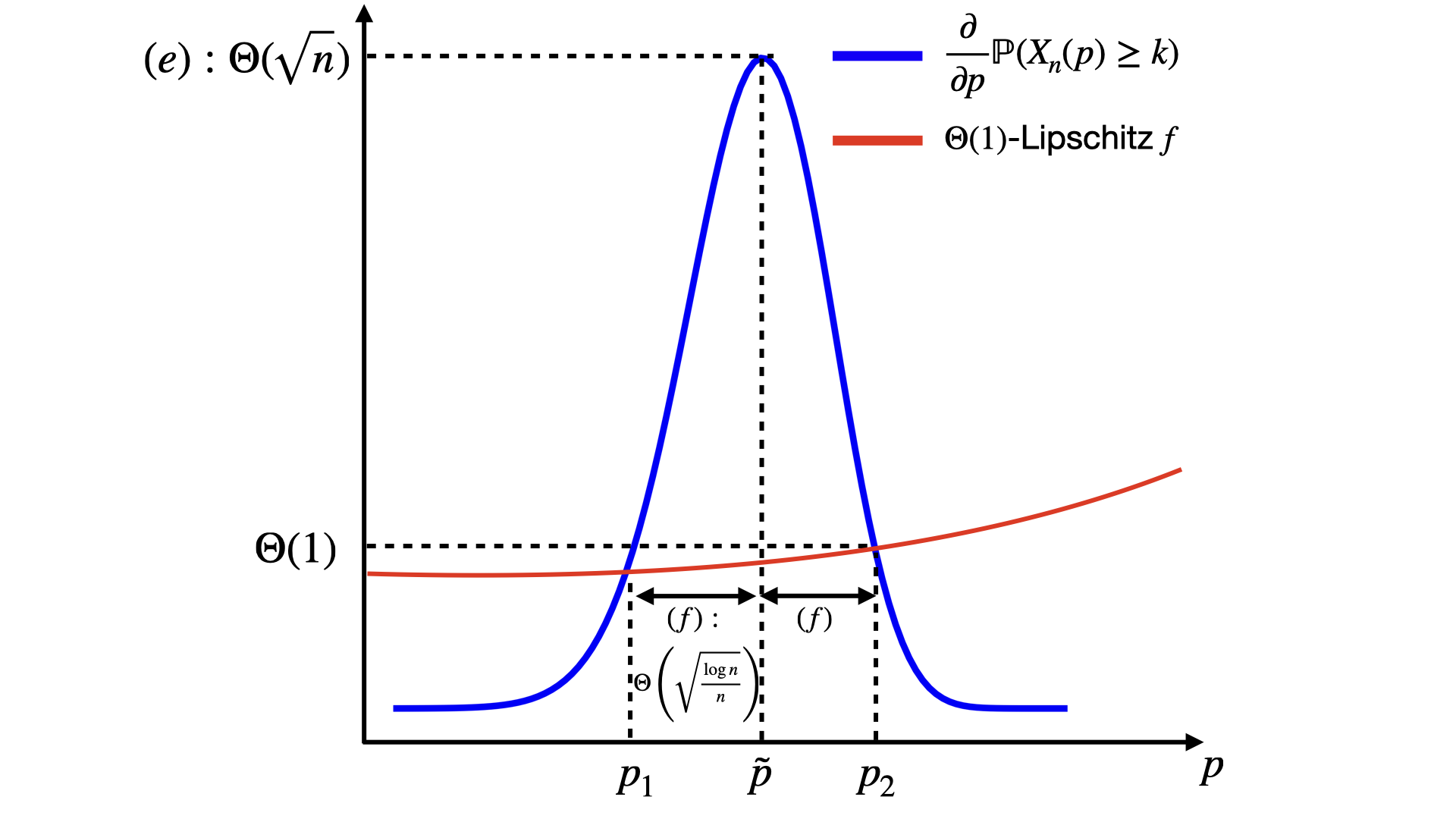}
    \caption{ Illustration for~Lemma \ref{lemma:binomial_properties} (e) (f). The blue-colored curve shows the ``bell curve'' nature of $\frac{\partial }{\partial p} \mathbb{P}(X_n(p) \geq k)$. The curve's peak value is of $\Theta(\sqrt{n})$. Two intersections with some $\Theta(1)$-Lipschitz function $f$ take place: one on the left, one on the right. Both are at a distance of $\Theta(\sqrt{\log n/n})$ to the center.}
  \end{figure}
 
\begin{figure}[ht!]
  \centering
\includegraphics[width=0.7\textwidth]
{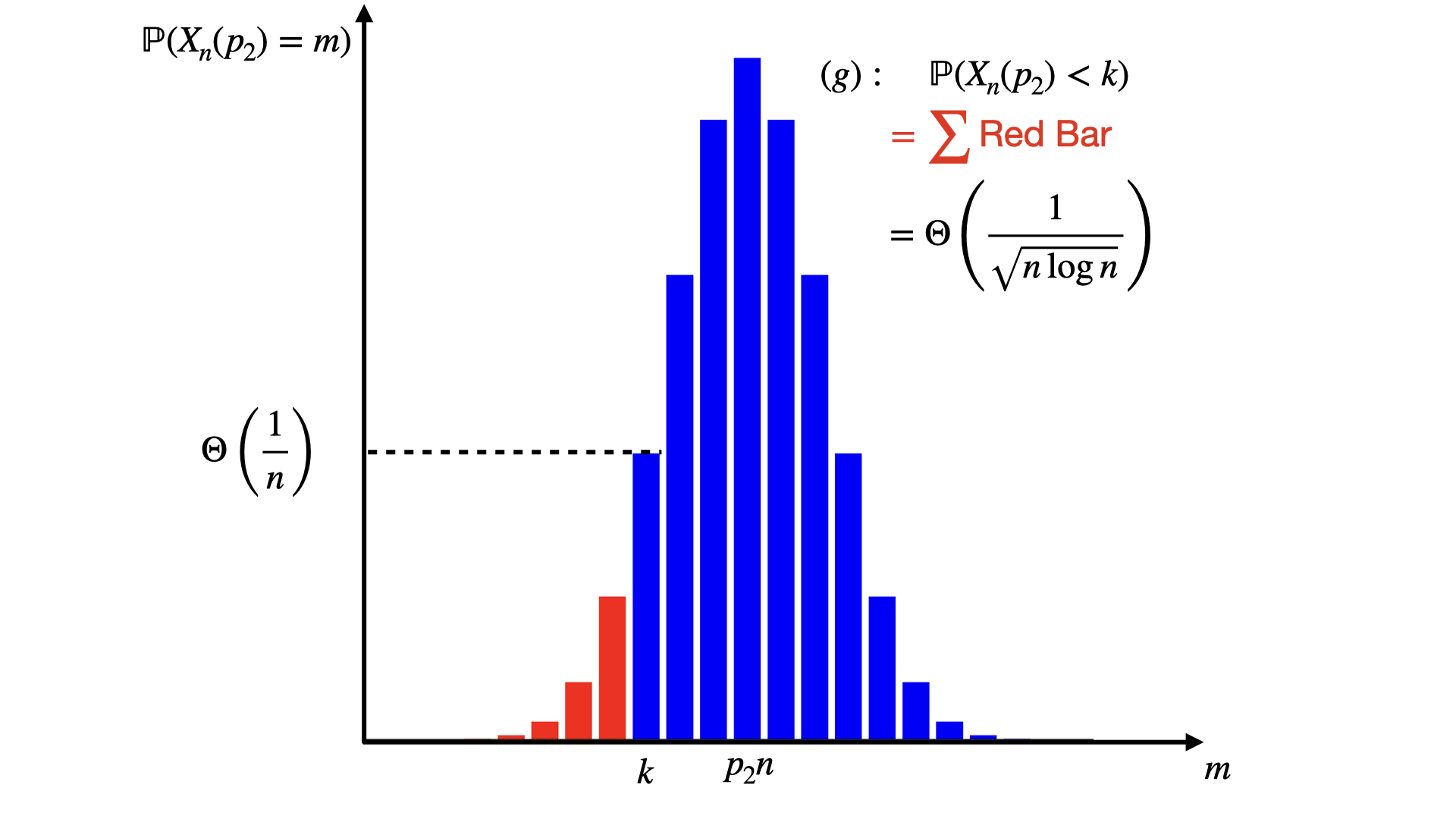}
    \caption{ Illustration for~Lemma \ref{lemma:binomial_properties} (g). Both the red-colored and the blue-colored bars are probability mass for different $\mathbb{P}(X_n(p_2) = m)$'s. By the first-order condition, the probability at the threshold $\mathbb{P}(X_n(p_2) = k)$ is of $\Theta(1/n)$. The tail probability (the red part) can be then estimated by the sum of geometric sequences, which is of $\Theta(1/\sqrt{n\log n})$.}
\label{fig:tech_lemma_illu}
\end{figure}

\begin{lemma}
\label{lemma:binomial_properties}
Suppose $X_n(p) \sim \mathrm{Binomial}(n, p)$. Then for any integer $k$ such that $1 < k < n$,
\begin{enumerate}
\item \[
\mathbb{P}(X_n(p) \geq k) = n\int_0^p \mathbb{P}(X_{n-1}(u) = k - 1) \mathrm{d} u
\]
\item \[
\frac{\partial}{\partial p} \mathbb{P}(X_n(p) \geq k) = n \mathbb{P}(X_{n-1}(p) = k - 1).
\]
\item \[
\frac{\partial^2}{\partial p^2} \mathbb{P}(X_n(p) \geq k) = \frac{n-1}{(1-p) p} \cdot (\tilde{p} - p) \cdot \frac{\partial}{\partial p} \mathbb{P}(X_n(p) \geq k),
\]
where $\tilde{p} = \frac{k-1}{n-1}$.
\item If $a \leq (1-p) p \leq b$, then \[
\exp\left(-\frac{n-1}{a} (p - \tilde{p})^2\right) \leq \frac{\frac{\partial }{\partial p} \mathbb{P}(X_n(p) \geq k)}{\frac{\partial }{\partial p} \mathbb{P}(X_n(p) \geq k) \vert_{p=\tilde{p}}} \leq \exp\left(-\frac{n-1}{b} (p - \tilde{p})^2\right),
\]
where $\tilde{p} = \frac{k-1}{n-1}$.
In other words, the curve of $\frac{\partial }{\partial p} \mathbb{P}(X_n(p) \geq k)$ is like a bell curve centered at $\tilde{p}$.
\item $\frac{\partial}{\partial p} \mathbb{P}(X_n(p) \geq k)$ monotonically increases for $p < \tilde{p}$ and monotonically decreases for $p > \tilde{p}$. If $k = cn + \mathcal{O}(1)$ for some $c \in (0, 1)$, then $\frac{\partial}{\partial p} \mathbb{P}(X_n(p) \geq k)$ reaches its maximum of order $\Theta(\sqrt{n})$ at $\tilde{p} = \frac{k-1}{n-1}$.
\item For any $\mathcal{O}(1)$-Lipschitz function $f$ independent of $n$ on $[0, 1]$ with $f(p) > 0$, for large enough $n$ and $k = cn + \mathcal{O}(1)$ for some $c \in (0, 1)$, $f$ intersects with $\frac{\partial }{\partial p} \mathbb{P}(X_n(p) \geq k)$ at two points $p_1$ and $p_2$, with
\[
p_1 = \tilde{p} - \Theta\left(\sqrt{\frac{(1-\tilde{p})\tilde{p}}{n}} \cdot \sqrt{\log n}\right),
\]
and
\[
p_2 = \tilde{p} + \Theta\left(\sqrt{\frac{(1-\tilde{p})\tilde{p}}{n}} \cdot \sqrt{\log n}\right).
\]
\item \[
\mathbb{P}(X_n(p_1) \geq k) = \Theta\left(\frac{1}{\sqrt{n \log n}}\right),
\]
and
\[
\mathbb{P}(X_n(p_2) \geq k) = 1 - \Theta\left(\frac{1}{\sqrt{n \log n}}\right).
\]
\end{enumerate}
\end{lemma}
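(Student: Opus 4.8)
The plan is to prove the seven items in order, since each leans on its predecessors; items (a)--(e) are computations with the binomial pmf while (f) and (g) carry the real weight. For (a) I would pass to the order-statistics picture: writing $X_n(p)=\#\{i\le n:\,U_i\le p\}$ for i.i.d.\ uniforms $U_1,\dots,U_n$, the event $\{X_n(p)\ge k\}$ equals $\{U_{(k)}\le p\}$, and the density of the $k$-th order statistic at $u$ is $n\binom{n-1}{k-1}u^{k-1}(1-u)^{n-k}=n\,\mathbb{P}(X_{n-1}(u)=k-1)$; integrating over $[0,p]$ gives (a) (alternatively one can differentiate the sum defining $\mathbb{P}(X_n(p)\ge k)$ and watch it telescope). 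Item (b) is then the fundamental theorem of calculus applied to (a). For (c) I would differentiate the closed form $\partial_p\mathbb{P}(X_n(p)\ge k)=n\binom{n-1}{k-1}p^{k-1}(1-p)^{n-k}$ once more, factor that expression back out, and simplify $(k-1)(1-p)-(n-k)p=(n-1)(\tilde p-p)$ with $\tilde p=\tfrac{k-1}{n-1}$.

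For (d) I would integrate the logarithmic derivative: by (c), $\frac{\mathrm d}{\mathrm d p}\log\partial_p\mathbb{P}(X_n(p)\ge k)=\frac{(n-1)(\tilde p-p)}{p(1-p)}$, so sandwiching $p(1-p)$ between $a$ and $b$ over the interval joining $\tilde p$ to $p$ and integrating gives the two-sided Gaussian-type control $\exp\!\big(-\tfrac{n-1}{2a}(p-\tilde p)^2\big)\le \partial_p\mathbb{P}(X_n(p)\ge k)\big/\partial_p\mathbb{P}(X_n(p)\ge k)\big|_{p=\tilde p}\le \exp\!\big(-\tfrac{n-1}{2b}(p-\tilde p)^2\big)$ (the exponent constants differ from $a,b$ only by the harmless factor $2$, and only the qualitative bell shape is used downstream). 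Item (e) is then immediate: (c) shows $\partial_p^2\mathbb{P}(X_n(p)\ge k)$ has the sign of $\tilde p-p$, so $\partial_p\mathbb{P}(X_n(p)\ge k)$ is unimodal with peak at $\tilde p$, where its value is $n\,\mathbb{P}(X_{n-1}(\tilde p)=k-1)$; for $k=cn+\mathcal O(1)$ a Stirling / local-limit estimate gives $\mathbb{P}(X_{n-1}(\tilde p)=k-1)=\Theta(1/\sqrt n)$, hence the peak is $\Theta(\sqrt n)$.

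For (f): since $f$ is continuous, positive and $\mathcal O(1)$-Lipschitz on the compact set $[0,1]$, it lies between two positive constants. The curve $g(p):=\partial_p\mathbb{P}(X_n(p)\ge k)$ vanishes at $0$ and at $1$ (using $1<k<n$), rises monotonically to its $\Theta(\sqrt n)$ peak at $\tilde p$, and then decreases monotonically back, so for $n$ large the intermediate value theorem together with the monotonicity from (e) yields exactly one crossing $p_1<\tilde p$ and one crossing $p_2>\tilde p$ of $g$ with $f$, both satisfying $g(p_i)=f(p_i)=\Theta(1)$. To localize them I would feed $g(p_i)/g(\tilde p)=\Theta(1/\sqrt n)$ into the bound of (d): near $\tilde p$ the relevant $a,b$ are $\asymp\tilde p(1-\tilde p)$, so $\exp\!\big(-\Theta\big(n/(\tilde p(1-\tilde p))\big)(p_i-\tilde p)^2\big)=\Theta(1/\sqrt n)$, forcing $(p_i-\tilde p)^2=\Theta\big(\tilde p(1-\tilde p)\log n/n\big)$, which is exactly the claimed offset $\Theta\big(\sqrt{\tilde p(1-\tilde p)/n}\cdot\sqrt{\log n}\big)$.

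Item (g) is the step I expect to be the main obstacle, and I would treat the right tail $\mathbb{P}(X_n(p_1)\ge k)$ directly as a geometric-type sum. From (b) and $g(p_1)=f(p_1)$ we get $n\,\mathbb{P}(X_{n-1}(p_1)=k-1)=\Theta(1)$, and since $\mathbb{P}(X_n(p_1)=k)=\tfrac{np_1}{k}\,\mathbb{P}(X_{n-1}(p_1)=k-1)$ with $k=\Theta(n)$ and $p_1=\Theta(1)$, the threshold mass is $\mathbb{P}(X_n(p_1)=k)=\Theta(1/n)$. The consecutive ratios $r_m=\tfrac{\mathbb{P}(X_n(p_1)=m+1)}{\mathbb{P}(X_n(p_1)=m)}=\tfrac{n-m}{m+1}\cdot\tfrac{p_1}{1-p_1}$ decrease in $m$, and writing $p_1=\tilde p-\delta$ with $\delta=\Theta(\sqrt{\log n/n})$ and $\tilde p$ bounded away from $0,1$, one computes $r_k=1-\Theta(\delta)$. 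The upper bound is then $\mathbb{P}(X_n(p_1)\ge k)\le \mathbb{P}(X_n(p_1)=k)\sum_{j\ge 0}r_k^{\,j}=\Theta(1/n)\cdot\Theta(1/\delta)=\Theta(1/\sqrt{n\log n})$, and for the matching lower bound I restrict the sum to $m\in[k,k+L]$ with $L=\Theta(1/\delta)=\Theta(\sqrt{n/\log n})$, check that $r_m\ge r_{k+L}=1-\Theta(\delta)$ on that window (the perturbation $L/n=\Theta(1/\sqrt{n\log n})$ is $o(\delta)$), and use $\sum_{j=0}^{L}r_{k+L}^{\,j}=\tfrac{1-r_{k+L}^{L+1}}{1-r_{k+L}}=\Theta(1/\delta)$ since $r_{k+L}^{L+1}$ stays bounded away from $1$; this gives $\mathbb{P}(X_n(p_1)\ge k)\ge \Theta(1/n)\cdot\Theta(1/\delta)=\Theta(1/\sqrt{n\log n})$. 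The claim for $p_2$ follows by the reflection $X_n\mapsto n-X_n$, $p\mapsto 1-p$, $k\mapsto n-k+1$, which turns $\mathbb{P}(X_n(p_2)<k)$ into a right-tail probability of exactly the form just handled (the reflected pivot is $1-\tilde p$ and $1-p_2$ sits $\delta$ below it, and the reflected version of $g$ evaluated there again equals $g(p_2)=\Theta(1)$), so $\mathbb{P}(X_n(p_2)<k)=\Theta(1/\sqrt{n\log n})$ and hence $\mathbb{P}(X_n(p_2)\ge k)=1-\Theta(1/\sqrt{n\log n})$. The delicate bookkeeping throughout (g) is keeping two-sided control of the ratios over a window of exactly the right length $\Theta(\sqrt{n/\log n})$ and checking that the subleading perturbations ($L/n$ against $\delta$, and $k/n$ against $\tilde p$) never disturb the $\Theta$-orders.
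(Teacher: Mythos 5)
Your proposal follows essentially the same route as the paper's proof: the order-statistics identity for (a) (the paper does iterated integration by parts, which yields the same formula), direct differentiation for (b)--(c), a Gaussian envelope on the logarithmic derivative for (d), Stirling for (e), IVT plus bell-curve localization for (f), and the geometric-series tail estimate anchored at $\mathbb{P}(X_n(p_1)=k)=\Theta(1/n)$ for (g), with your explicit reflection argument for $p_2$ simply filling in a step the paper omits. Your observation that direct integration of the log-derivative yields constants $2a,2b$ in the exponents of (d) rather than the stated $a,b$ is accurate and correctly identified as immaterial, since only the $\Theta$-orders are used downstream.
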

\begin{proof}{Proof}
(a) This could be done via iterative integration by parts. For the completeness, we present the process here:
\begin{align*}
\int_0^p \mathbb{P}(X_{n-1}(u) = k - 1) \mathrm{d} u & = \int_0^p \frac{(n-1)!}{(k-1)!(n-k)!} \cdot u^{k-1}(1-u)^{n-k} \mathrm{d} u\\
& = \frac{(n-1)!}{k!(n-k)!} \cdot p^{k}(1-p)^{n-k} + \int_0^p \frac{(n-1)!}{(k)!(n-k-1)!} \cdot u^{k}(1-u)^{n-k-1} \mathrm{d} u\\
& = \frac{(n-1)!}{k!(n-k)!} \cdot p^{k}(1-p)^{n-k} + \frac{(n-1)!}{(k+1)!(n-k-1)!} \cdot p^{k+1}(1-p)^{n-k-1} \\
& \phantom{=} + \int_0^p \frac{(n-1)!}{(k+1)!(n-k-2)!} \cdot u^{k+1}(1-u)^{n-k-2} \mathrm{d} u \\
& = \cdots\\
& = \frac{1}{n}\sum_{j=k}^{n} \mathbb{P}(X_n(p) = j).
\end{align*}

(b) Taking the derivative with respect to $p$ on both sides of (a) yields the result.

(c) Taking the derivative with respect to $p$ on both sides of (b), we have
\begin{align*}
\frac{\partial^2}{\partial p^2} \mathbb{P}(X_n(p) \geq k) &= n \cdot \frac{(n-1)!}{(k-1)!(n-k)!} \cdot \frac{\partial}{\partial p} (p^{k-1} (1-p)^{n-k})\\
& = n \cdot \frac{(n-1)!}{(k-1)!(n-k)!} \cdot \left[(k-1)(1-p) - (n-k)p\right] \cdot p^{k-2}(1-p)^{n-k-1} \\
& = n \cdot \frac{(n-1)!}{(k-1)!(n-k)!} \cdot \left[(k-1) - (n-1)p\right] \cdot p^{k-2}(1-p)^{n-k-1} \\
& = n \cdot \frac{(k-1) - (n-1)p}{p(1-p)} \cdot \mathbb{P}(X_{n-1}(p) = k-1)\\
& = \frac{(k-1) - (n-1)p}{p(1-p)} \cdot \frac{\partial}{\partial p} \mathbb{P}(X_n(p) \geq k).
\end{align*}

(d) By the assumption that $a \leq (1-p)p$ and (c), for $p \geq \tilde{p}$ we have
\[
\frac{\partial^2}{\partial p^2} \mathbb{P}(X_n(p) \geq k) \geq \frac{n-1}{a} \cdot (\tilde{p} - p) \cdot \frac{\partial}{\partial p} \mathbb{P}(X_n(p) \geq k).
\]
Multiplying both sides with $\exp(\frac{n-1}{a}(p - \tilde{p})^2)$, we have
\[
\frac{\partial }{\partial p} \left[\exp\left(\frac{n-1}{a}(p - \tilde{p})^2\right) \cdot \frac{\partial }{\partial p} \mathbb{P}(X_n(p) \geq k)\right] \geq 0.
\]
Thus,
\[
\exp\left(\frac{n-1}{a}(p - \tilde{p})^2\right) \cdot \frac{\partial }{\partial p} \mathbb{P}(X_n(p) \geq k) \geq \frac{\partial }{\partial p} \mathbb{P}(X_n(p) \geq k) \bigg\vert_{p=\tilde{p}}.
\]
For the part $p \leq \tilde{p}$, the conclusion could be derived similarly. As for the second inequality, it follows from the assumption $(1-p)p \leq b$ and a similar argument.

(e) The monotonicity follows from (c). If $\frac{k}{n} = c + \mathcal{O}(\frac{1}{n})$ for some $c \in (0, 1)$, then the maximum can be computed as
\begin{align*}
\frac{\partial }{\partial p} \mathbb{P}(X_n(p) \geq k)\bigg\vert_{p = \tilde{p}} & = n \cdot \frac{(n-1)!}{(k-1)!(n-k)!} \cdot \tilde{p}^{k-1} (1-\tilde{p})^{n-k} \\
& = n \cdot \frac{(n-1)!}{(n-1)^{n-1}} \cdot \frac{(k-1)^{k-1}}{(k-1)!} \cdot \frac{(n-k)^{n-k}}{(n-k)!}\\
& = (1+o(1)) \cdot n \cdot \sqrt{2\pi n} \cdot \frac{1}{\sqrt{2\pi c n}} \cdot \frac{1}{\sqrt{2\pi(1-c)n}} \quad \text{by Stirling's approximation}\\
& = (1+o(1)) \cdot \frac{\sqrt{n}}{\sqrt{2\pi c(1-c)}}.
\end{align*}

(f) From the fact that the peak value of $\frac{\partial}{\partial p} \mathbb{P}(X_n(p) \geq k)$ is of order $\Theta(\sqrt{n})$ and (d)'s bell curve approximation, we have the following fact:\\
If $\frac{\partial}{\partial p} \mathbb{P}(X_n(p) \geq k)$ is of $\Theta(1)$ value, then $|p - \tilde{p}|$ must be of order $\Theta(\frac{\sqrt{(1-\tilde{p}){\tilde{p}} \log n}}{\sqrt{n}})$. In other words, if $\frac{\partial}{\partial p} \mathbb{P}(X_n(p) \geq k)$ intersects with a Lipschitz function $f$ independent of $n$, then the intersection must happens at a distance of $\Theta(\frac{\sqrt{(1-\tilde{p}){\tilde{p}} \log n}}{\sqrt{n}})$ from $\tilde{p}$.

From part (c), we see that the derivative of $\frac{\partial}{\partial p} \mathbb{P}(X_n(p) \geq k)$ (that is, the second-order derivative $\frac{\partial^2}{\partial p^2} \mathbb{P}(X_n(p) \geq k)$) must be of order $\Theta(\sqrt{n \log n})$, either positive if $p < \tilde{p}$ or negative if $p > \tilde{p}$. Since the derivative of $f$ is at most $\mathcal{O}(1)$, the intersection must happen at most twice (once on the left side of $\tilde{p}$ and once on the right).

We now prove that the intersection must happen at least twice. First, observe that
\[
\frac{\partial}{\partial p} \mathbb{P}(X_n(p) \geq k) \bigg\vert_{p = 0} = 0 < f(0),
\]
and
\[
\frac{\partial}{\partial p} \mathbb{P}(X_n(p) \geq k) \bigg\vert_{p = 1} = 0 < f(1).
\]
For large enough $n$, 
\[
\frac{\partial}{\partial p} \mathbb{P}(X_n(p) \geq k) \bigg\vert_{p = \tilde{p}} = \Theta(\sqrt{n}) > f(\tilde{p}).
\]
By the continuity of $\frac{\partial}{\partial p} \mathbb{P}(X_n(p) \geq k)$ and $f$, we must have at least two intersections.

(g) First, observe that
\begin{align*}
\mathbb{P}(X_n(p_1) = k) & = p_1 \cdot \mathbb{P}(X_{n-1}(p_1) = k-1) + (1-p_1) \cdot \mathbb{P}(X_{n-1}(p_1) = k)\\
& = p_1 \cdot \mathbb{P}(X_{n-1}(p_1) = k-1) + (1-p_1) \cdot \frac{p_1}{1-p_1} \cdot \frac{n-k}{k} \cdot \mathbb{P}(X_{n-1}(p_1) = k-1)\\
& = \frac{p_1 n}{k} \cdot \mathbb{P}(X_{n-1}(p_1) = k-1)\\
& = \frac{p_1}{k} \cdot \frac{\partial }{\partial p}\mathbb{P}(X_n(p_1) \geq k)\\
& = \frac{p_1}{k} \cdot f(p_1) = \Theta\left(\frac{1}{n}\right).
\end{align*}
For $k \leq j \leq k + \lceil C\sqrt{n \log n}\rceil$, we have
\[
\frac{\mathbb{P}(X_n(p_1) = j+1)}{\mathbb{P}(X_n(p_1) = j)} = \frac{p_1}{1-p_1} \cdot \frac{n-j}{j+1} = 1 - \Theta\left(\sqrt{\frac{\log n}{n}}\right).
\]
Hence,
\begin{align*}
\sum_{j=k}^{n} \mathbb{P}(X_n(p_1) = j) & \geq \sum_{j=k}^{k + \lceil C\sqrt{n \log n}\rceil} \mathbb{P}(X_n(p_1) = j)\\
& \geq \frac{\mathbb{P}(X_n(p_1) = k) \cdot \left(1-\left(\sup_{k\leq j\leq k + \lceil C\sqrt{n \log n}\rceil - 1} \frac{\mathbb{P}(X_n(p_1) = j+1)}{\mathbb{P}(X_n(p_1) = j)}\right)^{\lceil C\sqrt{n \log n}\rceil + 1}\right)}{1-\inf_{k\leq j\leq k + \lceil C\sqrt{n \log n}\rceil - 1} \frac{\mathbb{P}(X_n(p_1) = j+1)}{\mathbb{P}(X_n(p_1) = j)}}\\
& = \frac{\mathbb{P}(X_n(p_1) = k) \cdot (1-\exp(-\Theta(1)))}{\Theta\left(\sqrt{\frac{\log n}{n}}\right)}\\
& = \Theta\left(\frac{1}{\sqrt{n \log n}}\right).
\end{align*}
For the upper bound, it is easy to see that
\begin{align*}
\sum_{j=k}^{n} \mathbb{P}(X_n(p_1) = j) 
& \leq \frac{\mathbb{P}(X_n(p_1) = k)}{1-\sup_{k\leq j\leq n -1} \frac{\mathbb{P}(X_n(p_1) = j+1)}{\mathbb{P}(X_n(p_1) = j)}}\\
& = \Theta\left(\frac{1}{\sqrt{n \log n}}\right).
\end{align*}
The conclusion can also be similarly derived for $p_2$, which we omit for simplicity.
 \end{proof}

\section{Appendix for Numerical Experiments}

\subsection{Experiments Setup and Additional Results}
\subsubsection{Dataset and Preference Probability}\
\label{appx: dataset}

\medskip
Throughout this work, we use four preference datasets: PKU-SafeRLHF (PKU) \citep{ji2024pku}, Helpsteer \citep{wang2023helpsteer}, UltraFeedback (Ultra) \citep{cui2023ultrafeedback}, and Skywork-Reward-Preference-80K-v0.2 (Skywork) \citep{liu2024skywork}:

\begin{itemize}
\item \textbf{PKU:} \textit{PKU-SafeRLHF} consists of 83K preference entries annotated along two dimensions: harmlessness and helpfulness. In our experiments, we use the preference model \textit{beaver-7b-v1.0-reward} \citep{dai2024safe} to annotate data pairs and compute the preference probability.

\item \textbf{Helpsteer:} \textit{Helpsteer} contains 37K samples. Each response is annotated by humans on five attributes ranging from 0 to 4, with higher scores indicating better performance. We use the average score across these five attributes as the chosen score to compute the preference probability.

\item \textbf{Ultra:} \textit{UltraFeedback} is annotated by GPT-4 using a fine-grained instruction covering four aspects: instruction-following, truthfulness, honesty, and helpfulness. GPT-4 provides a chosen score for each response, which we directly use to compute the preference probability.
\item \textbf{Skywork:} \textit{Skywork-Reward-Preference-80K-v0.2} is a curated subset of publicly available preference data and contains 77K samples. In our experiments, we use the preference model \textit{Skywork-Reward-Gemma-2-27B-v0.2} \citep{liu2024skywork} to annotate data pairs and derive the preference probability.
\end{itemize}

\noindent \textbf{Preference Model Calibration} To address the potential overconfidence of the predicted preference probabilities from open-source preference models \citep{guo2017calibration}, we calibrate these models using \textit{Histogram Binning} \citep{zadrozny2001obtaining} with half of the dataset. Specifically, we create 30 bins and define their boundaries such that each bin contains an equal number of samples. We then set the calibrated preference probability for each bin to the average proportion of positive samples (i.e., $y_1$ is the preferred response) within that bin. Figure \ref{fig:appx_cal} compares (out-of-sample) performance before and after calibration for the PKU and Skywork datasets. For more details on calibration methods, we refer readers to the survey \cite{silva2023classifier}.

\begin{figure}[ht!]
  \centering
  \begin{minipage}[b]{0.46\textwidth}
    \centering
\includegraphics[width=\textwidth]
{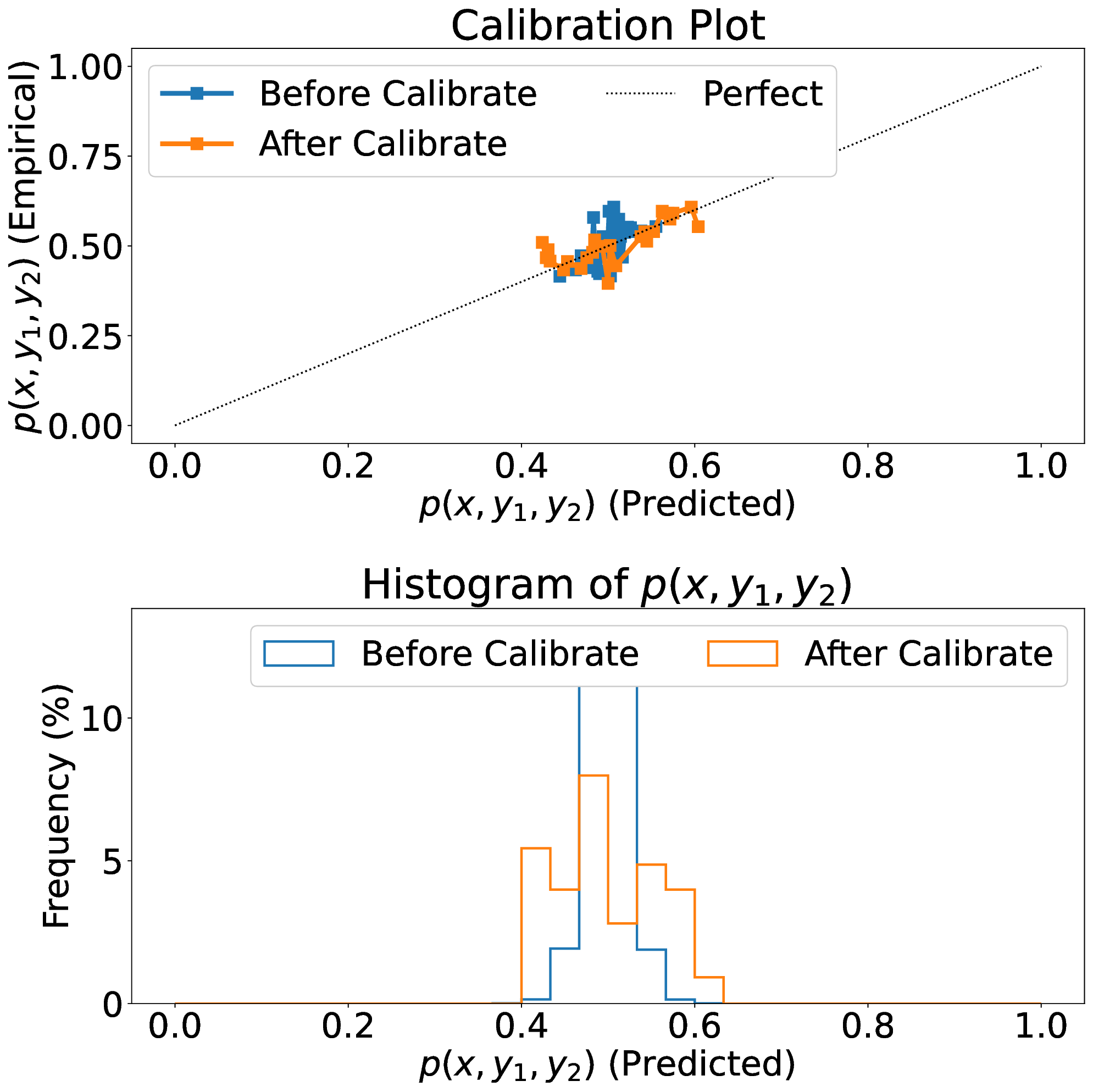}
    \vspace{2mm} {\small (a) PKU}
  \end{minipage}
    \hfill
  \begin{minipage}[b]{0.46\textwidth}
    \centering
\includegraphics[width=\textwidth]
{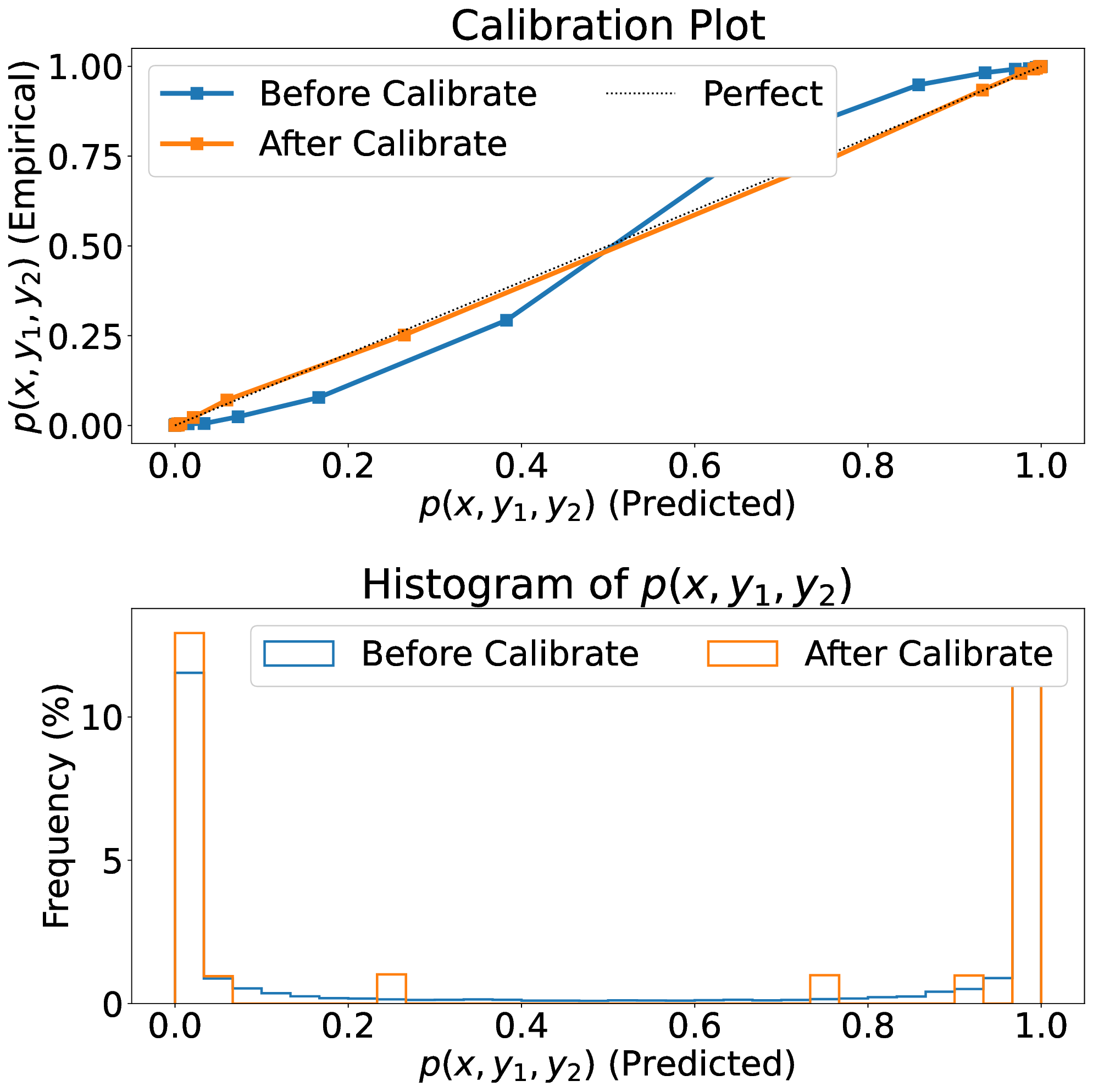}
    \vspace{2mm} {\small (b) Skywork}
  \end{minipage}
  \caption{Calibration for two datasets.  
(Top row) Empirical preference probability $p(x,y_1,y_2)$ vs. the predicted probability before and after calibration. The dashed line ($x=y$) represents perfect alignment between predictions and empirical observations.  
(Bottom row) Histogram of the (predicted) preference probability $p(x,y_1,y_2)$ before and after calibration. We can see the calibration procedure improves alignment between the predicted probabilities and the empirical observations for both datasets. }
\label{fig:appx_cal}
\end{figure}
\subsubsection{Setup and more experiments for Figure \ref{fig:main_hist_LB}}\
\label{appx:fig_hist_LB}

\medskip
\textbf{Setup:} We use half of each (calibrated) dataset (specifically, the part not used during the calibration step or randomly sampled data if there is no calibration)  to plot the histogram of the probability 
\[
\mathbb{P}\bigl(y_{\text{chosen}} \succ y_{\text{rejected}} \mid x\bigr),
\]
which is equivalent to 
\[
\mathbb{P}\bigl(y_{1} \succ y_{2} \mid x\bigr)
\]
if we always order the pair so that $\mathbb{P}\bigl(y_{1} \succ y_{2} \mid x\bigr) \ge 1/2$. In addition, we plot the lower bound on the sum of the two types of errors, as stated in Proposition~\ref{prop:info_lower_bound}, for various choices of $\eta_0$, $\eta_1$, and $n$.\\

\noindent \textbf{More results:} Figure~\ref{fig:appx_sup_1} complements Figure~\ref{fig:main_hist_LB} by further computing the lower bound from Proposition~\ref{prop:info_lower_bound} for $\eta_1 \in \{0.98,\, 0.96\}$. The results show that a lower $\eta_1$ yields a higher lower bound. Intuitively, a lower $\eta_1$ (or a higher $\eta_0$) causes the induced distributions on $\mathcal{D}_{\text{test}}$ corresponding to $\eta_1$ and $\eta_0$ to be more similar, thereby complicating the hypothesis testing.

\begin{figure}[ht!]
  \centering
  \begin{minipage}[b]{0.23\textwidth}
    \centering
\includegraphics[width=\textwidth]
{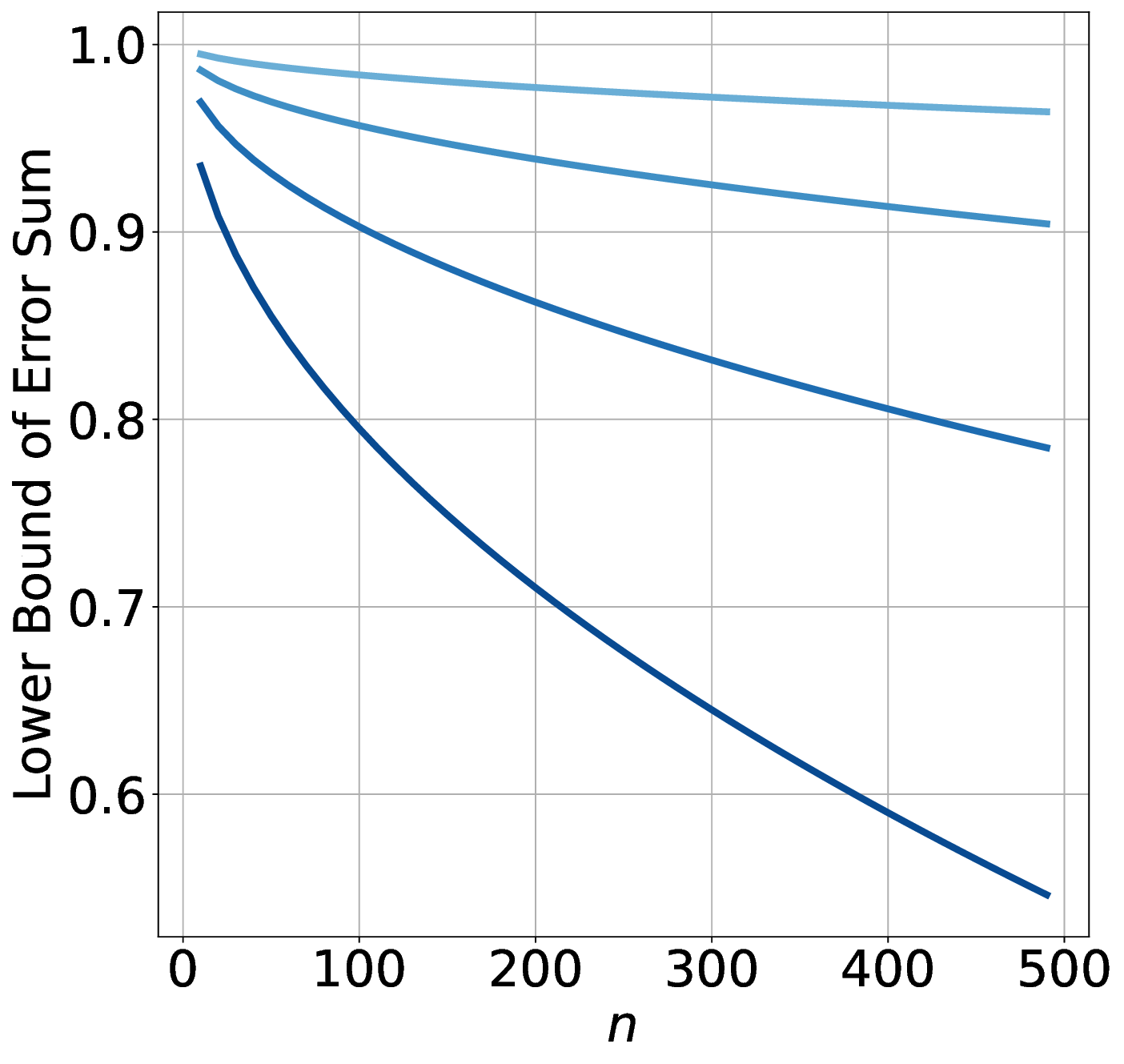}
    \vspace{2mm} {\small PKU, $\eta_1=0.98$}
  \end{minipage}
    \hfill
  \begin{minipage}[b]{0.23\textwidth}
    \centering
\includegraphics[width=\textwidth]
{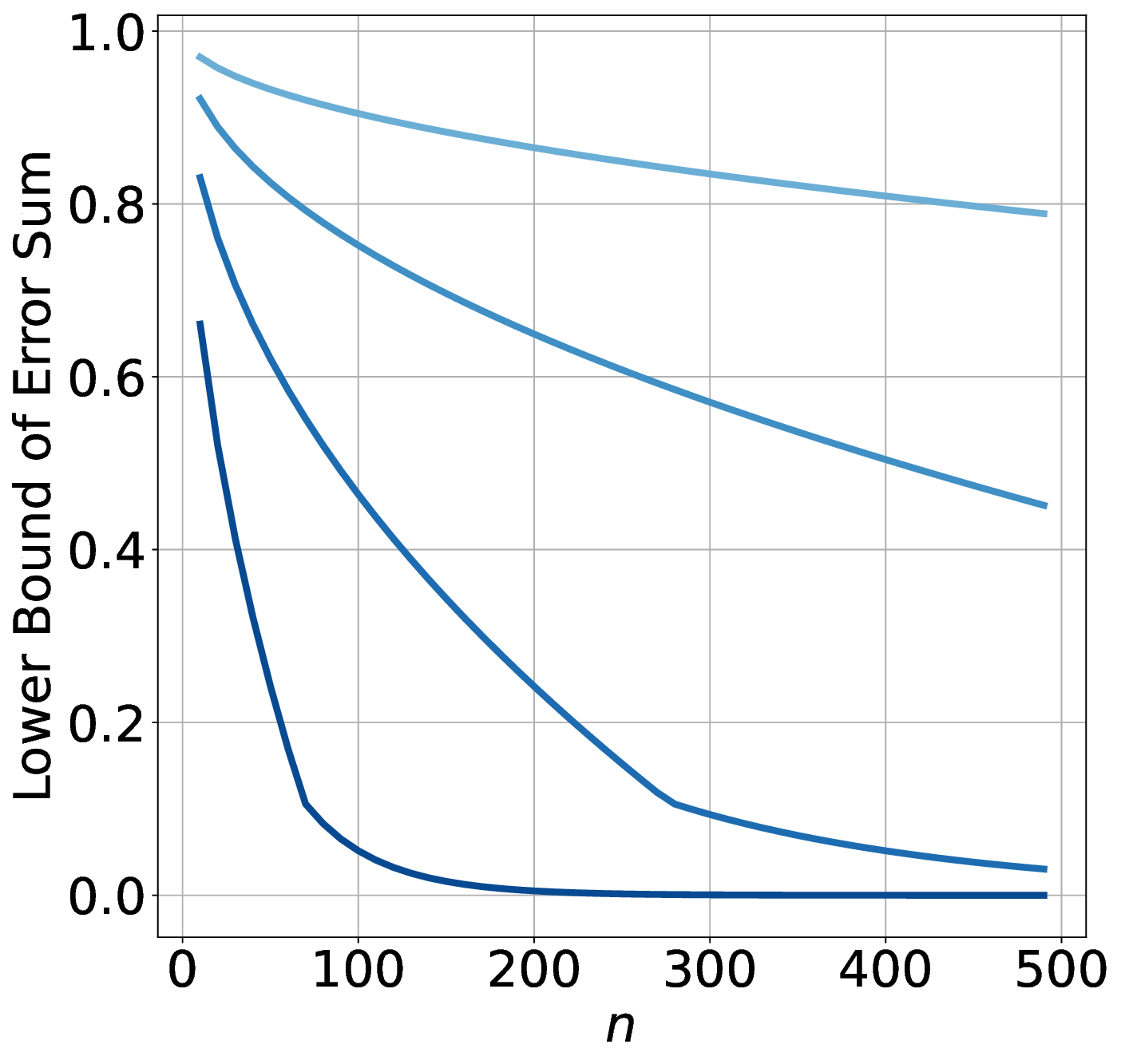}
    \vspace{2mm} {\small Helpsteer, $\eta_1=0.98$}
  \end{minipage}
    \hfill
    \begin{minipage}[b]{0.23\textwidth}
    \centering
\includegraphics[width=\textwidth]
{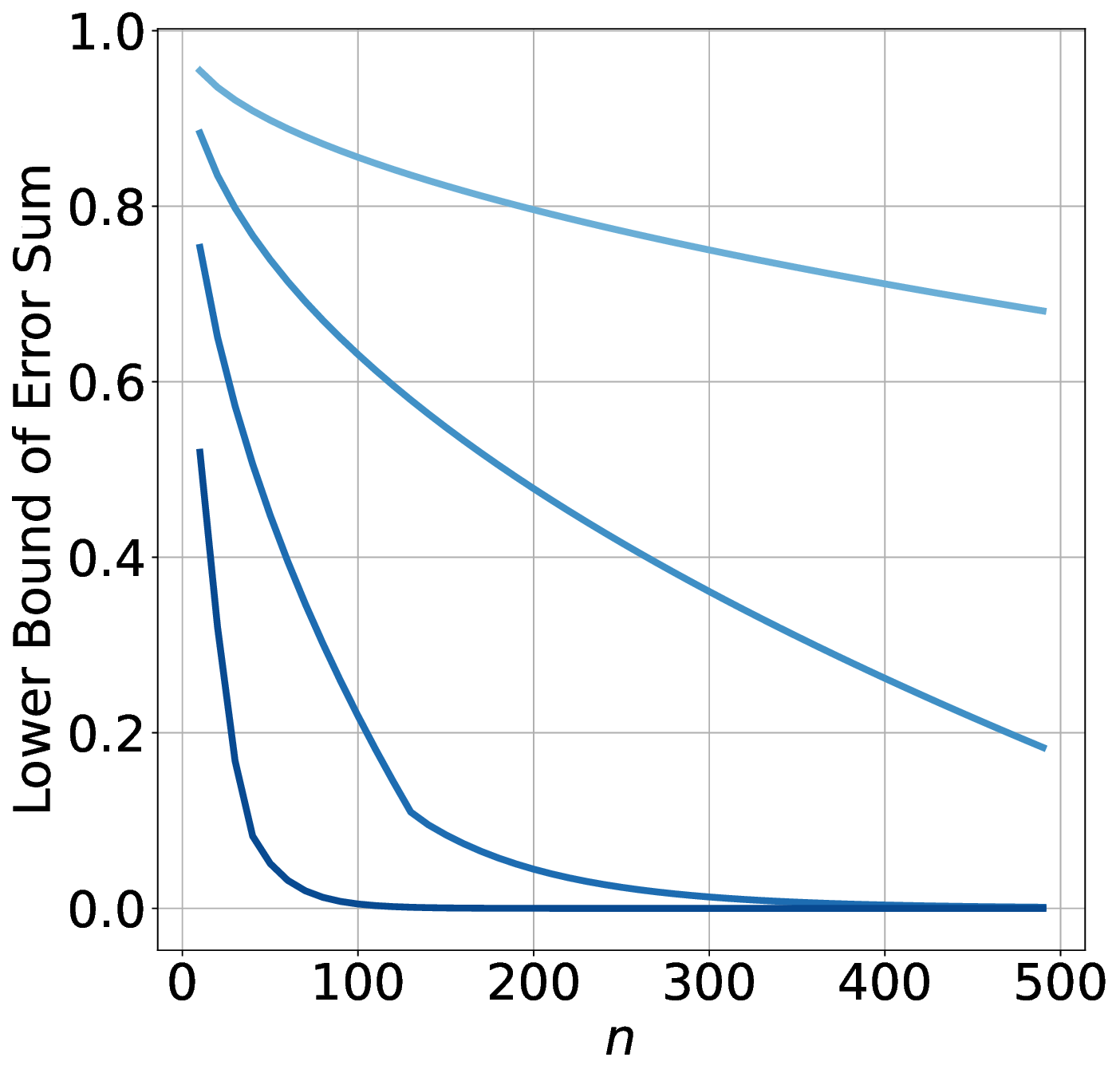}
    \vspace{2mm} {\small Ultra, $\eta_1=0.98$}
  \end{minipage}
  \hfill  
  \begin{minipage}[b]{0.23\textwidth}
    \centering
\includegraphics[width=\textwidth]
{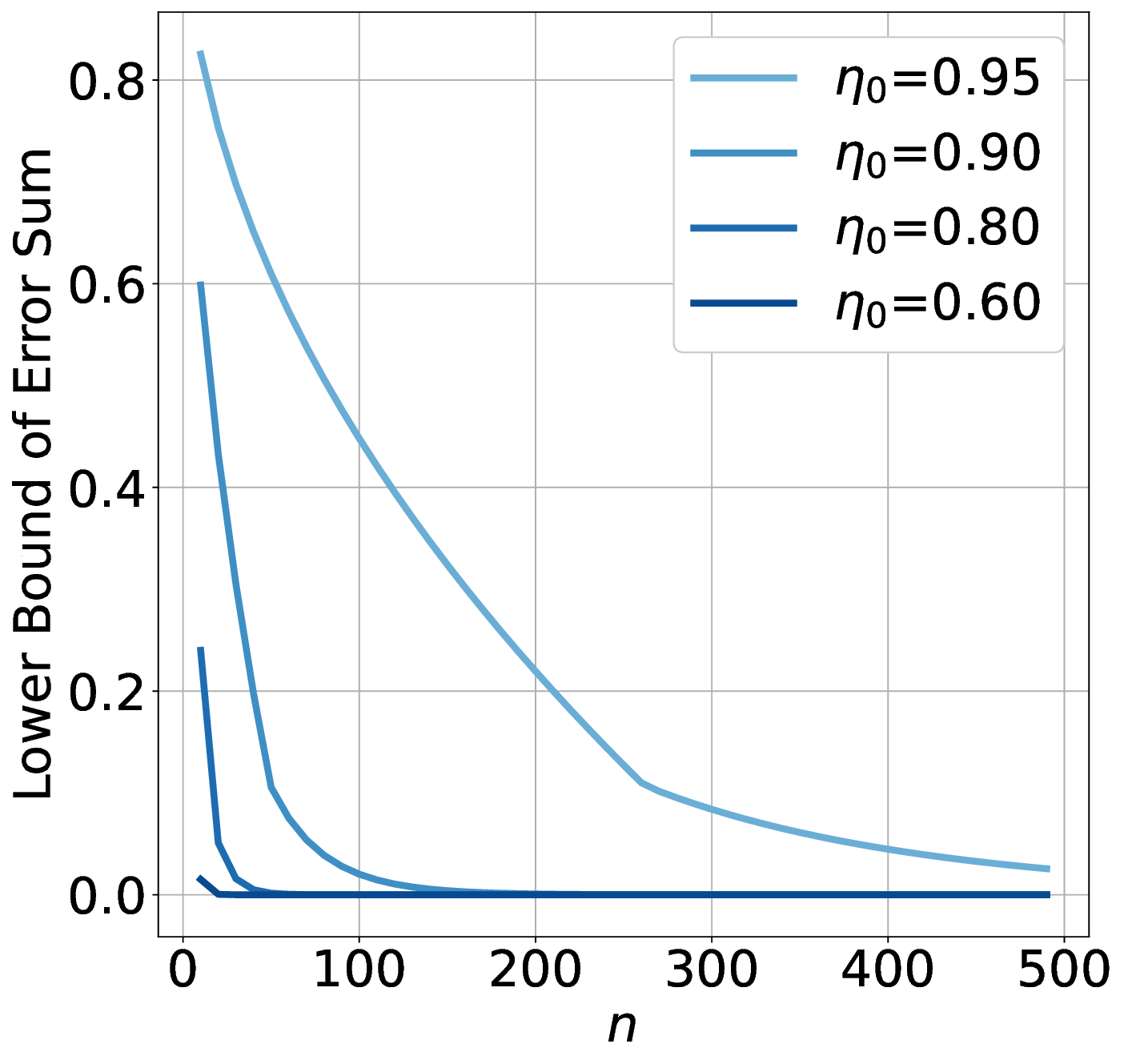}
    \vspace{2mm} {\small Skywork, $\eta_1=0.98$}
  \end{minipage}

  \begin{minipage}[b]{0.23\textwidth}
    \centering
\includegraphics[width=\textwidth]
{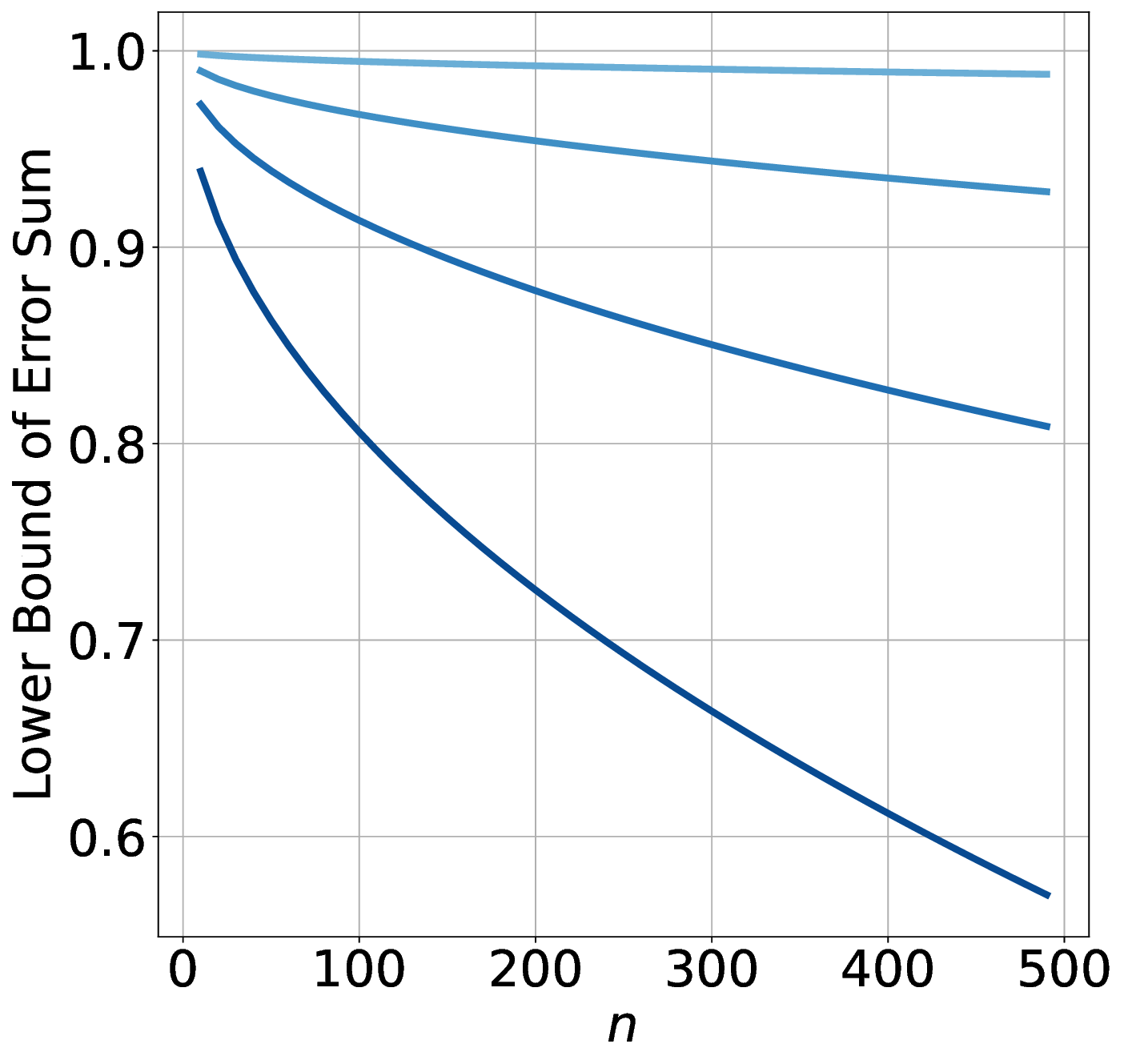}
    \vspace{2mm} {\small PKU, $\eta_1=0.96$}
  \end{minipage}
    \hfill
  \begin{minipage}[b]{0.23\textwidth}
    \centering
\includegraphics[width=\textwidth]
{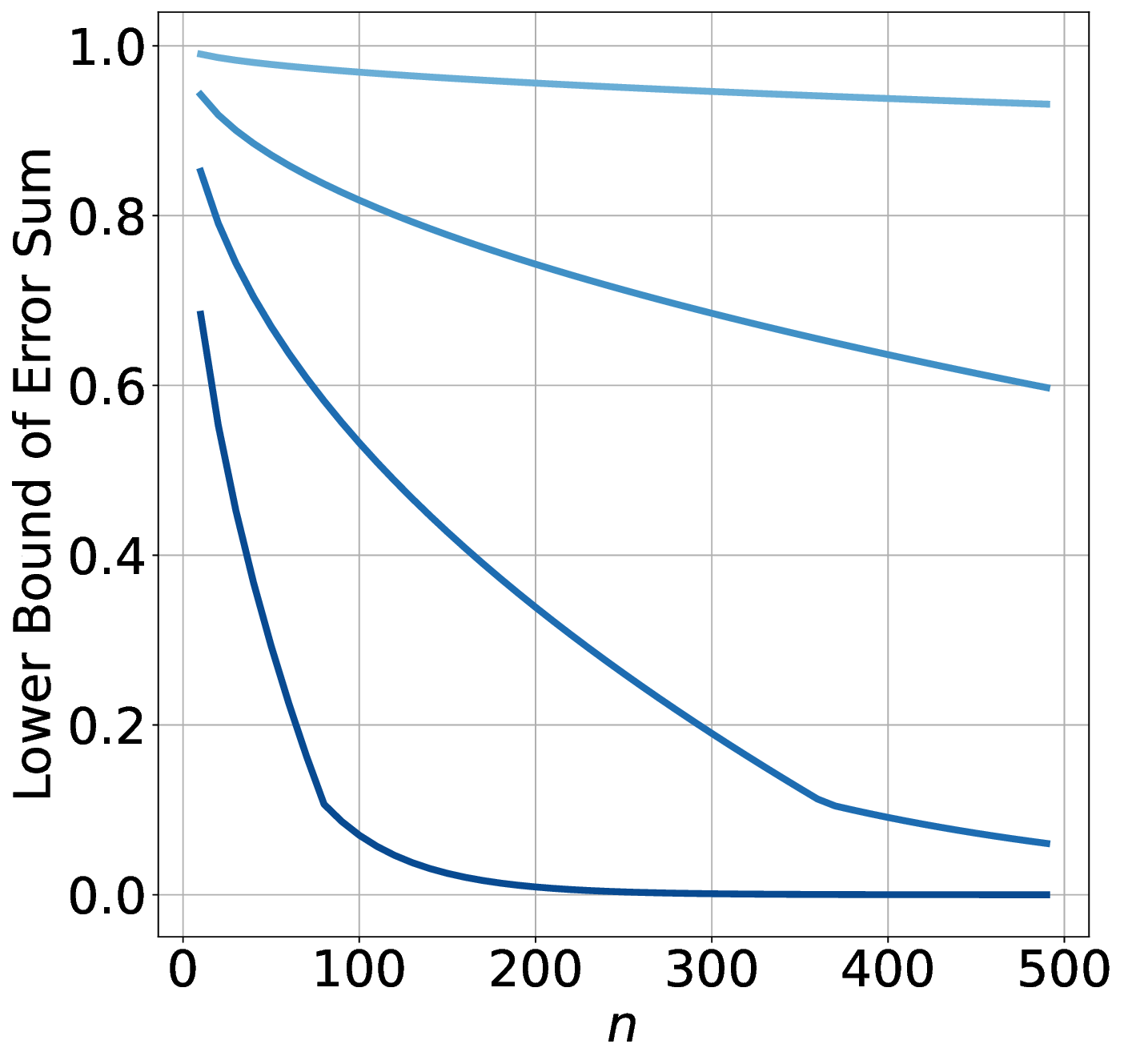}
    \vspace{2mm} {\small Helpsteer, $\eta_1=0.96$}
  \end{minipage}
    \hfill
    \begin{minipage}[b]{0.23\textwidth}
    \centering
\includegraphics[width=\textwidth]
{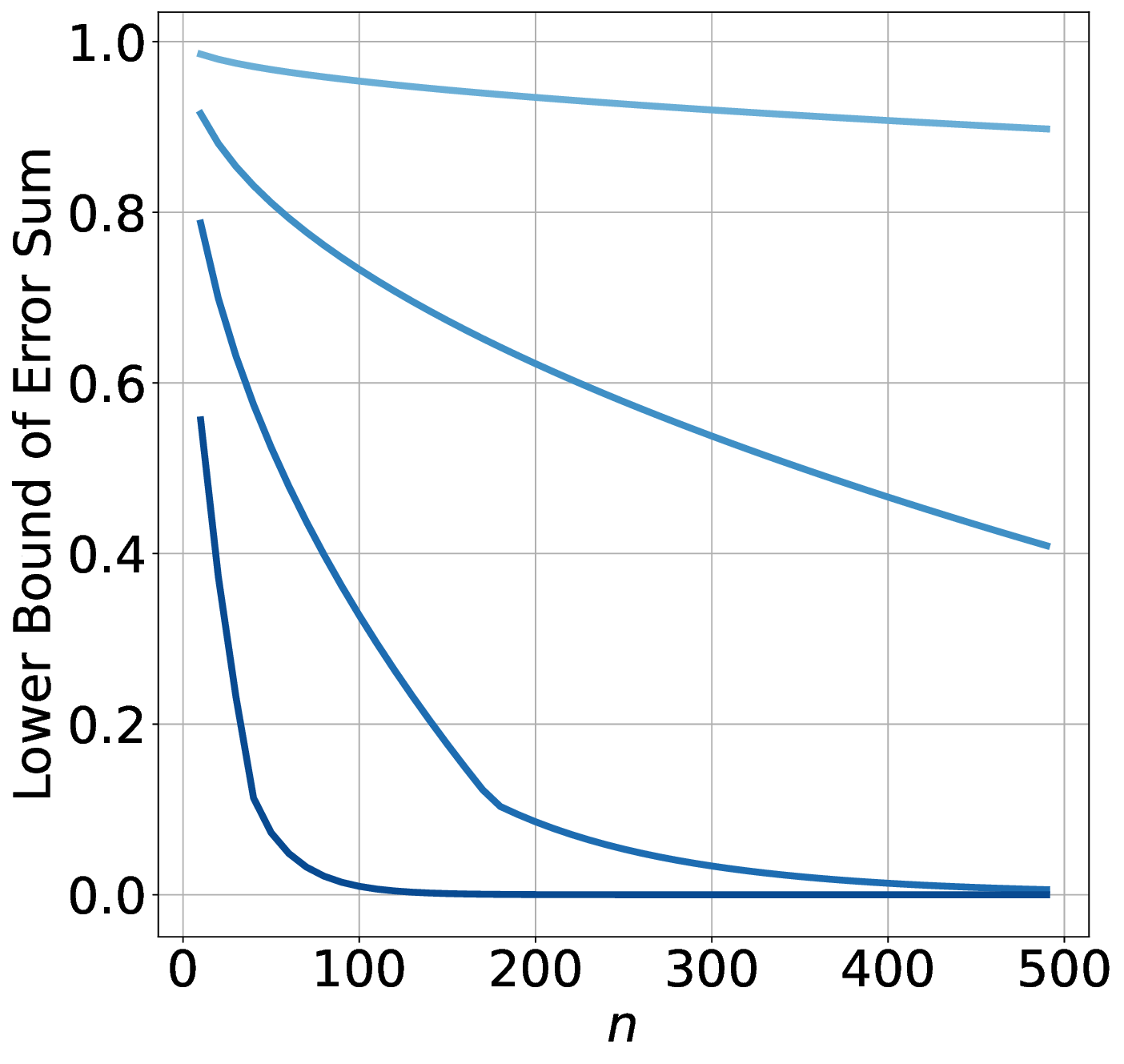}
    \vspace{2mm} {\small Ultra, $\eta_1=0.96$}
  \end{minipage}
  \hfill  
  \begin{minipage}[b]{0.23\textwidth}
    \centering
\includegraphics[width=\textwidth]
{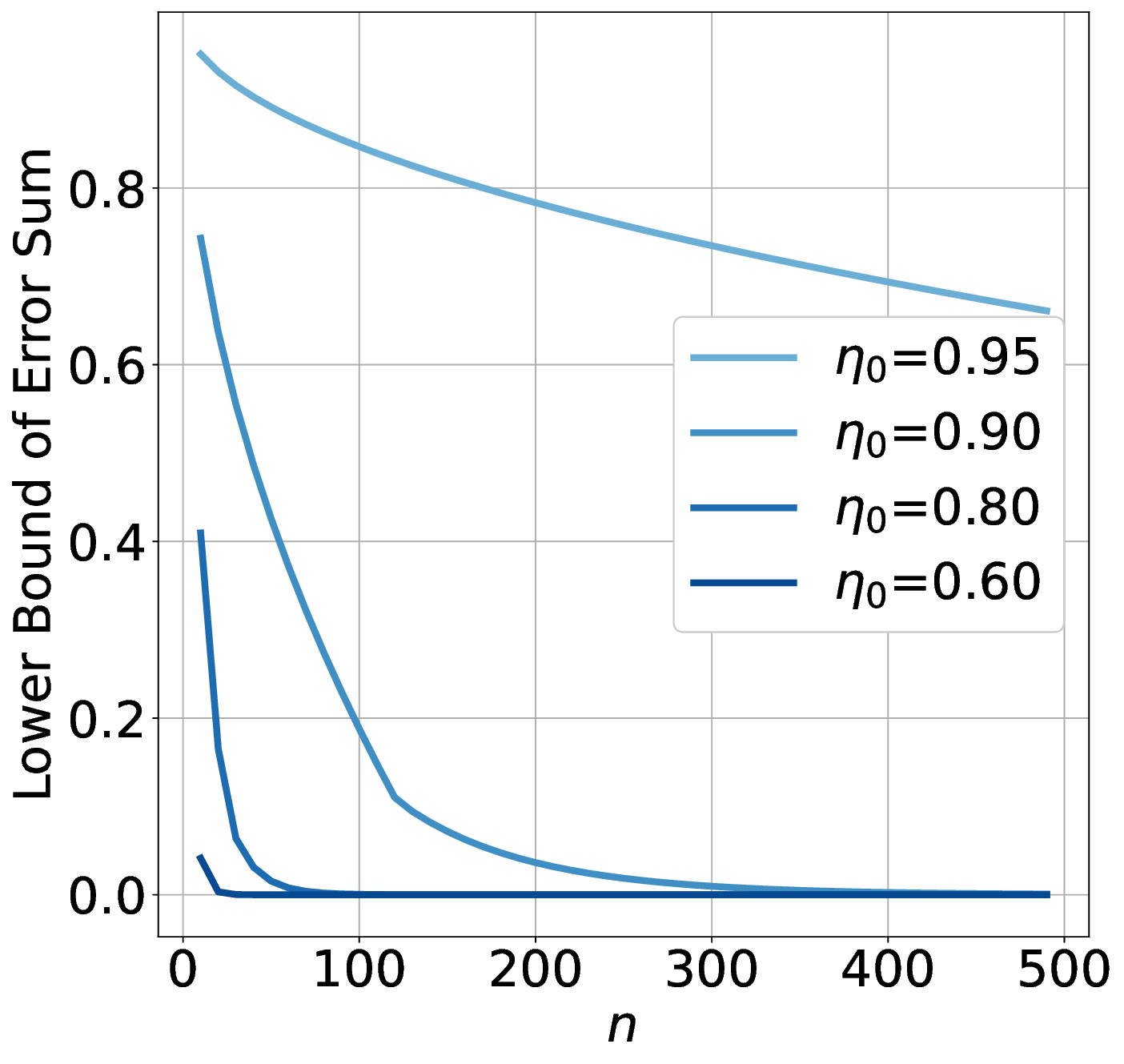}
    \vspace{2mm} {\small Skywork, $\eta_1=0.96$}
  \end{minipage}
  \caption{Additional results for Figure \ref{fig:main_hist_LB}: Lower bound of the sum of two types of errors from Proposition \ref{prop:info_lower_bound} vs. the number of tested annotations $n$ at different  $\eta_0$ with $\eta_1\in \{0.98, 0.96\}$.}
\label{fig:appx_sup_1}
\end{figure}

\subsubsection{Setup and more experiments for Figure \ref{fig:main_self_UB}}\
\label{appx:fig_self_UB}

\medskip
\textbf{Setup:} The lower bound on the two types of errors for the expert-based model is derived in the same manner as in Figure~\ref{fig:main_hist_LB}. For the upper bound of the self-consistency monitoring, we simulate the two types of errors using a likelihood-ratio test with a threshold of 1. Specifically, the test compares  $\mathbb{P}(\bm{A}\mid\delta_0)$ and $\mathbb{P}(\bm{A}\mid\delta_1)$.
If $
\mathbb{P}(\bm{A}\mid\delta_0) \ge \mathbb{P}(\bm{A}\mid\delta_1),$
we decide that $\bm{A}$ is generated from $H_0$, i.e., $\eta\leq \eta_0$
otherwise, we decide that it is generated from $H_1$, i.e., $\eta\geq \eta_1$.

To compute the two types of errors in our experiments, given $n$, $\eta_0$, $\eta_1$, and $\delta$, we perform $M = 10000$ trials as follows. In each trial, we sample $
\bm{A}^0 = \bigl(A^{0}_1, \ldots, A^{0}_n\bigr)$ with each $A_i^0$
i.i.d.\ from a Bernoulli distribution with mean $\eta_0(1-\delta)/2 + 1/2$. We then compute the frequency (over $M$ trials) with which 
$\mathbb{P}(\bm{A}^0\mid\delta_0) < \mathbb{P}(\bm{A}^0\mid\delta_1)$, and designate this frequency as the (simulated) Type-I error. Similarly, we perform another $M = 10000$ trials, where in each trial we sample $\bm{A}^1 = \bigl(A^{1}_1, \ldots, A^{1}_n\bigr)$ with $A^1_i$ i.i.d.\ from a Bernoulli distribution with mean $\eta_1(1-\delta)/2 + 1/2$. We then compute the frequency with which $
\mathbb{P}(\bm{A}^1\mid\delta_0) \ge \mathbb{P}(\bm{A}^1\mid\delta_1)$,
and designate this frequency as the (simulated) Type-II error. The sum of these two error frequencies yields the upper bound on the overall error for the self-consistency monitoring approach.

\textbf{More results:} Figure~\ref{fig:appx_self_UB} complements Figure~\ref{fig:main_self_UB} by further comparing self-consistency monitoring (upper bound) with expert-based monitoring (lower bound) at $\eta_1 = 0.95$. The results indicate that self-consistency monitoring continues to outperform expert-based monitoring over a wide range of $n$, underscoring its potential advantages in various settings.

\begin{figure}[ht!]
  \centering
  \begin{minipage}[b]{0.23\textwidth}
    \centering
\includegraphics[width=\textwidth]
{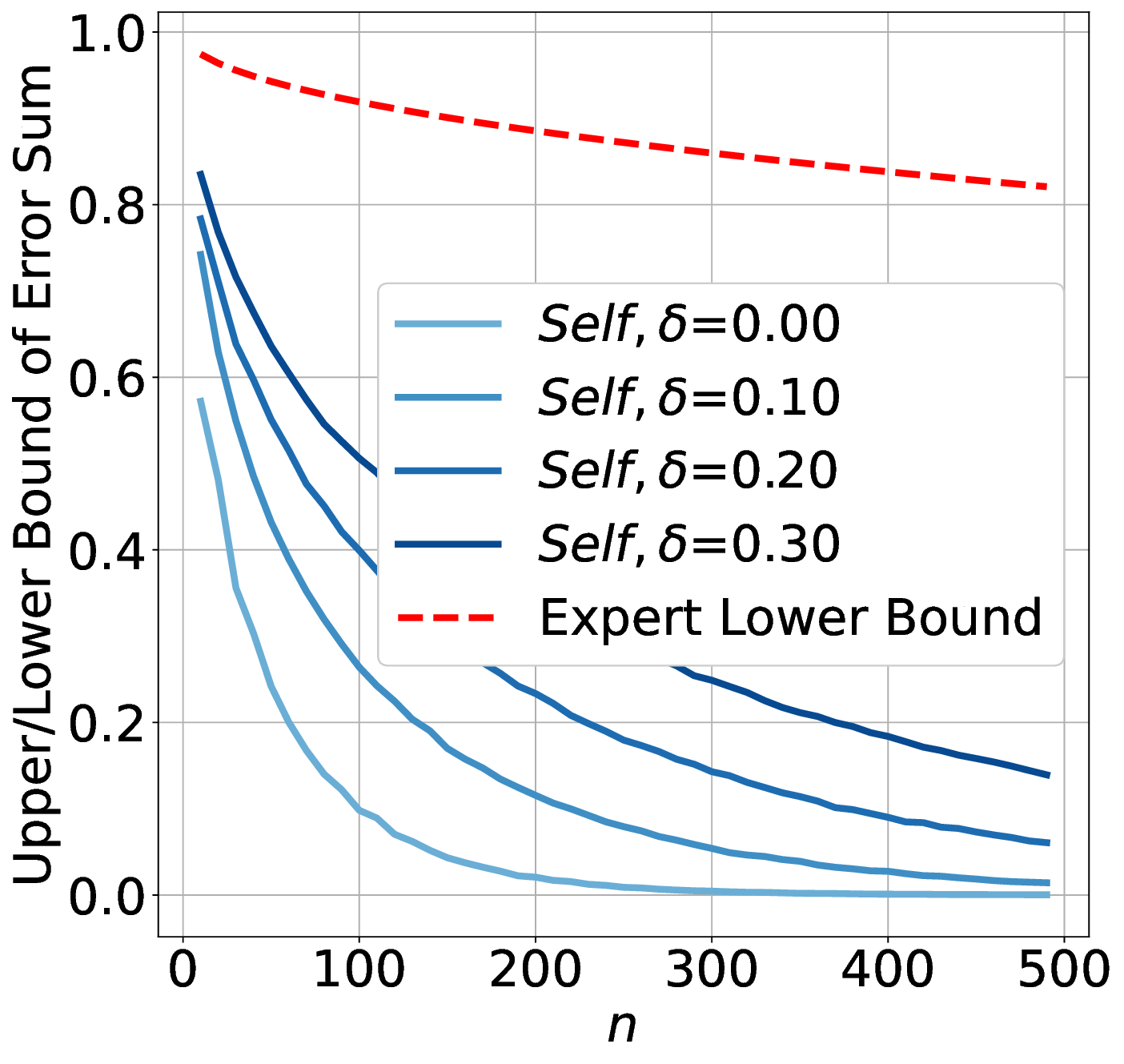}
    \vspace{2mm} {\small PKU, $\eta_0=0.8$}
  \end{minipage}
    \hfill
  \begin{minipage}[b]{0.23\textwidth}
    \centering
\includegraphics[width=\textwidth]
{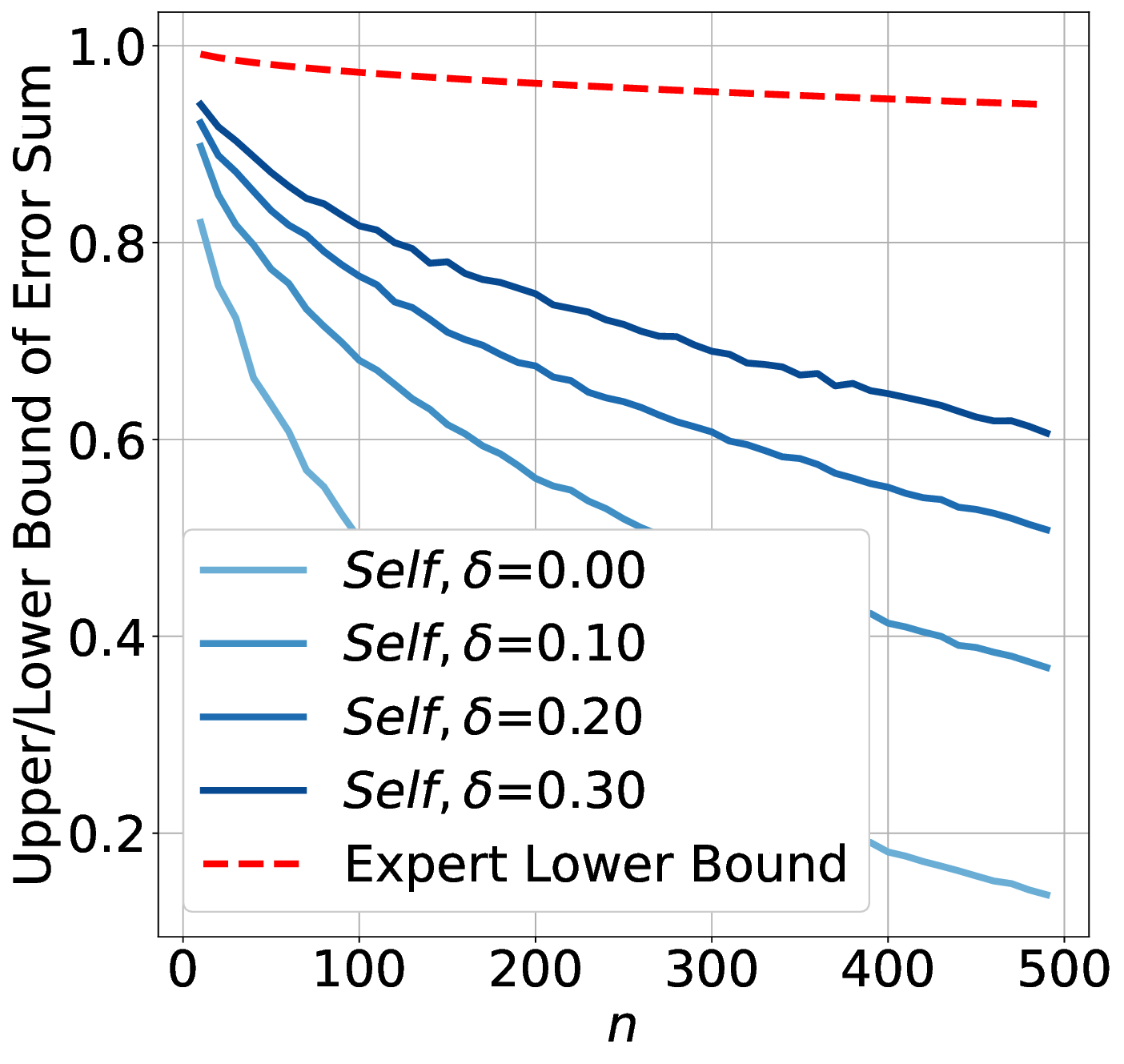}
    \vspace{2mm} {\small PKU, $\eta_0=0.9$}
  \end{minipage}
    \hfill
    \begin{minipage}[b]{0.23\textwidth}
    \centering
\includegraphics[width=\textwidth]
{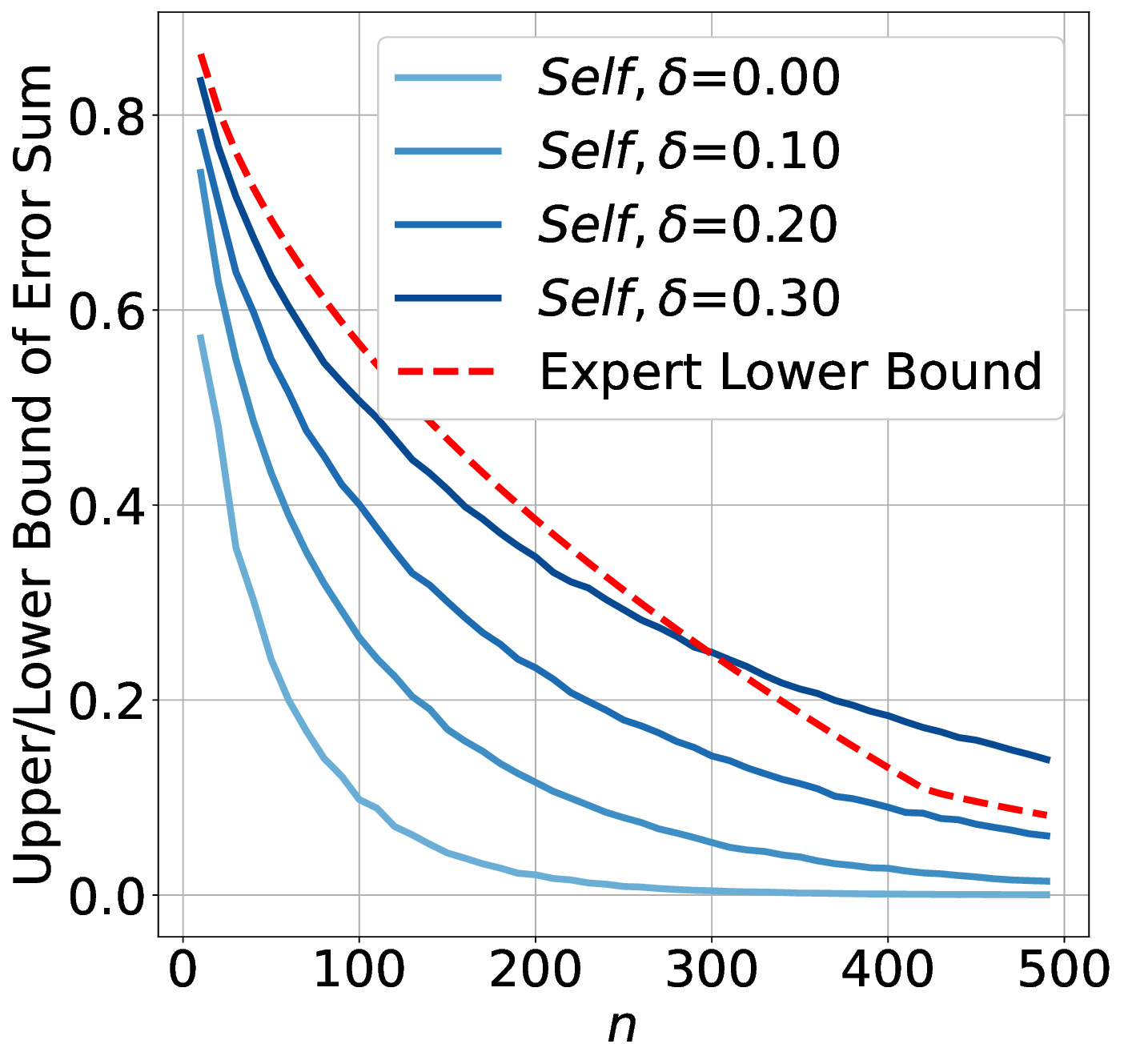}
    \vspace{2mm} {\small Helpsteer, $\eta_0=0.8$}
  \end{minipage}
  \hfill  
  \begin{minipage}[b]{0.23\textwidth}
    \centering
\includegraphics[width=\textwidth]
{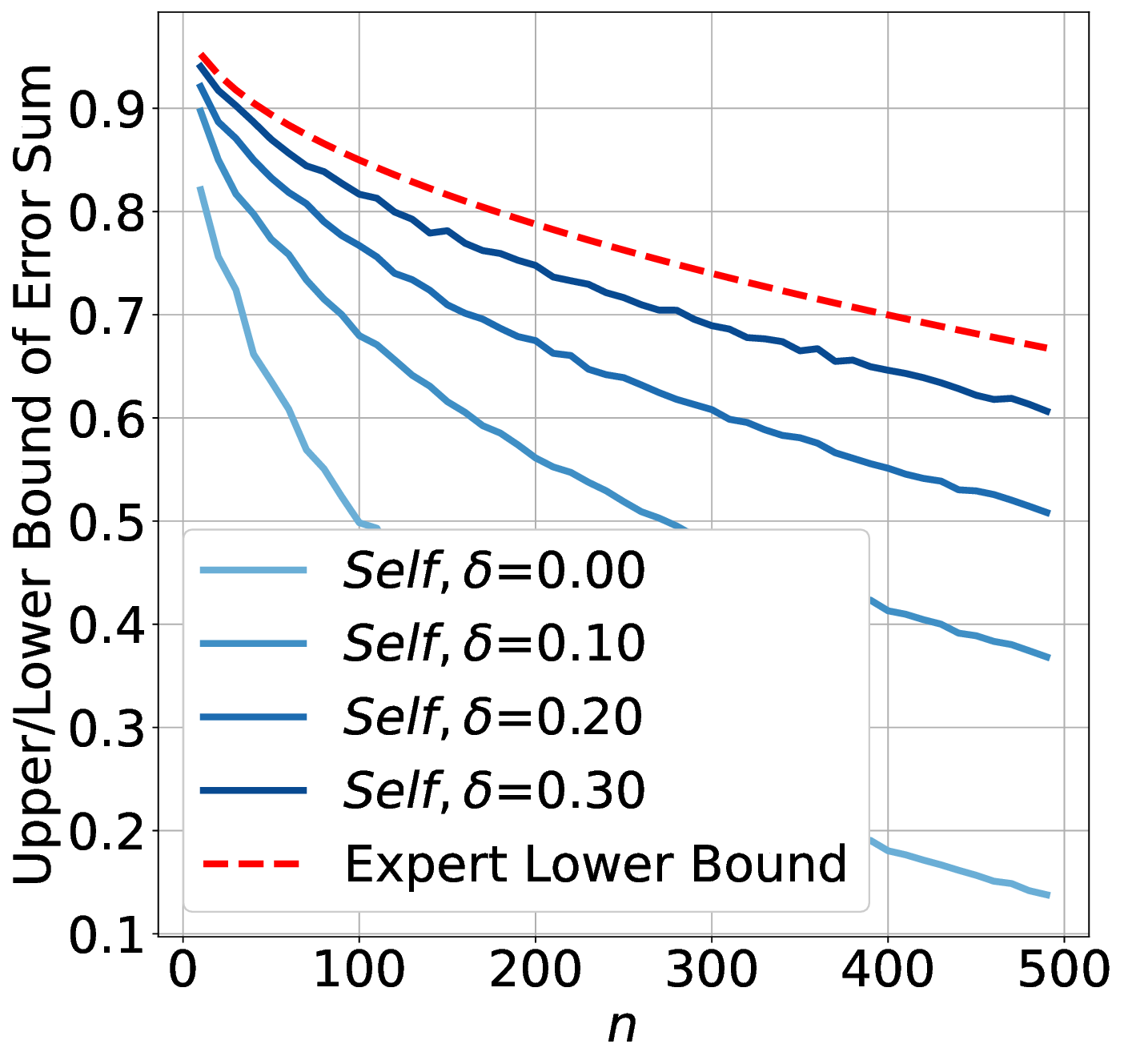}
    \vspace{2mm} {\small Helpsteer, $\eta_0=0.9$}
  \end{minipage}
  \caption{Additional results for Figure \ref{fig:main_self_UB}. Self-consistency monitoring (upper bound) vs. expert-based monitoring (lower bound). For the sum of two types of errors, we plot the \textbf{upper bound} of the self-monitoring with different $\delta$ (blue, thick line) and the \textbf{lower bound} of the expert-based monitoring (red, dashed line), at $\eta_0\in\{0.8,0.9\}$ and $\eta_1=0.95$ for two datasets.}
\label{fig:appx_self_UB}
\end{figure}

\subsubsection{Setup and Additional Experiments for Figure \ref{fig:main_contract_rank}}\
\label{appx:fig_contract_rank}

\medskip
\textbf{Setup:} In Figure \ref{fig:main_contract_rank} (and also Figure \ref{fig:appx_contract_rank}), we approximately solve the optimization problems $\mathcal{C}$, $\mathcal{C}_n$, and $\tilde{\mathcal{C}}_n$ as follows. Given $\delta$, $\mu(\eta)$, $G_a(w_a)$, and $E(\eta)$, we discretize the effort space and the contract space. In particular, we discretize the effort space into
\[
\mathcal{S}_\eta = \{0,0.01,\ldots,1\}.
\]
For the binary contract, we set
\[
\tilde{\mathcal{F}}^{\text{bin}}_n = \Bigl\{ w_0 + w_1\,\mathbbm{1}\Bigl\{\bar{A}\geq \tau\Bigr\} :\, w_0\in \{-10,-9.95,\ldots,9.95,10\},\; w_1\in \{0,0.05,\ldots,10\},\; \tau \in \{0,0.01,\ldots,1\} \Bigr\},
\]
where, by the proof of Theorem~\ref{thm:binary}, the optimal binary contract must follow a threshold contract form as above (although the space should be continuous).

For the linear contract, we set
\[
\tilde{\mathcal{F}}^{\text{lin}}_n = \Bigl\{ w_0 + w_1\,\bar{A} :\, w_0\in \{-10,-9.95,\ldots,9.95,10\},\; w_1\in \{0,0.05,\ldots,10\} \Bigr\}.
\]

We then compute the principal's and the agent's utilities for each pair of effort and contract design, and select the pair that maximizes the principal's utility while satisfying the corresponding constraints in $\mathcal{C}$, $\mathcal{C}_n$, and $\tilde{\mathcal{C}}_n$ as the approximate solution. In our experiments, we observed that the discretization can induce infeasibility in the restricted problem $\tilde{\mathcal{C}}_n$. To address this issue, we relax the constraint to
\[
\eta^* \in \Bigl[-0.01+\argmax_{\eta \in \mathcal{S}_\eta} \mathbb{E}\left[U_a\bigl(F_n(\mathcal{D}_n), \eta\bigr)\right],\; 0.01+\argmax_{\eta \in \mathcal{S}_\eta} \mathbb{E}\left[U_a\bigl(F_n(\mathcal{D}_n), \eta\bigr)\right]\Bigr],
\]
where $\eta^*\in \mathcal{S}_\eta$ is the approximate solution of $\mathcal{C}$. This relaxation further induces a non-decreasing gap between $\mathcal{C}$ and $\tilde{\mathcal{C}}_n$, as shown in our results.

In Figure \ref{fig:main_contract_rank}, we set $U_0=0$, $\delta=0.02$, $\mu(\eta)=\frac{1}{2}\eta^{4/5}$, $G_a(w_a)=1-\exp(-w_a)$, and $E(\eta)=0.18\eta^2$. In Figure \ref{fig:appx_contract_effort}, we plot the corresponding agent utility under the optimal solutions, which shows that the agent's utility closely matches the leisure utility $U_0$. This further numerically validates the binding condition of the leisure utility constraint proved in Proposition \ref{prop:first_best_sol} and Theorems \ref{thm:binary} and \ref{thm:linear_contract}.

In Figure \ref{fig:appx_contract_rank}, we further test alternative settings: $\delta=0$ (top row), $\tilde{\mu}(\eta)=\frac{1}{3}\eta^{4/5}$ (middle row), which has a smaller derivative (and value) than $\mu(\eta)$, and $\tilde{G}_a(w_a)=\frac{1}{2}-\frac{1}{2}\exp\Bigl(-\frac{1}{2}w_a\Bigr)$, representing a derivative that is less than that of $G_a$, i.e., the agent is less sensitive to the payment. The results continue to match our theoretical results and demonstrate the advantage of the self-consistency monitoring over the expert-based approach, especially for the PKU dataset.

\begin{figure}[ht!]
  \centering
  \begin{minipage}[b]{0.23\textwidth}
    \centering
\includegraphics[width=\textwidth]
{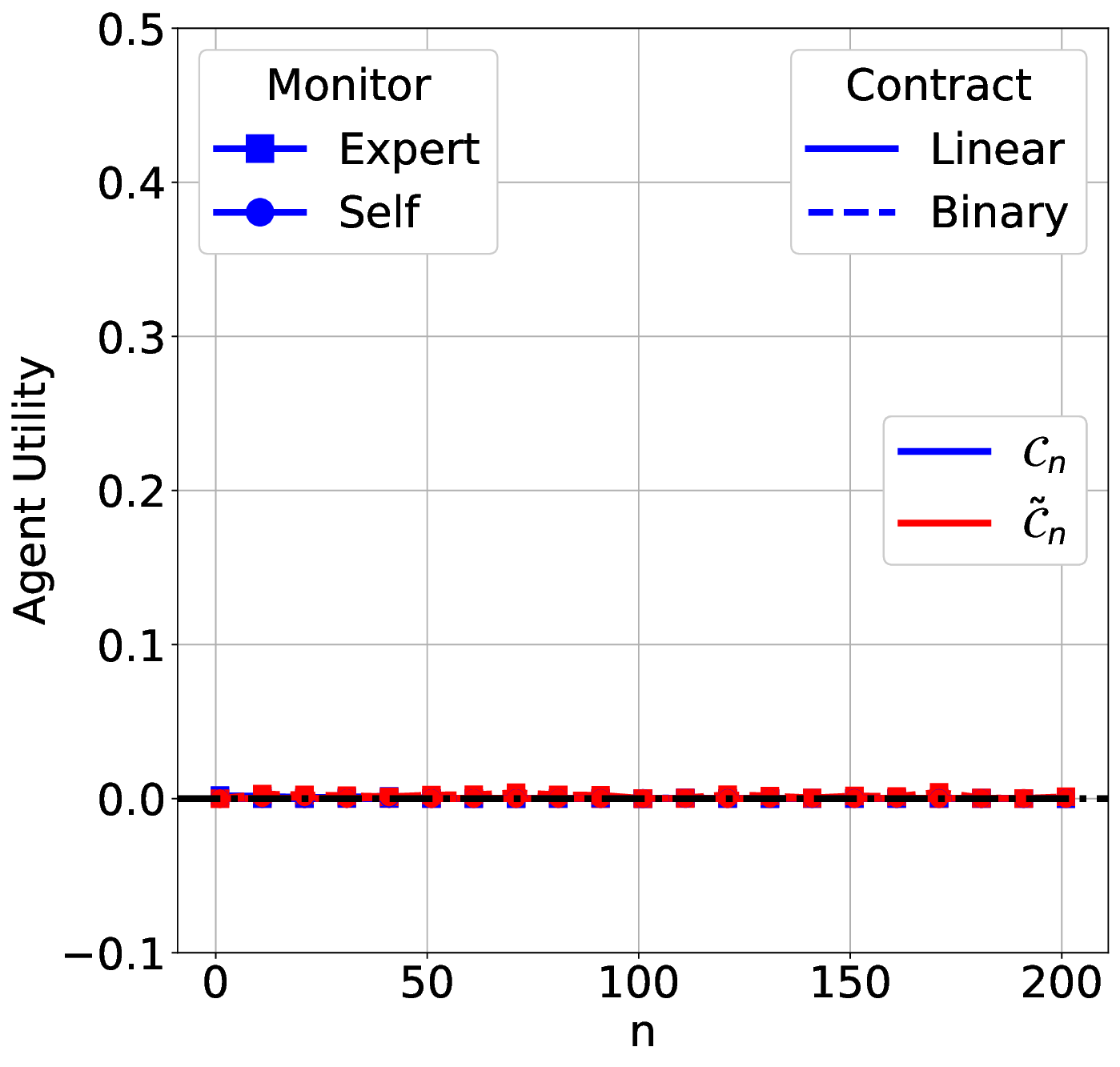}
\vspace{2mm} {\small PKU}
  \end{minipage}
    \hfill
  \begin{minipage}[b]{0.23\textwidth}
    \centering
\includegraphics[width=\textwidth]
{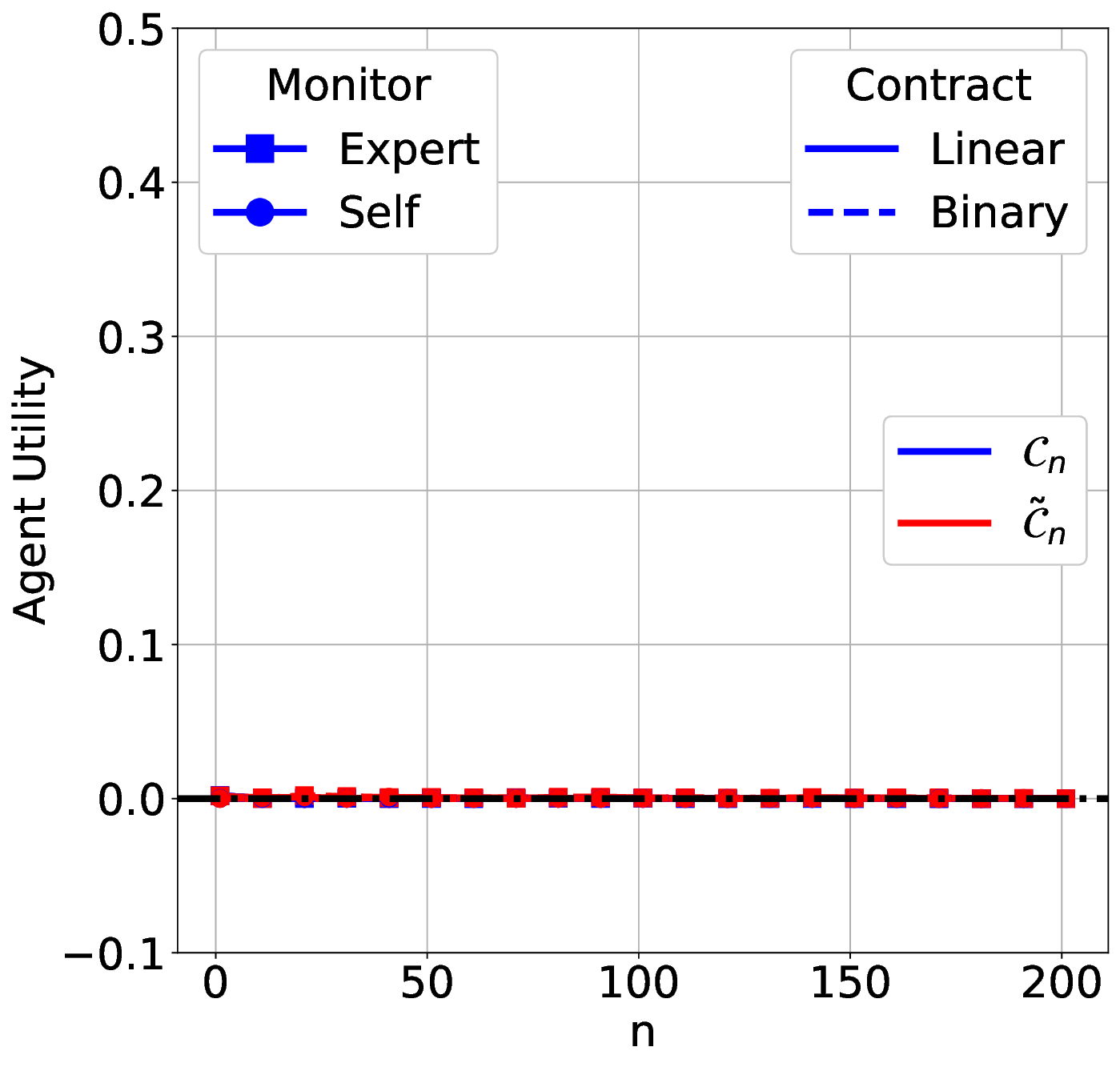}
\vspace{2mm} {\small Helpsteer}
  \end{minipage}
    \hfill
    \begin{minipage}[b]{0.23\textwidth}
    \centering
\includegraphics[width=\textwidth]
{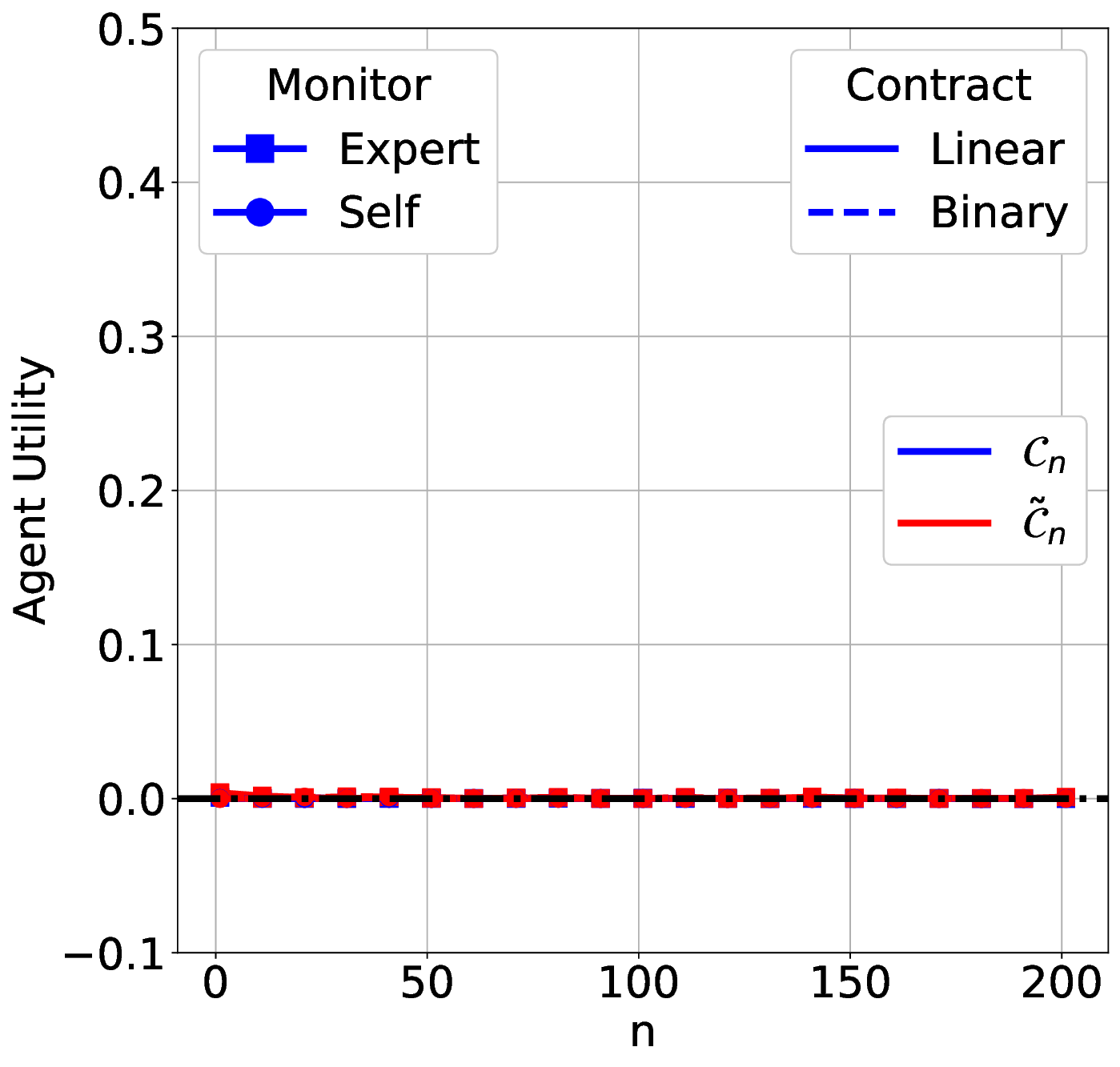}
\vspace{2mm} {\small Ultra}
  \end{minipage}
  \hfill  
  \begin{minipage}[b]{0.23\textwidth}
    \centering
\includegraphics[width=\textwidth]
{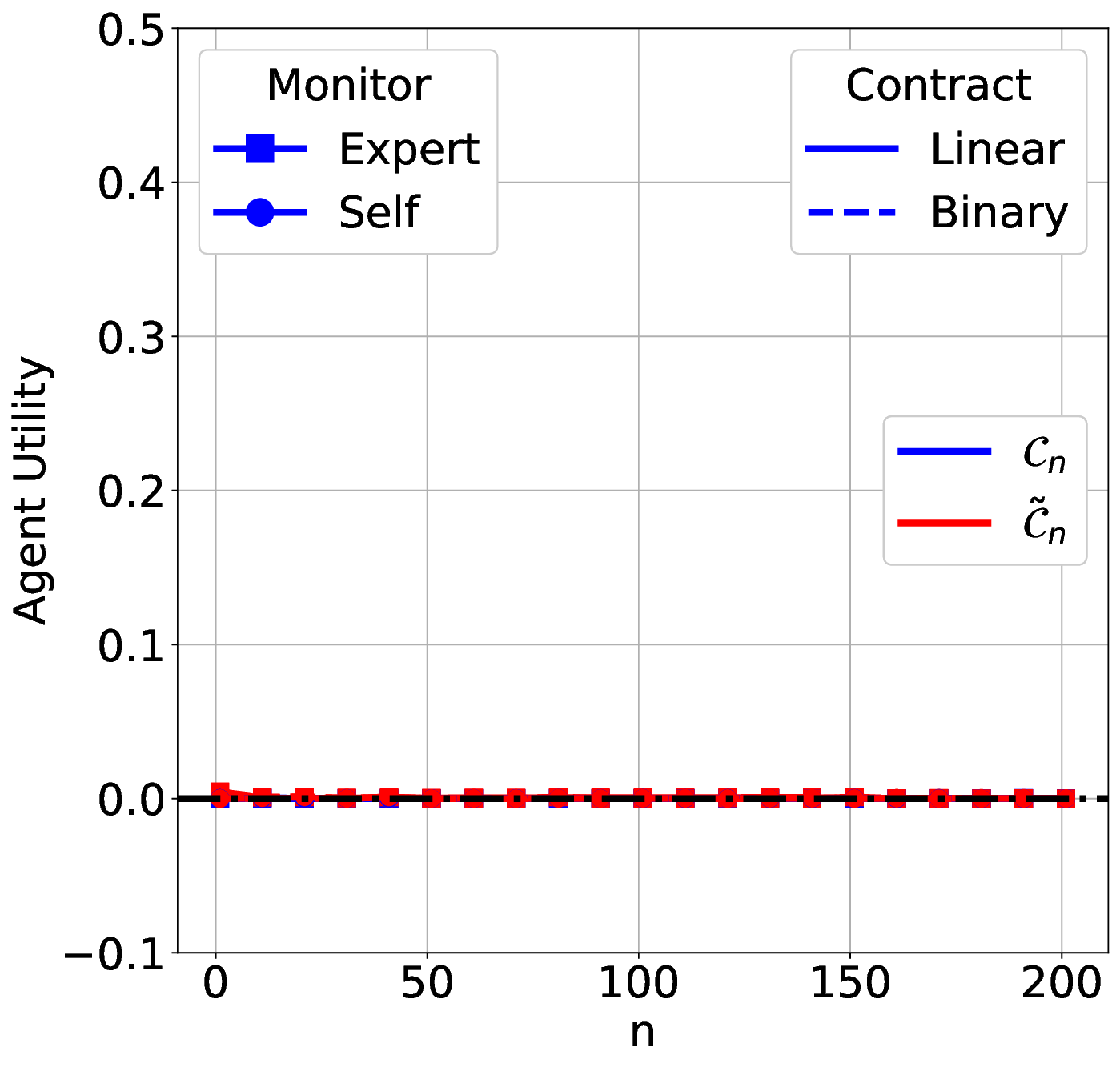}
\vspace{2mm} {\small Skywork}
  \end{minipage}
  \caption{Agent utility under the optimal solution, where we set the leisure utility $U_0=0$. For all datasets, monitoring method, contract type, and second-best formulation, the resulting agent utility matches the leisure utility, i.e., the corresponding constraint is binding.}
\label{fig:appx_contract_effort}
\end{figure}

\begin{figure}[ht!]
  \centering
  \begin{minipage}[b]{0.23\textwidth}
    \centering
\includegraphics[width=\textwidth]
{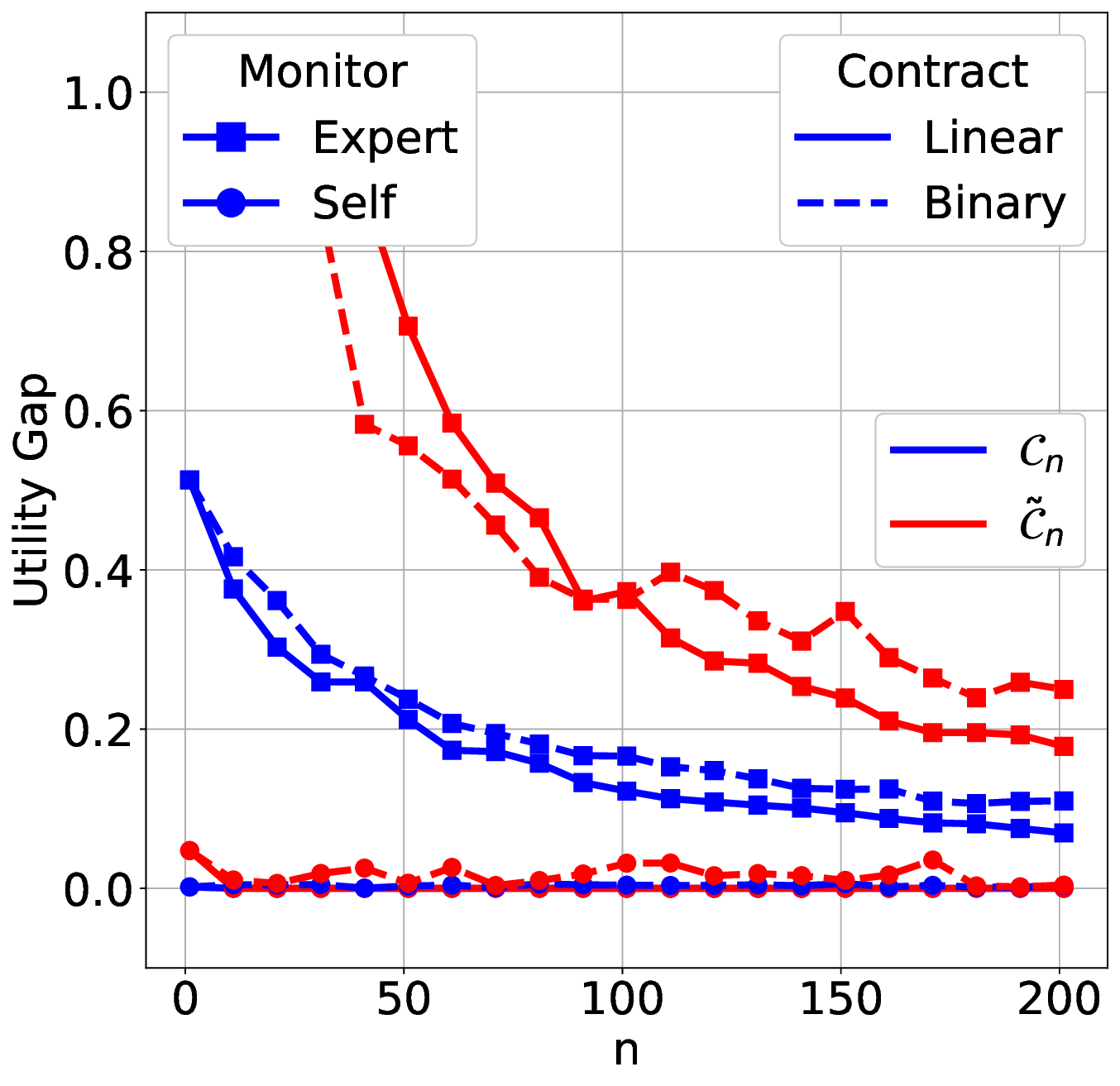}
\vspace{2mm} {\small PKU, $\delta=0$}
  \end{minipage}
    \hfill
  \begin{minipage}[b]{0.23\textwidth}
    \centering
\includegraphics[width=\textwidth]
{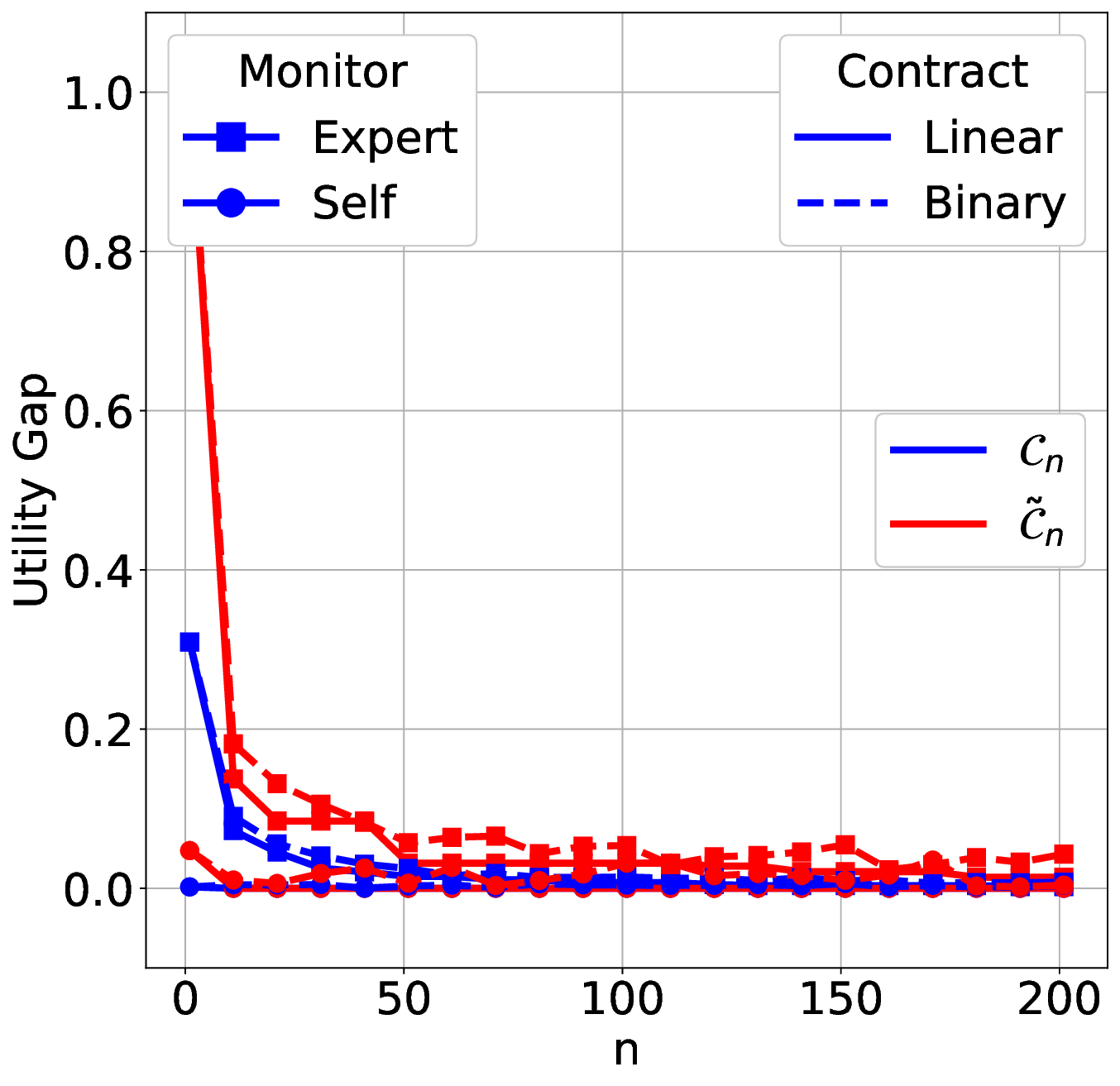}
\vspace{2mm} {\small Helpsteer, $\delta=0$}
  \end{minipage}
    \hfill
    \begin{minipage}[b]{0.23\textwidth}
    \centering
\includegraphics[width=\textwidth]
{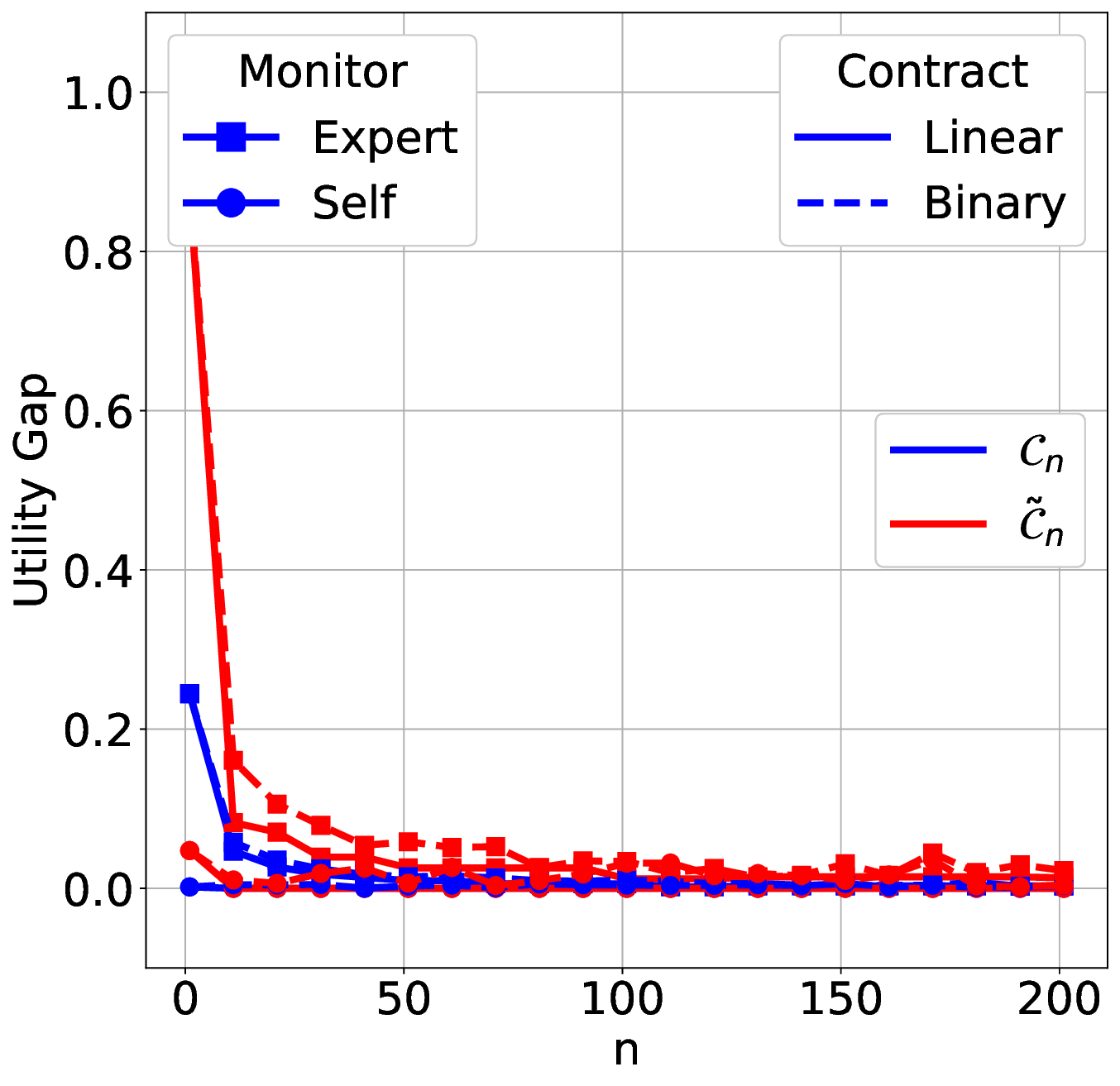}
 \vspace{2mm} {\small Ultra, $\delta=0$}
  \end{minipage}
  \hfill  
  \begin{minipage}[b]{0.23\textwidth}
    \centering
\includegraphics[width=\textwidth]
{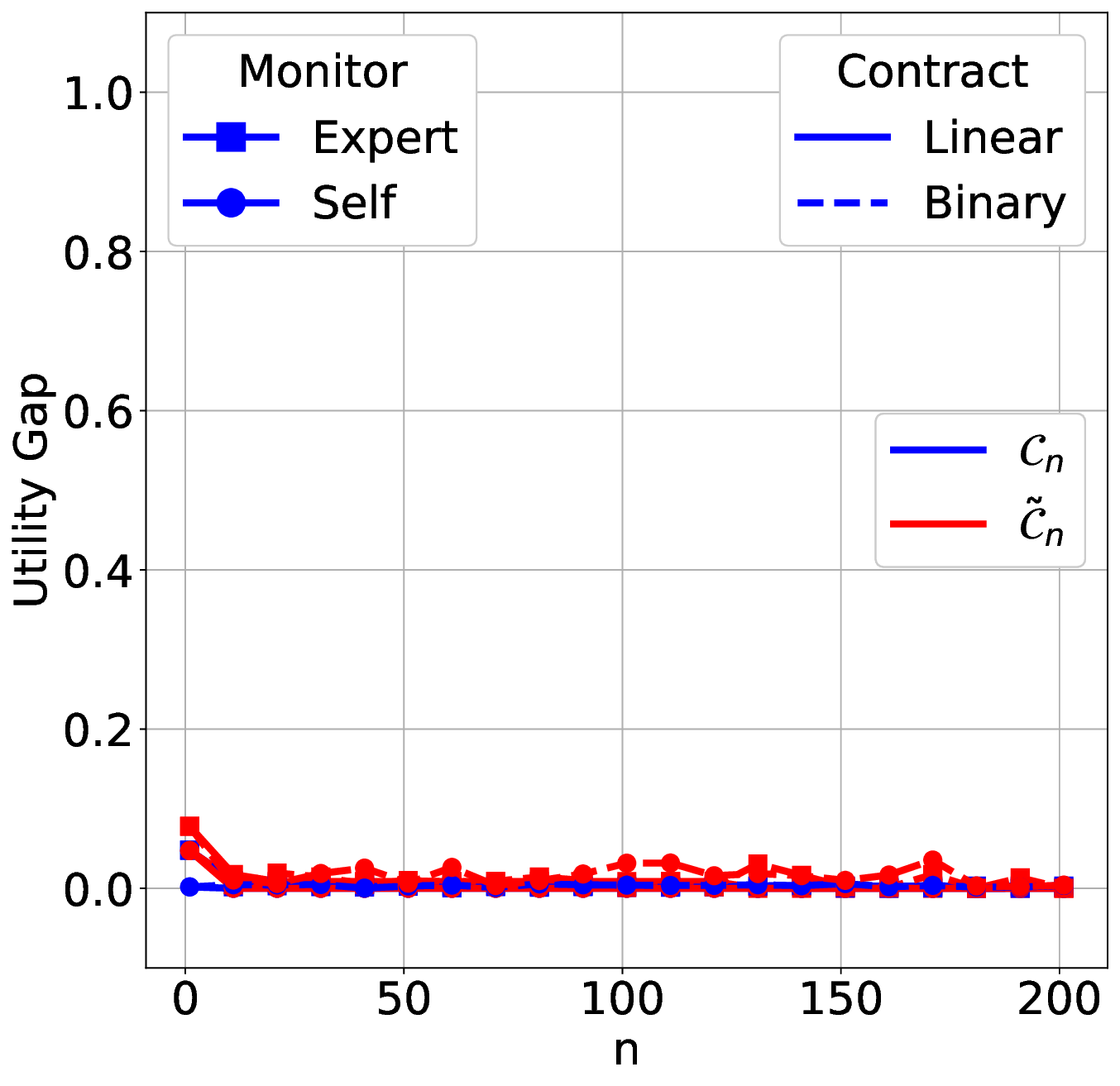}
\vspace{2mm} {\small Skywork, $\delta=0$}
  \end{minipage}
 \begin{minipage}[b]{0.23\textwidth}
    \centering
\includegraphics[width=\textwidth]
{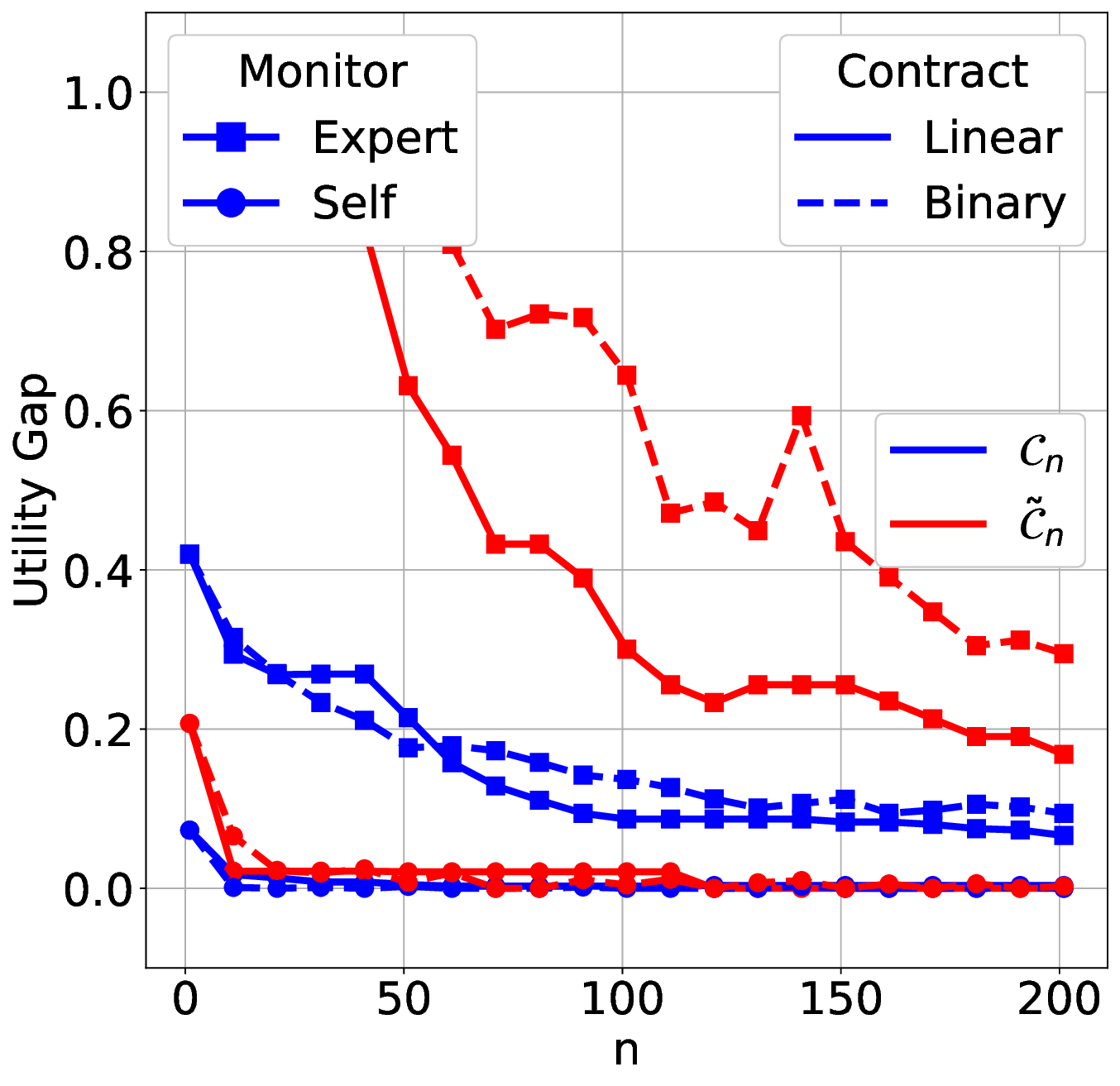}
\vspace{2mm} {\small PKU, $\tilde{\mu}(\eta)$}
  \end{minipage}
    \hfill
  \begin{minipage}[b]{0.23\textwidth}
    \centering
\includegraphics[width=\textwidth]
{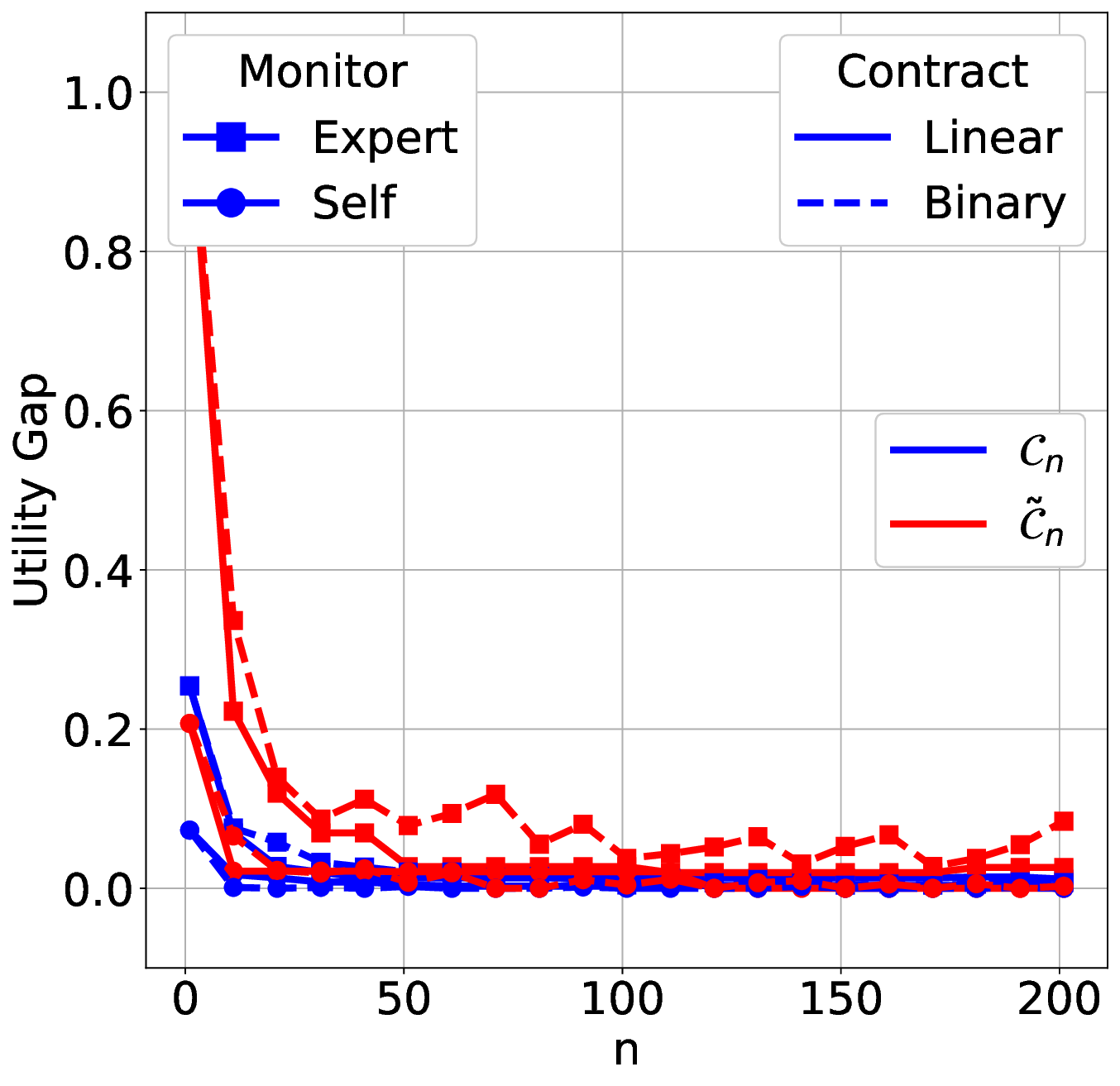}
\vspace{2mm} {\small Helpsteer, $\tilde{\mu}(\eta)$}
  \end{minipage}
    \hfill
    \begin{minipage}[b]{0.23\textwidth}
    \centering
\includegraphics[width=\textwidth]
{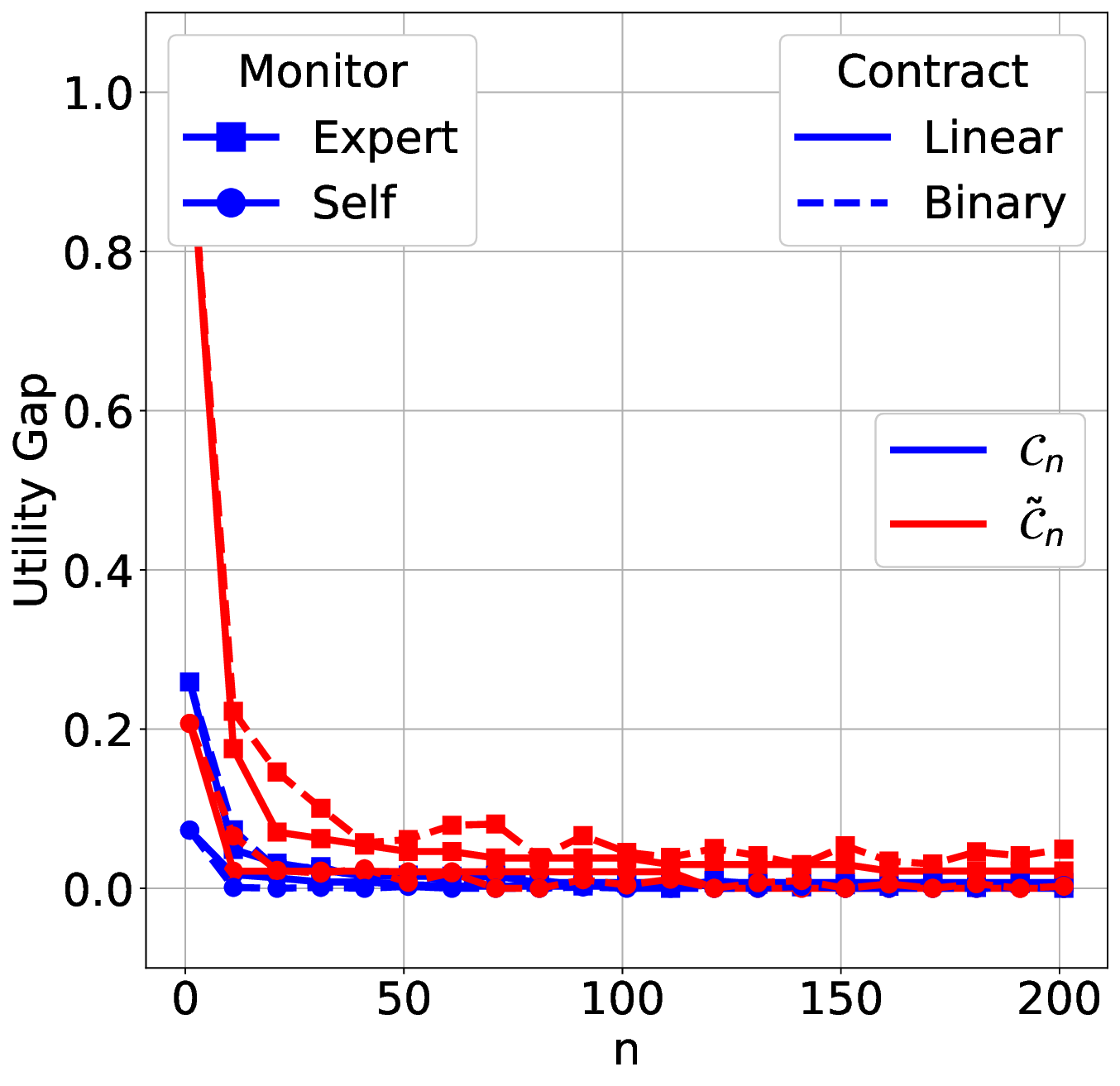}
\vspace{2mm} {\small Ultra, $\tilde{\mu}(\eta)$}
  \end{minipage}
  \hfill  
  \begin{minipage}[b]{0.23\textwidth}
    \centering
\includegraphics[width=\textwidth]
{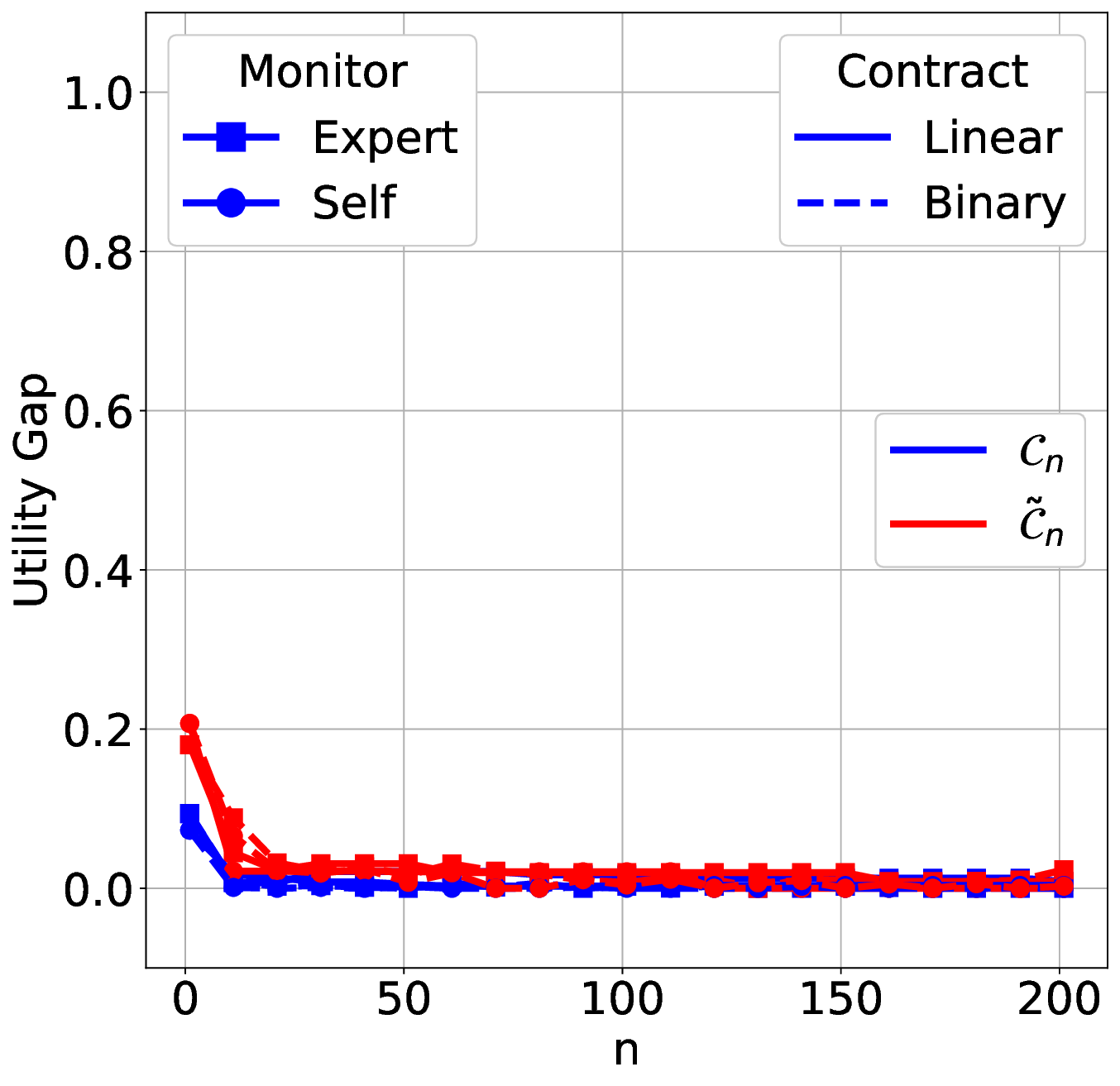}
\vspace{2mm} {\small Skywork, $\tilde{\mu}(\eta)$}
  \end{minipage}
  \begin{minipage}[b]{0.23\textwidth}
    \centering
\includegraphics[width=\textwidth]
{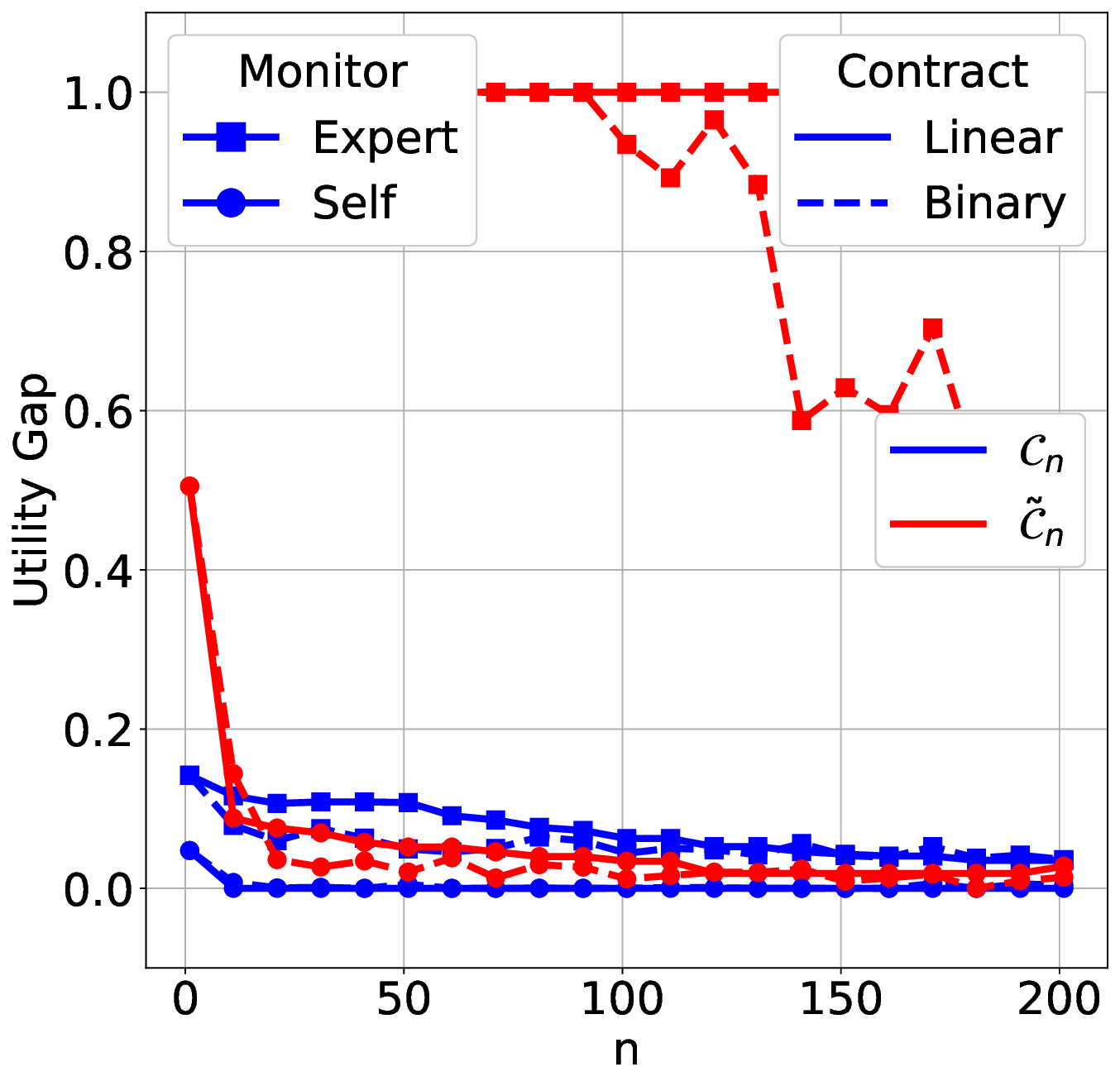}
\vspace{2mm} {\small PKU, $\tilde{G}_a(w_a)$}
  \end{minipage}
    \hfill
  \begin{minipage}[b]{0.23\textwidth}
    \centering
\includegraphics[width=\textwidth]
{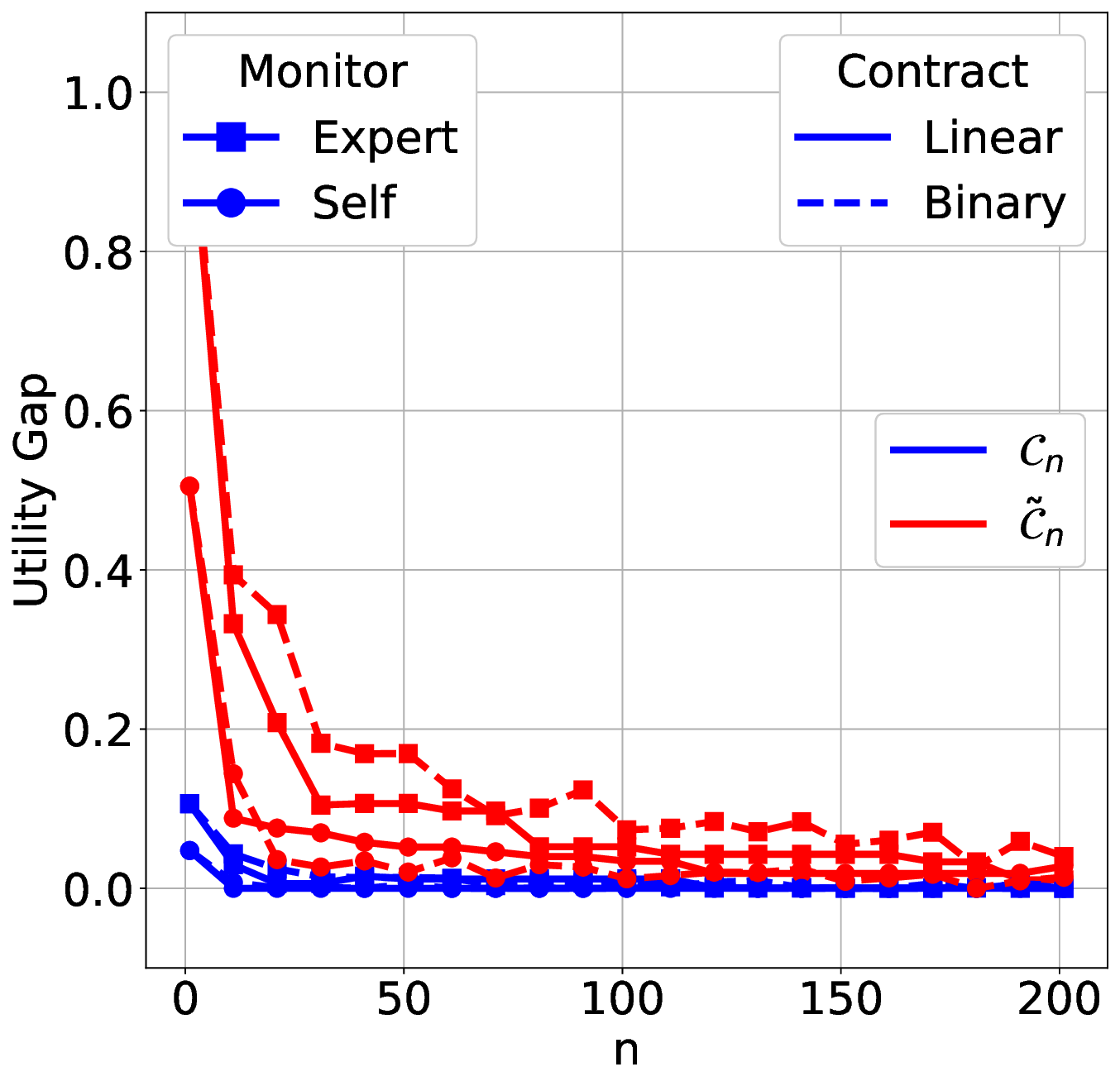}
\vspace{2mm} {\small Helpsteer, $\tilde{G}_a(w_a)$}
  \end{minipage}
    \hfill
    \begin{minipage}[b]{0.23\textwidth}
    \centering
\includegraphics[width=\textwidth]
{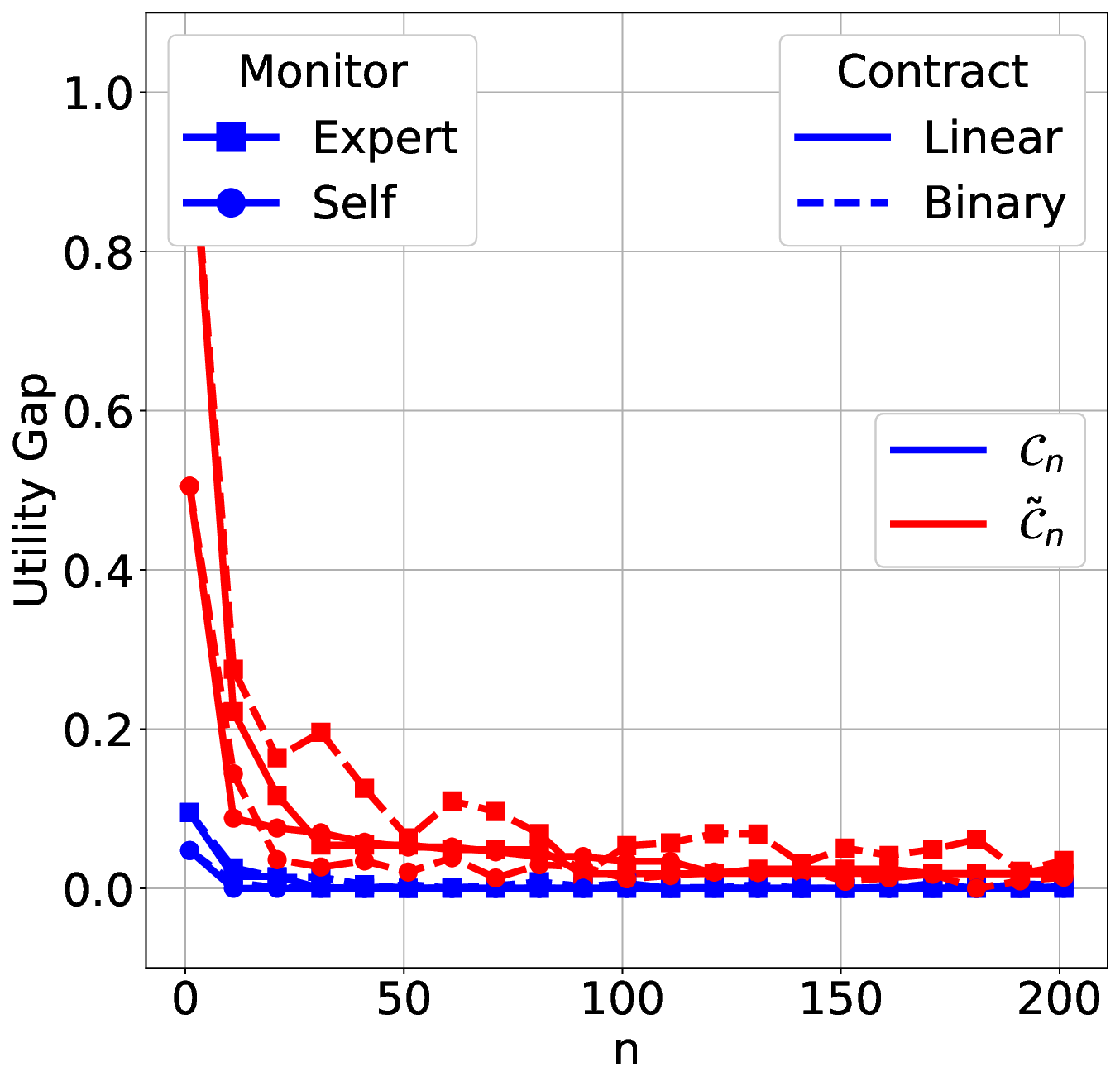}
\vspace{2mm} {\small Ultra, $\tilde{G}_a(w_a)$}
  \end{minipage}
  \hfill  
  \begin{minipage}[b]{0.23\textwidth}
    \centering
\includegraphics[width=\textwidth]
{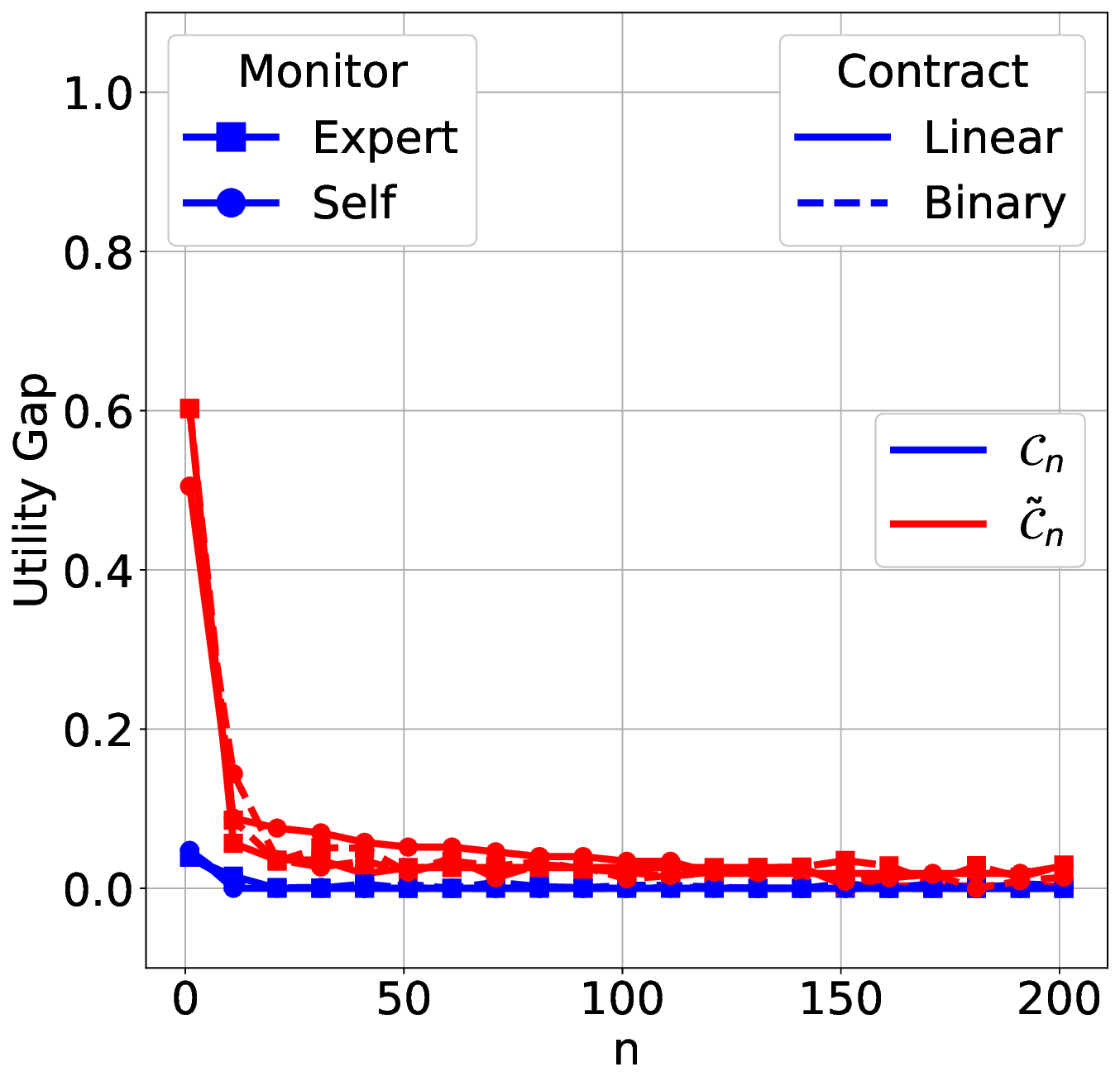}
\vspace{2mm} {\small Skywork, $\tilde{G}_a(w_a)$}
  \end{minipage}
  \caption{More principal utility gap results for Figure \ref{fig:main_contract_rank} under different configurations: (i) $\delta=0$; (ii) $\tilde{\mu}(\eta)=1/3\eta^{4/5}$; (iii) $\tilde{G}_a(w_a)=1/2-1/2\exp(-1/2w_a)$.}
\label{fig:appx_contract_rank}
\end{figure}

\medskip
\noindent\textbf{Remark on discretization.}
The discretization used in this appendix is purely computational. Throughout the theory, the agent's action space is the continuous interval $[0,1]$; the grids introduced below are only numerical devices for approximating the continuous optimization problems $\mathcal{C}$, $\mathcal{C}_n$, and $\tilde{\mathcal{C}}_n$. They should not be confused with the finite-action model of \citet{frick2023monitoring}, in which the grid itself is part of the economic environment and changes the benchmark from $\mathcal{C}$ to $\mathcal{C}^{\mathrm{disc}}(r)$.

We also report a synthetic experiment that compares three action-space specifications for binary contracts: a coarse grid (\textit{Discrete}), an intermediate grid (\textit{Dense}), and a very fine grid that numerically approximates the continuous model (\textit{Continuous}). In each case, the plotted utility gap is measured relative to the corresponding first-best benchmark. The coarse-grid model exhibits the fastest decay in $n$, but it also has the smallest benchmark due to the grid constraint; as the grid is refined, the benchmark rises toward the continuous one and the convergence curve moves toward the continuous-action case. The intermediate-grid curve lies between these two extremes, which is consistent with the idea that the separation between the target action and nearby deviations shrinks as the grid becomes finer.

\begin{figure}[ht!]
\centering
\includegraphics[width=0.62\textwidth]{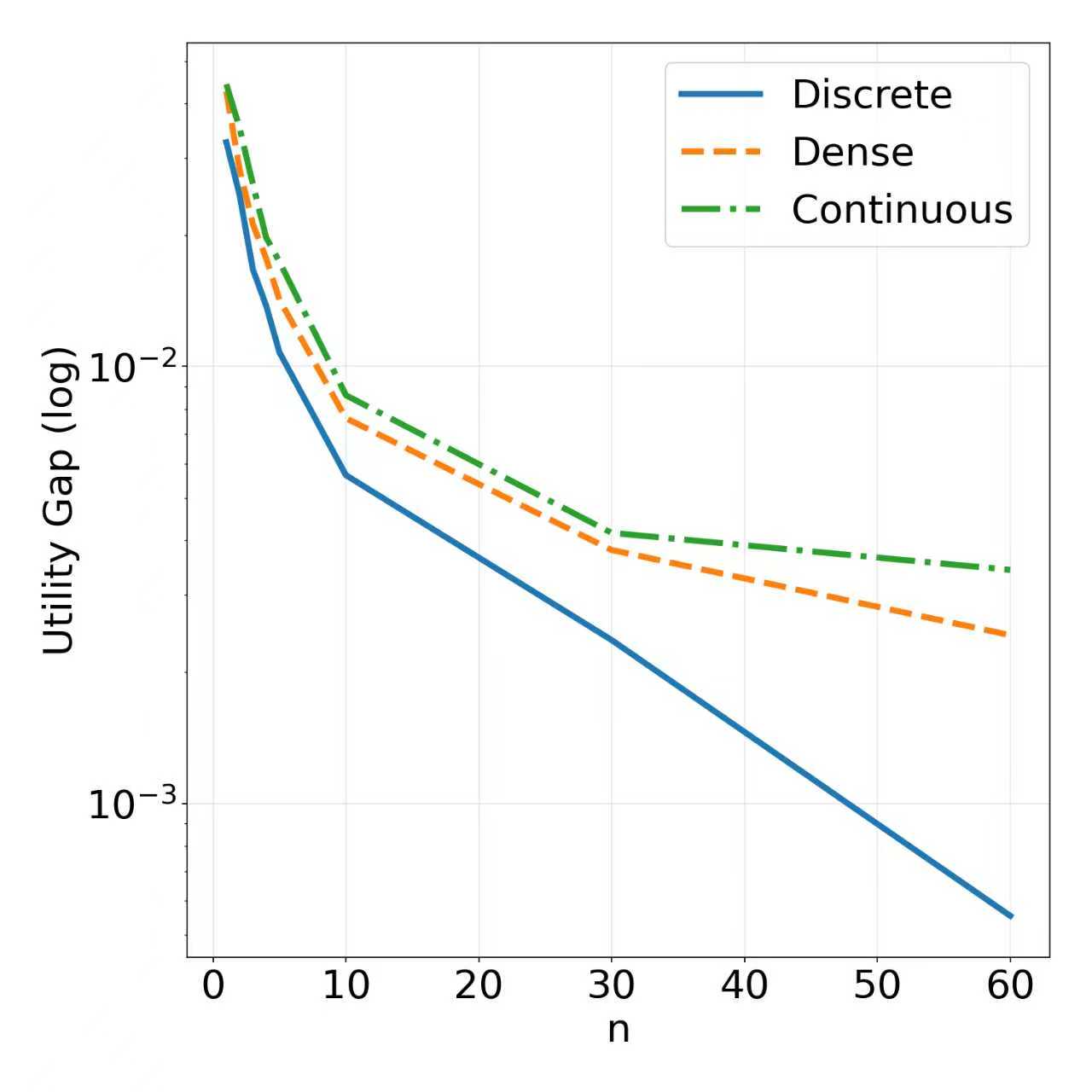}
\caption{\small Binary-contract utility gaps under three action-space specifications. ``Discrete'' uses a coarse finite grid, ``Dense'' uses an intermediate grid, and ``Continuous'' uses a very fine grid that approximates the continuous-action model. Each curve is plotted relative to its own first-best benchmark. Coarser grids yield faster convergence in $n$, but they also correspond to lower first-best benchmarks; finer grids raise the benchmark toward the continuous one and move the convergence curve toward the continuous-action case.}
\label{fig:grid_gap_tradeoff}
\end{figure}

\subsubsection{Setup and Additional Experiments for Sensitivity to $\delta$ and Alternative Utility Profiles}
\label{appx:delta_utility_sensitivity}

\paragraph{Sensitivity to $\delta$.}
To examine whether Figure~\ref{fig:main_contract_rank} is driven by the benchmark choice $\delta=0.02$, we vary $\delta\in\{0,0.05,0.15,0.20\}$ in the PKU calibration and report the resulting normalized utility gaps in Figure~\ref{fig:appx_delta_sensitivity}. The gaps change smoothly with $\delta$, and the qualitative ranking between linear and binary contracts is unchanged. 

\begin{figure}[ht!]
  \centering
  \includegraphics[width=0.72\textwidth]{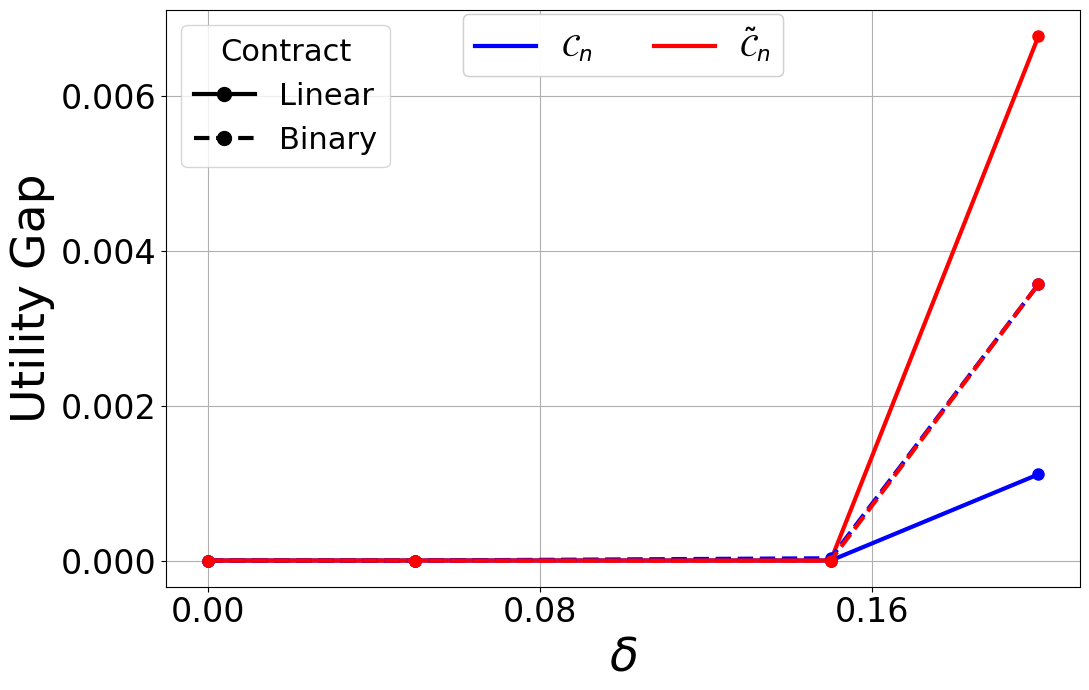}
  \caption{Sensitivity of the normalized principal utility gap to the benchmark disagreement parameter $\delta$ in the PKU calibration. We fix $n=1000$ and vary $\delta\in\{0,0.05,0.15,0.20\}$.}
  \label{fig:appx_delta_sensitivity}
\end{figure}

\paragraph{Alternative utility profiles.}
To examine whether Figure~\ref{fig:main_contract_rank} depends on the baseline utility primitives, we consider two additional assumption-compliant profiles:
\begin{align*}
G_a^{\mathrm{risk}}(w) &= 0.7\!\left(1-e^{-0.7w}\right) + 0.3\!\left(1-e^{-2.0w}\right),\\
E^{\mathrm{risk}}(\eta) &= 0.24\!\left(0.03\eta + 0.10\eta^2 + 0.12\eta^3\right),\\
\mu^{\mathrm{risk}}(\eta) &= 0.35\log\!\left(1 + 1.8\eta\right),
\end{align*}
and
\begin{align*}
G_a^{\mathrm{high}}(w) &= 1-e^{-w},\\
E^{\mathrm{high}}(\eta) &= 0.22\!\left(0.03\eta + 0.10\eta^2 + 0.12\eta^3\right),\\
\mu^{\mathrm{high}}(\eta) &= 0.20\log\!\left(1 + 1.8\eta\right).
\end{align*}
Figures~\ref{fig:appx_utility_risk} and \ref{fig:appx_utility_highcost} show that the qualitative message of Figure~\ref{fig:main_contract_rank} remains: self-consistency monitoring offers the largest gains on the more ambiguous PKU dataset, while the difference between the two monitoring technologies is smaller on Helpsteer, Ultra, and Skywork, where both signals are already informative.

\begin{figure}[ht!]
  \centering
  \begin{minipage}[b]{0.23\textwidth}
    \centering
    \includegraphics[width=\textwidth]{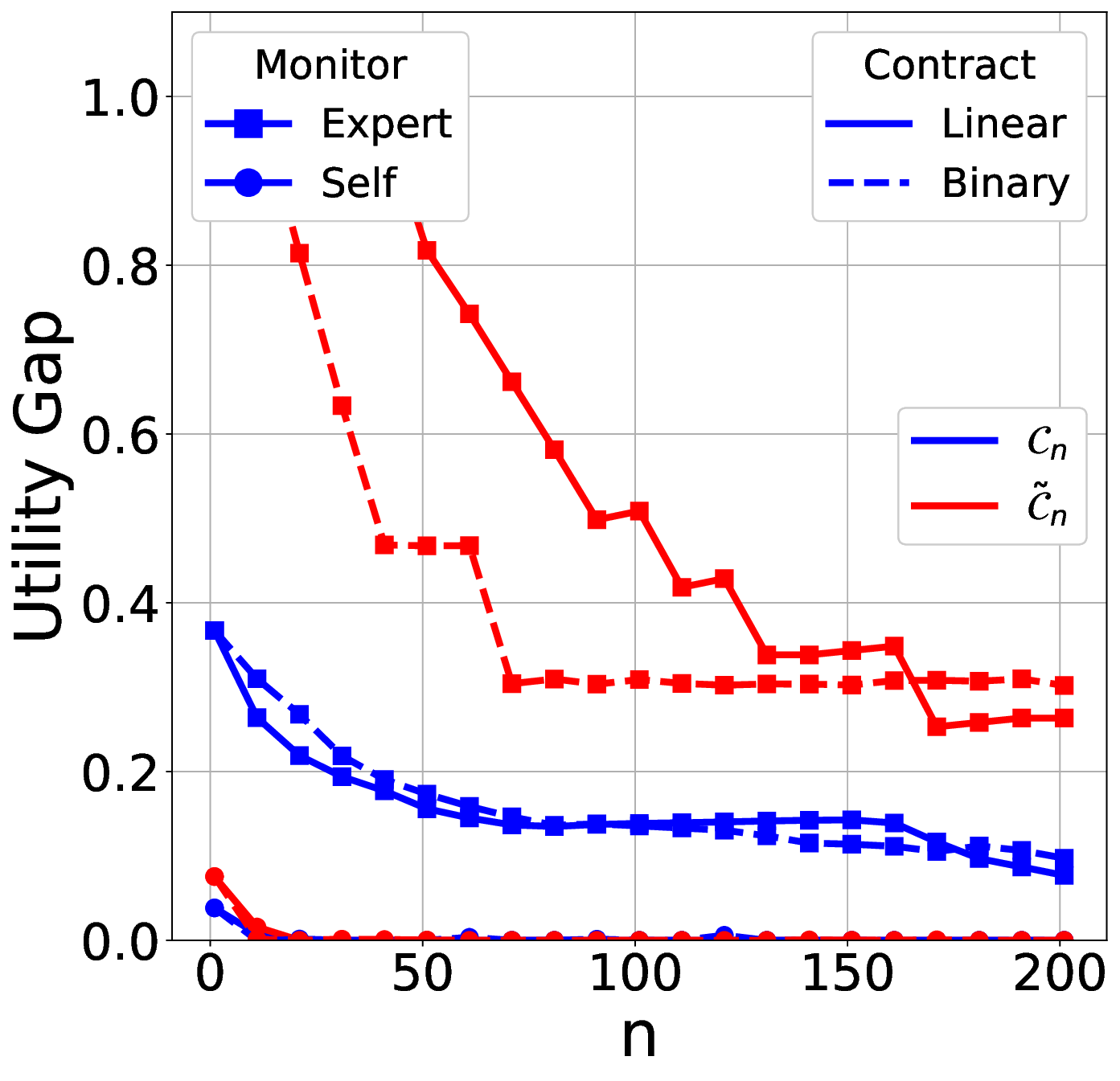}
    \vspace{2mm} {\small PKU}
  \end{minipage}
  \hfill
  \begin{minipage}[b]{0.23\textwidth}
    \centering
    \includegraphics[width=\textwidth]{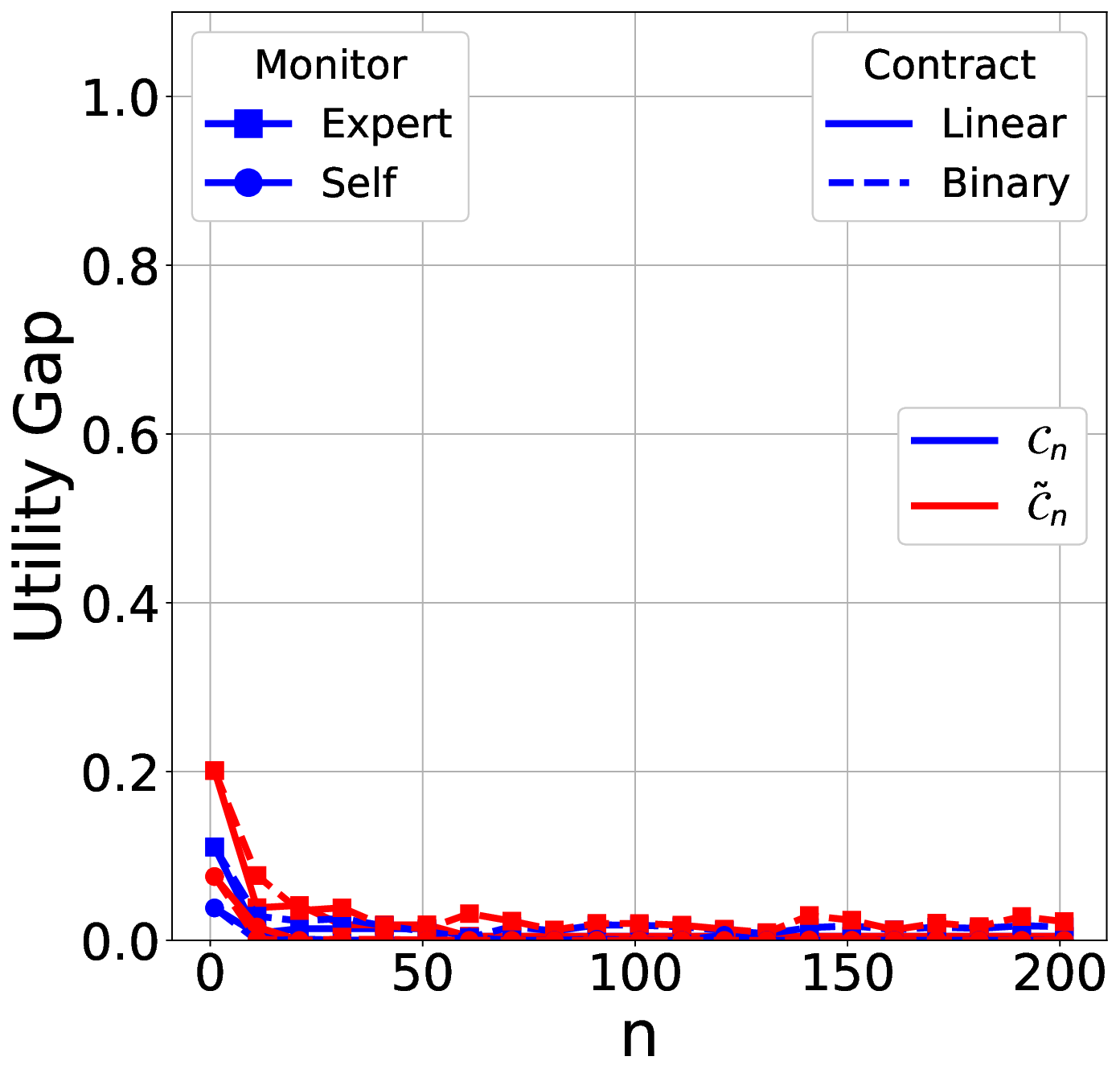}
    \vspace{2mm} {\small Helpsteer}
  \end{minipage}
  \hfill
  \begin{minipage}[b]{0.23\textwidth}
    \centering
    \includegraphics[width=\textwidth]{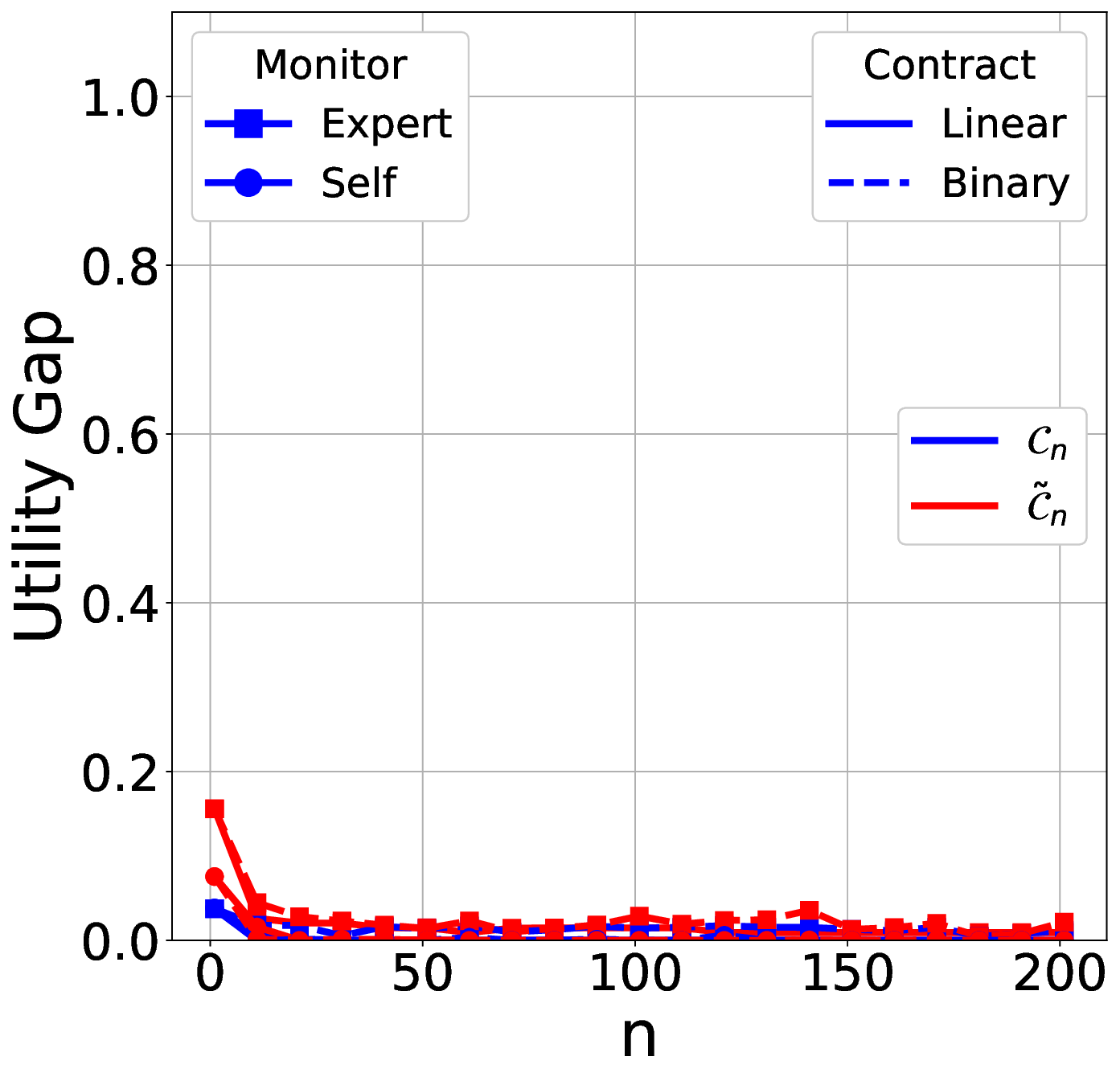}
    \vspace{2mm} {\small Ultra}
  \end{minipage}
  \hfill
  \begin{minipage}[b]{0.23\textwidth}
    \centering
    \includegraphics[width=\textwidth]{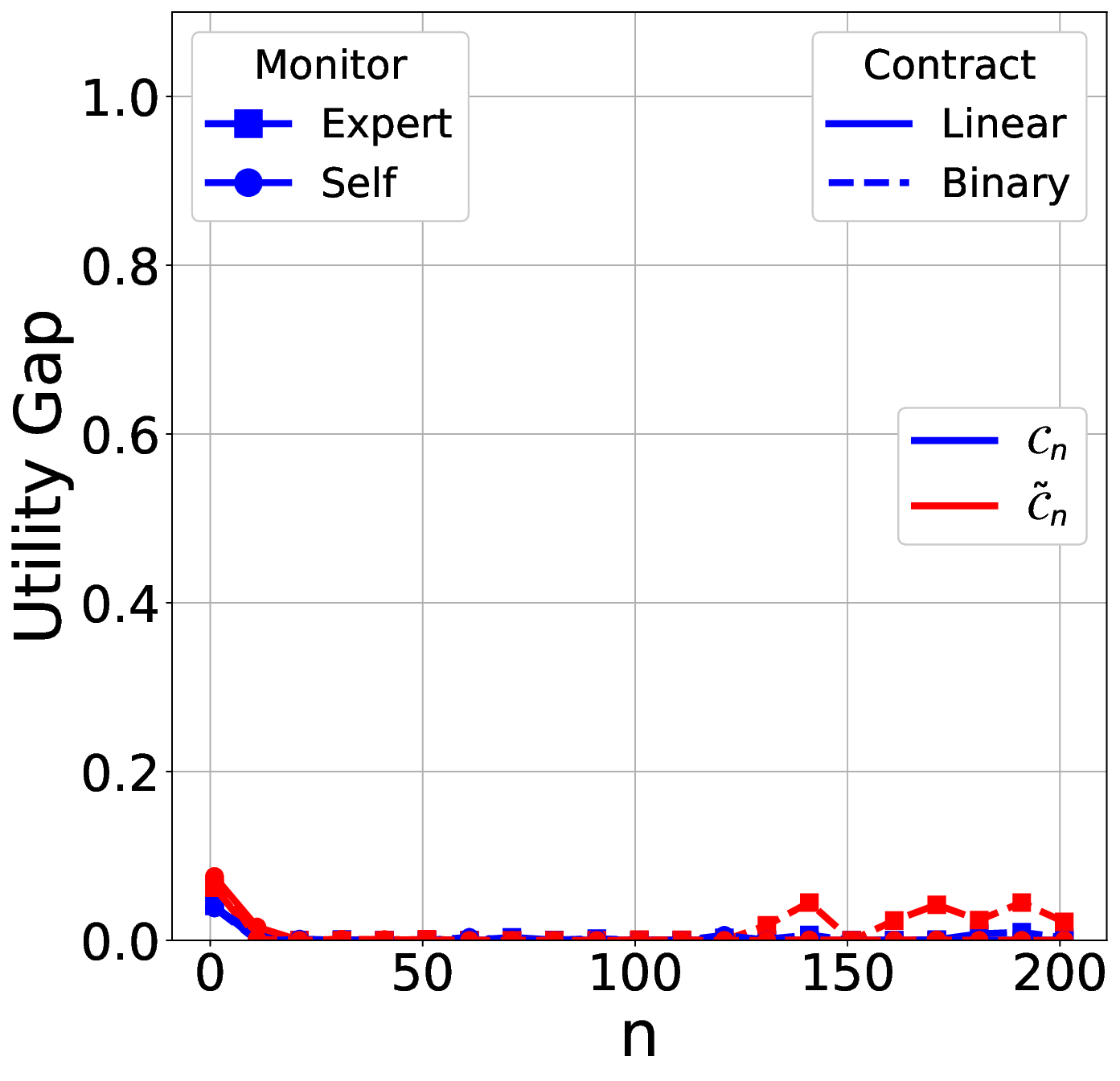}
    \vspace{2mm} {\small Skywork}
  \end{minipage}
  \caption{Normalized principal utility gap under the alternative utility profile \texttt{risk} on four datasets.}
  \label{fig:appx_utility_risk}
\end{figure}

\begin{figure}[ht!]
  \centering
  \begin{minipage}[b]{0.23\textwidth}
    \centering
    \includegraphics[width=\textwidth]{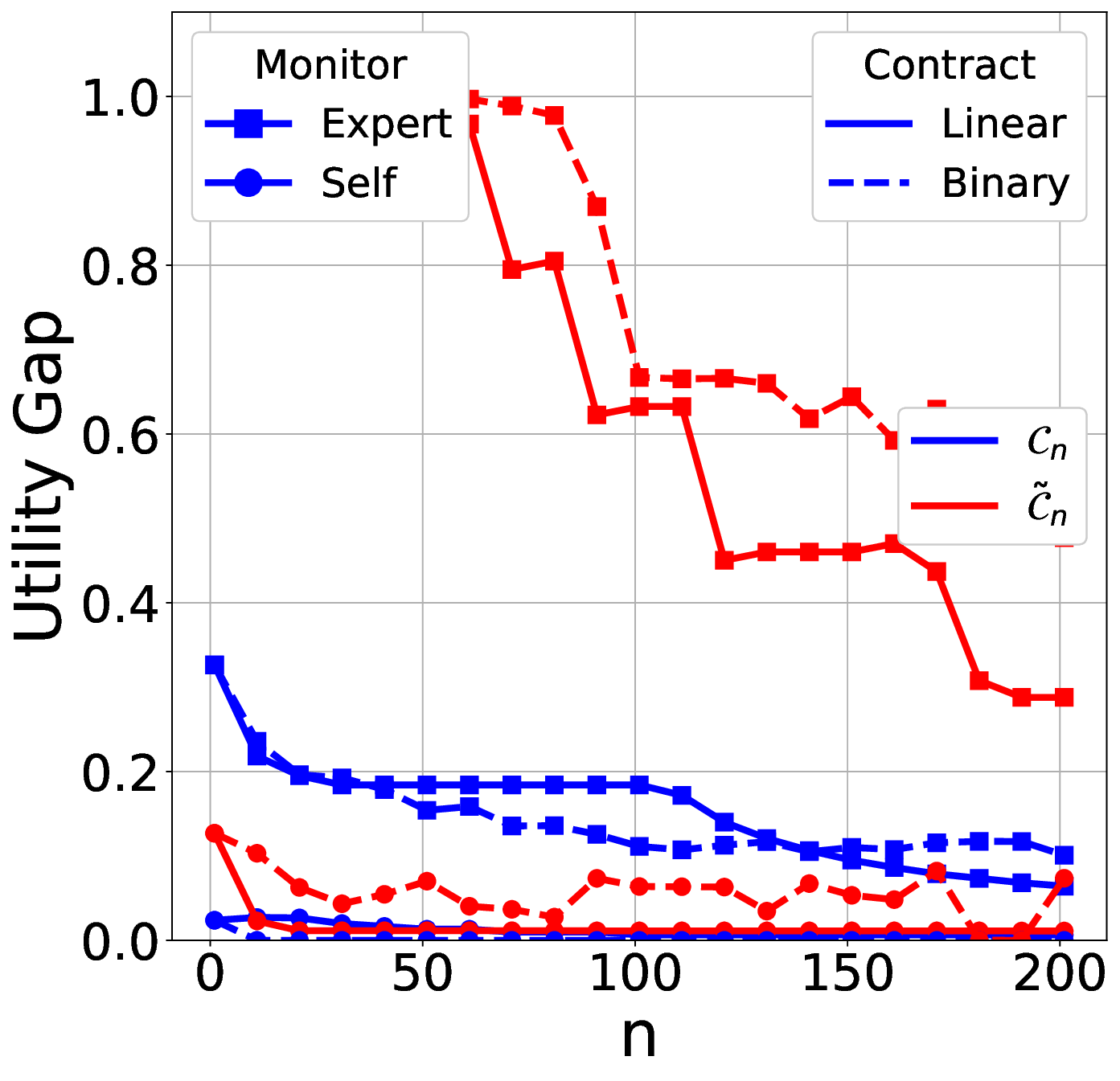}
    \vspace{2mm} {\small PKU}
  \end{minipage}
  \hfill
  \begin{minipage}[b]{0.23\textwidth}
    \centering
    \includegraphics[width=\textwidth]{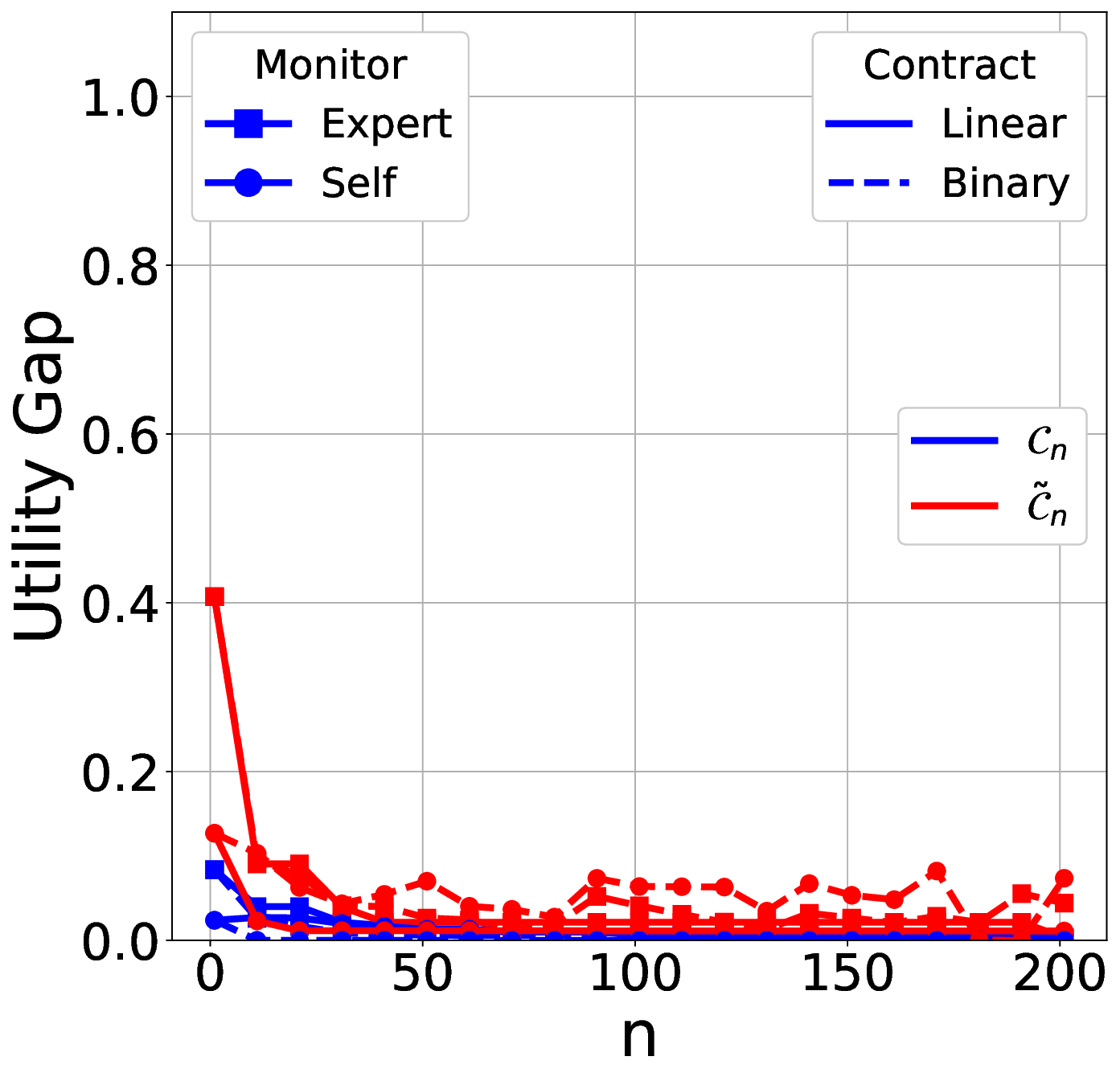}
    \vspace{2mm} {\small Helpsteer}
  \end{minipage}
  \hfill
  \begin{minipage}[b]{0.23\textwidth}
    \centering
    \includegraphics[width=\textwidth]{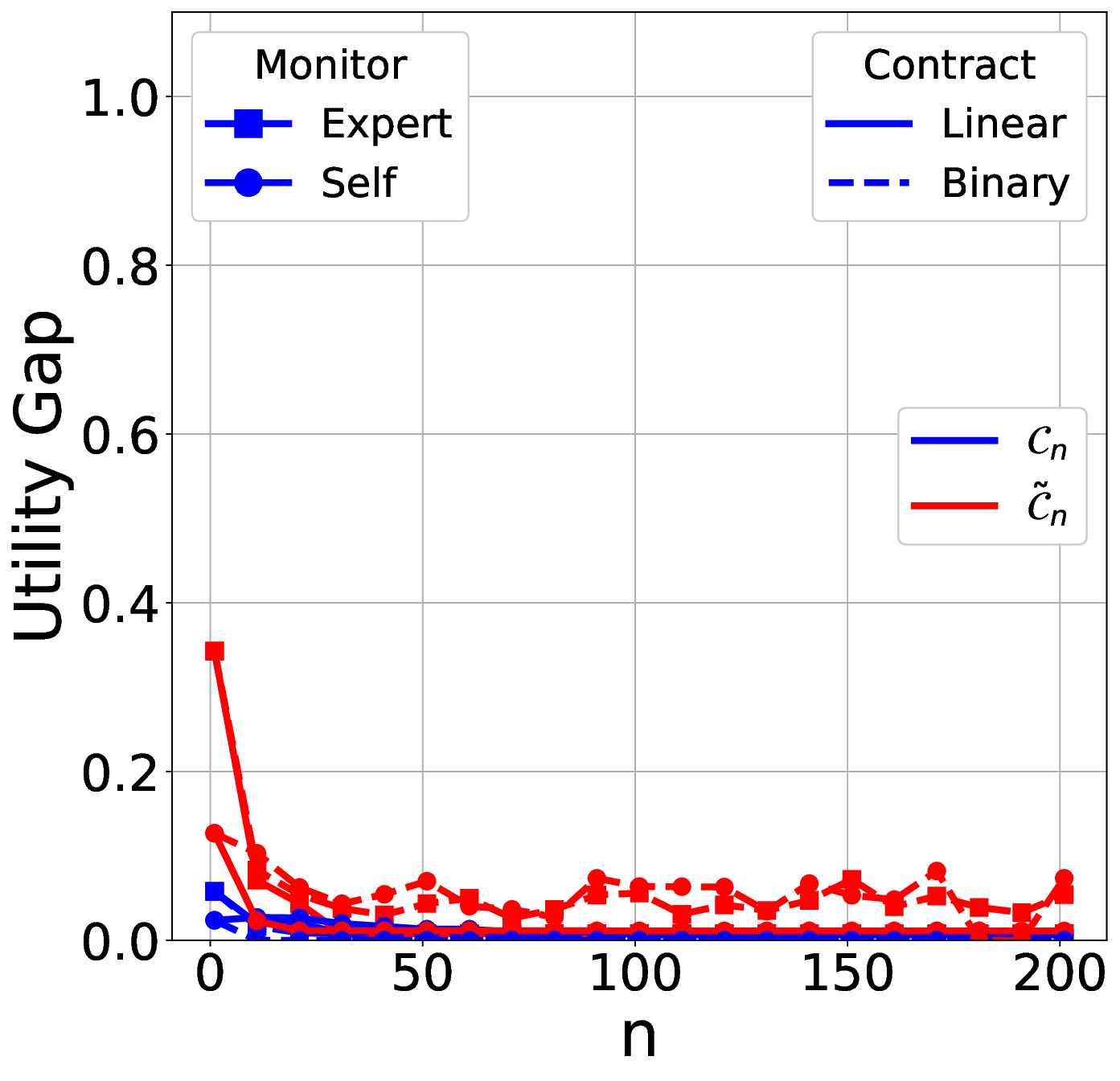}
    \vspace{2mm} {\small Ultra}
  \end{minipage}
  \hfill
  \begin{minipage}[b]{0.23\textwidth}
    \centering
    \includegraphics[width=\textwidth]{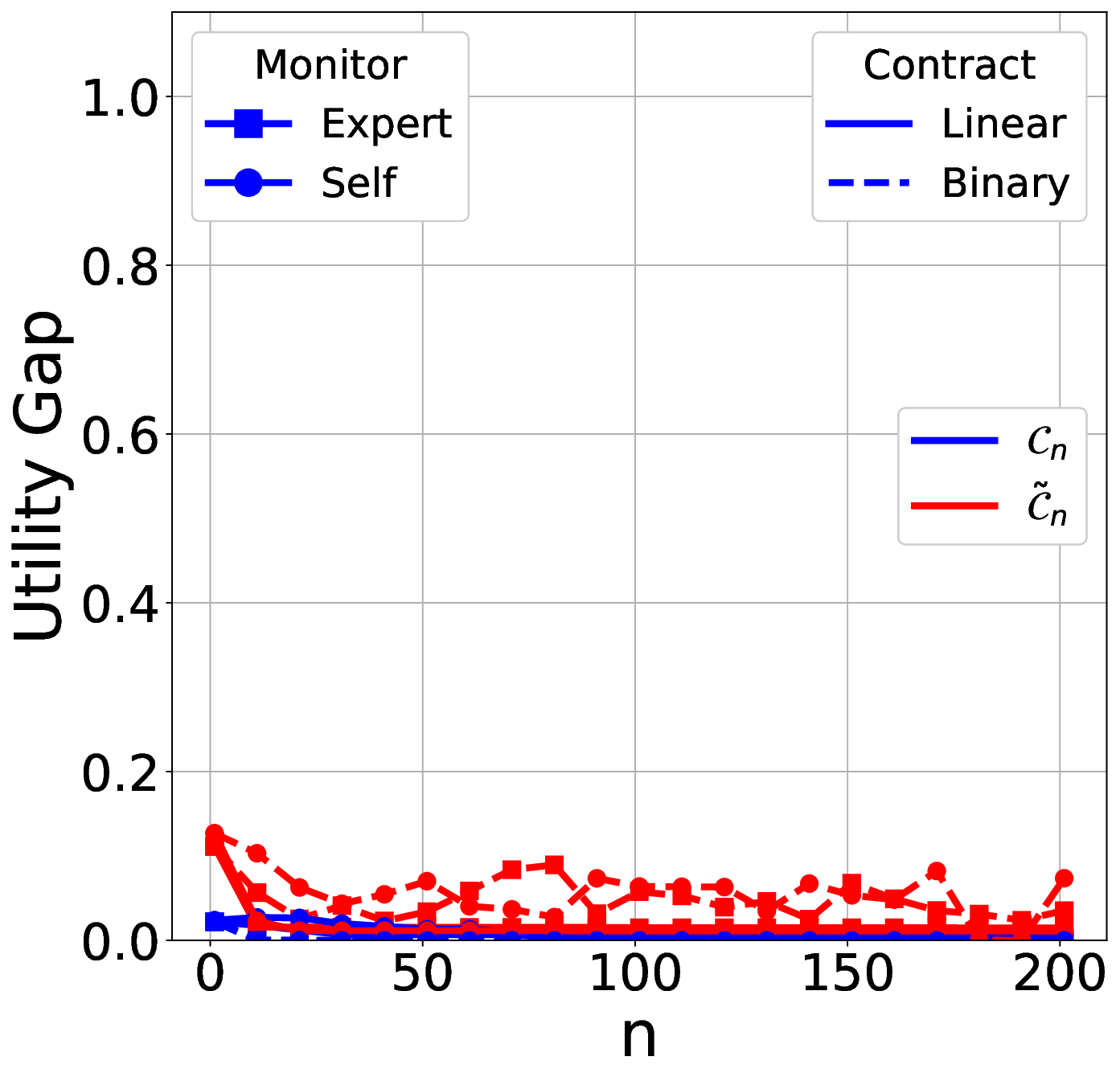}
    \vspace{2mm} {\small Skywork}
  \end{minipage}
  \caption{Normalized principal utility gap under the alternative utility profile \texttt{high} on four datasets. }
  \label{fig:appx_utility_highcost}
\end{figure}

\subsubsection{Setup and Additional Experiments for Figure~\ref{fig:reviewer_gap_scaling}}
\label{appx:reviewer_gap_scaling}

This subsection describes the synthetic experiment used in Figure~\ref{fig:reviewer_gap_scaling}. 

\paragraph{Monitoring model and utility primitives.}
We use the self-consistency monitoring model from Section~\ref{sec:incentivizing}. Conditional on the agent's action $\eta \in [0,1]$, the monitored outcomes $\{A_i\}_{i=1}^n$ are i.i.d. Bernoulli with
\[
p(\eta)=\mathbb{P}(A_i=1\mid \eta)=\frac{1+(1-\delta)\eta}{2},
\qquad \delta=0.02.
\]
The utility primitives are
\[
U_0=0,\qquad
\mu(\eta)=\frac12\eta^{4/5},\qquad
G_a(w)=1-\exp(-w),\qquad
E(\eta)=0.18\eta^2.
\]
Hence the first-best benchmark is
\[
\mathcal{C}
=
\max_{\eta\in[0,1]}
\left\{
\mu(\eta)-G_a^{-1}(E(\eta)+U_0)
\right\}
=
\max_{\eta\in[0,1]}
\left\{
\frac12\eta^{4/5}+\log(1-E(\eta))
\right\},
\]
since $G_a^{-1}(s)=-\log(1-s)$. Numerically, the maximizer is $\eta^*\approx 0.9438$.

\paragraph{Continuous-action linear contracts.}
For linear contracts we set
\[
F_n^{\mathrm{lin}}(\bm{A})=x_n+y_n\bar{A},
\qquad
\bar{A}=\frac1n\sum_{i=1}^n A_i.
\]
Because the $A_i$'s are i.i.d. Bernoulli and $G_a(w)=1-e^{-w}$, the agent's expected utility has the closed form
\[
\mathbb{E}[U_a(F_n^{\mathrm{lin}}(\bm{A}),\eta)]
=
1-e^{-x_n}\Bigl(1-p(\eta)\bigl(1-e^{-y_n/n}\bigr)\Bigr)^n-E(\eta).
\]
Thus both the principal's and the agent's objectives can be evaluated exactly, without Monte Carlo noise. To numerically represent the continuous action space, we first approximate the first-best action by maximizing over the grid
\[
\left\{0,\frac{1}{200000},\frac{2}{200000},\ldots,1\right\},
\]
which consists of $200{,}001$ equally spaced action levels. This gives the numerical first-best action $\eta^*\approx 0.9438$. For the restricted second-best problem $\tilde{\mathcal{C}}_n$, we then choose $(x_n,y_n)$ so that the individual-rationality constraint binds at $\eta^*$ and the first-order condition also holds at $\eta^*$. We report results for $30$ logarithmically spaced values of $n$ between $10$ and $2000$. The fitted log-log slope for $\mathcal{C}-\tilde{\mathcal{C}}_n$ is $-0.9968$, and the quantity $n(\mathcal{C}-\tilde{\mathcal{C}}_n)$ stabilizes around $0.0123$.

\paragraph{Continuous-action binary contracts.}
For binary contracts we consider threshold contracts of the form
\[
F_n^{\mathrm{bin}}(\bm{A})
=
\begin{cases}
w_0, & \sum_{i=1}^n A_i < k,\\
w_1, & \sum_{i=1}^n A_i \ge k,
\end{cases}
\qquad w_1>w_0.
\]
Let
\[
T_{n,k}(\eta)=\mathbb{P}\!\left(\sum_{i=1}^n A_i\ge k \,\middle|\, \eta\right).
\]
For each $n$ and threshold $k$, we set the monetary-utility gap by the first-order condition at $\eta^*$:
\[
\Delta S_{n,k}=\frac{E'(\eta^*)}{T'_{n,k}(\eta^*)},
\]
and then choose the lower monetary-utility level so that the individual-rationality constraint binds at $\eta^*$. We search over thresholds in a window centered at $np(\eta^*)$ with radius $\lceil 8\sqrt{n\log n}\rceil$, which is the theoretically relevant scale from the proof of Theorem~\ref{thm:binary}. To verify the restricted incentive-compatibility condition numerically in the continuous action space, we evaluate the agent's expected utility on the grid
\[
\left\{0,\frac{1}{4000},\frac{2}{4000},\ldots,1\right\},
\]
which consists of $4001$ equally spaced action levels, and retain only those thresholds for which $\eta^*$ is an approximate global maximizer. We use the union of $\{10,15,\ldots,50\}$ and $18$ logarithmically spaced values between $20$ and $2000$ for $n$. The resulting curve is well aligned with the $1/\sqrt{n\log n}$ reference in Figure~\ref{fig:reviewer_gap_scaling}.

\paragraph{Discrete-action binary benchmark.}
To connect with the discrete-action result of \citet{frick2023monitoring}, we discretize the action space to
\[
\mathcal{S}_{\eta}^{\mathrm{disc}}=\{0,0.05,\ldots,1\}
\]
and solve the corresponding threshold binary-contract problem exactly using binomial tail probabilities. For each $n\in\{10,15,\ldots,50\}$, we choose the contract that maximizes the principal's utility subject to incentive compatibility and individual rationality for the discrete first-best action. The resulting gap is approximately linear on the semi-log scale, which is consistent with exponential convergence in the discrete-action setting. We stress that this discrete experiment is used only as a synthetic visual benchmark for the continuous-versus-discrete contrast.

\newpage

\subsection{Examples for hard-to-choose responses}
\label{appx:example}
In the following, we present a few examples from Helpsteer \citep{wang2023helpsteer} for which we think it is unclear to say which response is better. In this case, annotators may assign different preference labels for this sample, even when they are fully committed. This is what we refer to as the \textit{heterogeneity} among human annotators. Hence we cannot tell if an annotator makes a mistake whatever label they assign.

\begin{framed}
\textbf{Prompt:}\\

What is the history of the bed? \\

\textbf{Response 1:}\\

The history of the bed is a long and complex one, with many different cultural and historical influences. Beds have been used for thousands of years, and their design and function have evolved significantly over time. In ancient times, beds were often simple platforms made of wood or stone, and were used primarily for sleeping and rest. However, as societies became more complex and wealthy, beds became more elaborate and comfortable, with mattresses, pillows, and even canopy beds. In medieval times, beds became even more luxurious, with mattresses stuffed with feathers, and canopy beds draped in fine fabrics. As the Industrial Revolution brought new manufacturing processes and materials, beds became more mass-produced and affordable, and new designs and features such as adjustable beds and water beds were introduced. Today, beds come in a wide variety of styles, materials, and sizes, and are used not only for sleeping but also for relaxing, reading, and other activities. \\

\textbf{Response 2:}\\

The bed is one of the most important inventions in human history, providing a place of comfort, relaxation, and safety for people around the world. In this question, we will explore the history of the bed and how it has evolved over time. The earliest beds were simple platforms or mats made of leaves, branches, or other natural materials. These beds were used by prehistoric humans and were often placed on the ground or in a cave. Over time, beds became more sophisticated, with the addition of cushions, pillows, and even mattresses made of straw or other materials. In ancient civilizations such as Egypt and Greece, beds were symbols of status and wealth, with wealthy individuals owning ornate beds made of wood and adorned with gold and other precious materials. In medieval Europe, beds became even more elaborate, with some beds featuring canopy frames and curtains for privacy. As materials and manufacturing processes advanced, beds became more affordable and accessible to the general public. In the 20th century, the bed became an important symbol of comfort and relaxation, with the invention of the adjustable bed and the development of the modern mattress. Today, the bed remains an essential piece of furniture in homes around the world, with a wide variety of styles, sizes, and materials available to suit different needs and preferences. From simple futons to luxurious canopy beds, the bed continues to play a crucial role in providing comfort and relaxation for people of all ages and backgrounds.\\
\end{framed}

\begin{framed}
\textbf{Prompt:}\\

What are the three most important things to consider when deciding what technology to use to build an assist device to help an elderly person with basic needs? \\

\textbf{Response 1:}\\

There are many different types of assistive devices that can be used to help an elderly person with their basic needs. When deciding what technology to use to build an assistive device, it is important to consider the following three factors: 1. The needs of the elderly person: The first and most important factor to consider is the specific needs of the elderly person who will be using the assistive device. It is important to understand their physical limitations, as well as their daily routine and activities, in order to design an assistive device that will be effective and useful for them. 2. The capabilities of the technology: It is also important to consider the capabilities of the technology that will be used to build the assistive device. For example, if the elderly person has limited vision, it may be important to use a technology that provides auditory feedback or haptic feedback in addition to visual feedback. 3. The ease of use of the technology: Finally, it is important to consider the ease of use of the technology. The assistive device should be designed in a way that is easy for the elderly person to understand and use, even if they have limited dexterity or cognitive abilities. This may involve using simple controls, providing clear instructions, or using a natural language interface. By considering these three factors, it is possible to build an assistive device that is effective, useful, and easy to use for the elderly person. \\                

\textbf{Response 2:}\\

 When deciding what technology to use to build an assist device to help an elderly person with basic needs, there are three most important things to consider: 1. The specific needs of the elderly person: The assist device should be tailored to the individual's specific needs and limitations. For example, if the person has limited mobility, the device should focus on assisting with mobility. 2. The ease of use: The assist device should be easy to use and understand, with clear and simple instructions. The elderly person should be able to operate the device without assistance. 3. The reliability of the technology: The assist device should be reliable and robust, with minimal risk of malfunction or failure. The technology should be tested thoroughly to ensure it is safe and effective.   \\
 
\end{framed}

\begin{framed}
\textbf{Prompt:}\\

List down some tips to make the most out of attending a seminar. \\

\textbf{Response 1:}\\

Sure! Here are some tips to make the most out of attending a seminar: 1. Prepare in advance by reading the seminar's material and any suggested readings. 2. Arrive on time and stay for the duration of the seminar. 3. Engage with the speaker and other attendees by asking questions and participating in discussions. 4. Take notes and actively listen to the speaker and other attendees. 5. Connect with other attendees to expand your network and share ideas. 6. Reflect on the seminar's content and consider how it applies to your own work or interests. 7. Follow up with the speaker or organizers to continue the conversation or ask further questions.        \\

\textbf{Response 2:}\\

Before attending a seminar, research the topic and speakers to understand the purpose and goals. Choose a comfortable seat near the front to improve your focus and engagement. Arrive early to settle in and mingle with other attendees. Use the event app or program to connect with other attendees and speakers. Take notes and ask questions to enhance your learning experience. Don't be afraid to introduce yourself to others and engage in conversations. Stay for the entire event, even if it's longer than expected, to reap the full benefits. Don't be afraid to ask for business cards and follow up with people after the event.            \\

\end{framed}

\subsection{Online Contract Adaptation}
\label{subsec:online_contract_adaptation}

The second-best programs \eqref{eq:second_best} and \eqref{eq:restricted_second_best} are static benchmarks: they identify the optimal contract when the principal already knows how the annotator responds to each candidate contract. In practice, however, the principal may not know the agent-side primitives $G_a$, $E$, and $U_0$, and hence does not know which contracts will be accepted or which annotation quality each accepted contract induces. This motivates an online implementation in which the principal learns a good contract from repeated interaction.

We keep the assessment batch size $n$ fixed and let
\[
\mathcal{G}_n \coloneqq \{F_{n,\xi}:\xi\in\Xi\}\subseteq \mathcal{F}_n
\]
be a finite low-dimensional candidate family. Throughout this subsection, we assume that the annotator's primitives $G_a$, $E$, and $U_0$ remain stable across rounds, so that repeatedly offering the same contract induces the same best-response quality. For a given contract $F_{n,\xi}$, the annotator accepts it if
\[
\max_{\eta\in[0,1]} \mathbb{E}\left[U_a(F_{n,\xi}(\mathcal{D}_n),\eta)\right]\ge U_0,
\]
and, conditional on acceptance, chooses
\[
\eta_a(F_{n,\xi}) \in \operatorname*{arg\,max}_{\eta\in[0,1]}
\mathbb{E}\left[U_a(F_{n,\xi}(\mathcal{D}_n),\eta)\right].
\]
The principal-side value of $F_{n,\xi}$ is then
\[
J_n(\xi)\coloneqq
\mathbb{E}\left[U_p(F_{n,\xi}(\mathcal{D}_n),\eta_a(F_{n,\xi}))\right],
\]
with the convention that $J_n(\xi)=0$ when the contract is rejected. In contrast with \eqref{eq:second_best}, the response map $\xi\mapsto \eta_a(F_{n,\xi})$ and hence the value function $\xi\mapsto J_n(\xi)$ are now unknown and must be learned online.

For the online implementation below, we specialize $\mathcal{D}_n$ to the self-consistency monitoring dataset in Algorithm \ref{alg_self}. Thus, whenever the contract is accepted and induces annotation quality $\eta_t\coloneqq \eta_a(F_{n,\xi_t})$, the principal observes
\[
\bm{A}_t \coloneqq \{A_{t,i}\}_{i=1}^n,
\qquad
A_{t,i}\sim \mathrm{Bernoulli}\!\left(\frac{1+(1-\delta)\eta_t}{2}\right),
\qquad
\bar{A}_t \coloneqq \frac{1}{n}\sum_{i=1}^n A_{t,i}.
\]
This is exactly the self-consistency model in \eqref{eqn:agreement_self_consist}. Assuming $\delta<1/2$, a natural plug-in estimator of the latent annotation quality is
\[
\hat{\eta}_t \coloneqq \Pi_{[0,1]}\!\left(\frac{2\bar{A}_t-1}{1-\delta}\right),
\]
where $\Pi_{[0,1]}$ denotes projection onto $[0,1]$. We then evaluate round $t$ by the proxy utility
\[
\hat{u}_t \coloneqq -F_{n,\xi_t}(\bm{A}_t)+\mu(\hat{\eta}_t),
\]
and set $\hat{u}_t=0$ if the contract is rejected. Since $\hat{\eta}_t$ is a consistent estimator of $\eta_t$ under self-consistency monitoring, $\hat{u}_t$ is a practical proxy for the one-round principal utility. Moreover, Assumption \ref{assm:basic_assm_3} bounds the wage, and $\mu$ is bounded on $[0,1]$ by continuity, so $\hat{u}_t$ is bounded as well.

For the online implementation, $\delta$ is treated as a calibrated property of the monitoring channel rather than of a particular contract, and $\mu$ is treated as a fixed quality-to-value score chosen by the principal. The latter need not equal the exact downstream utility curve; it can be a monotone surrogate obtained from a validation study, a business KPI calibration, or a policy weight that encodes how much the principal values higher-quality labels.

In our setup, the monitoring rule, the batch size $n$, the calibrated parameter $\delta$, and the principal's score function $\mu$ are treated as fixed. The unknown objects are the agent-side primitives $G_a$, $E$, and possibly $U_0$, or equivalently the acceptance/best-response map $F_{n,\xi}\mapsto \eta_a(F_{n,\xi})$. Hence the principal does not know ex ante which contracts will be accepted, which annotation quality each accepted contract induces, or which value $J_n(\xi)$ each contract delivers.

We compare the online policy against the best \emph{static} contract in the candidate family,
\[
\xi_n^{\mathrm{SB}} \in \operatorname*{arg\,max}_{\xi\in\Xi} J_n(\xi).
\]
This is the finite-family analogue of \eqref{eq:second_best}. If one instead wants to approximate \eqref{eq:restricted_second_best}, the same routine can be run on a filtered grid that retains only contracts whose induced annotation quality is estimated to be close to $\eta^*$.

An online policy $\pi$ chooses $\xi_t$ using the past history of accepted/rejected offers and observed monitoring outcomes. A natural performance metric is the pseudo-regret
\[
\mathrm{Reg}_{\pi}(T)
\coloneqq
T\,J_n(\xi_n^{\mathrm{SB}})
-
\mathbb{E}_{\pi}\!\left[\sum_{t=1}^T J_n(\xi_t)\right],
\]
or, equivalently, convergence of the average realized utility to $J_n(\xi_n^{\mathrm{SB}})$.

The static analysis above suggests exactly which low-dimensional families are worth searching over. Writing $\bm{A}=(A_1,\ldots,A_n)$ and $\bar{A}=n^{-1}\sum_{i=1}^n A_i$, for binary contracts we use the threshold/bonus family
\[
F_{n,\xi}^{\mathrm{bin}}(\bm{A}) = w + w_b\,\mathbbm{1}\{\bar{A} \ge \tau\},
\qquad \xi=(w,w_b,\tau).
\]
For linear contracts, it is natural to use
\[
F_{n,\xi}^{\mathrm{lin}}(\bm{A}) = a + b\bar{A},
\qquad \xi=(a,b).
\]
Hence the online learner searches only over a few scalar parameters rather than over an arbitrary function $F_n\in\mathcal{F}_n$.

\begin{algorithm}[ht!]
\centering
\caption{UCB learning on a finite contract grid}
\label{alg_online_ucb_contract}
\begin{algorithmic}
\State \textbf{Input:} Finite grid $\Xi$, exploration coefficient $\beta>0$, horizon $T$
\State \textbf{Warm start:} Offer each contract $F_{n,\xi}$ with $\xi\in\Xi$ once, observe the resulting proxy utility, and initialize $N(\xi)=1$ and $\widehat{J}(\xi)$ by that observation
\For{$t=|\Xi|+1,\ldots,T$}
    \State Choose
    \[
    \xi_t \in \operatorname*{arg\,max}_{\xi\in\Xi}
    \left\{
    \widehat{J}(\xi)
    + \beta \sqrt{\frac{\log(t+1)}{N(\xi)}}
    \right\}
    \]
    \State Offer $F_{n,\xi_t}$ to the annotator
    \If{the contract is rejected}
        \State Set $\hat{u}_t=0$
    \Else
        \State Collect $\bm{A}_t$, compute $\bar{A}_t$, $\hat{\eta}_t=\Pi_{[0,1]}\!\left(\frac{2\bar{A}_t-1}{1-\delta}\right)$, and payment $w_{a,t}=F_{n,\xi_t}(\bm{A}_t)$
        \State Set $\hat{u}_t=-w_{a,t}+\mu(\hat{\eta}_t)$
    \EndIf
    \State Update $N(\xi_t) \leftarrow N(\xi_t)+1$
    \State Update $\widehat{J}(\xi_t) \leftarrow \widehat{J}(\xi_t)+\frac{\hat{u}_t-\widehat{J}(\xi_t)}{N(\xi_t)}$
\EndFor
\State \textbf{Return} $\operatorname*{arg\,max}_{\xi\in\Xi}\widehat{J}(\xi)$
\end{algorithmic}
\end{algorithm}

Algorithm \ref{alg_online_ucb_contract} is a direct repeated-learning implementation of the second-best logic. For a fixed contract $F_{n,\xi}$, the induced annotation quality $\eta_a(F_{n,\xi})$ and the distribution of the proxy utility are stationary, so the empirical mean $\widehat{J}(\xi)$ concentrates with repeated sampling. The upper-confidence term prevents premature commitment to a noisy early winner. Self-consistency monitoring converts the hidden action $\eta_t$ into the observable statistic $\bar{A}_t$, so the principal can learn good contracts without retraining a downstream model at each round. Finally, a larger assessment batch size $n$ helps twice: it sharpens the static approximation guarantees in Theorems \ref{thm:binary} and \ref{thm:linear_contract}, and it reduces the noise in $\hat{\eta}_t$.

\subsubsection{Numerical Experiment}
We conduct a numerical experiment of Algorithm \ref{alg_online_ucb_contract}  on the binary threshold contract family. We use the same primitives as in Figure \ref{fig:main_contract_rank}: $U_0=0$, $\delta=0.02$, $\mu(\eta)=\frac12\eta^{4/5}$, $G_a(w_a)=1-\exp(-w_a)$, and $E(\eta)=0.18\eta^2$. The assessment batch contains $n=100$ duplicated samples under self-consistency monitoring, with
\[
A_{t,i}\sim \mathrm{Bernoulli}\!\left(\frac{1+(1-\delta)\eta_t}{2}\right).
\]
We think a simplified version of binary contract due to the high computation requirement of Monte Carlo simulation: we fix $w=0$  and search over the grid
\[
w_b\in\{0.05,0.10,0.15,0.20,0.25,0.30\},
\qquad
\tau\in\{0.80,0.85,0.90,0.95,0.97\}.
\]
We run Algorithm \ref{alg_online_ucb_contract} for $T=1500$ rounds with exploration coefficient $\beta=0.05$ and average the results over $200$ Monte Carlo replications.

As a simple baseline, we also report a one-shot explore-then-commit policy. This baseline samples each contract once, computes the same proxy utility, and then commits for all remaining rounds to the contract with the highest empirical score after this initial sweep. It uses exactly the same information as UCB but omits continued exploration.

\textbf{Results.} On this grid, the best static contract is $(w_b,\tau)=(0.15,0.85)$, which induces annotation quality approximately $0.819$ and expected principal utility $0.2860$. The UCB learner attains a last-$200$-round average realized utility of $0.2850$, which is $99.6\%$ of the best static-grid benchmark, and it selects the exact best grid contract in $90.6\%$ of the last $200$ rounds.

By contrast, the explore-then-commit baseline attains a last-$200$-round average utility of $0.2533$, or $88.6\%$ of the benchmark, and it selects the exact best grid contract only $29.0\%$ of the time in the last $200$ rounds. Figure \ref{fig:online_binary_ucb} shows that the $50$-round moving average under UCB rapidly approaches the static benchmark and remains above explore-then-commit, whose noisy initial sweep can lock the principal into a suboptimal threshold/bonus pair. An analogous one-dimensional implementation can be used for the linear family.

\begin{figure}[ht!]
    \centering
    \includegraphics[width=0.78\textwidth]{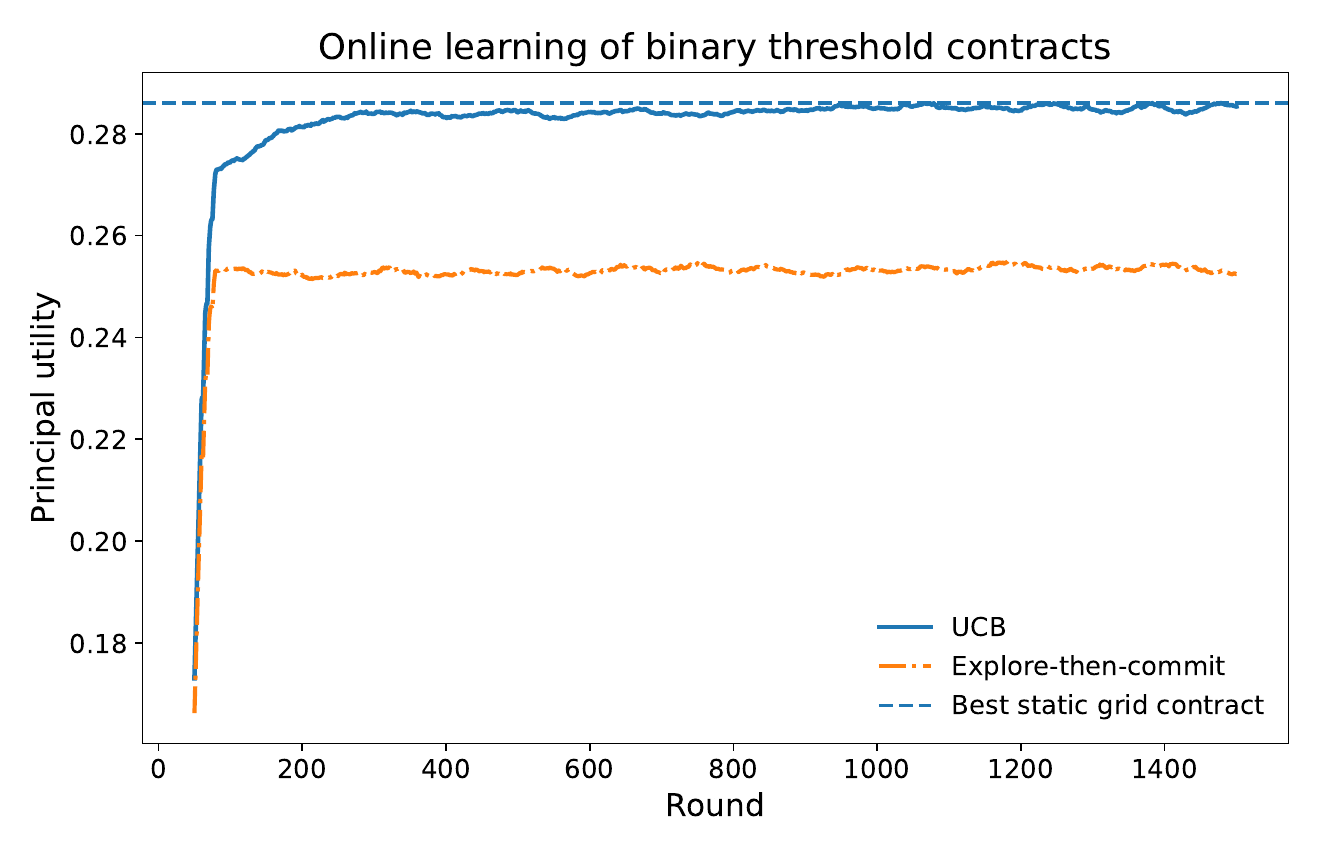}
    \caption{Simulated online-learning experiment for binary threshold contracts under self-consistency monitoring. The solid line is the $50$-round moving average of the realized principal utility under UCB; the dash-dotted line is the corresponding $50$-round moving average under explore-then-commit; and the dashed horizontal line is the best static contract on the searched grid. UCB rapidly concentrates on the optimal threshold/bonus pair and nearly matches the static benchmark, while explore-then-commit remains lower because it can lock into a suboptimal contract after the initial sweep.}
    \label{fig:online_binary_ucb}
\end{figure}
\end{document}